\documentclass[twoside]{article}
\usepackage[preprint]{robust_lmc}

\usepackage[round]{natbib}

\usepackage{amsmath,amsfonts,bm}

\def\1{\bm{1}}

\DeclareMathAlphabet{\mathsfit}{\encodingdefault}{\sfdefault}{m}{sl}
\SetMathAlphabet{\mathsfit}{bold}{\encodingdefault}{\sfdefault}{bx}{n}

\def\gG{{\mathcal{G}}}

\def\gN{{\mathcal{N}}}
\def\gO{{\mathcal{O}}}

\def\sR{{\mathbb{R}}}

\newcommand{\E}{\mathbb{E}}

\newcommand{\R}{\mathbb{R}}

\newcommand{\KL}{D_{\mathrm{KL}}}

\usepackage{float}
\usepackage[hidelinks]{hyperref}
\usepackage{url}
\usepackage{enumitem}
\usepackage{amsthm}
\usepackage{amssymb}
\usepackage{mathtools}
\usepackage{xcolor}
\usepackage{colortbl}
\usepackage{graphicx}
\usepackage{booktabs}
\usepackage{placeins}

\input{appendix_contents.tex}

\newtheorem{theorem}{Theorem}[section]
\newtheorem{lemma}[theorem]{Lemma}
\newtheorem{proposition}[theorem]{Proposition}
\newtheorem{corollary}[theorem]{Corollary}

\theoremstyle{definition}
\newtheorem{assumption}[theorem]{Assumption}
\newtheorem{definition}[theorem]{Definition}

\newtheorem{remark}{Remark}

\renewcommand{\KL}{\mathrm{KL}}
\newcommand{\FI}{\mathrm{FI}}

\newcommand{\norm}[1]{\left\lVert #1\right\rVert}

\newcommand{\dd}{\,\mathrm{d}}

\newcommand{\GeoMed}{\operatorname{GeoMed}}

\begin{document}

\runningtitle{Robust Langevin Sampling under State-Dependent Heavy-Tailed Noise}
\runningauthor{Sahin et al.}

\twocolumn[
\manuscripttitle{Robust Langevin Sampling under State-Dependent Heavy-Tailed Noise with Applications to LLM Fine-Tuning}

\manuscriptauthor{M. Berk Sahin$^1$\hspace{1.5em}
Ege C. Kaya$^1$\hspace{1.5em}
Arda Fazla$^1$\hspace{1.5em}
Behzad Sharif$^{1,2}$\hspace{1.5em}
Abolfazl Hashemi$^1$}
\manuscriptaddress{$^1$Elmore Family School of Electrical and Computer Engineering, Purdue University\\
$^2$Weldon School of Biomedical Engineering, Purdue University}
]

\begin{abstract}
Existing convergence guarantees for non-log-concave sampling with stochastic gradient Langevin Monte
Carlo (SG-LMC) typically assume globally Lipschitz gradients and uniformly bounded noise variance. Both assumptions can fail even in elementary sampling problems, and relaxing them better reflects practical applications such as Bayesian fine-tuning of large language models (LLMs). We relax both assumptions through generalized smoothness and our proposed state-dependent
heavy-tailed oracle with finite $p$-th moments, $1\!\!<\!\!p\!\!\leq\!\!2$. The oracle permits infinite variance when $p\!\!<\!\!2$ and includes the Blum–Gladyshev (BG-0) condition. We study these relaxations separately and jointly,
proposing three methods. Under globally Lipschitz gradients, robust LMC (R-LMC) handles the proposed
oracle through geometric-median centering and residual clipping. With exact gradients, adaptive LMC (A-LMC) controls discretization error under generalized smoothness. Robust adaptive LMC (RA-LMC) addresses both relaxations through robust estimation and a noise-aware step size. For all three methods, we establish non-asymptotic convergence guarantees in Fisher information (FI) without assuming dissipativity or a functional inequality. Synthetic experiments demonstrate FI
convergence and the benefit of each mechanism. Bayesian fine-tuning experiments with LLMs and
vision-language models support our oracle model and show that RA-LMC matches or outperforms state-of-the-art baselines in accuracy and uncertainty quantification.
\end{abstract}

\section{Introduction}
We consider the problem of sampling from a possibly non-log-concave target probability measure $\pi\!\propto \!e^{-V}$, where $V\!\!:\!\!\mathbb{R}^d\!\!\to\!\!\mathbb{R}$ is the potential function, using \textit{Langevin Monte Carlo (LMC)}. Earlier non-asymptotic analyses beyond log-concavity typically establish convergence under structural assumptions, including functional inequalities \citep{vempala2019rapid,wibisono2019proximal,ma2021analog,chewi2022analysis}, tail growth or dissipativity conditions \citep{durmus2017nonasymptotic,cheng2018sharp,majka2020nonasymptotic,erdogdu2021convergence}, or explicit mixture structure \citep{lee2018beyond}. Although these conditions enable convergence guarantees in strong distributional metrics, they do not provide a general measure of computational progress for targets with arbitrary multimodal geometry, where transitions between well-separated modes can be prohibitively slow.

To quantify this form of progress, \citet{balasubramanian22a} proposed a sampling analogue of first-order stationarity in non-convex optimization. The analogy comes from viewing sampling as minimizing KL divergence over probability measures \citep{jordan1998variational,wibisono2018sampling}. In this formulation, the Langevin diffusion is the Wasserstein gradient flow of KL divergence, and relative Fisher information (FI), defined in Definition~\ref{def:target-information}, plays the role of the squared gradient norm. FI therefore serves as a measure of stationarity and can be small even when the law assigns incorrect weights to well-separated modes. Building on this perspective, subsequent work has clarified the statistical meaning and computational limits of FI guarantees \citep{chewi2023fisher,cheng2023fast,niles2026reweighted} and developed parallel, proximal, variance-reduced, and zeroth-order sampling methods \citep{zhou2026adaptive,chewi2026complexity,sahin2026variance,sahin2026zerothorder}. Standard direct SG-LMC FI bounds use global smoothness and uniform second moments; existing extensions do not jointly address BG-$p$ noise and gradient-dependent curvature (Appendix~\ref{app:related-sampling}).

Although analytically convenient, these assumptions fail even for elementary models. Global $L$-smoothness excludes superquadratic potentials, including quartic double-well potentials and Ginzburg--Landau models \citep{sun2023convergence,lytras2024tamed}. The uniform variance bound can fail even for quadratic potentials, as minibatching in Bayesian linear regression can produce gradient noise whose variance grows quadratically with distance from the posterior mode \citep{brosse2018promises}. Such state-dependent variance is ensured by the Blum--Gladyshev (BG-0) condition~\citep{blum1954approximation, gladyshev1965stochastic}. 

Beyond these elementary examples, Bayesian fine-tuning of large language models (LLMs) and vision-language models (VLMs) provides important practical applications of non-log-concave sampling, where a sampler generates multiple parameter vectors that serve as approximate posterior samples. Predictions are then averaged across these vectors to account for uncertainty in the fitted parameters~\citep{kim2023bayestune,yang2024bayesianlora,samplawski2026bayesian}. Low-rank adaptation (LoRA) makes such fine-tuning more efficient by reducing the number of trainable parameters~\citep{hu2022lora}. Studies of language modeling report gradient-dependent local curvature during training, motivating a relaxation of global smoothness~\citep{crawshaw2022robustness}. In addition, empirical studies in optimization report heavy-tailed gradient noise in BERT training and ALBERT and RoBERTa fine-tuning~\citep{zhang2020adaptive,battash2024revisiting,chezhegov2024clipping}. These findings raise a natural question: does similar heavy-tailed gradient noise arise during Bayesian fine-tuning of LLMs with LoRA? If so, what mechanisms can enable LMC to accommodate this noise while retaining non-asymptotic FI guarantees for non-log-concave targets whose potentials need not be globally smooth?

Our experiments on Bayesian LoRA fine-tuning of LLMs provide an affirmative empirical answer to the first question, showing evidence of heavy-tailed stochastic gradient noise and state-dependent noise growth consistent with BG-0. For the second question, non-convex optimization offers a natural starting point. Normalized gradient methods achieve stationarity guarantees under generalized smoothness~\citep{chen2023generalized}, while clipping, normalization, and momentum handle heavy-tailed noise~\citep{cutkosky2021high}. Anchoring and regularization provide guarantees under BG-0~\citep{fazla2026lower}. These optimization results motivate seeking analogous FI guarantees for non-log-concave sampling.

The difficulty is that modifying an optimization update to promote descent need not preserve the distribution sought by a sampler. Even with exact gradients, clipping or normalizing the Langevin drift while leaving the Brownian noise unchanged can alter the invariant distribution. A further difficulty is that existing FI guarantees depend on the expected squared error in the gradient estimates. Rare but very large errors can make this quantity infinite, even when the estimates are unbiased and have a finite moment of order $1\!<\!p\!<\!2$. The standard analysis then yields an infinite error bound, providing no useful convergence guarantee. Controlling these extreme errors without introducing excessive bias is therefore essential. Generalized smoothness creates a separate obstacle because discretization error depends on gradient values along the random path between updates. Bounding this error requires controlling higher-order gradient moments that are not covered by the standard analysis. Adaptive step sizes can limit these errors, but choosing them from noisy gradient estimates couples the estimation error with the duration over which it acts. Establishing FI convergence therefore requires controlling both errors and their interaction without relying on log-concavity, dissipativity, or a functional inequality.

\subsection{Contributions}


We address these challenges through robust gradient estimation and adaptive discretization. Geometric-median centering and residual clipping control mean-squared estimation error even when oracle variance is infinite, while adaptive steps limit discretization error where curvature grows. Robust adaptive LMC (RA-LMC) integrates these mechanisms to handle state-dependent heavy-tailed noise and generalized smoothness simultaneously. Its analysis controls the dependence between estimation error and step duration through a noise-aware step size. For all three methods, we establish non-asymptotic FI guarantees and explicit iteration and oracle complexities. Our contributions are summarized below.

\begin{itemize}[leftmargin=*,labelsep=*,nosep]
    \item \textbf{State-dependent heavy-tailed oracle.}
    We propose BG-$p$, which generalizes both the BG-0 condition and
    heavy-tailed noise models with a state-independent bound on the $p$-th moment. Under global smoothness, we establish
    FI guarantees for robust LMC (R-LMC), allowing
    both state dependence and infinite variance.

    \item \textbf{Sampling under generalized smoothness.}
    We establish FI guarantees for adaptive LMC (A-LMC) under symmetric
    generalized smoothness, assuming access to exact gradients. The
    resulting complexity bounds recover the standard globally smooth
    LMC rate as a special case.

    \item \textbf{Robust adaptive sampling.}
    We propose robust adaptive LMC (RA-LMC) and control the dependence
    between gradient estimation errors and adaptive step sizes to
    establish FI guarantees under generalized smoothness and BG-$p$
    noise.

    \item \textbf{Empirical evaluation.}
    Synthetic experiments verify FI convergence of all three methods
    and demonstrate the benefits of each mechanism and their combination.
    Bayesian LoRA fine-tuning experiments provide empirical evidence
    supporting BG-$p$ oracle. On LLM and VLM experiments, RA-LMC matches or improves upon
    leading baselines in predictive accuracy and uncertainty quantification. Code to reproduce our experiments is publicly available. 
\end{itemize}

\section{Preliminaries}\label{sec:preliminaries}

\textbf{Notation:} We work on a probability space $(\Omega,\mathcal{F},\mathbb{P})$. Expectations with respect to $\mathbb{P}$ are denoted by $\E$. We use $x,y,z,x_{\mathrm{ref}}\in\mathbb{R}^d$ for deterministic points and $X_t$ for random states. In particular, $x_{\mathrm{ref}}$ is a deterministic reference point chosen at initialization and used as the oracle anchor, whereas $X_0\sim\mu_0$ is the random initial state. When there is no ambiguity, we suppress brackets and write $\E\|X\|^2$ in place of $\E[\|X\|^2]$.

\begin{definition}
\label{def:target-information}
Let $V\in C^1(\mathbb{R}^d)$ with
$0<Z:=\int_{\mathbb{R}^d}e^{-V(x)}\,\mathrm{d}x<\infty$, and set
$\pi(x)=Z^{-1}e^{-V(x)}$. We also write
$\pi(\mathrm{d}x)=\pi(x)\,\mathrm{d}x$ for the target measure. For $\mu\ll\pi$ and
$f=\mathrm{d}\mu/\mathrm{d}\pi$, write $\mathrm{KL}(\mu\|\pi):=\int f\log f\,\mathrm{d}\pi$ and
\begin{equation}\label{eq:relative-fi}
\mathrm{FI}(\mu\|\pi):=4\int\|\nabla\sqrt f\|^2\,\mathrm{d}\pi.
\end{equation}
\end{definition}

Following prior LMC analyses~\citep{vempala2019rapid,balasubramanian22a,chewi2025analysis},
we assume $K_0:=\mathrm{KL}(\mu_0\|\pi)=\mathcal{O}(d)$ for iteration and
oracle complexity bounds, while retaining $K_0$ explicitly in finite-time
bounds and parameter choices.

\textbf{Langevin Sampling:} Conventionally, the \textit{Langevin diffusion}, the solution to the stochastic differential equation
\begin{equation}\label{eq:cont-diff}
    \mathrm{d}X_t = -\nabla V(X_t)\,\mathrm{d}t + \sqrt{2}\,\mathrm{d}W_t,
\end{equation}
has $\pi$ as an invariant distribution whenever the diffusion is well posed; under standard additional conditions, this invariant distribution is unique and the diffusion converges to it as $t\to\infty$. Here, $(W_t)_{t\geq 0}$ denotes a standard $d$-dimensional Brownian motion. Euler discretization with step size $h>0$ yields the standard \textit{Langevin Monte Carlo} (LMC) algorithm
\begin{equation*}
    X_{(k+1)h} := X_{kh} - h\nabla V(X_{kh}) + \sqrt{2h}Z_k,
\end{equation*}

where $Z_k\!\sim\!\gN(0,I_d)$. For the continuous-time interpolation specified in Section~\ref{sec:methods}, let $\mu_t:=\operatorname{Law}(X_t)$ and $\bar\mu_T:=T^{-1}\int_0^T\mu_t\,\mathrm{d}t$. To ensure the stability of discretization, previous works~\citep{balasubramanian22a, chewi2023fisher, sahin2026variance} adopt the following assumption.
\begin{assumption}[Global $L$-smoothness]\label{ass:global-smoothness}
    The potential $V$ is globally $L$-smooth; that is, its gradient is globally $L$-Lipschitz:
    $\|\nabla V(x_1)-\nabla V(x_2)\|\leq L\|x_1-x_2\|$ for all $x_1,x_2\in\mathbb{R}^d$ and some $L>0$.
\end{assumption}
Under this assumption,~\citet{balasubramanian22a} showed that $\mathcal{O}(L^2d^2/\varepsilon^2)$ iterations are sufficient to achieve $\mathrm{FI}(\bar\mu_T\|\pi)\leq\varepsilon$. When exact gradients are unavailable, SG-LMC averages $b$ independent
stochastic-oracle evaluations per update:
\begin{equation*}
X_{(k+1)h}=X_{kh}-\frac{h}{b}\sum_{j=1}^b\gG(X_{kh},\xi_{k,j})+\sqrt{2h}\,Z_k,
\end{equation*} 
where $Z_k\sim\mathcal N(0,I_d)$. We assume
$\E_\xi[\mathcal{G}(x,\xi)]=\nabla V(x)$ and
$\E_\xi\|\mathcal{G}(x,\xi)-\nabla V(x)\|^2\leq\sigma^2$. Choosing
$b=\mathcal{O}(\sigma^2/\varepsilon)$ in the FI bound of
\citet[Theorem~15]{balasubramanian22a} gives
$\mathcal{O}(L^2d^2\sigma^2/\varepsilon^3)$ oracle calls; the
step-by-step derivation is in Appendix~\ref{app:sg-lmc-comparison}.
The SL-VRLD method of~\citet{sahin2026variance} attains the same total
oracle order with constant expected calls per iteration under an
additional mean-square smoothness assumption on stochastic gradients.

\textbf{Symmetric Generalized Smoothness:} Motivated by empirical evidence of gradient-dependent local curvature in language modeling and ResNet-20 image classification~\citep{Zhang2020Why},~\citet{chen2023generalized} proposed the following symmetric generalization of global $L$-smoothness.
\begin{assumption}\label{ass:generalized-smoothness}
    For some $L_0,L_1\geq 0$ and $\gamma\in(0,1]$, the potential
    $V:\mathbb{R}^d\to\mathbb{R}$ satisfies
    \begin{equation*}
        \|\nabla V(y)-\nabla V(x)\|
        \leq
        \|y-x\|(
            L_0+
            L_1\sup_{z\in[x,y]}
            \|\nabla V(z)\|^\gamma)
    \vspace{-1em}
    \end{equation*}
    for all $x,y\in\mathbb{R}^d$.
\end{assumption}
The term ``symmetric'' refers to the use of the supremum gradient norm
along the segment joining $x$ and $y$, which treats the two endpoints
symmetrically. This allows local curvature to grow with the gradient magnitude and reduces to global $L_0$-smoothness when $L_1=0$. Compared with Assumption~\ref{ass:global-smoothness}, this assumption covers examples such as superquadratic potentials, polynomial double-well models, and generalized Gaussian distributions.


\textbf{BG-0 and Heavy-Tailed Oracles:} Uniformly bounded variance restricts both the state
dependence and tail behavior of gradient noise.
In Bayesian linear regression, a noise component
linear in the parameter can cause quadratically
growing variance~\citep{brosse2018promises}.
Such state-dependent bounds are studied
in~\citet{baker2019control,putcha2023preferential}.
Heavy-tailed gradient noise is reported in BERT
training~\citep{zhang2020adaptive},
CLIP~\citep{battash2024revisiting}, and ALBERT
and RoBERTa fine-tuning~\citep{chezhegov2024clipping}.
These observations motivate two relaxations:
\begin{equation}\label{ass:uniform-relaxation}
    \begin{aligned}
    \E_\xi\|\mathcal{G}(x,\xi)-\nabla V(x)\|^2
        &\leq B\|x-x_\mathrm{ref}\|^2+G,\\
    \E_\xi\|\mathcal{G}(x,\xi)-\nabla V(x)\|^p
        &\leq \sigma^p, \quad \forall x, x_{\mathrm{ref}}\in\sR^d,
    \end{aligned}
\end{equation}where $p\in(1,2]$, $B,G\geq 0$, and $\sigma>0$ are constants. The first inequality is the Blum--Gladyshev (BG-0) condition~\citep{blum1954approximation,gladyshev1965stochastic} and it permits the upper bound to grow quadratically with the distance from the reference point $x_{\mathrm{ref}}$. The second inequality imposes a state-independent finite $p$-th-moment bound and permits infinite variance when $1<p<2$.

\section{Methods}\label{sec:methods}

We develop our framework in three stages. First, under global $L$-smoothness, we propose a stochastic oracle that generalizes both BG-0 and heavy-tailed noise assumptions and construct a robust gradient estimator to control the resulting estimation error. Second, with exact-gradient access, we replace global $L$-smoothness by Assumption~\ref{ass:generalized-smoothness} and develop an adaptive step-size scheme to control discretization error. Finally, we combine these ideas in RA-LMC, which handles both relaxations simultaneously by addressing the coupling between stochastic estimation error and gradient-dependent local curvature.

We express all three methods through a common Euler interpolation.
At iteration $k$, let $F_k$ be the Langevin force (the negative gradient or its estimate) and $H_k>0$ the step size, both chosen before the next Brownian increment.
Set $\tau_0=0$ and $\tau_{k+1}=\tau_k+H_k$. For
$t\in[\tau_k,\tau_{k+1}]$, define
\begin{equation}\label{eq:common-lmc}
X_t:=X_{\tau_k}+(t-\tau_k)F_k
    +\sqrt{2}(W_t-W_{\tau_k}).
\end{equation}
Standard LMC has $F_k=-\nabla V(X_{\tau_k})$ and $H_k=h$. For $T>0$, let $N_T:=\inf\{n\in\mathbb{N}_0:\tau_n\geq T\}$ be the number of updates needed to reach $T$. If update $k$ uses $q_k$ gradient-oracle evaluations, let $Q_T:=\sum_{k=0}^{N_T-1}q_k$ be the total oracle count through time $T$. We specify the force and step size for each method below and establish non-asymptotic FI guarantees for all three settings.

\subsection{Robust LMC under State-Dependent Heavy-Tailed Noise}\label{sec:robust-lmc}
Building on the BG-0 and heavy-tailed noise models in~(\ref{ass:uniform-relaxation}), we propose a generalized BG-$p$ oracle that permits both state-dependent noise growth and potentially infinite variance. This formulation recovers both models as special cases and allows us to address their combined challenges in non-log-concave sampling. Our proposed oracle is given below.
\begin{assumption}[BG-$p$ oracle]\label{ass:BG-p}
    For $1<p\leq 2$ and $B,G\geq 0$, $\mathcal{G}(x,\xi)$ satisfies $\E_\xi[\mathcal{G}(x,\xi)]=\nabla V(x)$ and
    \begin{equation*}
        \begin{aligned}
        \E_\xi\|\mathcal{G}(x,\xi)-\nabla V(x)\|^p
            &\leq B\|x-x_{\mathrm{ref}}\|^p+G.
        \end{aligned}
    \end{equation*}
\end{assumption}
This assumption allows heavy-tailed stochastic gradient noise whose magnitude can increase as the sampler moves away from the reference point. The exponent $p$ specifies both the finite moment required of the noise and the power of its permitted growth with distance. Setting $p=2$ recovers the BG-0 condition, whereas $B=0$ recovers a state-independent finite-$p$-moment bound. For $1<p<2$ and $B>0$, the model can accommodate infinite variance and state-dependent noise simultaneously.

To handle a BG-$p$ oracle with potentially infinite variance, we adapt the two-stage centering and residual clipping approach of \citet{lugosi2019mean,whitehouse2024mean} to stochastic LMC. Specifically, we use the geometric median of three independent oracle evaluations as a center, clip the residual of a fourth around it, and establish the mean-squared error control needed for FI analysis. At iteration $k$ and state $x$, we draw independent oracle evaluations $\mathcal{G}_{k,i,j}:=\mathcal{G}(x,\xi_{k,i,j})$ and define
\begin{equation}\label{eq:geomed}
    \begin{aligned}
    C_{k,i} &:= \mathrm{GeoMed}(\mathcal{G}_{k,i,1},\mathcal{G}_{k,i,2},\mathcal{G}_{k,i,3}),\\
    \widehat{\mathcal{G}}_{k,i} &:= C_{k,i}+
        \mathrm{clip}_{\lambda_k}(\mathcal{G}_{k,i,4}-C_{k,i}),
    \end{aligned}
\end{equation}
where $\mathrm{clip}_{\lambda_k}(v):=v\min\{1,\lambda_k/\|v\|\}$ for $v\neq0$, and $\mathrm{clip}_{\lambda_k}(0):=0$. Here, $C_{k,i}$ is the \emph{geometric-median center}: a large deviation from the true gradient requires at least two of its three independent oracle evaluations to deviate substantially, yielding a finite second moment for $p>1$. We clip the residual rather than the gradient itself to control extreme noise without directly truncating the Langevin drift, which could otherwise steer the sampler toward a different target distribution. Averaging $b$ independent estimates from~(\ref{eq:geomed}), we define robust LMC (R-LMC) through~(\ref{eq:common-lmc}) by
\begin{equation}\label{sec3:geo-lmc}
    F_k=-\frac{1}{b}\sum_{i=1}^{b}\widehat{\mathcal{G}}_{k,i} \quad \text{and} \quad H_k=h.
\end{equation}
Thus $\tau_k=kh$, and each update uses $q_k=4b$ oracle evaluations. Building on the FI
framework of \citet{balasubramanian22a}, we establish our first convergence guarantee under the BG-$p$ oracle.

\begin{theorem}
\label{thm:fixed-batch}
Suppose Assumptions~\ref{ass:global-smoothness} and~\ref{ass:BG-p} hold, and $K_0<\infty$.
Let $\mu_t$ be the law of the interpolation~(\ref{eq:common-lmc})
with~(\ref{sec3:geo-lmc}).
For any $\delta>0$, integer $N\geq1$, and $0<h\leq(8L)^{-1}$,
there exist a batch size $b$ and clipping radii $\lambda_k$ such that,
with $T=Nh$, the time-averaged law $\bar\mu_T$ satisfies
\begin{equation*}
    \mathrm{FI}(\bar\mu_{T}\|\pi)
    \leq
    \frac{2K_0}{Nh}
    +\frac{32}{3}L^2dh
    +\frac{49}{12}\delta^2.
\end{equation*}
If $K_0=\mathcal O(d)$, then for sufficiently small $\varepsilon>0$
we can choose $N,h,b$, and $\lambda_k$ so that
$\mathrm{FI}(\bar\mu_T\|\pi)\leq\varepsilon$ with
$N_T=\mathcal O(L^2d^2/\varepsilon^2)$ in both cases below.
When $B>0$, the oracle complexity is
\begin{equation*}
    Q_T=\mathcal O\!\left(
        \frac{B^{1/(p-1)}L^{2+p/[2(p-1)]}
              d^{2+3p/[2(p-1)]}}
             {\varepsilon^{2+3p/[2(p-1)]}}
        \right).
\end{equation*}
When $B=0$ and $G>0$, it improves to
\begin{equation*}
    Q_T=\mathcal O\!\left(
        \frac{L^2d^2G^{1/(p-1)}}
             {\varepsilon^{2+p/[2(p-1)]}}
        \right).
\end{equation*}
\end{theorem}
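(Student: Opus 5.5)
The plan follows the interpolation argument of \citet{balasubramanian22a}, with the exact-gradient error replaced by the mean-squared error of the robust estimator. On $[kh,(k+1)h]$ the interpolation~(\ref{eq:common-lmc}) has drift $F_k$ frozen at $\mathcal F_{kh}$. Conditionally on the sigma-field $\mathcal F_{kh}$, the law $\mu_t$ solves a Fokker--Planck equation with drift $\bar F_t(x)=\E[F_k\mid X_t=x]$. Differentiating $\mathrm{KL}(\mu_t\|\pi)$ and using Young's inequality gives
\begin{equation*}
\partial_t\mathrm{KL}(\mu_t\|\pi)\le -\tfrac34\mathrm{FI}(\mu_t\|\pi)+\E\|F_k+\nabla V(X_t)\|^2 .
\end{equation*}
I will split the last term with a weighted inequality as
\begin{equation*}
\E\|F_k+\nabla V(X_t)\|^2\le (1+\eta)L^2\E\|X_t-X_{kh}\|^2+(1+\eta^{-1})\E\|F_k+\nabla V(X_{kh})\|^2 .
\end{equation*}
The constants $32/3$, $49/12$ and $2$ will be tracked from this choice together with $h\le (8L)^{-1}$.

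For the discretization term, $X_t-X_{kh}=(t-kh)F_k+\sqrt2(W_t-W_{kh})$, so it suffices to bound $h^2\E\|F_k\|^2+2dh$. Then $\E\|F_k\|^2\le 2\E\|\nabla V(X_{kh})\|^2+2\delta^2$. The gradient moment is converted into Fisher information through $\E_{\mu}\|\nabla V\|^2\le 2\mathrm{FI}(\mu\|\pi)+2\E_\mu\|\nabla\log\mu\|^2$. Following \citet{balasubramanian22a}, I will use the integration-by-parts identity $\E_\pi\|\nabla V\|^2=\E_\pi\Delta V\le Ld$, which controls $\E_{\mu}\|\nabla V\|^2\le \mathrm{FI}+2Ld$ up to constants. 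The time shift from $t$ to $kh$ costs only a factor since $h\le 1/(8L)$. The $\mathrm{FI}$ contribution carries coefficient $O(L^2h^2)$, which is absorbed into the $-\tfrac34\mathrm{FI}$ term, leaving at least $\tfrac12\mathrm{FI}$. Integrating over $[0,T]$, dividing by $T/2$, and using convexity of $\mathrm{FI}(\cdot\|\pi)$ to pass to $\bar\mu_T$ then yields $2K_0/(Nh)+cL^2dh+c'\delta^2$.

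The main obstacle is the estimation step: showing that $b$ and $\lambda_k$ can be chosen so that $\E\|F_k+\nabla V(X_{kh})\|^2\le\delta^2$ under BG-$p$ with state dependence. I will condition on $X_{kh}=x$ and set $s(x)^p=B\|x-x_{\mathrm{ref}}\|^p+G$.

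\emph{Center.} By the geometric-median property, $\|C-\nabla V\|\le 2\max$ over some pair of residuals. Hence $\mathbb P(\|C-\nabla V\|>u)\le 3(s/u)^{2p}$, and since $2p>2$ this gives $\E\|C-\nabla V\|^2\lesssim s^2$.

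\emph{Clipped residual.} The fourth sample is independent of $C$. Conditional on $C$, the clipped residual has bias of order $\E\|\mathcal G_4-C\|^p\lambda^{1-p}\lesssim s^p\lambda^{1-p}$ relative to $\nabla V-C$, and second moment at most $\lambda^{2-p}\E\|\mathcal G_4-C\|^p\lesssim\lambda^{2-p}s^p$.

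\emph{Averaging.} The $b$ estimates are i.i.d. given $x$, so
\begin{equation*}
\E\|\hat F-\nabla V\|^2\lesssim \frac{\lambda^{2-p}s^p}{b}+s^{2p}\lambda^{2-2p}.
\end{equation*}
Optimizing gives $\lambda\asymp s\,b^{1/p}$ and an error of order $s^2 b^{-(2p-2)/p}$. Note that $\lambda_k$ may depend on $\|X_{kh}-x_{\mathrm{ref}}\|$, which is observed.

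It remains to control $\E_{\mu_{kh}}\|X_{kh}-x_{\mathrm{ref}}\|^2$ uniformly over $k\le N$. I plan a crude a priori bound: a one-step recursion gives growth at most $(1+O(h))$ per step, so the moment stays at most $e^{O(T)}$ times its initial value, or alternatively grows polynomially in $T$. Choosing $b\asymp (M/\delta^2)^{p/(2p-2)}$ with $M=B\,\mathrm{poly}(T,d)+G^{2/p}$ then closes the argument.

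For the complexity bounds, take $h\asymp\varepsilon/(L^2d)$, $N\asymp K_0/(h\varepsilon)=O(L^2d^2/\varepsilon^2)$ and $\delta^2\asymp\varepsilon$. The total oracle count is $Q_T=4bN$. When $B=0$ this gives $b\asymp(G^{2/p}/\varepsilon)^{p/(2p-2)}=G^{1/(p-1)}\varepsilon^{-p/(2(p-1))}$, matching the stated rate. When $B>0$, the moment bound should scale like $T\cdot d$ with $T=Nh\asymp d/\varepsilon$, which gives the extra factors $(Ld^3/\varepsilon^3)^{p/(2(p-1))}$. The second-moment growth bound must be kept linear in $T$ and $d$ rather than exponential; I expect this to be the delicate part, likely handled by bounding $\E\|X_t\|^2$ via the $L$-smoothness and the KL/FI budget instead of Gr\"onwall.
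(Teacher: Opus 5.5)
\textbf{Where the proposal stands.} Your skeleton matches the paper's, with three shared ingredients:
\begin{itemize}
\item the frozen-drift dissipation inequality of Lemma~\ref{lem:robust-frozen-drift}, applied to the joint law of $(X_{kh},\widehat{\mathcal G}_k)$ and not conditionally on $\mathcal F_{kh}$, since conditioning would replace $\mu_{kh}$ by a point mass;
\item the gradient-energy bound of Lemma~\ref{lem:robust-gradient-energy} (drop your detour through $\E_\mu\|\nabla\log\mu\|^2$, which is not controlled);
\item the median-of-three center with residual clipping at $\lambda=s\,b^{1/p}$, giving the conditional error $C_p s_p(x)^2b^{-2(p-1)/p}$ as in Lemma~\ref{app:robust-estimation}.
\end{itemize}
For the finite-time bound at fixed $N,h,\delta$, your crude Gr\"onwall route is enough. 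For every $b\geq1$, Lemma~\ref{lem:robust-noise-concavity} bounds the conditional error by $C_p(B+G)^{2/p}(1+\|X_{kh}-x_{\mathrm{ref}}\|^2)$. A one-step recursion then bounds $\max_{k\leq N}\E\|X_{kh}-x_{\mathrm{ref}}\|^2$ by a finite constant (exponential in $T$) that does not depend on $b$, after which $b$ can be chosen. For $B>0$ this needs $\E\|X_0-x_{\mathrm{ref}}\|^2<\infty$, which the paper also assumes as $M_{2,0}$ in Theorem~\ref{thm:full-robust}. Your $B=0$ rate is also fine.

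\textbf{The gap: the $B>0$ oracle rate.} Here $T\asymp K_0/\varepsilon$, so the Gr\"onwall factor $e^{O(LT)}$ would make $b$ exponential in $1/\varepsilon$. You recognize this, but you leave the polynomial moment bound unproved, and the form you predict for it is not what the argument gives.

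The paper's bound is quadratic in $T$. It integrates $(1-4L^2u^2)\E\|\widehat{\mathcal G}_k\|^2\leq4\E\|\nabla V(X_{kh+u})\|^2+8L^2du+2e_k$ over $u\in[0,h]$. It then inserts Lemma~\ref{lem:robust-gradient-energy} and the already-proved integrated FI bound, which yields the drift-energy budget $h\sum_k\E\|\widehat{\mathcal G}_k\|^2\leq\frac{128}{15}K_0+\frac{664}{45}LdT+\frac{176}{9}T\bar e_N$. The summed recursion with Cauchy--Schwarz on the drift sum then gives
\[
\overline M_N\leq3M_{2,0}+\tfrac{64}{5}K_0T+\tfrac{332}{15}LdT^2+3dT+\tfrac{88}{3}T^2\bar e_N .
\]
The term $LdT^2\asymp Ld^3/\varepsilon^2$ is exactly what produces your factor $(Ld^3/\varepsilon^3)^{p/(2(p-1))}$. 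A moment bound of order $Td\asymp d^2/\varepsilon$ would instead give $(d^2/\varepsilon^2)^{p/(2(p-1))}$. So the mechanism you describe and the rate you quote do not match.

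\textbf{The missing closure.} The moment bound above contains $\bar e_N$. At the same time, Jensen's inequality and concavity of $\phi(u)=(Bu^{p/2}+G)^{2/p}$ give $\bar e_N\leq C_pb^{-2(p-1)/p}[B\overline M_N^{p/2}+G]^{2/p}$. Because of this circularity, $b$ cannot simply be read off an a priori moment bound.

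The paper first establishes finiteness of all moments by induction. It then fixes $b$ from the deterministic radius $R_{T,\delta}^2$, which is the moment bound with $\bar e_N$ replaced by $\delta^2$, plus a safety factor $2$. Finally it proves $\bar e_N<\delta^2$ by contradiction, using monotonicity and concavity of $\phi$.

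A cruder absorption through the linear envelope $s_p^2\leq(B+G)^{2/p}(1+\|x-x_{\mathrm{ref}}\|^2)$ would also work. It requires $\tfrac{88}{3}T^2C_pb^{-2(p-1)/p}(B+G)^{2/p}\leq\tfrac12$, a batch constraint of lower order as $\varepsilon\downarrow0$. Without one of these two steps, the $B>0$ complexity claim is not proved.
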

For fixed $B,G>0$, the state-dependent term determines the leading
high-accuracy oracle rate; $G$ remains in the finite bound.
The full statement and proof are provided in
Theorem~\ref{thm:full-robust} and Appendix~\ref{app:proof-robust}.

\begin{remark}
For $p=2$ and $B=0$, R-LMC matches the
$\mathcal O(L^2d^2G/\varepsilon^3)$ oracle complexity of
SG-LMC under a uniform variance bound $G\!>\!0$ (Lemma~\ref{lem:sg-lmc-fi}).
For $B\!=\!G\!=\!0$, it recovers LMC's
$\mathcal O(L^2d^2/\varepsilon^2)$ iteration complexity
\citep{balasubramanian22a}.
\end{remark}

\subsection{Adaptive LMC under Generalized Smoothness}\label{sec:gen-smooth}
We next develop an adaptive step-size mechanism for controlling the LMC discretization error when global $L$-smoothness is replaced by~Assumption~\ref{ass:generalized-smoothness}. Under global $L$-smoothness, the gradient varies at a uniform rate throughout the state space, allowing a single step size to control the discretization error along the entire trajectory. Under generalized smoothness, however, the gradient can change more rapidly as its norm increases. A fixed step size may therefore be safe where the gradient changes slowly but too aggressive where it changes rapidly, causing large discretization errors or instability. This motivates using shorter steps in regions of large gradients while allowing longer steps in more regular regions. We define adaptive LMC (A-LMC) through~(\ref{eq:common-lmc}) by
\begin{equation}\label{eq:adaptive-stepsize}
F_k=-\nabla V(X_{\tau_k}) \quad \text{and} \quad H_k=\frac{h}{1+\omega\|F_k\|^\gamma},
\end{equation}
where $\omega\geq 0$. Here, $\gamma$ determines how rapidly the step size decreases with the gradient norm. Matching this exponent to Assumption~\ref{ass:generalized-smoothness} offsets the corresponding growth in local curvature, while $\omega$ controls the overall strength of the adaptation. Because $\tau_{k+1}-\tau_k=H_k$ records elapsed time along the interpolation, the $h\to0$ limit of A-LMC is the original Langevin diffusion~(\ref{eq:cont-diff}), which preserves $\pi$. Indexing the same updates by $kh$ instead yields a time-rescaled diffusion with a different invariant law~\citep{leroy2024adaptive}, so our guarantee concerns the time-averaged law $\bar\mu_T$.

Extending the FI framework of~\citet{balasubramanian22a} to A-LMC
requires controlling how the gradient changes during each adaptive step.
Under global $L$-smoothness, this change is bounded by $L$ times the
displacement. Under Assumption~\ref{ass:generalized-smoothness}, the bound
depends on gradient values along the random interpolation path, producing
higher-order terms that displacement alone cannot control. We control
these terms with estimates of gradient changes along each adaptive
interval and a bound on the required gradient moments in terms of FI.
Combined with the adaptive step size, these estimates absorb the
state-dependent terms into FI and leave an explicit discretization
remainder. This yields the following theorem.

\begin{theorem}
\label{thm:adaptive-lmc}
Suppose Assumption~\ref{ass:generalized-smoothness} holds with
$0<\gamma\leq 1$ and
$K_0<\infty$.
Set $\omega=L_1/(1+L_0)$ and let $\mu_t$ be the law of the
interpolation~(\ref{eq:common-lmc}) with~(\ref{eq:adaptive-stepsize}).
For any horizon $T>0$ and sufficiently small $h>0$,
$N_T<\infty$ almost surely, and $\bar\mu_T$ satisfies
\begin{equation*}
    \mathrm{FI}(\bar\mu_T\|\pi)
    \leq
    \frac{4K_0}{3T}
    +\frac{8}{3}
    R_\gamma\!\left(
        h,1+C_\gamma+\frac{K_0}{T}
    \right),
\end{equation*}
where $C_\gamma$ and the discretization remainder $R_\gamma$ are defined
in Theorem~\ref{thm:full-adaptive}. If $K_0=\mathcal O(d)$, then for
sufficiently small $\varepsilon>0$ we can choose $T$ and $h$ so that
$\mathrm{FI}(\bar\mu_T\|\pi)\leq\varepsilon$ with $N_T=Q_T$.
Let $\Lambda:=\max\{1,L_0,L_1\}$. When $L_1>0$ and $0<\gamma\leq1$,
the expected oracle complexity is
\begin{equation*}
    \E Q_T=\mathcal{O}\!\left(
        \frac{
            \Lambda^{4/(2-\gamma)}
            d^{4/(2-\gamma)}
        }{\varepsilon^2}
    \right).
\end{equation*}
When $L_1=0$, the oracle count is deterministic and satisfies
\begin{equation*}
    Q_T=\mathcal{O}\!\left(
        \frac{L_0^2d^2}{\varepsilon^2}
    \right).
\end{equation*}
\end{theorem}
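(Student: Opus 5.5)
We follow the interpolation argument of \citet{balasubramanian22a}, adapted to random step lengths. On each interval $[\tau_k,\tau_{k+1}]$ the process~(\ref{eq:common-lmc}) is a diffusion with drift frozen at $F_k=-\nabla V(X_{\tau_k})$. The step length $H_k$ is $\mathcal F_{\tau_k}$-measurable, so the $\tau_k$ are stopping times and the Fokker--Planck equation for $\mu_t$ holds piecewise with the conditional drift $\E[F_k\mid X_t]$.

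\textbf{Step 1: one-interval KL inequality.} The standard computation gives, for $t\in[\tau_k,\tau_{k+1})$,
\[
\partial_t\KL(\mu_t\|\pi)\le -\tfrac34\FI(\mu_t\|\pi)+\E\|\nabla V(X_t)-\nabla V(X_{\tau_{k(t)}})\|^2 .
\]
To make this rigorous with random step lengths, we would write $\mu_t$ as a mixture over the index $k(t)$, or equivalently work with the joint law of $(X_{\tau_k},X_t)$ on $\{t<\tau_{k+1}\}$.

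Integrate over $[0,T]$ and use $\KL\ge0$. This yields
\[
\tfrac34\int_0^T\FI(\mu_t\|\pi)\,dt\le K_0+\int_0^T\E\,\Delta_t\,dt ,
\qquad
\Delta_t:=\|\nabla V(X_t)-\nabla V(X_{\tau_{k(t)}})\|^2 .
\]
Convexity of FI then gives the leading term $\frac{4K_0}{3T}$.

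\textbf{Step 2: bound the discretization error $\Delta_t$.} Apply Assumption~\ref{ass:generalized-smoothness} to the segment $[X_{\tau_k},X_t]$. By the standard Grönwall-type lemma for symmetric $(L_0,L_1)$-smoothness in \citet{chen2023generalized}, as long as $\|X_t-X_{\tau_k}\|\lesssim(L_0+L_1\|g_k\|^\gamma)^{-1}$, the supremum gradient along the segment is at most a constant multiple of $1+\|g_k\|$, where $g_k=\nabla V(X_{\tau_k})$.

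The displacement is $(t-\tau_k)g_k+\sqrt2(W_t-W_{\tau_k})$. The choice $H_k=h/(1+\omega\|g_k\|^\gamma)$ with $\omega=L_1/(1+L_0)$ makes the drift part $H_k\|g_k\|(L_0+L_1\|g_k\|^\gamma)\lesssim h(1+L_0)\|g_k\|$. Hence it suffices that $h$ is small. On the same event we get
\[
\E\Delta_t\lesssim \E\bigl[(L_0+L_1\|g_k\|^\gamma)^2\bigl(H_k^2\|g_k\|^2+dH_k\bigr)\bigr].
\]
The Brownian part can exceed the threshold. We handle that event with Gaussian tail bounds: on it we bound $\Delta_t$ crudely via the gradient growth implied by the assumption, which is at most exponential in distance for $\gamma=1$ and polynomial for $\gamma<1$, and multiply by the tiny tail probability. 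This is where ``$h$ sufficiently small'' enters.

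\textbf{Step 3, the main obstacle: close the loop through FI.} The error terms involve moments $\E\|\nabla V(X_{\tau_k})\|^{2+2\gamma}$ and $\E\|\nabla V\|^{2\gamma}$ at grid points, whereas FI controls only $\E_{\mu_t}\|\nabla V\|^2$ through the identity
\[
\E_\mu\|\nabla V\|^2\le 2\FI(\mu\|\pi)+2\E_\pi\|\nabla V\|^2\lesssim \FI+\E_\pi\Delta V .
\]
Here $\E_\pi\|\nabla V\|^2=\E_\pi\Delta V$, which is bounded by $d\,(L_0+L_1\E_\pi\|\nabla V\|^\gamma)$. This yields a constant of the form $C_\gamma$.

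The adaptive step is what makes the absorption work. Using $H_k\le h/(\omega\|g_k\|^\gamma)$, the higher powers cancel:
\[
L_1^2\|g_k\|^{2\gamma}H_k^2\|g_k\|^2\lesssim h^2(1+L_0)^2\|g_k\|^2 ,
\qquad
L_1^2\|g_k\|^{2\gamma}H_kd\lesssim hd\,L_1(1+L_0)\|g_k\|^{\gamma}.
\]
Both are at most linear in $\|g_k\|^2$; for the second we use Young's inequality with $\gamma\le1$.

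We must also transfer grid-point moments to the time integral. The time spent near $\tau_k$ is exactly $H_k$, so $\int\E\Delta_t\,dt=\E\sum_kH_k(\cdots)$. We compare $\|g_k\|^2$ with $\|\nabla V(X_t)\|^2$ on the interval using Step 2 again. After this comparison, the remainder is $R_\gamma(h,\,1+C_\gamma+K_0/T)\cdot T$ plus a small multiple of $\int\FI$. That multiple is absorbed into the left side, moving the constant from $3/4$ to the stated $4/3$ and $8/3$ factors.

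\textbf{Step 4: finiteness and complexity.} Almost-sure finiteness of $N_T$ follows because $\sum_kH_k=\infty$ unless $\|g_k\|\to\infty$. That event is excluded since $\E\sum_kH_k\|g_k\|^2<\infty$ from the bound above.

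For the complexity, take $T\asymp K_0/\varepsilon$ and choose $h$ so that $R_\gamma\le\varepsilon$. Then
\[
\E N_T\le \E\sum_k \frac{H_k\,(1+\omega\|g_k\|^\gamma)}{h}\le \frac{T+\omega\,\E\int\|\nabla V\|^\gamma}{h} .
\]
This is controlled by Jensen's inequality and the FI moment bound. Optimizing $h$ against the $d$- and $\Lambda$-dependence of $R_\gamma$ gives the exponent $4/(2-\gamma)$. When $L_1=0$ we have $H_k=h$, which recovers $L_0^2d^2/\varepsilon^2$.
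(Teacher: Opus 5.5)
Your architecture is the paper's. Step~1 is its adapted-drift KL inequality; the freezing error is weighted by interval duration; the inequalities $\omega\ell_k\|g_k\|^\gamma\le h$ cancel the excess powers; gradient moments are traded for FI; the error is absorbed; and updates are counted via $\sum_kH_k(1+\omega\|g_k\|^\gamma)/h$. But two load-bearing steps fail as written.

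\textbf{Step~2 (localization).} Measured in units of your radius $(L_0+L_1\|g_k\|^\gamma)^{-1}$, the drift displacement is $H_k\|g_k\|(L_0+L_1\|g_k\|^\gamma)$. Your own computation bounds this by $h(1+L_0)\|g_k\|$, and it is of that size for large $\|g_k\|$. Since $\|g_k\|$ is unbounded, no fixed $h$ keeps it below one. In the same units the Brownian displacement grows like $\sqrt{dhL_1(1+L_0)}\,\|g_k\|^{\gamma/2}$. So at large gradients the bad event has probability near one, and there is no tiny tail probability to multiply by. No localization is needed. The bounds of \citet{chen2023generalized} hold for all pairs $x,y$, with the extra term $L_1^{1/(1-\gamma)}\|y-x\|^{1/(1-\gamma)}$ for $\gamma<1$ and the factor $e^{L_1\|y-x\|}$ for $\gamma=1$. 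Combine them with conditional Gaussian moments and $(\ell_k\|g_k\|)^{2/(1-\gamma)}\le(h/\omega)^{2/(1-\gamma)}\|g_k\|^2$, or $L_1\ell_k\|g_k\|\le(1+L_0)h$ inside the exponential when $\gamma=1$. This gives $\ell_k\{A_\gamma(h)+B_\gamma(h)\|g_k\|^2+Q_\gamma(h)\|g_k\|^\gamma\}$ directly, and small $h$ is needed only for absorption.

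\textbf{Step~3 (gradient moment bound).} The inequality $\E_\mu\|\nabla V\|^2\le2\mathrm{FI}(\mu\|\pi)+2\E_\pi\|\nabla V\|^2$ is false, even up to constants. Splitting $\nabla V=(\nabla V+\nabla\log\mu)-\nabla\log\mu$ produces $\E_\mu\|\nabla\log\mu\|^2$, not $\E_\pi\|\nabla V\|^2$. For a counterexample, take $d=1$, $V=0$ on $[-R,R]$ and $V=a(|x|-R)$ outside (kinks smoothed). Let $\sqrt{\mathrm d\mu/\mathrm d\pi}$ rise linearly on $[R-\delta,R]$ and stay constant beyond. The left side is about $3a/\delta$, the right side about $24/\delta^2+2a/R$, and with $\delta=\sqrt R$ the ratio is unbounded as $R\to\infty$. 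The correct route integrates by parts under $\mu$: $\E_\mu\|\nabla V\|^2=\E_\mu\langle\nabla V,\nabla\log\frac{\mathrm d\mu}{\mathrm d\pi}\rangle+\E_\mu\Delta V$. Then use the a.e.\ bound $\Delta V\le d(L_0+L_1\|\nabla V\|^\gamma)$ (from Rademacher, since $\nabla V$ is only locally Lipschitz) under $\mu$, not $\pi$. Absorbing $(\E_\mu\|\nabla V\|^2)^{\gamma/2}$ by Young gives $2\mathrm{FI}(\mu\|\pi)+C_\gamma$.

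Three further points matter for the exact statement.
\begin{itemize}
\item \emph{Rate.} If you make $hdL_1(1+L_0)\|g_k\|^\gamma$ linear in $\|g_k\|^2$ by Young (say $\|g_k\|^\gamma\le1+\|g_k\|^2$), that term scales with $M$ instead of $M^{\gamma/2}$. You then do not obtain $R_\gamma$, and the complexity bound grows by a factor up to $M^{1-\gamma/2}$, which can be of order $d\Lambda$. Instead keep the power and apply Jensen to the duration-weighted average, $T^{-1}\E\sum_k\ell_k\|g_k\|^\gamma\le m_T^{\gamma/2}$ with $m_T:=T^{-1}\E\sum_k\ell_k\|g_k\|^2$, together with $(M_T+e_T)^{\gamma/2}\le M_T^{\gamma/2}+e_T$.
\item \emph{Constants.} Absorbing a multiple of $\int\mathrm{FI}$ on the left gives $K_0/((3/4-c)T)$, not $4K_0/(3T)$. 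The paper instead substitutes $\overline{\mathrm{FI}}_T\le\frac43(K_0/T+e_T)$ into $m_T\le4\overline{\mathrm{FI}}_T+2C_\gamma+2e_T\le8(M_T+e_T)$. It then absorbs only $e_T$, under $\Delta_\gamma(h)\le1/2$, which yields exactly $\frac{4K_0}{3T}+\frac83R_\gamma(h,M_T)$.
\item \emph{Non-accumulation.} Step~4 is circular. Your bound on $\E\sum_kH_k\|g_k\|^2$ was derived for a process assumed to exist on $[0,T]$, with $e_T<\infty$ assumed in order to absorb. The paper runs at most $n$ updates followed by the exact Langevin diffusion, starting from compactly supported initial laws so all moments are finite. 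It proves bounds uniform in $n$, concludes $\mathbb P(\tau_n\le T)\le\E N_{T,n}/n\to0$, and finally removes the support restriction by conditioning on balls.
\end{itemize}
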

The complete statement and proof are provided in 
Theorem~\ref{thm:full-adaptive} and Appendix~\ref{app:proof-adaptive}.

\begin{remark}
When $L_1=0$, the adaptive step size becomes constant, and our
complexity bound recovers the rate of standard LMC under global
$L_0$-smoothness.
\end{remark}

\subsection{Robust Adaptive LMC (RA-LMC)}\label{sec:robust-gen-lmc}

For a non-log-concave target satisfying Assumption~\ref{ass:generalized-smoothness} and a stochastic oracle satisfying Assumption~\ref{ass:BG-p}, a natural approach is to combine the robust estimator~(\ref{eq:geomed}) with the adaptive step size~(\ref{eq:adaptive-stepsize}). This combination, however, is not straightforward. The adaptive step-size rule in~(\ref{eq:adaptive-stepsize}) assumes access to the exact gradient. Replacing it with the robust estimate makes the estimation error and step size dependent, so the same inaccurate estimate can affect both the drift and the duration over which its error is applied. Consequently, the conditional mean-squared error bound for~(\ref{eq:geomed}) does not directly control the error accumulated over elapsed time, and the two preceding analyses cannot be combined through a simple substitution.

\begin{figure*}[t]
  \centering
  \includegraphics[width=\textwidth]{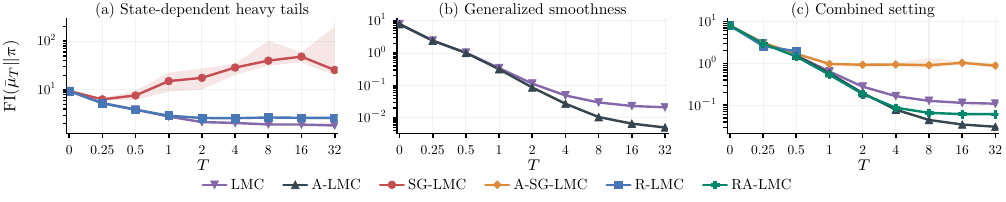}
  \vspace{-1em}
  \caption{Robust estimation and adaptive steps improve FI convergence under heavy-tailed noise and generalized smoothness, respectively, while RA-LMC improves convergence when both challenges arise by integrating robust estimation with a noise-aware step size. SG, R, and A denote stochastic gradient averaging, proposed robust estimator and adaptive steps, respectively. Panels (a)--(c) use a Gaussian mixture with a BG-$p$ oracle of infinite variance, a radial quartic target with exact gradients, and a radial cubic target with the same oracle, respectively. Each method’s curve shows the median across 10 seeds, with its shaded region indicating the interquartile range.}
  \vspace{-1.5em}
  \label{fig:synthetic-fi}
\end{figure*}

\begin{figure}[t]
  \centering
  \includegraphics[width=\columnwidth]{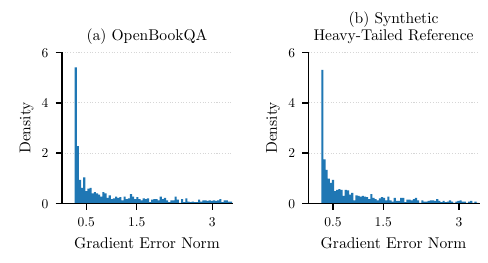}
  \vspace{-1em}
  \caption{Panel (a) shows minibatch gradient error norms measured at step 2,500
    of Qwen3-4B LoRA fine-tuning on OpenBookQA. Panel (b) shows synthetic
    norms generated from a one-sided L\'evy-stable reference construction with an infinite second moment. The similarity between the histograms suggests heavy-tailed behavior in the stochastic gradient errors.}
  \vspace{-1em}
  \label{fig:llm-noise-histograms}
\end{figure}

\begin{figure}[t]
  \centering
  \includegraphics[width=\columnwidth]{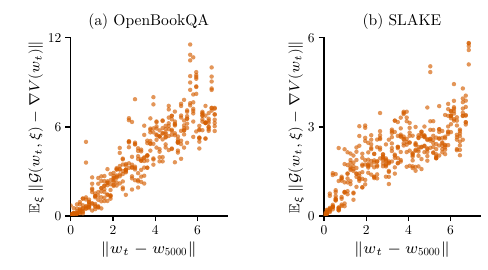}
  \vspace{-2em}
  \caption{Panels (a) and (b) show the empirical first moment of minibatch
  gradient-error norms versus distance from the final LoRA iterate during
  fine-tuning of Qwen3-4B on OpenBookQA and Qwen3-VL-4B-Instruct on SLAKE.
  Each point averages error norms from 32 independent minibatches of size
  four. The increase in the first moment with distance motivates the
  state-dependent term in the BG-$p$ oracle.}
  \vspace{-1em}
  \label{fig:llm-noise-first-moment}
\end{figure}

To address this dependence, we retain the robust estimator
in~(\ref{eq:geomed}) and define the local $L^p$-noise scale at state $x$ by
\begin{equation*}
    s_p(x):=\bigl[B\|x-x_\mathrm{ref}\|^p+G\bigr]^{1/p},
\end{equation*}
which bounds the conditional $L^p$ norm of the oracle error.
Using both this scale and the robust gradient estimate, we define robust adaptive LMC (RA-LMC) through~(\ref{eq:common-lmc}) by
\begin{equation}\label{eq:robust-lmc-adaptive-iterates}
F_k=-\frac{1}{b}\sum_{i=1}^{b}\widehat{\mathcal{G}}_{k,i}, \quad H_k=\frac{h}{1+\omega\|F_k\|^\gamma+\kappa s_p(X_{\tau_k})^2},
\end{equation}
where $\kappa,\omega\geq 0$. The term $\|F_k\|^\gamma$ adapts the step size to the gradient-dependent curvature allowed by generalized smoothness, while $s_p(X_{\tau_k})^2$ further shortens the step in regions of high oracle noise and controls the coupling between the estimation error and its duration. Building on Theorem~\ref{thm:adaptive-lmc}, we control the interaction between gradient estimation and discretization errors under generalized smoothness. The following theorem gives a non-asymptotic FI guarantee for the time-averaged law and the corresponding iteration and stochastic-oracle complexities.
\begin{theorem}
\label{thm:robust-adaptive-lmc}
Suppose Assumptions~\ref{ass:generalized-smoothness}
and~\ref{ass:BG-p} hold with $0<\gamma\leq 1$ and $1<p\leq 2$, and
$K_0<\infty$. If $B>0$, assume additionally that
$\E\norm{X_0-x_{\mathrm{ref}}}^2\leq M_{2,0}<\infty$.
Set $\omega=L_1/(1+L_0)$ and let $\mu_t$ be the law of the
interpolation~(\ref{eq:common-lmc})
with~(\ref{eq:robust-lmc-adaptive-iterates}).
For sufficiently small $\varepsilon>0$, we can choose
$T,h,b,\kappa$, and $\lambda_k$ so that $N_T<\infty$ almost surely
and $\bar\mu_T$ satisfies
$\mathrm{FI}(\bar\mu_T\|\pi)\leq\varepsilon$ with $Q_T=4bN_T$.
Let $\Lambda:=\max\{1,L_0,L_1\}$.
If $K_0=\mathcal O(d)$ and $L_1>0$, then the expected iteration complexity is
\begin{equation}
    \E N_T=\mathcal{O}\left(\frac{\Lambda^{(8-2\gamma)/(2-\gamma)}
d^{(6-\gamma)/(2-\gamma)}}{\varepsilon^2}\right).
\end{equation}
When $B>0$, the expected oracle complexity is
\begin{align*}
    \E Q_T
    &=\mathcal{O}\!\left(
        \frac{B^{1/(p-1)}
        \Lambda^{\frac{(8-2\gamma)(p-1)+2p}
                         {(p-1)(2-\gamma)}}
        d^{\frac{(3p-2)(6-\gamma)}
                {2(p-1)(2-\gamma)}}}{\varepsilon^{2+3p/[2(p-1)]}}
    \right).
\end{align*}
When $B=0$, the expected oracle complexity is
\begin{align*}
    \E Q_T
    &\!=\!\mathcal{O}\!\left(\!
        \frac{(1+G)^{1/(p-1)}\Lambda^{\frac{(8-2\gamma)(p-1)+p}
                            {(p-1)(2-\gamma)}}
        \!d^{1+\frac{8(p-1)+\gamma p}
                    {2(p-1)(2-\gamma)}}}{\varepsilon^{2+p/[2(p-1)]}}\!
    \right).
\end{align*}
\end{theorem}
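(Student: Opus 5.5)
The plan is to run the interpolation argument behind Theorem~\ref{thm:adaptive-lmc}, treating the robust estimate as a perturbation of the exact drift. On each interval $[\tau_k,\tau_{k+1}]$ I would write the Fokker--Planck equation of the interpolation~(\ref{eq:common-lmc}) with the conditional drift $\E[F_k\mid X_t]$. Differentiating $\mathrm{KL}(\mu_t\|\pi)$ in the manner of \citet{balasubramanian22a} should give
\begin{equation*}
\frac{\mathrm d}{\mathrm dt}\mathrm{KL}(\mu_t\|\pi)\le -\frac34\mathrm{FI}(\mu_t\|\pi)+\E\norm{F_k+\nabla V(X_t)}^2 .
\end{equation*}
I would then split $F_k+\nabla V(X_t)$ into an estimation part $e_k:=F_k+\nabla V(X_{\tau_k})$ and a discretization part $\nabla V(X_t)-\nabla V(X_{\tau_k})$. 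Integrating over $[0,T]$ and dividing by $T$ yields $\mathrm{FI}(\bar\mu_T\|\pi)\le \frac{4K_0}{3T}$ plus the time-averaged errors. The discretization part is handled as in Theorem~\ref{thm:adaptive-lmc}. Because $H_k\le h/(1+\omega\norm{F_k}^\gamma)$, the adaptive factor still controls $\sup_{z\in[X_{\tau_k},X_t]}\norm{\nabla V(z)}^\gamma$. The gradient-moment bound in terms of FI then absorbs the state-dependent terms, leaving the remainder $R_\gamma$. The new ingredient is that the step is computed from $\norm{F_k}$ rather than $\norm{\nabla V(X_{\tau_k})}$. Since $\norm{\nabla V}^\gamma\le\norm{F_k}^\gamma+\norm{e_k}^\gamma$, I expect this to cost only an extra $\E\norm{e_k}^{2}$-type term.

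The main obstacle is the estimation part, $\E\int_{\tau_k}^{\tau_{k+1}}\norm{e_k}^2\,\mathrm dt=\E[H_k\norm{e_k}^2]$. Here $H_k$ is correlated with $e_k$, so the conditional MSE bound for~(\ref{eq:geomed}) cannot be applied after factoring out $h$. The approach is to use the crude bound $H_k\le h/(1+\kappa s_p(X_{\tau_k})^2)$, which depends only on $X_{\tau_k}$ and not on the fresh oracle draws. Conditioning on $\mathcal F_{\tau_k}$, the geometric-median and clipping lemma behind Theorem~\ref{thm:fixed-batch} gives $\E[\norm{e_k}^2\mid X_{\tau_k}]\lesssim s_p(X_{\tau_k})^2\,\phi(b,\lambda_k)$. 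The factor $\phi$ collects the clipping bias $s_p^p\lambda_k^{1-p}$ and the variance $\lambda_k^{2-p}s_p^p/b$. Choosing $\lambda_k\propto s_p(X_{\tau_k})\,b^{1/p}$ gives $\phi=\mathcal O(b^{-2(p-1)/p})$. This yields
\begin{equation*}
\E[H_k\norm{e_k}^2]\le h\,\E\Big[\tfrac{s_p(X_{\tau_k})^2}{1+\kappa s_p(X_{\tau_k})^2}\Big]\phi\le \frac{h\,\phi}{\kappa}.
\end{equation*}
This holds uniformly in $k$ and needs no moment bound on the trajectory. Summing over steps, the time-averaged estimation error is at most $\mathcal O(\phi/\kappa)$, so it suffices to take $\phi\le\kappa\varepsilon$.

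The price of this choice is that $\kappa s_p^2$ also shortens the steps, which inflates $N_T$. I would bound $\E N_T\le 1+\frac{1}{h}\E\int_0^T(1+\omega\norm{F_{k(t)}}^\gamma+\kappa s_p(X_{\tau_{k(t)}})^2)\,\mathrm dt$. The middle term is handled through the FI gradient-moment bound and $\E\norm{e_k}^\gamma$. For the last term I need $\E\norm{X_t-x_{\mathrm{ref}}}^2$. I would get this from a Grönwall-type second-moment recursion started at $M_{2,0}$: the drift and Brownian contributions grow at most like $d\,t$ plus $t\int\E\norm{\nabla V}^2$, and the latter is again controlled by FI and $K_0$. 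This is where the assumption $\E\norm{X_0-x_{\mathrm{ref}}}^2\le M_{2,0}$ enters, and it is not needed when $B=0$. The remaining step is parameter balancing. Take $T\asymp K_0/\varepsilon$, and choose $h$ so that $R_\gamma\lesssim\varepsilon$, which reproduces the $\Lambda^{4/(2-\gamma)}d^{4/(2-\gamma)}$ scaling. Then choose $\kappa$ as large as possible without worsening $\E N_T$ beyond an additional $\Lambda^{2}d$ factor; that factor gives the stated $\E N_T$ exponent $(6-\gamma)/(2-\gamma)$. Finally, take $b\asymp(\kappa\varepsilon)^{-p/[2(p-1)]}$ scaled by $B^{1/(p-1)}$ or $(1+G)^{1/(p-1)}$, and use $Q_T=4bN_T$. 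I expect tracking the exponents here to be tedious rather than conceptually hard. The genuine difficulty is the decoupling step above, together with ensuring that using the noisy $\norm{F_k}$ in $H_k$ does not break the absorption argument of Theorem~\ref{thm:adaptive-lmc}.
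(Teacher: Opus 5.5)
\textbf{Genuine gap: closing the estimation error and the iteration count.} Your decoupling step matches the paper's key mechanism: you bound $H_k$ by $h/(1+\kappa s_p(X_{\tau_k})^2)$, which is fixed before the fresh oracle batch is drawn, and then apply the conditional mean-squared error bound. The gap is in what you conclude from it.

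The inequality $\E[H_k\norm{e_k}^2\mid\mathcal F_k]\le h\phi/\kappa$ holds per update. Summing it over the random number of occupied updates is legitimate, because $\{\tau_k<T\}$ is $\mathcal F_k$-measurable. With $k(t)$ the index of the interval containing $t$, this gives $\bar e:=\frac1T\E\int_0^T\norm{e_{k(t)}}^2\,\mathrm dt\le\frac{\phi}{\kappa}\cdot\frac{h\,\E N_T}{T}$, not $\mathcal O(\phi/\kappa)$.

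The factor $h\E N_T/T$ is at most $h/T$ plus the time average of $1+\omega\norm{F}^\gamma+\kappa s_p(X_{\tau_{k(t)}})^2$. That average is of order $1+\omega M_T^{\gamma/2}+\kappa\,\overline s$, where $\overline s$ is the time-averaged frozen noise scale, and it depends on $\bar e$ itself:
\begin{itemize}
\item the time-averaged drift energy satisfies $\bar a\le2\bar g+2\bar e$;
\item the path identity, whose drift is $F$ and not $\nabla V$, gives $\frac1T\int_0^T\E\norm{X_t-x_{\mathrm{ref}}}^2\,\mathrm dt\le3M_{2,0}+3dT+\frac32T^2\bar a$;
\item this moment controls $\overline s$ through concavity of $u\mapsto(Bu^{p/2}+G)^{2/p}$.
\end{itemize}
So for $B>0$ the estimation-error bound is not free of trajectory moments, and $\phi\le\kappa\varepsilon$ does not suffice.

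What is needed is a joint closure of these quantities, as in the paper:
\begin{itemize}
\item an a priori envelope $\bar s_T^2=[BS_T^{p/2}+G]^{2/p}$ with $S_T=6M_{2,0}+8dT+80T^2M_T$;
\item the choice $\kappa=(1+\bar s_T^2)^{-1}$, so that $\kappa\bar s_T^2\le1$;
\item the bounds $\bar a\le18M_T+80\mathcal D_\gamma(h)+44\bar e$ and $\overline s\le\bar s_T^2(1+3\bar e)$;
\item the resulting scalar inequality $\bar e\le\rho_b\kappa^{-1}\{h/T+J_{T,h}+\chi_{T,h}\bar e\}$ with $J_{T,h},\chi_{T,h}=\mathcal O(1+\omega M_T^{\gamma/2})$;
\item a batch size with $\rho_b\lesssim\kappa\varepsilon/(1+\omega M_T^{\gamma/2})$ to absorb the last term. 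This extra factor in $b$ enters the stated oracle exponents.
\end{itemize}

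\textbf{Missing technical steps.} The absorption above requires $\bar e$ and $\E N_T$ to be finite a priori, and the theorem also asserts $N_T<\infty$ almost surely. You therefore need a localization: truncate after $m$ updates, continue with the exact Langevin diffusion, prove $m$-uniform bounds, and use $\mathbb P(\tau_m\le T)\le\E N_{T,m}/m$. Likewise, because the endpoints $\tau_k$ are random, the law-level KL computation cannot be run ``on each interval''. You need a dissipation inequality for a progressively measurable drift, which the paper proves via Girsanov, a Markovian projection, and time reversal.

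\textbf{Parameter balancing.} Your balancing is reverse-engineered from the target exponents rather than derived. Taking $\kappa$ ``as large as possible'' is the wrong principle. Since $J_{T,h}\gtrsim\kappa\bar s_T^2$, the ratio $\kappa/J_{T,h}$ that governs the batch saturates near $1/\bar s_T^2$. Pushing $\kappa\bar s_T^2$ beyond order $1+\omega M_T^{\gamma/2}$ therefore does not materially reduce $b$ and only inflates $N_T$.

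In the paper the $\kappa$ term costs only a constant in $\E N_T$, and the extra $d\Lambda^2$ relative to A-LMC has a different source. Because $H_k$ is keyed to $\norm{F_k}$ rather than $g=\norm{\nabla V(X_{\tau_k})}$, the Brownian term $H_k^2L_1^2d\,g^{2\gamma}$ cannot be reduced using $H_kg^\gamma\le h/\omega$. It is instead bounded via $g^{2\gamma}\le1+g^2$. The linear-in-$h$ coefficient of $\mathcal D_\gamma$ is then $D^{(1)}_\gamma\sim d\Lambda^2(1+C_\gamma)$ rather than that of $R_\gamma$, and $\E N_T=\mathcal O((K_0+1)D^{(1)}_\gamma J^*_\gamma/\varepsilon^2)$.

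Your splitting $\norm{\nabla V}^\gamma\le\norm{F_k}^\gamma+\norm{e_k}^\gamma$ is a workable alternative and may retain more of the exact-gradient structure, since its new terms are duration-weighted errors handled by the same feedback. You would still have to redo the complexity bookkeeping through the closure above, rather than invoke $R_\gamma$ and attribute the leftover factor to $\kappa$.
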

The full statement, including parameter choices
and the $\mathrm{FI}$ bound, is given in
Theorem~\ref{thm:full-combined}, with complexity
specializations in
Corollary~\ref{cor:combined-special-cases}
and proofs in Appendix~\ref{app:proof-combined}.

\begin{remark}
For $p=2$, the oracle bounds reduce to
$\mathcal O(\varepsilon^{-3})$ under uniformly bounded variance
($B=0$) and $\mathcal O(\varepsilon^{-5})$ under BG-0 oracle
($B>0$), with other problem parameters fixed, as stated in
Corollary~\ref{cor:combined-special-cases}. The BG-0 oracle bound
has the same $\varepsilon^{-5}$ dependence as R-LMC under global smoothness
(Theorem~\ref{thm:fixed-batch}). When $L_1>0$, generalized smoothness
changes the dimension and smoothness factors.
When $L_1=\kappa=0$, RA-LMC reduces to R-LMC, and the parameter
choices of Theorem~\ref{thm:fixed-batch} recover its fixed-step
guarantees. For exact gradients ($B=G=0$), we bypass the estimator,
use one gradient evaluation per update, and adopt the schedule of
Theorem~\ref{thm:adaptive-lmc} to recover A-LMC. If $L_1=0$ as well,
this gives the standard LMC
complexity $\mathcal O(L_0^2d^2/\varepsilon^2)$. The parameter choices
and derivations for these cases are given in
Appendix~\ref{app:full-theorems}.
\end{remark}

\section{Experiments}

\subsection{Synthetic Experiments}

We conduct synthetic experiments to validate the FI convergence of our three methods and assess the benefit of each mechanism. We consider three non-log-concave sampling problems in $\sR^2$, testing R-LMC under a state-dependent heavy-tailed oracle, A-LMC under generalized smoothness with exact gradients, and RA-LMC under both relaxations. Figure~\ref{fig:synthetic-fi} reports estimated $\mathrm{FI}(\Bar{\mu}_T\|\pi)$ versus time $T$, showing medians across ten random seeds with interquartile bands. Appendix~\ref{app:synthetic-setting} provides the experimental settings and FI estimation details.


Figure~\ref{fig:synthetic-fi}(a) shows results for R-LMC on a Gaussian-mixture target with a globally smooth potential and a state-dependent Student oracle having a finite $p$-th moment but infinite variance. We compare R-LMC, SG-LMC, and exact-gradient LMC using the same fixed step size. Both stochastic methods use 16 oracle queries per update. R-LMC closely follows the FI decrease of exact-gradient LMC, whereas SG-LMC diverges, illustrating the benefit of robust estimation.

Figure~\ref{fig:synthetic-fi}(b) shows results for A-LMC on a radial quartic potential satisfying Assumption~\ref{ass:generalized-smoothness} with $\gamma=2/3$. Using exact gradients, we compare fixed-step LMC with A-LMC. Both methods initially reduce FI, but A-LMC reaches lower FI over the observed horizon, supporting the use of adaptive steps to control discretization error under unbounded curvature.

Figure~\ref{fig:synthetic-fi}(c) shows results for RA-LMC on a radial cubic potential satisfying Assumption~\ref{ass:generalized-smoothness} with $\gamma=1/2$. We compare six methods, using the same oracle as in panel (a) and 16 queries per update for each stochastic method. Fixed-step SG-LMC and R-LMC encounter numerical failures, while A-SG-LMC plateaus at higher FI. RA-LMC closely tracks exact-gradient A-LMC and has lower estimated FI than A-SG-LMC and fixed-step LMC. These results highlight the benefit of robust estimation with adaptive steps when state-dependent heavy-tailed noise and growing curvature occur together.

\begin{table*}[t]
  \centering
  \vspace{-1em}
  \caption{OpenBookQA and SLAKE test results (mean $\pm$ std. dev.).
    Accuracy (ACC, in percent) is higher-is-better; negative log-likelihood
    (NLL), Brier score, and expected calibration error (ECE) are lower-is-better. \textbf{Bold:} the best. \underline{Underline:} the second best.}
  \label{tab:obqa-slake-sensitivity}
  \begingroup
  \footnotesize
  \setlength{\tabcolsep}{1pt}
  \renewcommand{\arraystretch}{1.0}
  \newcommand{\msd}[2]{\mbox{$#1${\scriptsize$\pm #2$}}}
  \begin{tabular*}{\textwidth}{@{\extracolsep{\fill}}lcccc@{\hspace{2pt}}cccc@{}}
    \toprule
    & \multicolumn{4}{c}{\textbf{OpenBookQA}} & \multicolumn{4}{c}{\textbf{SLAKE}} \\
    \cmidrule(lr){2-5}\cmidrule(lr){6-9}
    Method & ACC $\uparrow$ & NLL $\downarrow$ & Brier $\downarrow$ & ECE $\downarrow$
           & ACC $\uparrow$ & NLL $\downarrow$ & Brier $\downarrow$ & ECE $\downarrow$ \\
    \midrule
    MLE & \msd{91.400}{0.849} & \msd{0.617}{0.050} & \msd{0.151}{0.012} & \msd{0.070}{0.007}
        & \msd{89.155}{0.934} & \msd{1.070}{0.286} & \msd{0.203}{0.017} & \msd{0.103}{0.010} \\
    MAP & \msd{91.500}{0.757} & \msd{0.583}{0.051} & \msd{0.147}{0.014} & \msd{0.069}{0.011}
        & \msd{89.366}{0.423} & \msd{0.954}{0.247} & \msd{0.198}{0.012} & \msd{0.099}{0.006} \\
    TempScale & \msd{91.400}{0.849} & \msd{0.293}{0.003} & \msd{0.138}{0.003} & \msd{0.031}{0.010}
              & \msd{89.155}{0.934} & \msd{0.278}{0.018} & \msd{0.167}{0.011} & \msd{0.036}{0.007} \\
    MC Dropout & \msd{91.250}{1.182} & \msd{0.545}{0.044} & \msd{0.151}{0.013} & \msd{0.066}{0.008}
               & \msd{89.366}{0.841} & \msd{0.832}{0.177} & \msd{0.197}{0.017} & \msd{0.098}{0.009} \\
    Deep Ensemble & \msd{91.600}{0.163} & \msd{0.459}{0.053} & \msd{0.141}{0.002} & \msd{0.058}{0.003}
                  & \msd{\underline{90.000}}{0.586} & \msd{0.660}{0.046} & \msd{0.177}{0.007} & \msd{0.084}{0.006} \\
    Laplace & \msd{\underline{91.900}}{0.600} & \msd{0.317}{0.010} & \msd{0.158}{0.005} & \msd{0.093}{0.012}
            & \msd{89.366}{0.809} & \msd{0.283}{0.011} & \msd{0.172}{0.007} & \msd{0.066}{0.004} \\
    BLoB & \msd{90.800}{0.566} & \msd{0.299}{0.002} & \msd{\underline{0.135}}{0.004} & \msd{\bm{0.028}}{0.002}
         & \msd{88.873}{0.598} & \msd{\bm{0.242}}{0.003} & \msd{\underline{0.152}}{0.002} & \msd{\bm{0.021}}{0.009} \\
    ScalaBL & \msd{90.850}{0.806} & \msd{\underline{0.285}}{0.006} & \msd{0.139}{0.002} & \msd{0.035}{0.006}
            & \msd{89.225}{0.931} & \msd{\underline{0.254}}{0.005} & \msd{0.160}{0.005} & \msd{\underline{0.035}}{0.004} \\
    TFB & \msd{91.050}{0.870} & \msd{0.441}{0.027} & \msd{0.142}{0.009} & \msd{0.049}{0.007}
        & \msd{89.155}{1.161} & \msd{0.701}{0.195} & \msd{0.191}{0.014} & \msd{0.090}{0.009} \\
    SG-LMC & \msd{86.650}{0.100} & \msd{0.370}{0.003} & \msd{0.181}{0.001} & \msd{0.045}{0.006}
           & \msd{89.859}{0.609} & \msd{0.289}{0.004} & \msd{0.159}{0.002} & \msd{0.059}{0.008} \\
    RA-LMC (ours) & \msd{\bm{92.050}}{0.300} & \msd{\bm{0.264}}{0.005} & \msd{\bm{0.125}}{0.001} & \msd{\underline{0.030}}{0.004}
                  & \msd{\bm{90.634}}{0.533} & \msd{\underline{0.254}}{0.008} & \msd{\bm{0.151}}{0.003} & \msd{0.051}{0.001} \\
    \bottomrule
  \end{tabular*}
  \endgroup
\end{table*}

\subsection{Bayesian Fine-Tuning Experiments}

We conduct LLM and VLM fine-tuning experiments to assess the empirical relevance of the BG-$p$ oracle model and
the practical benefits of combining robust gradient estimation with adaptive updates. The step-size rule
analyzed above requires smoothness and local noise-scale quantities that are difficult to specify for large models. In practice, we retain geometric-median centering and residual clipping, use AdamW~\citep{loshchilov2019decoupled} for adaptive
updates, and add Gaussian noise after each update. We consider the following two datasets.

\textbf{OpenBookQA.} This dataset provides a benchmark for reasoning in language models. Its questions require applying elementary science facts and commonsense knowledge to everyday situations. Each question has four answer choices, with one correct answer~\citep{mihaylov2018can}. It contains 4,957 training, 500 validation, and 500 test examples.

\textbf{SLAKE.} This dataset provides a benchmark for medical visual question answering in vision-language models. It pairs X-ray, CT, and MRI images with questions about anatomical structures and abnormalities~\citep{liu2021slake}. We use its yes/no subset, treating each image--question pair as a binary classification example. This subset contains 1,681 training, 358 validation, and 355 test examples.

\subsubsection{Empirical Evidence for BG-$p$ Oracle}

We examine the empirical relevance of the BG-$p$ oracle by studying minibatch-gradient errors during
Bayesian fine-tuning. We consider LMC with minibatch-averaged gradients and AdamW-based adaptive
updates rather than fixed-step LMC. This allows us to investigate whether heavy-tailed, state-dependent
errors occur when updates adapt to changing gradient scales, as motivated by generalized smoothness.
Their presence would motivate our robust estimator as a complement to step-size adaptation.

We fine-tune Qwen3-4B and Qwen3-VL-4B-Instruct on OpenBookQA and SLAKE, respectively, for 5,000 steps
with minibatch size four. Following~\citet{zhang2020adaptive}, we measure the Euclidean
norm of the difference between each minibatch gradient and the average gradient over the full
training set. Figure~\ref{fig:llm-noise-histograms}(a) shows the histogram of these error norms for
OpenBookQA at step 2,500, while Figure~\ref{fig:llm-noise-histograms}(b) shows synthetic norms generated
from a one-sided L\'evy-stable reference construction. The similarity between the
histograms suggests heavy-tailed behavior in stochastic gradient errors during LoRA fine-tuning with LMC. Appendix~\ref{app:oracle-tail-diagnostic} provides the details and corresponding
results at additional checkpoints for both datasets.

To examine the state-dependent component of the BG-$p$ oracle, we select 50 checkpoints from each
fine-tuning trajectory, uniformly spaced in distance from the final LoRA iterate.
At each checkpoint, we average the error norms over 32 independent minibatches and repeat this
calculation 8 times. Figure~\ref{fig:llm-noise-first-moment} plots these
first-moment estimates against the Euclidean distance from the final iterate. The observed
increase in error magnitude with distance supports modeling the noise scale as state dependent. Appendix~\ref{app:oracle-first-moment} provides the sampling and estimation details.

\subsubsection{Comparison with Baselines}
We compare RA-LMC against SG-LMC and nine baselines from Bayesian Adaptation
Gym~\citep{samplawski2026bayesian}. We use the benchmark's reference implementations for these
nine methods and repeat each fine-tuning experiment with four random seeds. For each seed, we average
class probabilities from ten predictions per test example and compute accuracy, negative
log-likelihood (NLL), Brier score, and expected calibration error (ECE). Table~\ref{tab:obqa-slake-sensitivity} reports the mean and standard deviation across seeds.

RA-LMC outperforms SG-LMC on all four metrics for both datasets. Compared with the benchmark
baselines, it has the highest accuracy and lowest Brier score on both datasets, as well as the
lowest NLL on OpenBookQA. It remains competitive on the other metrics. These results support the practical benefits of
combining our robust estimator with adaptive updates for Bayesian LoRA fine-tuning.

\section{Conclusion}

We extended the FI framework or non-log-concave Langevin sampling beyond globally Lipschitz gradients and uniformly bounded noise variance. The proposed BG-$p$ oracle and our three methods support non-asymptotic guarantees for time-averaged laws, with explicit iteration and oracle complexities and without assuming any dissipativity or functional inqualities. RA-LMC controls the interaction between robust estimation and adaptive step sizes, enabling guarantees under both relaxations. Synthetic experiments support these mechanisms. In Bayesian LoRA fine-tuning, we found evidence for BG-$p$ oracle and showed that RA-LMC, using our robust
estimator and adaptive step sizes, matches or outperforms leading baselines in accuracy and uncertainty quantification.

\FloatBarrier
\subsection*{AI Use Statement}
In this work, we used generative AI tools for assist in writing of proofs, implement methods, support qualitative and thematic data analysis, and formulate mathematical claims. We have not used generative AI tools for generate synthetic datasets, help develop theoretical models or conceptual frameworks, provide critical ingredients for proving mathematical claims, propose or refine hypothesis, design or provide feedback on research methodology or experiments, interpret results, and the remaining required-disclosure tasks are not applicable to this work.
Additionally, we used generative AI tools for create or edit software code, summarize or analyze existing literature, discover research topics or identify gaps, and edit a research paper to improve readability. We have reviewed all AI-assisted work and tested AI-generated experiment code to run as intended, and all conversations with AI are verified before being applied to the paper. We take responsibility for the final content of this work, including text, claims, or artifacts produced with the aid of generative AI.

\acknowledgments{This work was partially supported by National Insititutes of Health grant NIH/NHLBI R01-HL153430 (PI: Sharif), and supported in part by NSF DMS-2502560 and CNS-2313109. }
\subsection*{Acknowledgements}
This work was partially supported by National Insititutes of Health grant NIH/NHLBI R01-HL153430 (PI: Sharif), and supported in part by NSF DMS-2502560 and CNS-2313109.

\bibliographystyle{apalike}
\bibliography{references}

\clearpage
\appendix
\counterwithin{figure}{section}
\counterwithin{table}{section}
\thispagestyle{empty}
\onecolumn
\manuscripttitle{Robust Langevin Sampling under State-Dependent Heavy-Tailed Noise with Applications to LLM Fine-Tuning: Supplementary Materials}

\enableappendixcontents
\printappendixcontents
\clearpage

\section{Full Theorem Statements and Proofs}
\label{app:full-theorems}

\subsection{Information and interpolation conventions}
\label{app:prelim-conventions}

\paragraph{Information functionals.}
As in Definition~\ref{def:target-information}, we use $\pi(x)$ for the
target density and $\pi(\mathrm{d}x)=\pi(x)\,\mathrm{d}x$ for its measure. If
$\mu\ll\pi$ and $f:=\mathrm{d}\mu/\mathrm{d}\pi$, the KL convention is
$0\log 0:=0$; if $\mu\not\ll\pi$, we set
$\mathrm{KL}(\mu\|\pi):=+\infty$.  The Fisher information in
(\ref{eq:relative-fi}) uses the weak gradient of $\sqrt f$:
$\mathrm{FI}(\mu\|\pi):=+\infty$ unless $\mu\ll\pi$,
$\sqrt f\in W^{1,2}_{\mathrm{loc}}(\mathbb{R}^d)$, and
$\int_{\mathbb{R}^d}\|\nabla\sqrt f\|^2\,\mathrm{d}\pi<\infty$.
Here the weak gradient is distributional with respect to Lebesgue measure.
If $\mu$ has a smooth positive Lebesgue density $\rho$, then equivalently
\begin{equation*}
    \mathrm{FI}(\mu\|\pi)
    =\int_{\mathbb{R}^d}
      \|\nabla\log\rho(x)+\nabla V(x)\|^2\rho(x)\,\mathrm{d}x.
\end{equation*}

\begin{definition}[Interpolation and algorithmic laws]
\label{def:algorithmic-laws}
Let $\tau_0=0$. At event $k$, let a step size $H_k>0$ and a Langevin force
$F_k$ be selected using the current and past states and
any oracle samples drawn at that event, before the Brownian increment on the
next interval is generated, and set $\tau_{k+1}:=\tau_k+H_k$.
We use the continuous-time interpolation
\begin{equation*}
    X_t:=X_{\tau_k}+(t-\tau_k)F_k
    +\sqrt{2}\bigl(W_t-W_{\tau_k}\bigr),
\end{equation*}
for $t\in[\tau_k,\tau_{k+1})$.
Let $\tau_\infty:=\lim_{k\to\infty}\tau_k$. The interpolation is defined
pathwise on $[0,\tau_\infty)$; whenever
$\mathbb{P}(\tau_\infty>t)=1$, let $\mu_t:=\operatorname{Law}(X_t)$, where the
law includes both Brownian and stochastic oracle randomness. For $T>0$,
define
\begin{equation*}
\begin{aligned}
    N_T&:=\inf\{n\in\mathbb{N}_0:\tau_n\geq T\},\\
    \bar\mu_T(A)&:=\frac{1}{T}\int_0^T\mu_t(A)\,\mathrm{d}t,
\end{aligned}
\end{equation*}
where $\mathbb{N}_0:=\{0,1,\ldots\}$ and $\inf\emptyset:=\infty$, and
where the second identity holds for every Borel set $A$ whenever
$N_T<\infty$ almost surely. Thus,
$\bar\mu_T$ is a deterministic mixture law. Equivalently, if
$U\sim\mathrm{Unif}([0,T])$ is independent of the process, then
$X_U\sim\bar\mu_T$. If event $k$ uses $q_k$ individual gradient oracle
evaluations, its oracle count through time $T$ is
\begin{equation*}
    Q_T:=\sum_{k=0}^{N_T-1}q_k.
\end{equation*}
For a fixed step size and a horizon $T=Nh$,
we have $\tau_k=kh$ and $N_T=N$.
\end{definition}

\paragraph{Sampling convention.}
Throughout the stochastic methods, conditional on the past and current
state, all oracle evaluations drawn at an event are fresh and mutually
independent, and they are independent of the Brownian increment on the
following interval.

\subsection{Notation, constants, and deterministic schedules}
\label{app:notation-constants}

This appendix gives the complete statements and proofs of the three main
results. The definitions in this subsection make the theorem statements
explicit. Each proof recalls the relevant formulas where they enter the
argument, together with a brief explanation of their role.
For real numbers $a,b$, we write $a\wedge b:=\min\{a,b\}$ and
$a\vee b:=\max\{a,b\}$.
All laws, event times, interpolations, time-averaged laws, and
oracle counts follow Definition~\ref{def:algorithmic-laws}.  We write
\begin{equation*}
    \overline{\mathrm{FI}}_T
    :=\frac{1}{T}\int_0^T\mathrm{FI}(\mu_t\|\pi)\dd t.
\end{equation*}
Convexity of relative Fisher information gives
$\mathrm{FI}(\bar\mu_T\|\pi)\leq\overline{\mathrm{FI}}_T$.
Thus $\overline{\mathrm{FI}}_T$ averages the information of the individual
laws, whereas $\mathrm{FI}(\bar\mu_T\|\pi)$ is the information of their
time average; these quantities need not coincide.
The initialization quantity common to all three results is
$K_0:=\mathrm{KL}(\mu_0\|\pi)$. For the stochastic methods with $B>0$,
the proofs additionally require a known finite bound
$\E\norm{X_0-x_{\mathrm{ref}}}^2\leq M_{2,0}$. When $B=0$, no second moment bound on the initial state is needed, and we set
$M_{2,0}:=0$. Finite-time bounds retain the initialization quantities explicitly. To report iteration and oracle complexities, we adopt the convention $K_0=\mathcal O(d)$ and show $M_{2,0}$ explicitly when $B>0$.

For the BG-$p$ oracle, we define its local $L^p$-noise scale as
\begin{equation*}
    s_p(x):=\bigl[B\norm{x-x_{\mathrm{ref}}}^{p}+G\bigr]^{1/p}.
\end{equation*}
The robust estimator error bound uses the constant
\begin{equation}
 C_p:=2^{2/p-1}\Bigg[2^{2p-2}\bigl(1+2\sqrt3\,4^p\bigr)^2+\frac{32p\,3^{1/p}}{p-1}+2^p\bigl(1+2\sqrt3\,4^p\bigr)\Bigg].
\end{equation}
This constant depends only on $p$ and is obtained in
Lemma~\ref{app:robust-estimation} by bounding the squared bias and variance
of the robust estimator.
Throughout, $\GeoMed$ denotes a fixed measurable, translation-equivariant
selection from the minimizers of the sum of Euclidean distances.  The
proof uses only its defining minimization property.
Writing $u=\norm{x-x_{\mathrm{ref}}}^2$, the square of the local
$L^p$ noise scale becomes
\begin{equation*}
    s_p(x)^2=\bigl[Bu^{p/2}+G\bigr]^{2/p}.
\end{equation*}
As a function of the squared distance $u\geq0$, this expression is
nonnegative, nondecreasing, and concave for $1<p\leq2$. Indeed, away
from its zero set, its second
derivative equals
\begin{equation*}
 B\left(\frac p2-1\right)G
 u^{p/2-2}(Bu^{p/2}+G)^{2/p-2}\leq0.
\end{equation*}
Continuity covers the remaining points. Consequently, Jensen's inequality
gives
\begin{equation*}
 \E s_p(X)^2
 \leq\left[B\bigl(\E\norm{X-x_{\mathrm{ref}}}^2\bigr)^{p/2}+G\right]^{2/p}.
\end{equation*}
Thus, a second-moment bound on the trajectory controls its average squared
local $L^p$ noise scale in Theorems~\ref{thm:full-robust}
and~\ref{thm:full-combined}.

We next collect the generalized smoothness coefficients shared by the last
two results.  Set
\begin{equation}
 \omega:=\frac{L_1}{1+L_0},\quad \text{and} \quad C_\gamma:=4dL_0+(2-\gamma)\gamma^{\gamma/(2-\gamma)}
                  (2dL_1)^{2/(2-\gamma)}.
 \label{app:eq:common-adaptive-constants}
\end{equation}
The constant $C_\gamma$ appears in the gradient second-moment bound
$\E_\mu\norm{\nabla V}^2\leq2\mathrm{FI}(\mu\|\pi)+C_\gamma$.
The coefficients $A_\gamma$, $B_\gamma$, and $Q_\gamma$ below control
the discretization error for updates using exact gradients.
When $L_1>0$ and $0<\gamma<1$, let
\begin{align}
 A_\gamma(h)&=3L_0^2(1+2^{\gamma^2/(1-\gamma)})^2dh+3^{2\gamma+1}2^{(2\gamma^2+\gamma+2)/(1-\gamma)}
   (1-\gamma)^{2\gamma/(1-\gamma)}L_1^{2/(1-\gamma)}
   \left[\left(d+\frac{2}{1-\gamma}\right)h\right]^{1/(1-\gamma)},\notag\\
 B_\gamma(h)
 &=\left[L_0^2(1+2^{\gamma^2/(1-\gamma)})^2
   +3^{2\gamma}2^{2\gamma^2/(1-\gamma)}(1+L_0)^2\right]h^2\notag\\
 &\quad+3^{2\gamma+1}2^{(2\gamma^2+\gamma+1)/(1-\gamma)}
   (1-\gamma)^{2\gamma/(1-\gamma)}
   [(1+L_0)h]^{2/(1-\gamma)},\notag\\
 Q_\gamma(h)&=3^{2\gamma+1}2^{2\gamma^2/(1-\gamma)}L_1(1+L_0)dh.
 \label{app:eq:exact-coefficients-interior}
\end{align}
At $L_1>0$ and $\gamma=1$, define
\begin{align}
 E_1^{\mathrm{ex}}(h)
 &:=\exp\{2(1+L_0)h+2\sqrt2L_1\sqrt{dh}+8L_1^2h\},\notag\\
 A_1(h)&:=4E_1^{\mathrm{ex}}(h)L_0^2(d+1)h,\notag\\
 B_1(h)&:=\tfrac43E_1^{\mathrm{ex}}(h)
       \{L_0^2+(1+L_0)^2\}h^2,\notag\\
 Q_1(h)&:=4E_1^{\mathrm{ex}}(h)L_1(1+L_0)(d+1)h.
 \label{app:eq:exact-coefficients-one}
\end{align}
When $L_1=0$, the sharper coefficients, valid for every
$0<\gamma\leq1$, are
\begin{equation}
 A_\gamma(h)=dL_0^2h,\qquad
 B_\gamma(h)=\tfrac13L_0^2h^2,\qquad Q_\gamma(h)=0.
 \label{app:eq:exact-coefficients-smooth}
\end{equation}
The following quantities define the coefficient multiplying the discretization error and the terms independent of that error, respectively. 
\begin{align}
 \Delta_\gamma(h)
 &:=8B_\gamma(h)+8^{\gamma/2}Q_\gamma(h),\notag\\
 R_\gamma(h,M)
 &:=A_\gamma(h)+8B_\gamma(h)M
                  +8^{\gamma/2}Q_\gamma(h)M^{\gamma/2}.
 \label{app:eq:exact-remainder}
\end{align}
Thus, $R_\gamma$ contains the error remaining after applying the moment bounds, while
$\Delta_\gamma$ is the coefficient of the error that must be absorbed.

We use a tilde to distinguish the coefficients for updates using gradient
estimates. In the discretization bound of
Lemma~\ref{app:adaptive-power-freezing}, $\widetilde{A}_\gamma$ collects
terms independent of the gradient norms, while $\widetilde{U}_\gamma$
and $\widetilde{V}_\gamma$ multiply the squared norms of the exact gradient
and the applied force, respectively. For the stochastic methods, the force
is the negative gradient estimate and has the same norm.
If $L_1=0$, set
\begin{equation}
 \widetilde{A}_\gamma(h)=L_0^2dh,\qquad
 \widetilde{U}_\gamma(h)=0,\qquad
 \widetilde{V}_\gamma(h)=\tfrac13L_0^2h^2.
 \label{app:eq:power-coefficients-smooth}
\end{equation}
If $L_1>0$ and $0<\gamma<1$, set
\begin{align}
 \widetilde{A}_\gamma(h)
 &=3\left[L_0^2(1+2^{\gamma^2/(1-\gamma)})^2
   +3^{2\gamma}2^{2\gamma^2/(1-\gamma)}L_1^2\right]dh\notag\\
 &\quad+3^{2\gamma+1}2^{(2\gamma^2+\gamma+2)/(1-\gamma)}
   (1-\gamma)^{2\gamma/(1-\gamma)}L_1^{2/(1-\gamma)}
   \left[\left(d+\frac{2}{1-\gamma}\right)h\right]^{1/(1-\gamma)},\notag\\
 \widetilde{U}_\gamma(h)
 &=3^{2\gamma}2^{2\gamma^2/(1-\gamma)}
   \left[\gamma(1+L_0)^2h^2+3L_1^2dh\right],\notag\\
 \widetilde{V}_\gamma(h)
 &=\left[L_0^2(1+2^{\gamma^2/(1-\gamma)})^2
   +(1-\gamma)3^{2\gamma}2^{2\gamma^2/(1-\gamma)}(1+L_0)^2\right]h^2\notag\\
 &\quad+3^{2\gamma+1}2^{(2\gamma^2+\gamma+1)/(1-\gamma)}
   (1-\gamma)^{2\gamma/(1-\gamma)}
   [(1+L_0)h]^{2/(1-\gamma)}.
 \label{app:eq:power-coefficients-interior}
\end{align}
At $L_1>0$ and $\gamma=1$, put
\begin{align}
 \widetilde{A}_1(h)&=4E_1^{\mathrm{ex}}(h)L_0^2(d+1)h,\notag\\
 \widetilde{U}_1(h)&=E_1^{\mathrm{ex}}(h)
       \{\tfrac43(1+L_0)^2h^2+4L_1^2(d+1)h\},\notag\\
 \widetilde{V}_1(h)&=\tfrac43E_1^{\mathrm{ex}}(h)L_0^2h^2.
 \label{app:eq:power-coefficients-one}
\end{align}
All coefficients defined above are deterministic, continuous, and nondecreasing in
$h$, and vanish at $h=0$. The case $L_1=0$ uses the sharper bounds
available under global smoothness. The case $\gamma=1$ uses the
exponential gradient variation bound instead of the power bound for
$0<\gamma<1$.

\subsection{Complete convergence guarantees}
\label{app:complete-guarantees}

\begin{theorem}[Fixed-batch robust LMC]
\label{thm:full-robust}
Suppose Assumptions~\ref{ass:global-smoothness} and~\ref{ass:BG-p} hold.
Assume $K_0<\infty$ and, if $B>0$, a known bound
$\E\norm{X_0-x_{\mathrm{ref}}}^2\leq M_{2,0}<\infty$. Fix $\delta>0$, an integer $N\geq1$, and
$0<h\leq1/(8L)$, and write $T=Nh$.  Define
\begin{equation*}
 R_{T,\delta}
 :=\left(
 3M_{2,0}+\frac{64}{5}K_0T
 +\frac{332}{15}LdT^2+3dT+\frac{88}{3}T^2\delta^2
 \right)^{1/2}
\end{equation*}
and use at every step
\begin{align*}
 b(T,\delta)
 &:=\max\left\{1,\left\lceil
 \left(\frac{2C_p[B R_{T,\delta}^{p}+G]^{2/p}}{\delta^2}
 \right)^{p/[2(p-1)]}\right\rceil\right\}\quad \text{and} \quad \lambda_k:=s_p(X_{kh})b(T,\delta)^{1/p}.
\end{align*}
Then the updates in~(\ref{sec3:geo-lmc}) satisfy
\begin{align*}
 \mathrm{FI}(\bar\mu_T\|\pi)
 \leq\overline{\mathrm{FI}}_T&\leq \frac{2K_0}{T}+\frac{32}{3}L^2dh
   +\frac{49}{12}\delta^2.
\end{align*}

For any $\varepsilon>0$, choose $\delta,h$, and $T$ so that $\delta^2=\frac{\varepsilon}{16}$, $h=\min\left\{\frac1{8L},\frac{\varepsilon}{32L^2d}\right\}$ and $T=h\max\left\{1,\left\lceil\frac{6K_0}{\varepsilon h}\right\rceil\right\}$. Set $b=b(T,\delta)$. The resulting law satisfies
$\mathrm{FI}(\bar\mu_T\|\pi)\leq\varepsilon$ and
\begin{align*}
 N_T
 &\leq1+\frac{6K_0}{\varepsilon}
       \max\left\{8L,\frac{32L^2d}{\varepsilon}\right\},\quad Q_T=4bN_T \leq4N_T\left[1+\left(
  \frac{2C_p[B R_{T,\delta}^{p}+G]^{2/p}}
       {\delta^2}
 \right)^{p/[2(p-1)]}\right].
\end{align*}
For fixed problem parameters with $K_0,L>0$ and $B>0$, as
$\varepsilon\downarrow0$,
\begin{align*}
 N_T
 &=\mathcal O\!\left(\frac{K_0L^2d}{\varepsilon^2}\right), \quad Q_T=\mathcal O\!\left(
 \frac{B^{1/(p-1)}K_0^{1+p/(p-1)}
       L^{2+p/[2(p-1)]}d^{1+p/[2(p-1)]}}
      {\varepsilon^{2+3p/[2(p-1)]}}\right).
\end{align*}
When $B=0$ and $G>0$, the oracle rate is instead
\begin{equation*}
 Q_T
 =\mathcal O\!\left(
  \frac{K_0L^2d\,G^{1/(p-1)}}
       {\varepsilon^{2+p/[2(p-1)]}}\right).
\end{equation*}
\end{theorem}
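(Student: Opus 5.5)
The proof has three parts: an interpolation argument in the style of \citet{balasubramanian22a}, a conditional mean-squared error bound for the robust estimator, and a second-moment bound on the trajectory that makes the batch size deterministic.

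\textbf{Step 1 (FI--KL inequality).} On each interval $[kh,(k+1)h]$, conditioning on $X_{kh}$ and the oracle draws, the interpolation has drift $F_k$. The law $\mu_t$ solves a Fokker--Planck equation with drift $\E[F_k\mid X_t]$. Differentiating $\mathrm{KL}(\mu_t\|\pi)$ and applying Young's inequality gives
\begin{equation*}
\frac{\mathrm d}{\mathrm dt}\mathrm{KL}(\mu_t\|\pi)\leq-\tfrac34\mathrm{FI}(\mu_t\|\pi)+\E\norm{F_k+\nabla V(X_t)}^2.
\end{equation*}
I would split the error as
\begin{equation*}
F_k+\nabla V(X_t)=\bigl(\nabla V(X_t)-\nabla V(X_{kh})\bigr)+\bigl(\nabla V(X_{kh})+F_k\bigr).
\end{equation*}
For the first term, $L$-smoothness and the displacement $(t-kh)F_k+\sqrt2(W_t-W_{kh})$ give a bound of order $L^2\bigl[(t-kh)^2\E\norm{F_k}^2+2d(t-kh)\bigr]$. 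I would then write $\E\norm{F_k}^2\leq2\E\norm{\nabla V(X_{kh})}^2+2\E e_k^2$, where $e_k$ is the estimation error. Next I would move $\E\norm{\nabla V(X_{kh})}^2$ to time $t$ via smoothness, and bound it by $\E_{\mu_t}\norm{\nabla V}^2\leq 2\mathrm{FI}(\mu_t\|\pi)+2dL$, using integration by parts under $L$-smoothness. Since $h\leq 1/(8L)$, the FI contribution is absorbed into the $-\tfrac34\mathrm{FI}$ term, leaving roughly $-\tfrac12\mathrm{FI}$. Integrating over $[0,T]$ and dividing by $T$ then yields $2K_0/T+\tfrac{32}{3}L^2dh+c\,\sup_k\E e_k^2$. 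The constants $32/3$ and $49/12$ come from carrying this bookkeeping out explicitly.

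\textbf{Step 2 (robust estimator MSE).} This is the main obstacle. I would invoke Lemma~\ref{app:robust-estimation}, which states that, conditionally on $X_{kh}=x$ and with $\lambda=s_p(x)b^{1/p}$,
\begin{equation*}
\E\Bigl\|\tfrac1b\textstyle\sum_i\widehat{\mathcal G}_{k,i}-\nabla V(x)\Bigr\|^2\leq C_p\,s_p(x)^2\,b^{-2(p-1)/p}.
\end{equation*}
The ingredients of that lemma are as follows. The geometric median of three draws deviates by $r$ only if two draws deviate by at least $r/c$, so its tail is of order $(s_p/r)^{2p}$, which gives a finite second moment. The clipped residual has variance at most $\lambda^{2-p}s_p^p$ and bias at most $s_p^p\lambda^{1-p}$, since the unclipped residual has conditional mean $\nabla V-C$. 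Averaging over $b$ independent copies divides the variance by $b$ but leaves the bias unchanged. With $\lambda=s_pb^{1/p}$, both the squared bias and the averaged variance are of order $s_p^2b^{-2(p-1)/p}$.

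\textbf{Step 3 (second moments and the batch choice).} Taking the outer expectation and applying the concavity/Jensen bound from Appendix~\ref{app:notation-constants} gives
\begin{equation*}
\E e_k^2\leq C_p\bigl[B(\E\norm{X_{kh}-x_{\mathrm{ref}}}^2)^{p/2}+G\bigr]^{2/p}b^{-2(p-1)/p}.
\end{equation*}
It remains to show $\E\norm{X_t-x_{\mathrm{ref}}}^2\leq R_{T,\delta}^2$ for $t\leq T$. I would expand $\norm{X_{kh}+\dots}^2$ and bound the drift contribution by Cauchy--Schwarz in time, $\bigl(\int\norm{F}\bigr)^2\leq T\int\norm{F}^2$. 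The gradient part is then controlled through $\int\E\norm{\nabla V}^2\lesssim\int\mathrm{FI}+dLT\lesssim K_0+\dots$ using Step 1, and the error part contributes at most $\delta^2$. This is circular, because the moment bound uses the FI bound, which in turn needs $\E e_k^2\leq\delta^2$. I would break the circularity by induction over $k$. If $\E e_j^2\leq\delta^2/2$ for all $j<k$, the KL/FI recursion up to step $k$ gives the moment bound $R_{T,\delta}$ at step $k$, and the choice of $b(T,\delta)$ then gives $\E e_k^2\leq\delta^2/2$. This produces the constants $3M_{2,0}$, $\tfrac{64}{5}K_0T$, and so on.

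\textbf{Step 4 (complexity).} The stated choices of $\delta$, $h$, and $T$ make each of the three terms at most $\varepsilon/3$, which gives $N_T=T/h$. Substituting $R_{T,\delta}^2\asymp K_0T+T^2\varepsilon\asymp K_0^2L^2d/\varepsilon^2$ into $b\asymp(BR^p/\varepsilon)^{p/[2(p-1)]}$ yields the $B>0$ rate. When $B=0$, $b\asymp(G^{2/p}/\varepsilon)^{p/[2(p-1)]}$, which yields the improved rate.
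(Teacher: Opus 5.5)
Your ingredients match the paper's: Lemma~\ref{lem:robust-frozen-drift}, Lemma~\ref{lem:robust-gradient-energy}, Lemma~\ref{app:robust-estimation}, Jensen through concavity of $\phi(u):=[Bu^{p/2}+G]^{2/p}$, a trajectory second-moment bound, and the factor $2$ in $b(T,\delta)$. However, the induction that closes Step~3 does not propagate as you state it.

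The constants $\tfrac{64}{5}K_0T$, $\tfrac{332}{15}LdT^2$ and $3dT$ in $R_{T,\delta}^2$ hold only for the \emph{time-averaged} moment $\overline M_N=N^{-1}\sum_k\E\norm{X_{kh}-x_{\mathrm{ref}}}^2$. After the Cauchy--Schwarz step $\E\norm{h\sum_{j<k}\widehat{\mathcal G}_j}^2\le t_kh\sum_{j<k}\E\norm{\widehat{\mathcal G}_j}^2$, the paper gains a factor $\tfrac12$ from $N^{-1}\sum_kt_k\le T/2$. At a single grid point with $t_k$ close to $T$, your hypothesis $\E e_j^2\le\delta^2/2$ for $j<k$ only gives
\[
\E\norm{X_{kh}-x_{\mathrm{ref}}}^2\le 3M_{2,0}+\tfrac{128}{5}K_0T+\tfrac{664}{15}LdT^2+6dT+\tfrac{88}{3}T^2\delta^2 .
\]
This is not bounded by $R_{T,\delta}^2$. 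The batch choice then yields only $\E e_k^2\le\delta^2$, so the hypothesis $\delta^2/2$ is lost. The bound you propose for all $t\le T$ is neither needed, since only grid points enter, nor available with these constants.

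The induction can be repaired by running it with $\E e_j^2\le\delta^2$ instead:
\begin{itemize}
\item The same computation gives $\E\norm{X_{kh}-x_{\mathrm{ref}}}^2\le 2R_{T,\delta}^2$.
\item Concavity of $\phi$ together with $\phi(0)\ge0$ gives $\phi(2u)\le2\phi(u)$.
\item The factor $2$ in $b(T,\delta)$ then returns exactly $\E e_k^2\le\delta^2$.
\end{itemize}
Hence $\bar e_N\le\delta^2$, and the FI bound with $\tfrac{49}{12}\delta^2$ follows.

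The paper takes a different route and never uses pointwise moments. It pairs $\bar e_N\le C_pb^{-2(p-1)/p}\phi(S_N)$, where $S_N$ is affine in $\bar e_N$, and rules out $\bar e_N\ge\delta^2$ by contradiction using $\phi(\vartheta u)\ge\vartheta\phi(u)$ for $\vartheta\in(0,1]$. That argument works with the smaller averaged radius, which is all the FI bound involves. Your repaired induction instead gives per-step error control and establishes finiteness of each $e_k$ along the way.

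There are also three smaller issues.
\begin{itemize}
\item Integration by parts gives $\E_\nu\norm{\nabla V}^2\le\FI(\nu\|\pi)+2Ld$, not $2\FI+2dL$. The stated constants $2K_0/T$ and $\tfrac{32}{3}L^2dh$ depend on the sharp version; with $2\FI$ the FI coefficient you must absorb doubles.
\item For $B=0$ the theorem assumes no finite initial second moment, but Lemma~\ref{lem:robust-frozen-drift} requires one. The paper handles this by conditioning on $\{\norm{X_0-x_{\mathrm{ref}}}\le R\}$ and passing to the limit via $K_{0,R}\to K_0$, lower semicontinuity of FI, and Fatou's lemma.
\item In Step~4 the dominant term of $R_{T,\delta}^2$ is $\tfrac{332}{15}LdT^2\asymp LdK_0^2/\varepsilon^2$; the terms $K_0T$ and $T^2\delta^2$ are only $\mathcal O(K_0^2/\varepsilon)$. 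This single power of $L$ is what produces $L^{2+p/[2(p-1)]}$. Your $K_0^2L^2d/\varepsilon^2$ would give $L^{2+p/(p-1)}$ instead.
\end{itemize}
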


\begin{corollary}[Special cases of fixed-batch robust LMC]
\label{cor:robust-special-cases}
Under the assumptions and accuracy schedule of
Theorem~\ref{thm:full-robust}, suppose $K_0>0$ and
$K_0=\mathcal O(d)$. Then
\[
 N_T=\mathcal O\!\left(\frac{L^2d^2}{\varepsilon^2}\right).
\]
With the remaining problem parameters fixed, as
$\varepsilon\downarrow0$, the corresponding oracle complexities
specialize as follows.
\begin{enumerate}
 \item If $p=2$, $B=0$, and $G>0$, then
 \[
  Q_T=\mathcal O\!\left(
       \frac{L^2d^2G}{\varepsilon^3}
       \right).
 \]
 \item Under BG-0 noise, corresponding to $p=2$ and $B>0$,
 \[
 Q_T=\mathcal O\!\left(
       \frac{BL^3d^5}{\varepsilon^5}
       \right).
 \]
 \item If $B=G=0$, the robust construction generates the standard
 exact-gradient LMC iterates. Removing its three redundant oracle calls
 per iteration leaves the iterates unchanged and gives
 \[
  Q_T=N_T=\mathcal O\!\left(
       \frac{L^2d^2}{\varepsilon^2}
       \right).
 \]
\end{enumerate}
\end{corollary}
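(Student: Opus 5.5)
The corollary is a direct specialization of the asymptotic rates in Theorem~\ref{thm:full-robust}. I would substitute $K_0=\mathcal O(d)$ and the particular values of $p,B,G$. No new probabilistic argument is needed; the work is exponent bookkeeping, plus one structural observation for the noiseless case.

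\textbf{Iteration count.} Theorem~\ref{thm:full-robust} gives
\[
N_T\le 1+\frac{6K_0}{\varepsilon}\max\Bigl\{8L,\frac{32L^2d}{\varepsilon}\Bigr\}.
\]
For $\varepsilon$ small enough that $32L^2d/\varepsilon\ge 8L$, this yields $N_T=\mathcal O(K_0L^2d/\varepsilon^2)$. Substituting $K_0=\mathcal O(d)$ gives $N_T=\mathcal O(L^2d^2/\varepsilon^2)$. Since $K_0>0$, the additive $1$ is dominated and can be dropped.

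\textbf{Item 1.} Take $B=0$, $G>0$, $p=2$ in the $B=0$ rate $Q_T=\mathcal O\bigl(K_0L^2dG^{1/(p-1)}/\varepsilon^{2+p/[2(p-1)]}\bigr)$. Then $p/[2(p-1)]=1$ and $G^{1/(p-1)}=G$. Hence $Q_T=\mathcal O(K_0L^2dG/\varepsilon^3)=\mathcal O(L^2d^2G/\varepsilon^3)$.

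\textbf{Item 2.} Take $B>0$, $p=2$ in the $B>0$ rate. The exponents become:
\begin{itemize}
\item $B^{1/(p-1)}=B$;
\item $K_0^{1+p/(p-1)}=K_0^3$;
\item $L^{2+p/[2(p-1)]}=L^3$;
\item $d^{1+p/[2(p-1)]}=d^2$;
\item $\varepsilon^{2+3p/[2(p-1)]}=\varepsilon^{5}$.
\end{itemize}
Thus $Q_T=\mathcal O(BK_0^3L^3d^2/\varepsilon^5)=\mathcal O(BL^3d^5/\varepsilon^5)$.

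The step I would check most carefully is whether the theorem's $\mathcal O$ already treats $M_{2,0}$ and $G$ as fixed. The $B>0$ rate comes from $R_{T,\delta}^{p}$, and $R_{T,\delta}^2$ is dominated by $K_0T+LdT^2$ with $T\asymp K_0/\varepsilon$. To confirm that no extra $d$ enters, I would verify $LdT^2\asymp LdK_0^2/\varepsilon^2$ and $K_0T\asymp K_0^2/\varepsilon$. As $\varepsilon\downarrow0$, the term $LdT^2$ dominates, so $R^2\asymp LdK_0^2/\varepsilon^2$. Combined with $b\asymp (BR^2/\delta^2)^{1}$ for $p=2$, this gives $b\asymp BLdK_0^2/\varepsilon^3$. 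Multiplying by $N_T\asymp K_0L^2d/\varepsilon^2$ gives $BK_0^3L^3d^2/\varepsilon^5$, consistent with the above.

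\textbf{Item 3.} With $B=G=0$, Assumption~\ref{ass:BG-p} forces $\E\|\mathcal G(x,\xi)-\nabla V(x)\|^p=0$, so every oracle call returns $\nabla V(x)$ almost surely. The four evaluations then coincide. Translation equivariance and the minimization property give $\GeoMed$ of three equal points as that point. The residual is $0$, and $\mathrm{clip}_{\lambda_k}(0)=0$ even though $\lambda_k=s_p=0$.

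Also $b(T,\delta)=1$, since the bracketed quantity is $0$. So $F_k=-\nabla V(X_{kh})$, and the iterates are exactly LMC with step $h$. Replacing the estimator by a single gradient call leaves the laws $\mu_t$ unchanged, so the FI bound persists with $q_k=1$. Therefore $Q_T=N_T=\mathcal O(L^2d^2/\varepsilon^2)$.
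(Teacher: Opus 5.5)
Your proposal is correct and follows the paper's proof essentially step for step. The iteration bound comes from the explicit bound on $N_T$ under $K_0=\mathcal O(d)$; items 1 and 2 come from setting $p=2$ in the theorem's $B=0$ and $B>0$ rates; and item 3 uses that $B=G=0$ makes every oracle call equal $\nabla V$ almost surely, with $b=1$, so the construction reduces to exact LMC. Your consistency check of the $\varepsilon^{-5}$ rate through $R_{T,\delta}^2\asymp LdK_0^2/\varepsilon^2$, and your remark that $\mathrm{clip}_{\lambda_k}(0)=0$ even when $\lambda_k=0$, are useful details that the paper leaves implicit.
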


\begin{remark}[The case $K_0=0$]
The finite non-asymptotic bounds in Theorem~\ref{thm:full-robust}
remain valid when $K_0=0$, and the prescribed schedule gives $N_T=1$.
In this case, use the finite bounds
~(\ref{eq:app-robust-iteration-count}) and
~(\ref{eq:app-robust-finite-oracle-count}), since the asymptotic
reductions in Corollary~\ref{cor:robust-special-cases} require $K_0>0$.
\end{remark}

\begin{theorem}[Exact-gradient adaptive-step LMC]
\label{thm:full-adaptive}
Let $V\in C^1(\R^d)$, let $0<Z<\infty$, and suppose
$K_0<\infty$. Under Assumption~\ref{ass:generalized-smoothness}, run
the adaptive LMC in~(\ref{eq:adaptive-stepsize}), truncate only its
last occupied interval at a prescribed horizon $T>0$. Let
$M_T:=1+C_\gamma+K_0/T$ and fix a base step satisfying
$0<h\leq[1+2\Delta_\gamma(1)]^{-1}$.
Then $N_T<\infty$ almost surely, and
\begin{equation}
 \mathrm{FI}(\bar\mu_T\|\pi)
 \leq\overline{\mathrm{FI}}_T
 \leq\frac{4K_0}{3T}+\frac83R_\gamma(h,M_T).
 \label{app:eq:adaptive-fi}
\end{equation}
Moreover, $Q_T=N_T$, and
\begin{equation}
 \E N_T\leq1+\frac{T}{h}
 \left[1+\omega
 \{8(M_T+2R_\gamma(h,M_T))\}^{\gamma/2}\right].
 \label{app:eq:adaptive-events}
\end{equation}

For $0<\varepsilon\leq1$ and $K_0>0$, choose
\begin{equation}
 T:=\frac{4K_0}{\varepsilon},
 \qquad
 h:=\max\left\{u\in[0,(1+2\Delta_\gamma(1))^{-1}]:
 R_\gamma(u,M_T)\leq\frac{3\varepsilon}{16}\right\}.
 \label{app:eq:adaptive-accuracy-schedule}
\end{equation}
This choice is well defined and yields $h>0$. It gives $\mathrm{FI}(\bar\mu_T\|\pi)\leq\varepsilon$ and
\begin{equation}
 \E Q_T=\E N_T\leq1+\frac{4K_0}{\varepsilon h}
       \left(1+4\omega M_T^{\gamma/2}\right).
 \label{app:eq:adaptive-accuracy-count}
\end{equation}
If $L_1>0$, let $\Lambda:=\max\{1,L_0,L_1\}$. With the remaining
problem parameters fixed, as $\varepsilon\downarrow0$,
\begin{equation}
 \E N_T=\E Q_T
 =\mathcal O\!\left(
 \frac{K_0\Lambda^{4/(2-\gamma)}
 d^{(2+\gamma)/(2-\gamma)}}{\varepsilon^2}\right).
 \label{app:eq:adaptive-compact-rate}
\end{equation}
\end{theorem}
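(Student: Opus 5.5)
The plan is to follow the interpolation argument of \citet{balasubramanian22a}, adapted to random step lengths. On each interval $[\tau_k,\tau_{k+1})$ the interpolation is a diffusion with frozen drift $F_k=-\nabla V(X_{\tau_k})$. Conditioning on $\mathcal F_{\tau_k}$ and using the standard computation, for $t$ in the interval we obtain
\[
\frac{\mathrm d}{\mathrm dt}\mathrm{KL}(\mu_t\|\pi)\le -\tfrac34\mathrm{FI}(\mu_t\|\pi)+\E\norm{\nabla V(X_t)-\nabla V(X_{\tau_k})}^2 .
\]
Since $H_k$ is $\mathcal F_{\tau_k}$-measurable, writing time as $\sum_k\int_{\tau_k}^{\tau_{k+1}}$ and integrating over $[0,T]$ (with the last interval truncated at $T$) gives
\[
\tfrac34\int_0^T\mathrm{FI}(\mu_t\|\pi)\,\mathrm dt\le K_0+\E\sum_k\int_{\tau_k}^{\tau_{k+1}\wedge T}\norm{\nabla V(X_t)-\nabla V(X_{\tau_k})}^2\mathrm dt .
\]
Dividing by $T$ yields the $4K_0/(3T)$ term. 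Before this, we show $N_T<\infty$ a.s.: the step sizes are bounded below on any compact set, and a nonexplosion argument applies since $\tau_k\ge$ partial sums of steps at states that stay bounded on $[0,T]$ almost surely.

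The main work, and the expected obstacle, is the discretization term. For $0<\gamma<1$, apply Assumption~\ref{ass:generalized-smoothness} on the segment $[X_{\tau_k},X_t]$. Then bound $\sup_{z}\norm{\nabla V(z)}^\gamma$ by $\norm{\nabla V(X_{\tau_k})}^\gamma$ plus a self-referential term, and close it with a Young/Grönwall-type inequality. This gives
\[
\norm{\nabla V(X_t)-\nabla V(X_{\tau_k})}\lesssim (L_0+L_1\norm{F_k}^\gamma)\norm{X_t-X_{\tau_k}}+L_1^{1/(1-\gamma)}\norm{X_t-X_{\tau_k}}^{1/(1-\gamma)} .
\]
For $\gamma=1$ the analogous bound instead carries an exponential factor of $\norm{X_t-X_{\tau_k}}$. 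Next substitute $\norm{X_t-X_{\tau_k}}\le (t-\tau_k)\norm{F_k}+\sqrt2\norm{W_t-W_{\tau_k}}$, take Gaussian moments (including exponential moments when $\gamma=1$), and exploit $H_k\le h/(1+\omega\norm{F_k}^\gamma)$. The choice $\omega=L_1/(1+L_0)$ makes $L_1\norm{F_k}^\gamma H_k\le (1+L_0)h$, which cancels the curvature growth. The result is a per-interval bound
\[
H_k\bigl[A_\gamma(h)+B_\gamma(h)\norm{F_k}^2+Q_\gamma(h)\norm{F_k}^\gamma\bigr],
\]
with the coefficients in~(\ref{app:eq:exact-coefficients-interior})--(\ref{app:eq:exact-coefficients-smooth}). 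Because $\norm{F_k}$ is evaluated at the left endpoint while the integral runs over $t$, we convert $\E\,H_k\norm{\nabla V(X_{\tau_k})}^2$ into $\int\E\norm{\nabla V(X_t)}^2\,\mathrm dt$. We do this via $\norm{a}^2\le 2\norm{b}^2+2\norm{a-b}^2$, which produces the factor $8$ in $\Delta_\gamma$ and a term that is absorbed using $h\le[1+2\Delta_\gamma(1)]^{-1}$. The $\gamma$-power is handled by Jensen.

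To close the argument, use the moment bound $\E_\mu\norm{\nabla V}^2\le 2\mathrm{FI}(\mu\|\pi)+C_\gamma$. This follows from integration by parts, $\E_\pi\norm{\nabla V}^2=\E_\pi\Delta V$, together with a bound on $\Delta V$ in terms of $dL_0+dL_1\norm{\nabla V}^\gamma$ and Young's inequality. It lets us replace gradient moments by $2\overline{\mathrm{FI}}_T+C_\gamma$. The FI contribution is then absorbed into the left side, via a self-consistent bound giving $\overline{\mathrm{FI}}_T\lesssim M_T$, and what remains is $\frac83R_\gamma(h,M_T)$, which is~(\ref{app:eq:adaptive-fi}). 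For~(\ref{app:eq:adaptive-events}), note $N_T\le 1+\sum_{k<N_T}H_k(1+\omega\norm{F_k}^\gamma)/h$. Comparing this sum with $\frac1h\int_0^T(1+\omega\norm{\nabla V(X_{\tau(t)})}^\gamma)\,\mathrm dt$ and applying Jensen together with the moment bound $8(M_T+2R_\gamma)$ gives the claim.

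For the schedule, $R_\gamma$ is continuous, nondecreasing, and vanishes at $0$, so $h>0$ is well defined. With $T=4K_0/\varepsilon$ we get $\frac{4K_0}{3T}+\frac83\cdot\frac{3\varepsilon}{16}\le\varepsilon$. The rate~(\ref{app:eq:adaptive-compact-rate}) then follows by solving $R_\gamma(h,M_T)\asymp\varepsilon$ with $M_T\asymp C_\gamma\asymp\Lambda^{2/(2-\gamma)}d^{2/(2-\gamma)}$. The dominant term is $B_\gamma M_T\sim\Lambda^2h^2M_T$ or $A_\gamma\sim L_0^2dh$, which gives $h^{-1}\sim\Lambda^{\cdot}d^{\cdot}/\varepsilon$. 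Multiplying by $K_0/\varepsilon$ and by $\omega M_T^{\gamma/2}$ and tracking exponents yields the stated power.
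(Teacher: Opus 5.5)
Your overall architecture matches the paper's: frozen-drift KL dissipation, the Chen et al.\ endpoint bounds, $\omega=L_1/(1+L_0)$ cancelling the curvature growth, the FI-based gradient moment bound, scalar closure in the drift error, and counting steps by integrating the reciprocal step length. The foundational step, however, is circular.

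You assert $N_T<\infty$ a.s.\ because ``the states stay bounded on $[0,T]$ almost surely.'' On $\{\tau_\infty\le T\}$, bounded states would force the step sizes to be bounded below, and hence $\tau_\infty=\infty$. So boundedness of the states is equivalent to what you want to prove. Without dissipativity or a Lyapunov function, nothing a priori rules out the iterates drifting to infinity while $H_k\to0$.

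The same issue affects your closure. Passing from $e_T\le R_\gamma(h,M_T)+\Delta_\gamma(h)e_T$ to $e_T\le 2R_\gamma(h,M_T)$ requires $e_T<\infty$ beforehand. With only $K_0<\infty$, no moment assumption, and superlinearly growing $\nabla V$, this is not available. The KL-dissipation inequality also presupposes that the process is defined on all of $[0,T]$.

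The paper breaks this circle by localizing:
\begin{itemize}
\item Take a compactly supported initial law.
\item Run at most $n$ adaptive steps, then continue with the exact Langevin diffusion. This continuation needs its own existence result without growth conditions.
\item Propagate finite gradient moments through the $n$ steps via the endpoint bounds.
\item Close all estimates with constants independent of $n$.
\end{itemize}
Your count argument then gives $\E N_{T,n}\le 1+\frac Th[1+\omega\{8(M_T+2R_\gamma)\}^{\gamma/2}]$ uniformly in $n$. Markov's inequality yields $\mathbb P(\tau_n\le T)\le \E N_{T,n}/n\to0$. Only then do the bounds pass to the untruncated chain. A general finite-KL initialization is reached by conditioning $\mu_0$ on balls and using lower semicontinuity of FI. So the count bound must be used to prove non-accumulation, not derived after assuming it.

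Two further steps need repair.

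First, conditioning on $\mathcal F_{\tau_k}$ does not give a derivative inequality for the marginal $\mathrm{KL}(\mu_t\|\pi)$. The conditional law starts from a point mass, which has infinite KL. Moreover, because the intervals are random, a deterministic $t$ lies in different intervals on different paths. Hence the one-interval frozen-drift lemma of Balasubramanian et al.\ cannot be applied interval by interval. What you need is a dissipation inequality for a progressively measurable drift:
$\mathrm{KL}(\mu_T\|\pi)+\frac34\int_0^T\mathrm{FI}(\mu_t\|\pi)\,\mathrm dt\le K_0+\E\int_0^T\norm{\nabla V(X_t)+F_t}^2\mathrm dt$.
Your integrated inequality has exactly this form, but the paper must prove it separately, via Girsanov, a Markov law with the same marginals, and time reversal.

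Second, your rate calculation misidentifies the dominant term. $B_\gamma(h)=\mathcal O(h^2)$ is never dominant as $\varepsilon\downarrow0$. The leading part of $R_\gamma(u,M_T)$ is linear, $\asymp d[L_0^2+L_1(1+L_0)M_T^{\gamma/2}]u$. When $L_1>0$, it is the $8^{\gamma/2}Q_\gamma M_T^{\gamma/2}$ contribution, which you omitted, that drives this coefficient. Multiplying $h^{-1}$ by $K_0/\varepsilon$ and by $1+\omega M_T^{\gamma/2}$ then produces a leading $L_1^2M_T^\gamma$ term. With $M_T=\mathcal O((d\Lambda)^{2/(2-\gamma)})$, this gives $K_0\Lambda^{4/(2-\gamma)}d^{(2+\gamma)/(2-\gamma)}/\varepsilon^2$. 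Solving with $A_\gamma$ or $B_\gamma M_T$ alone does not produce the stated exponents.
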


\begin{corollary}[Complexity specializations of adaptive-step LMC]
\label{cor:adaptive-special-cases}
Under the assumptions and accuracy schedule of
Theorem~\ref{thm:full-adaptive}, suppose $K_0>0$ and
$K_0=\mathcal O(d)$. Then the following statements hold as
$\varepsilon\downarrow0$.
\begin{enumerate}
\item If $L_1>0$ and $\Lambda:=\max\{1,L_0,L_1\}$, then
\begin{equation*}
\E N_T=\E Q_T
 =\mathcal O\!\left(
 \frac{\Lambda^{4/(2-\gamma)}
 d^{4/(2-\gamma)}}{\varepsilon^2}\right).
\end{equation*}
\item If $L_1=0$, the step size is constant and the oracle count is
deterministic. For all sufficiently small $\varepsilon$,
\begin{equation}
 Q_T=N_T
 \leq1+\frac{128K_0dL_0^2}{3\varepsilon^2}
 \label{app:eq:adaptive-smooth-rate}
\end{equation}
and hence $Q_T=N_T=\mathcal O(d^2L_0^2/\varepsilon^2)$ under
$K_0=\mathcal O(d)$.
\end{enumerate}
\end{corollary}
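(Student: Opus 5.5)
The plan is to obtain both items by specializing Theorem~\ref{thm:full-adaptive}, reading off the rate in~(\ref{app:eq:adaptive-compact-rate}) for the first item and computing the schedule~(\ref{app:eq:adaptive-accuracy-schedule}) explicitly for the second.

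\textbf{Case $L_1>0$.} The claim follows directly from~(\ref{app:eq:adaptive-compact-rate}). Substituting $K_0=\mathcal O(d)$ into $K_0\Lambda^{4/(2-\gamma)}d^{(2+\gamma)/(2-\gamma)}/\varepsilon^2$ multiplies the dimension factor by $d$. Since $1+(2+\gamma)/(2-\gamma)=4/(2-\gamma)$, this gives the stated rate. Nothing further is needed beyond confirming that the implicit constants in~(\ref{app:eq:adaptive-compact-rate}) depend only on the fixed parameters.

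\textbf{Case $L_1=0$, constant step.} Here $\omega=L_1/(1+L_0)=0$, so $H_k=h$ for every $k$. The event times are then $\tau_k=kh$, and $N_T=\lceil T/h\rceil\le 1+T/h$ is deterministic, with $Q_T=N_T$. With $T=4K_0/\varepsilon$ it therefore suffices to show that, for small $\varepsilon$, the schedule gives
\[
h\ \ge\ u_\varepsilon:=\frac{3\varepsilon}{32\,dL_0^2},
\]
because then $N_T\le 1+\frac{4K_0}{\varepsilon}\cdot\frac{32dL_0^2}{3\varepsilon}=1+\frac{128K_0dL_0^2}{3\varepsilon^2}$.

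\textbf{Lower bound on $h$.} Using the smooth coefficients~(\ref{app:eq:exact-coefficients-smooth}), we have $Q_\gamma=0$ and
\[
R_\gamma(u,M_T)=dL_0^2u+\tfrac83L_0^2u^2M_T,
\qquad
M_T=1+4dL_0+\varepsilon/4\le 2+4dL_0 .
\]
At $u=u_\varepsilon$ the linear term equals $3\varepsilon/32$. The quadratic term equals $3\varepsilon^2M_T/(128d^2L_0^2)$, which is at most $3\varepsilon/32$ once $\varepsilon\le 4d^2L_0^2/(2+4dL_0)$. Hence $R_\gamma(u_\varepsilon,M_T)\le 3\varepsilon/16$.

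We also need $u_\varepsilon$ to lie in the admissible interval, i.e.\ $u_\varepsilon\le(1+2\Delta_\gamma(1))^{-1}$ with $\Delta_\gamma(1)=\tfrac83L_0^2$. This holds for small $\varepsilon$.

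Since the schedule defines $h$ as the largest admissible $u$ with $R_\gamma(u,M_T)\le 3\varepsilon/16$, it follows that $h\ge u_\varepsilon$. This gives~(\ref{app:eq:adaptive-smooth-rate}), and with $K_0=\mathcal O(d)$ the order $\mathcal O(d^2L_0^2/\varepsilon^2)$.

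\textbf{Degenerate case and main obstacle.} If $L_0=0$, then $R_\gamma\equiv0$ and $h=1$. In that case $N_T\le 1+4K_0/\varepsilon$, which is the natural reading of the bound with $L_0\vee 1$ in place of $L_0$, and I would note this separately. The only delicate point is choosing the explicit test value $u_\varepsilon$ so that both the quadratic remainder and the interval cap are dominated for small $\varepsilon$. This is what makes the constant $128/3$ come out exactly.
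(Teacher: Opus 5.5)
Your proposal is correct and follows the paper's proof: item 1 is read off from~(\ref{app:eq:adaptive-compact-rate}) using $K_0=\mathcal O(d)$, and item 2 uses the same test step $u_0=3\varepsilon/(32dL_0^2)$. As in the paper, you split $R_\gamma(u_0,M_T)$ into two pieces, each at most $3\varepsilon/32$, and invoke maximality of $h$.

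Two small remarks. Your check $u_0\le(1+2\Delta_\gamma(1))^{-1}=(1+16L_0^2/3)^{-1}$ is the correct admissibility condition for the schedule, which is slightly more careful than the paper's check $u_0\in[0,1]$. Your separate $L_0=0$ case is vacuous and needs no reinterpretation, because $L_0=L_1=0$ forces $V$ to be affine, which contradicts $0<Z<\infty$.
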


\begin{remark}[The case $K_0=0$]
If $K_0=0$, then $\mu_0=\pi$, so $X_0$ is already an exact target sample
and no simulation or gradient evaluation is required. The finite horizon
bounds in Theorem~\ref{thm:full-adaptive} remain valid, but its
positive $K_0$ accuracy schedule is unnecessary.
\end{remark}

\begin{theorem}[Robust adaptive step LMC]
\label{thm:full-combined}
Let $V\in C^1(\R^d)$, let $0<Z<\infty$, and suppose $K_0<\infty$.
Suppose Assumptions~\ref{ass:generalized-smoothness} and~\ref{ass:BG-p}
hold with $0<\gamma\leq1$ and $1<p\leq2$. If $B>0$, assume $\E\norm{X_0-x_{\mathrm{ref}}}^2\leq M_{2,0}<\infty$.
When $B=0$, set $M_{2,0}=0$. Define
\begin{equation*}
 \mathcal D_\gamma(u)
 :=\widetilde{A}_\gamma(u)
   +\{\widetilde{U}_\gamma(u)+2\widetilde{V}_\gamma(u)\}(1+C_\gamma).
\end{equation*}
For $0<\varepsilon\leq1$, choose $T:=\frac{8(K_0+1)}{\varepsilon}$, $M_T:=1+C_\gamma+K_0/T$, and $h:=\max\{u\in[0,1]:\mathcal D_\gamma(u)\leq\varepsilon/64\}.$
Set
\begin{equation*}
      \kappa:=(1+\bar s_T^2)^{-1}, \quad \text{and} \quad b:=\left\lceil\left[
 \frac{260C_p\kappa^{-1}(1+\omega M_T^{\gamma/2})}{\varepsilon}
 \right]^{p/[2(p-1)]}\right\rceil,
\end{equation*}where $\bar s_T^2:=\left[B\left(6M_{2,0}+8dT+80T^2M_T\right)^{p/2}+G\right]^{2/p}$ and $\omega:=\frac{L_1}{1+L_0}.$
Run~(\ref{eq:robust-lmc-adaptive-iterates}) using $b$ independent
quadruples per iteration and clipping radius
$\lambda_k=s_p(X_{\tau_k})b^{1/p}$, truncating the last occupied interval
at $T$. Then $h>0$, $N_T<\infty$ almost surely, and
\begin{equation*}
 \mathrm{FI}(\bar\mu_T\|\pi)
 \leq\overline{\mathrm{FI}}_T
 \leq\frac{2K_0}{T}+9\mathcal D_\gamma(h)+\frac{\varepsilon}{2}
 \leq\varepsilon.
\end{equation*}
Moreover, $Q_T=4bN_T$ pathwise, with
\begin{equation*}
 \E N_T\leq2+\frac{24T}{h}(1+\omega M_T^{\gamma/2}),
 \qquad \E Q_T=4b\E N_T.
\end{equation*}
The asymptotic complexity specializations are given in
Corollary~\ref{cor:combined-special-cases}.
\end{theorem}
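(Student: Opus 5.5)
We follow the FI framework of \citet{balasubramanian22a}, extended to the random event times $\tau_k$. On each interval $[\tau_k,\tau_{k+1})$ we condition on $\mathcal F_{\tau_k}$, which contains $X_{\tau_k}$ and the oracle samples, and hence fixes $F_k$ and $H_k$. The interpolation is then a Brownian motion with constant drift $F_k$. Let $\mu_{t|k}$ be the conditional law and $f_t=\mathrm d\mu_t/\mathrm d\pi$. The standard computation gives
\[
\frac{\mathrm d}{\mathrm dt}\mathrm{KL}(\mu_t\|\pi)\le-\tfrac34\mathrm{FI}(\mu_t\|\pi)+\E\|\nabla V(X_t)+F_{k(t)}\|^2 .
\]
We split the error as
\[
\nabla V(X_t)+F_k=\bigl[\nabla V(X_t)-\nabla V(X_{\tau_k})\bigr]+\bigl[\nabla V(X_{\tau_k})+F_k\bigr],
\]
a discretization term plus an estimation term, and bound each with a factor of $2$. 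Because the step rule stops at $T$ (last interval truncated), integrating over $[0,T]$ gives
\[
\tfrac34\int_0^T\mathrm{FI}\le K_0+2\,\E\!\int_0^T(\text{disc})+2\,\E\sum_k H_k\|\nabla V(X_{\tau_k})+F_k\|^2 .
\]
We check $N_T<\infty$ a.s.\ from $H_k\ge h/(1+\omega\|F_k\|^\gamma+\kappa s_p^2)$ together with the integrability established below.

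\textbf{Discretization.} We invoke Lemma~\ref{app:adaptive-power-freezing}. It bounds the conditional expected squared gradient change over an adaptive interval by
\[
\widetilde A_\gamma(H_k)+\widetilde U_\gamma(H_k)\|\nabla V(X_{\tau_k})\|^2+\widetilde V_\gamma(H_k)\|F_k\|^2 .
\]
The coefficients are nondecreasing with $H_k\le h$, so we may replace $H_k$ by $h$. We use $\|F_k\|^2\le2\|\nabla V(X_{\tau_k})\|^2+2\|\text{err}\|^2$. Time-weighting by $H_k$ turns the gradient moments into integrals along the path. We then apply $\E_\mu\|\nabla V\|^2\le2\mathrm{FI}(\mu\|\pi)+C_\gamma$, with the constant $1$ in $M_T$ absorbing the gap between event-time and continuous-time laws. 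This yields a contribution of the form $\mathcal D_\gamma(h)\,T+\mathcal D_\gamma(h)\int_0^T\mathrm{FI}$ plus error terms. Since $\mathcal D_\gamma(h)\le\varepsilon/64\le1/64$, the FI part is absorbed into the left side. This step produces $9\mathcal D_\gamma(h)$ after dividing by roughly $3/4-O(1/64)$ and $T$.

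\textbf{Estimation and the main obstacle.} The robust-estimation lemma (Lemma~\ref{app:robust-estimation}) gives the conditional bound $\E[\|\nabla V+F_k\|^2\mid X_{\tau_k}]\le C_p s_p(X_{\tau_k})^2 b^{-2(p-1)/p}$. The difficulty is that $H_k$ depends on the same samples through $\|F_k\|^\gamma$, so we cannot simply factor the expectation. The plan avoids this by dropping the $\omega$ term, using $H_k\le h/(1+\kappa s_p(X_{\tau_k})^2)$, which is $\mathcal F$-measurable at the state level before sampling. This gives
\[
\E\bigl[H_k\|\text{err}\|^2\mid X_{\tau_k}\bigr]\le \frac{h\,C_p s_p^2}{(1+\kappa s_p^2)\,b^{2(p-1)/p}}\le \frac{h\,C_p}{\kappa\, b^{2(p-1)/p}} .
\]
We must still convert the sum $\sum_k h$ over events into elapsed time. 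For this we use $h\le H_k(1+\omega\|F_k\|^\gamma+\kappa s_p^2)$. The first factor leads, via Jensen and the gradient-moment bound with $\E\|F\|^2\lesssim M_T$, to the $(1+\omega M_T^{\gamma/2})$ factor in $b$. The $\kappa s_p^2$ factor is controlled through $\E s_p^2\le\bar s_T^2$. That bound requires a second-moment estimate $\E\|X_t-x_{\mathrm{ref}}\|^2\le6M_{2,0}+8dT+80T^2M_T$, which we obtain from $\|X_t-X_0\|^2\lesssim(\int\|F\|)^2+\|W\|^2$ and the time-integrated gradient bound. The choice $\kappa=(1+\bar s_T^2)^{-1}$ balances the two. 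The choice of $b$ makes the total estimation contribution at most $\varepsilon/2$, with the $260$ collecting constants. This coupling step is where we expect the most care to be needed, including a circularity: the moment bound uses FI, so the absorption must be done jointly.

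\textbf{Conclusion and counts.} Dividing by $T$ gives $2K_0/T+9\mathcal D_\gamma(h)+\varepsilon/2$. With $T=8(K_0+1)/\varepsilon$ and $\mathcal D_\gamma(h)\le\varepsilon/64$ this is at most $\varepsilon/4+9\varepsilon/64+\varepsilon/2\le\varepsilon$. We have $h>0$ by continuity of $\mathcal D_\gamma$ with $\mathcal D_\gamma(0)=0$, and $\mathrm{FI}(\bar\mu_T\|\pi)\le\overline{\mathrm{FI}}_T$ by convexity. For $\E N_T$ we write
\[
N_T-1\le\sum_{k<N_T}\frac{H_k}{h}\bigl(1+\omega\|F_k\|^\gamma+\kappa s_p^2\bigr).
\]
The same time-integrated moment bounds, together with $\kappa\E s_p^2\le1$, give $\E N_T\le2+24T h^{-1}(1+\omega M_T^{\gamma/2})$. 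Finally $Q_T=4bN_T$ holds pathwise.
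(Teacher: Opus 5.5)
Your outline has the same architecture as the paper's proof: KL dissipation, a split into the frozen-gradient error of Lemma~\ref{app:adaptive-power-freezing} plus estimation error, and decoupling of $H_k$ from the estimation error through the predictable factor $h/(1+\kappa s_p(X_{\tau_k})^2)$. It also shares the count identity $H_k(1+\omega\|F_k\|^\gamma+\kappa s_p(X_{\tau_k})^2)=h$, the path-identity distance bound with Jensen via Lemma~\ref{lem:robust-noise-concavity}, and a joint scalar absorption. However, one step is argued by a mechanism that would not work. Write $\overline g,\overline a,\overline e$ for the time averages of $\|\nabla V(X_{\tau_{k(t)}})\|^2$, $\|F_{k(t)}\|^2$, and the squared estimation error, and $D_{\mathrm{disc}}$ for the integrated frozen-gradient error.

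\textbf{Frozen versus current gradient moments.} Lemma~\ref{app:adaptive-power-freezing} produces $\|\nabla V(X_{\tau_k})\|^2$ and $\|F_k\|^2$ at the frozen state. Lemma~\ref{app:adaptive-fisher-gradient} applies only to the laws $\mu_t$ of the current state, not to the law of $X_{\tau_k}$ at a random time. The discrepancy is not a bounded quantity that the $1$ in $M_T$ can absorb; in the paper, that $1$ is slack for the change of initial KL under the bounded-support cutoff. What you actually need is the feedback $\overline g\le 2D_{\mathrm{disc}}/T+(2/T)\int_0^T\E\|\nabla V(X_t)\|^2\,\mathrm dt$. Together with $D_{\mathrm{disc}}/T\le\widetilde A_\gamma+(\widetilde U_\gamma+2\widetilde V_\gamma)\overline g+2\widetilde V_\gamma\overline e$, this closes only because $\widetilde U_\gamma(h)+2\widetilde V_\gamma(h)\le\mathcal D_\gamma(h)\le 1/64$. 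That closure is what yields $\overline{\mathrm{FI}}_T\le 2K_0/T+9\mathcal D_\gamma(h)+5\overline e$ and $\overline a\le 18M_T+80\mathcal D_\gamma(h)+44\overline e$. Likewise, your distance bound is missing an additive $44(3T^2+2h^2/3)\overline e$. This term must be turned into $\overline s\le\bar s_T^2(1+3\overline e)$ using $\phi(u+v)\le(1+v/u)\phi(u)$ for $\phi(u)=(Bu^{p/2}+G)^{2/p}$, before the feedback in $\overline e$ can be absorbed.

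\textbf{Well-posedness and a priori finiteness.} As sketched, this part is circular. Every absorption you perform requires the absorbed quantity to be finite beforehand: $\int_0^T\mathrm{FI}$ into the left side, and $\overline e$ in the scalar feedback. The dissipation inequality on $[0,T]$ also presupposes that event times do not accumulate before $T$. Since $N_T<\infty$ a.s.\ is itself a conclusion, ``integrability established below'' cannot supply it. The paper proceeds as follows:
\begin{enumerate}
\item run at most $m$ updates from a boundedly supported initial law and continue with the conservative exact diffusion of Lemma~\ref{app:adaptive-continuation};
\item check finiteness for each fixed $m$ and prove all bounds uniformly in $m$;
\item deduce $\mathbb P(\tau_m\le T)\le\E N_{T,m}/m\to0$, then pass to the limit by monotone convergence, Fatou, and lower semicontinuity of FI;
\item remove bounded support by conditioning on balls.
\end{enumerate}

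\textbf{Per-interval conditioning.} Your conditioning argument does not yield the dissipation inequality when event times are random. At a fixed $t$, different paths sit in different intervals, and conditioning on $X_{\tau_k}$ collapses the law to a point mass. The inequality you state does hold for progressive drifts, but it comes from the path-space Lemma~\ref{app:adaptive-kl} (Girsanov, Markov projection, time reversal), not from the frozen-interval computation of \citet{balasubramanian22a}.
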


\begin{corollary}[Complexity specializations of robust adaptive step LMC]
\label{cor:combined-special-cases}
Under the assumptions and accuracy schedule of
Theorem~\ref{thm:full-combined}, suppose $K_0>0$ and
$K_0=\mathcal O(d)$. With the remaining problem parameters fixed, the
following statements hold as $\varepsilon\downarrow0$.
\begin{enumerate}
\item If $L_1>0$ and $\Lambda:=\max\{1,L_0,L_1\}$, then
\begin{equation*}
 \E N_T
 =\mathcal O\!\left(
 \frac{\Lambda^{(8-2\gamma)/(2-\gamma)}
       d^{(6-\gamma)/(2-\gamma)}}{\varepsilon^2}
 \right).
\end{equation*}
When $B>0$,
\begin{align*}
 \E Q_T
 &=\mathcal O\!\left(
 \frac{B^{1/(p-1)}
 \Lambda^{\frac{(8-2\gamma)(p-1)+2p}
                  {(p-1)(2-\gamma)}}
 d^{\frac{(3p-2)(6-\gamma)}{2(p-1)(2-\gamma)}}}
 {\varepsilon^{2+3p/[2(p-1)]}}
 \right),
\end{align*}
whereas, when $B=0$,
\begin{align*}
 \E Q_T
 &=\mathcal O\!\left(
 \frac{(1+G)^{1/(p-1)}
 \Lambda^{\frac{(8-2\gamma)(p-1)+p}
                  {(p-1)(2-\gamma)}}
 d^{1+\frac{8(p-1)+\gamma p}
               {2(p-1)(2-\gamma)}}}
 {\varepsilon^{2+p/[2(p-1)]}}
 \right).
\end{align*}
In particular, for $p=2$, the oracle complexity has accuracy dependence
$\varepsilon^{-5}$ under BG-0 noise ($B>0$) and
$\varepsilon^{-3}$ under state-independent finite-variance noise
($B=0$ and $G>0$). With the constant in $K_0=\mathcal O(d)$ fixed,
the hidden numerical constants depend only on
$\gamma$ for the iteration bound and on $p,\gamma$ for the oracle bounds.
At $\gamma=1$, the same displayed powers follow by using the corresponding
coefficients directly.

\item If $L_1=0$, the noise-aware iteration count can remain random, but
\begin{equation*}
 \E N_T
 =\mathcal O\!\left(
 \frac{(K_0+1)dL_0^2}{\varepsilon^2}\right)
 =\mathcal O\!\left(
 \frac{d^2L_0^2}{\varepsilon^2}\right).
\end{equation*}
\end{enumerate}
\end{corollary}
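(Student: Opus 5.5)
The plan is to plug the explicit schedule of Theorem~\ref{thm:full-combined} into its finite bounds
\[
\E N_T\le 2+\tfrac{24T}{h}\bigl(1+\omega M_T^{\gamma/2}\bigr),\qquad \E Q_T=4b\,\E N_T,
\]
and to track the orders in $\varepsilon$, $d$, $\Lambda$, $B$, and $G$. The work splits into three estimates: the order of $M_T$, the order of the base step $h$ defined implicitly by $\mathcal D_\gamma(h)\le\varepsilon/64$, and the order of $b$.

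\textbf{Orders of $T$ and $M_T$.} Since $T=8(K_0+1)/\varepsilon$, we have $K_0/T\le\varepsilon/8\le1$. Hence $M_T\asymp 1+C_\gamma$. From \eqref{app:eq:common-adaptive-constants}, $C_\gamma=\mathcal O(dL_0+(dL_1)^{2/(2-\gamma)})$. When $L_1>0$ this gives $M_T=\mathcal O(\Lambda^{2/(2-\gamma)}d^{2/(2-\gamma)})$, using $d\Lambda\le(d\Lambda)^{2/(2-\gamma)}$ and $\Lambda\ge1$.

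\textbf{Order of $h$.} This is the step I expect to be the main obstacle, because $h$ is only defined implicitly and the coefficients in \eqref{app:eq:power-coefficients-interior} mix several powers of $h$.
\begin{itemize}
\item For $0<\gamma<1$, the terms linear in $h$ in $\widetilde A_\gamma$ and $\widetilde U_\gamma$ are $\mathcal O(\Lambda^2dh)$, with constants depending only on $\gamma$. The remaining terms are of order $h^2$ or $h^{1/(1-\gamma)}$ (the latter with $d$-dependence $d^{1/(1-\gamma)}$).
\item Multiplying by $1+C_\gamma$, the dominant contribution to $\mathcal D_\gamma(h)$ is $\Lambda^2 dh(1+C_\gamma)$.
\item I will show that the higher-order terms are at most comparable to this one once $h\le c_\gamma\varepsilon/(\Lambda^2 d(1+C_\gamma))$ and $\varepsilon$ is small. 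The point is that the superlinear powers carry a factor vanishing with $\varepsilon$, and the mismatch in $d$-exponents is absorbed because $1/(1-\gamma)>1$ and $\varepsilon\le1$.
\item By monotonicity and continuity of $\mathcal D_\gamma$, this gives the lower bound
\[
h\gtrsim\frac{\varepsilon}{\Lambda^{2+2/(2-\gamma)}\,d^{1+2/(2-\gamma)}}.
\]
\item For $\gamma=1$, I use \eqref{app:eq:power-coefficients-one} directly. The exponential factor $E_1^{\mathrm{ex}}(h)$ is bounded by an absolute constant once $h\le c/\Lambda^2 d$, and the same lower bound follows.
\end{itemize}

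\textbf{Iteration count.} Substituting into $24T(1+\omega M_T^{\gamma/2})/h$ with $\omega\le L_1\le\Lambda$ and $M_T^{\gamma/2}=\mathcal O((\Lambda d)^{\gamma/(2-\gamma)})$, the exponents add up as follows:
\begin{itemize}
\item $\Lambda$: $(6-2\gamma)/(2-\gamma)+1+\gamma/(2-\gamma)=(8-2\gamma)/(2-\gamma)$;
\item $d$: $1+(4-\gamma)/(2-\gamma)+\gamma/(2-\gamma)=(6-\gamma)/(2-\gamma)$.
\end{itemize}
Here the leading $1$ in the $d$-exponent comes from $T\asymp K_0/\varepsilon=\mathcal O(d/\varepsilon)$. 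The result is the claimed $\E N_T$ bound with $\varepsilon^{-2}$.

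\textbf{Batch size.} If $B>0$, the term $80T^2M_T$ dominates inside $\bar s_T^2$ as $\varepsilon\downarrow0$. Then $\kappa^{-1}\asymp B^{2/p}(T^2M_T)$, and
\[
b\asymp B^{1/(p-1)}\bigl[T^2M_T(1+\omega M_T^{\gamma/2})/\varepsilon\bigr]^{p/[2(p-1)]}.
\]
With $T^2=\mathcal O(d^2/\varepsilon^2)$, this contributes $\varepsilon^{-3p/[2(p-1)]}$. Multiplying by $\E N_T$ and collecting exponents of $\Lambda$ and $d$ (a routine but careful computation) gives the stated $B>0$ rate.

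If $B=0$, then $\kappa^{-1}=1+G^{2/p}\le2(1+G)^{2/p}$, which yields the factor $(1+G)^{1/(p-1)}$. In this case only $(1+\omega M_T^{\gamma/2})/\varepsilon$ is raised to $p/[2(p-1)]$, giving the $\varepsilon^{-2-p/[2(p-1)]}$ rate and the stated exponents. Setting $p=2$ gives $\varepsilon^{-5}$ and $\varepsilon^{-3}$ respectively. All constants come only from the $\gamma$-dependent coefficients and from $C_p$, so they depend only on $\gamma$ and $p$.

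\textbf{Case $L_1=0$.} Here $\omega=0$ and $C_\gamma=4dL_0$. By \eqref{app:eq:power-coefficients-smooth}, $\mathcal D_\gamma(h)=L_0^2dh+\tfrac23L_0^2h^2(1+4dL_0)$. The linear term dominates for small $\varepsilon$, giving $h\gtrsim\varepsilon/(dL_0^2)$ (treating $L_0$ as fixed). Then $\E N_T\le2+24T/h=\mathcal O((K_0+1)dL_0^2/\varepsilon^2)$, which is $\mathcal O(d^2L_0^2/\varepsilon^2)$ under $K_0=\mathcal O(d)$. The count remains random because $\kappa\,s_p(X_{\tau_k})^2$ still enters $H_k$, but the expectation bound above does not depend on it.
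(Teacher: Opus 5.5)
Your proposal is correct and follows the paper's own argument. You substitute the schedule into $\E N_T\le 2+24T(1+\omega M_T^{\gamma/2})/h$ and $\E Q_T=4b\,\E N_T$, bound $h^{-1}$ by the linear coefficient of $\mathcal D_\gamma$ (the paper's $D_\gamma^{(1)}=\mathcal O(d\Lambda^2(1+C_\gamma))$), and bound $1+\omega M_T^{\gamma/2}$ (the paper's $J_\gamma^*$). For $\kappa^{-1}$ you use $S_T\sim 80T^2M_T$ when $B>0$ (the paper's $\widetilde R_*$) and $\kappa^{-1}=1+G^{2/p}$ when $B=0$; the exponents you sketch do sum to the stated ones. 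Your candidate-step check for the lower bound on $h$ is a slightly more explicit version of the paper's limit argument. Your list of superlinear terms omits $[(1+L_0)h]^{2/(1-\gamma)}$ in $\widetilde V_\gamma$, but it is handled the same way.
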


\begin{remark}[The case $K_0=0$]
If $K_0=0$, then $\mu_0=\pi$, so $X_0$ is already an exact target sample
and no simulation or oracle evaluation is required. The bounds in
Theorem~\ref{thm:full-combined} remain valid, but its accuracy
schedule is unnecessary.
\end{remark}

\begin{remark}[Algorithmic reductions]
When $L_1=0$, $\omega=0$, but Theorem~\ref{thm:full-combined} still
prescribes $\kappa=(1+\bar s_T^2)^{-1}>0$, so its noise-aware step
count can remain random. If one instead sets $\kappa=0$, RA-LMC becomes
the fixed-step robust method, and Theorem~\ref{thm:full-robust} and
Corollary~\ref{cor:robust-special-cases} apply. If $B=G=0$, the oracle
equals $\nabla V$ almost surely. Bypassing the four-call estimator and
using one exact gradient per update gives the adaptive exact-gradient
method, so Theorem~\ref{thm:full-adaptive} and
Corollary~\ref{cor:adaptive-special-cases} give the sharper bounds.
\end{remark}


\subsection{Proof of Theorem~\ref{thm:full-robust}}
\label{app:proof-robust}

The proof has three components. First, in
Subsection~\ref{app:proof-robust-estimator}, we show that geometric-median
centering followed by residual clipping converts a conditional
$p$-th-moment oracle bound into a conditional mean-squared error bound.
Second, Subsection~\ref{app:proof-robust-fisher} uses a one-step
KL-dissipation argument to convert the mean-squared drift error into a bound
on time-averaged relative Fisher information. Finally,
Subsection~\ref{app:proof-robust-trajectory} controls the average squared
distance of the chain from $x_{\mathrm{ref}}$. This estimate controls the
dependence of the gradient-estimation error on the state-dependent noise
scale while retaining a fixed batch size.
Subsection~\ref{app:proof-robust-complexity} then verifies the accuracy
schedule and derives the iteration and oracle complexities.
Throughout this proof, $H_k=h$, $\tau_k=kh$, and
\[
    \widehat{\mathcal G}_k
    :=\frac1b\sum_{i=1}^b\widehat{\mathcal G}_{k,i}=-F_k.
\]
Thus $\widehat{\mathcal G}_k$ estimates the gradient, whereas $F_k$ is
the Langevin force in~(\ref{eq:common-lmc}). All expectations below include
both the oracle and Brownian randomness unless a conditioning event or
state is specified.

\subsubsection{Mean-squared error of the robust estimator}
\label{app:proof-robust-estimator}

We begin with the geometric median of three oracle evaluations. It provides
a center with a finite second moment even when one oracle call has only a
finite $p$-th moment. A large error in this center requires at least two of
the three independent oracle calls to be large, which squares the tail
probability.

The good-points implication below is the three-point specialization of
\citet[Lemma~2.1]{minsker2015geometric}; its use of independent pilot
estimators is also related to the amplification principle in their
Theorem~3.1.  We retain the short calculation because it supplies the
particular tail integral and constants needed here.

\begin{lemma}[Tail amplification by the geometric-median center]
\label{lem:robust-pilot}
Let $Y_1,Y_2,Y_3$ be independent copies of a random vector $Y$ with mean
$\theta$ and
\[
    \E\|Y-\theta\|^p\leq s^p,
    \qquad 1<p\leq2.
\]
Let $C=\mathrm{GeoMed}(Y_1,Y_2,Y_3)$. For every $t>0$ and
$0<a<2p$,
\begin{align}
    \mathbb P(\|C-\theta\|>t)
    &\leq \min\{1,3(4s/t)^{2p}\},
    \label{eq:app-pilot-tail}\\
    \E\|C-\theta\|^a
    &\leq \frac{2p}{2p-a}\,4^a3^{a/(2p)}s^a.
    \label{eq:app-pilot-moment}
\end{align}
\end{lemma}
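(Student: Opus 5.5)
The argument has three parts: a deterministic ``good points'' property of the geometric median of three points, a union bound combined with Markov's inequality, and integration of the resulting tail.

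\textbf{Step 1 (deterministic implication).} Suppose $\|C-\theta\|>t$. We show that at least two of the $Y_j$ satisfy $\|Y_j-\theta\|>t/4$. We prove the contrapositive, which is the three-point case of \citet[Lemma~2.1]{minsker2015geometric}. Assume two of the points, say $Y_1,Y_2$, lie in the ball of radius $r:=t/4$ around $\theta$. Set $z:=C$ and $R:=\|z-\theta\|$. By the triangle inequality, $\|Y_j-z\|\ge R-\|Y_j-\theta\|$ for $j=1,2$. For the third point, $\|Y_3-z\|\ge\|Y_3-\theta\|-R$. Hence
\[
\sum_j\|Y_j-z\|\ge 2R-2r+\|Y_3-\theta\|-R .
\]
On the other hand, evaluating the objective at $\theta$ gives $\sum_j\|Y_j-\theta\|\le 2r+\|Y_3-\theta\|$. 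Minimality of $C$ then forces $R-2r\le 2r$, that is, $R\le 4r=t$, which contradicts $R>t$.

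\textbf{Step 2 (tail bound).} Let $q:=\mathbb P(\|Y-\theta\|>t/4)$. Markov's inequality applied to the $p$-th moment gives $q\le(4s/t)^p$. The event that at least two of three independent points are bad is contained in the union of the three pairwise events, so by Step 1
\[
\mathbb P(\|C-\theta\|>t)\le 3q^2\le 3(4s/t)^{2p}.
\]
Since a probability is also at most $1$, this proves~(\ref{eq:app-pilot-tail}). Measurability of $C$ comes from the fixed measurable selection of $\GeoMed$.

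\textbf{Step 3 (moment bound).} Write $\E\|C-\theta\|^a=\int_0^\infty a t^{a-1}\,\mathbb P(\|C-\theta\|>t)\dd t$ and split at the point $t_0$ where $3(4s/t_0)^{2p}=1$, namely $t_0=4s\,3^{1/(2p)}$.
\begin{itemize}
\item On $[0,t_0]$, bound the probability by $1$. This contributes $t_0^a$.
\item On $[t_0,\infty)$, use the tail bound from Step 2. Because $a<2p$, the integral converges, and it equals $3(4s)^{2p}\,a\,t_0^{a-2p}/(2p-a)=\frac{a}{2p-a}t_0^a$.
\end{itemize}
The total is $\bigl(1+\tfrac{a}{2p-a}\bigr)t_0^a=\frac{2p}{2p-a}4^a3^{a/(2p)}s^a$, which is exactly~(\ref{eq:app-pilot-moment}).

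The only nonroutine step is the geometric inequality in Step 1. The constant $4$ is what the crude triangle-inequality argument above produces, so no sharper geometric input is required. The case $s=0$ is trivial, since then $Y_j=\theta$ almost surely and $C=\theta$.
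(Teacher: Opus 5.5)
Your proposal is correct and follows the paper's proof essentially step for step. It uses the same three-point minimality argument showing $\|C-\theta\|\le 4u$ whenever two points lie within $u$ of $\theta$, the same union bound over pairs combined with independence and Markov's inequality, and the same tail integration split at $t_*=4\,3^{1/(2p)}s$.
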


\begin{proof}
Suppose two of the three observations, say $Y_1$ and $Y_2$, lie within
distance $u$ of $\theta$. Since $C$ minimizes the sum of distances,
\[
    \sum_{j=1}^3\|C-Y_j\|
    \leq \sum_{j=1}^3\|\theta-Y_j\|.
\]
The triangle inequality bounds the left-hand side from below by
\[
    2\|C-\theta\|-2u
    +\|\theta-Y_3\|-\|C-\theta\|,
\]
whereas the right-hand side is at most
$2u+\|\theta-Y_3\|$. Hence $\|C-\theta\|\leq4u$.
Consequently, $\|C-\theta\|>t$ requires at least two of the events
$\{\|Y_j-\theta\|>t/4\}$ to occur. A union bound over the three pairs,
independence, and Markov's inequality give
\[
    \mathbb P(\|C-\theta\|>t)
    \leq3\,\mathbb P(\|Y-\theta\|>t/4)^2
    \leq3\cdot4^{2p}(s/t)^{2p}.
\]
Every probability is at most one, so the preceding estimate proves
~(\ref{eq:app-pilot-tail}).

When $s>0$, set $t_*=4\,3^{1/(2p)}s$, where the two bounds in
~(\ref{eq:app-pilot-tail}) coincide. Tail integration yields
\begin{align*}
    \E\|C-\theta\|^a
    &\leq a\int_0^{t_*}t^{a-1}\,\mathrm{d}t
      +3a(4s)^{2p}\int_{t_*}^{\infty}t^{a-1-2p}\,\mathrm{d}t\\
    &=t_*^a\left(1+\frac{a}{2p-a}\right)
      =\frac{2p}{2p-a}\,4^a3^{a/(2p)}s^a,
\end{align*}
which is~(\ref{eq:app-pilot-moment}). If $s=0$, then
$Y=\theta$ almost surely and both conclusions are immediate.
\end{proof}

We call each four-call estimate $\widehat{\mathcal G}_{k,i}$
in~(\ref{eq:geomed}) a \emph{robust replicate}. Three oracle evaluations
form its geometric-median center, and the residual of a fourth evaluation
is clipped around this center. The next lemma bounds the mean-squared error
of the average of $b$ independent robust replicates. Clipping limits the
variance of each replicate but introduces bias. Increasing the clipping
radius reduces this bias but weakens the variance control, while averaging
independent replicates reduces the variance by a factor of $1/b$. We choose
the clipping radius to balance the squared bias and the variance of the
averaged estimator.
Recall the final estimator error coefficient
\begin{align*}
 C_p:=2^{2/p-1}\Bigg[2^{2p-2}\bigl(1+2\sqrt3\,4^p\bigr)^2+\frac{32p\,3^{1/p}}{p-1}2^p\bigl(1+2\sqrt3\,4^p\bigr)\Bigg].
\end{align*}

\begin{lemma}[Conditional mean-squared error of the robust estimator]
\label{app:robust-estimation}
Fix a deterministic state $x$ and let
$s=s_p(x)=[B\|x-x_{\mathrm{ref}}\|^p+G]^{1/p}$. For an integer $b\geq1$, form
$b$ independent four-call replicates as in~(\ref{eq:geomed}) and average
them to obtain $\widehat{\mathcal G}_{b,\lambda}(x)$. If $\lambda \geq s>0$,
then
\begin{align}
    \E\!\left[
      \|\widehat{\mathcal G}_{b,\lambda}(x)-\nabla V(x)\|^2
      \right]
    &\leq2^{2p-2}\bigl(1+2\sqrt3\,4^p\bigr)^2
      s^{2p}\lambda^{2-2p}+\frac1b\left[
      \frac{32p\,3^{1/p}}{p-1}
      +2^p\bigl(1+2\sqrt3\,4^p\bigr)\right]s^p\lambda^{2-p}.
    \label{eq:app-robust-bias-variance}
\end{align}
The first term bounds the squared bias, and the second bounds the variance
after averaging.
In particular, for $\lambda=s b^{1/p}$,
\begin{equation*}
    \E\!\left[
      \|\widehat{\mathcal G}_{b,\lambda}(x)-\nabla V(x)\|^2
      \right]
    \leq C_p s_p(x)^2 b^{-2(p-1)/p}.
\end{equation*}
If $s=0$, the oracle and estimator are exact almost surely, so the bound
remains valid with $\lambda=0$.
\end{lemma}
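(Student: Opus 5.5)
We split the single-replicate error into bias and fluctuation. Write $Y_j=\mathcal G(x,\xi_j)$, $\theta=\nabla V(x)$, $C=\mathrm{GeoMed}(Y_1,Y_2,Y_3)$, $R=Y_4-C$, and $\widehat{\mathcal G}=C+\mathrm{clip}_\lambda(R)$. Since $Y_4$ is independent of $C$ and $\E Y_4=\theta$, we have $\E[C+R\mid C]=\theta$. Hence
\begin{equation*}
\widehat{\mathcal G}-\theta=(Y_4-\theta)-\bigl(R-\mathrm{clip}_\lambda(R)\bigr).
\end{equation*}
The bias is therefore $-\E[R-\mathrm{clip}_\lambda(R)]$. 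For the average of $b$ independent replicates, the mean-squared error is the squared bias plus $b^{-1}$ times the single-replicate variance. So it suffices to show two things: $\|\E[R-\mathrm{clip}_\lambda(R)]\|\le 2^{p-1}(1+2\sqrt3\,4^p)s^p\lambda^{1-p}$, and $\E\|\widehat{\mathcal G}-\E\widehat{\mathcal G}\|^2\le\E\|\widehat{\mathcal G}-\theta\|^2\le[\tfrac{32p3^{1/p}}{p-1}+2^p(1+2\sqrt3\,4^p)]s^p\lambda^{2-p}$.

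\textbf{Bias.} Since $\|R-\mathrm{clip}_\lambda(R)\|=(\|R\|-\lambda)_+\le\|R\|^p\lambda^{1-p}$, the bias norm is at most $\lambda^{1-p}\E\|R\|^p$. By the elementary bound $(a+c)^p\le 2^{p-1}(a^p+c^p)$,
\begin{equation*}
\E\|R\|^p\le 2^{p-1}\bigl(\E\|Y_4-\theta\|^p+\E\|C-\theta\|^p\bigr).
\end{equation*}
Lemma~\ref{lem:robust-pilot} with $a=p<2p$ gives $\E\|C-\theta\|^p\le 2\cdot4^p3^{1/2}s^p$. Together with $\E\|Y_4-\theta\|^p\le s^p$ this yields $\E\|R\|^p\le 2^{p-1}(1+2\sqrt3\,4^p)s^p$. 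Squaring produces the first term of~(\ref{eq:app-robust-bias-variance}).

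\textbf{Variance.} Bound the single-replicate second moment around $\theta$ via
\begin{equation*}
\|\widehat{\mathcal G}-\theta\|^2\le 2\|C-\theta\|^2+2\|\mathrm{clip}_\lambda(R)\|^2 .
\end{equation*}
The second piece uses $\|\mathrm{clip}_\lambda(R)\|^2\le\min\{\|R\|,\lambda\}^2\le\|R\|^p\lambda^{2-p}$, so it is at most $2\cdot2^{p-1}(1+2\sqrt3\,4^p)s^p\lambda^{2-p}=2^p(1+2\sqrt3\,4^p)s^p\lambda^{2-p}$, matching the second summand.

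The first piece, $\E\|C-\theta\|^2$, is where the constant $\frac{32p3^{1/p}}{p-1}$ must appear with the factor $s^p\lambda^{2-p}$. This is the main obstacle: Lemma~\ref{lem:robust-pilot} with $a=2$ directly gives $\frac{p}{p-1}16\cdot3^{1/p}s^2$, which is of order $s^2$, not $s^p\lambda^{2-p}$. Because $\lambda\ge s$ and $2-p\ge0$, we have $s^2=s^p s^{2-p}\le s^p\lambda^{2-p}$, so the bound converts. Then $2\cdot\frac{16p3^{1/p}}{p-1}=\frac{32p3^{1/p}}{p-1}$, which is exactly the stated constant. I would double-check that the variance of the average of $b$ i.i.d. replicates is indeed bounded by $b^{-1}\E\|\widehat{\mathcal G}_{k,i}-\theta\|^2$; this follows from the variance being at most the second moment about any point.

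\textbf{Specialization and degenerate case.} Take $\lambda=sb^{1/p}\ge s$. The bias term becomes $2^{2p-2}(1+2\sqrt3\,4^p)^2s^2b^{-2(p-1)/p}$. The variance term becomes $b^{-1}[\cdots]s^2b^{(2-p)/p}=[\cdots]s^2b^{-2(p-1)/p}$. Summing the two coefficients gives at most $2^{2/p-1}$ times their sum, since $2^{2/p-1}\ge1$ for $p\le2$, which yields $C_ps^2b^{-2(p-1)/p}$. If $s=0$, then $Y_j=\theta$ almost surely, so $C=\theta$, $R=0$, and the estimator is exact.
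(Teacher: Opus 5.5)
Your proposal is correct and follows essentially the same route as the paper. You write the replicate as $Y_4-(R-\mathrm{clip}_\lambda(R))$ to bound the bias by $\lambda^{1-p}\E\|R\|^p$, bound the second moment by $2\E\|C-\theta\|^2+2\E\|\mathrm{clip}_\lambda(R)\|^2$ using Lemma~\ref{lem:robust-pilot} at $a=p$ and $a=2$ together with $s^2\le s^p\lambda^{2-p}$, and finish with the bias--variance decomposition of the average, the substitution $\lambda=sb^{1/p}$, and the absorption via $2^{2/p-1}\ge1$, exactly as in the paper.
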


\begin{proof}
The assertion is immediate when $s=0$, so assume $s>0$. Suppress the
replicate index and write $Y_j=\mathcal G(x,\xi_j)$,
$\theta=\nabla V(x)$,
\[
    C=\mathrm{GeoMed}(Y_1,Y_2,Y_3),
    \qquad
    \widehat Y=C+\mathrm{clip}_{\lambda}(Y_4-C).
\]
Let $A=C-\theta$, $U=Y_4-\theta$, and $R=Y_4-C=U-A$.
Lemma~\ref{lem:robust-pilot} gives
\[
    \E\|A\|^p\leq2\sqrt3\,4^p s^p,
    \qquad
    \E\|A\|^2\leq\frac{16p\,3^{1/p}}{p-1}s^2.
\]
Since $\|u+v\|^p\leq2^{p-1}(\|u\|^p+\|v\|^p)$,
\begin{equation*}
    \E\|R\|^p\leq2^{p-1}\bigl(1+2\sqrt3\,4^p\bigr)s^p.
\end{equation*}

Define the discarded residual
$T_{\lambda}(R):=R-\mathrm{clip}_{\lambda}(R)$. Because
$\widehat Y=Y_4-T_{\lambda}(R)$ and $\E Y_4=\theta$,
\begin{align}
    \|\E\widehat Y-\theta\|
    &\leq \E\|T_{\lambda}(R)\|
      \leq \lambda^{1-p}\E\|R\|^p\notag\\
    &\leq2^{p-1}\bigl(1+2\sqrt3\,4^p\bigr)s^p\lambda^{1-p}.
    \label{eq:app-replicate-bias}
\end{align}
Here we used
$(z-\lambda)_+\leq z\mathbf 1_{\{z>\lambda\}}
\leq\lambda^{1-p}z^p$ for $z\geq0$. Similarly,
$\min\{z,\lambda\}^2\leq\lambda^{2-p}z^p$, and therefore
\begin{align}
    \E\|\widehat Y-\theta\|^2
    &\leq2\E\|A\|^2
      +2\E\|\mathrm{clip}_{\lambda}(R)\|^2
      \notag\\
    &\leq\frac{32p\,3^{1/p}}{p-1}s^2
      +2^p\bigl(1+2\sqrt3\,4^p\bigr)s^p\lambda^{2-p}\notag\\
    &\leq\left[\frac{32p\,3^{1/p}}{p-1}
      +2^p\bigl(1+2\sqrt3\,4^p\bigr)\right]s^p\lambda^{2-p}.
    \label{eq:app-replicate-second-moment}
\end{align}
The last inequality uses $\lambda\geq s$ and $p\leq2$.

Let $\widehat Y_1,\ldots,\widehat Y_b$ be the independent replicates. The exact
bias--variance decomposition gives
\[
    \E\left\|\frac1b\sum_{j=1}^b\widehat Y_j-\theta\right\|^2
    =\|\E\widehat Y-\theta\|^2
     +\frac1b\E\|\widehat Y-\E\widehat Y\|^2.
\]
Using~(\ref{eq:app-replicate-bias}) and
~(\ref{eq:app-replicate-second-moment}) proves
~(\ref{eq:app-robust-bias-variance}). With
$\lambda=s b^{1/p}$, both terms are proportional to
$s^2b^{-2(p-1)/p}$, so
\begin{align*}
    \E\|\widehat{\mathcal G}_{b,\lambda}(x)-\theta\|^2
    &\leq2^{1-2/p}C_p s^2b^{-2(p-1)/p}\\
    &\leq C_ps^2b^{-2(p-1)/p}.
\end{align*}
After the substitution, factoring out $s^2b^{-2(p-1)/p}$ in
~(\ref{eq:app-robust-bias-variance}) leaves the coefficient
$2^{1-2/p}C_p$, where $C_p$ is defined above.
The final inequality uses $2^{1-2/p}\leq1$ for $1<p\leq2$.
\end{proof}

The fixed-batch argument uses only a second-moment trajectory bound. The
following elementary observation is what allows that bound to control the
state-dependent local noise scale.

\begin{lemma}[Concavity of the squared local $L^p$ noise scale]
\label{lem:robust-noise-concavity}
For $1<p\leq2$, the squared local noise scale
\begin{equation}
    s_p(x)^2
    =\bigl[B\norm{x-x_{\mathrm{ref}}}^{p}+G\bigr]^{2/p}
    \label{eq:app-squared-noise-profile}
\end{equation}
is a nonnegative, nondecreasing, and concave function of the squared
distance $\norm{x-x_{\mathrm{ref}}}^2$. Moreover,
\begin{equation}
    s_p(x)^2
    \leq(B+G)^{2/p}
       \bigl(1+\norm{x-x_{\mathrm{ref}}}^2\bigr).
    \label{eq:app-noise-linear-envelope}
\end{equation}
\end{lemma}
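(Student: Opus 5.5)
We reduce the statement to a one-variable calculus fact. Write $u:=\norm{x-x_{\mathrm{ref}}}^2\geq0$ and set
\[
\phi(u):=\bigl[Bu^{p/2}+G\bigr]^{2/p}.
\]
Then $s_p(x)^2=\phi(u)$, and every claim concerns $\phi$ on $[0,\infty)$.

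\textbf{Nonnegativity and monotonicity.} Nonnegativity is immediate, since $B,G\geq0$. Monotonicity follows because $u\mapsto u^{p/2}$ is nondecreasing and $v\mapsto v^{2/p}$ is nondecreasing on $[0,\infty)$.

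\textbf{Concavity.} We split into cases.
\begin{itemize}
\item If $B=0$, then $\phi\equiv G^{2/p}$ is constant.
\item If $G=0$, then $\phi(u)=B^{2/p}u$ is linear.
\item If $B,G>0$, then $\phi$ is $C^2$ on $(0,\infty)$.
\end{itemize}
In the last case, the chain rule gives
\[
\phi'(u)=B\,u^{p/2-1}\bigl(Bu^{p/2}+G\bigr)^{2/p-1}.
\]
Differentiating again, the terms combine to
\[
\phi''(u)=B\Bigl(\tfrac p2-1\Bigr)G\,u^{p/2-2}\bigl(Bu^{p/2}+G\bigr)^{2/p-2}.
\]
This is the expression already recorded in the preliminaries, and it is $\leq0$ because $p\leq2$. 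The product rule step is the only computation that needs care. The $G$ factor appears after the $u^{p/2-1}\cdot B u^{p/2-1}$ contributions cancel against part of the $(p/2-1)u^{p/2-2}$ term.

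A function that is continuous on $[0,\infty)$ and concave on $(0,\infty)$ is concave on $[0,\infty)$. To see this, pass to the limit in the concavity inequality, using continuity of $\phi$ at $0$. This gives concavity.

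\textbf{Linear envelope.} This is the step most likely to hide a subtlety, because $u^{p/2}$ is not bounded by $u$ when $u<1$. We use $u^{p/2}\leq\max\{1,u\}\leq1+u$ for $p\leq2$. Hence
\[
Bu^{p/2}+G\leq(B+G)(1+u).
\]
Raising both sides to the power $2/p\geq1$ gives
\[
\phi(u)\leq(B+G)^{2/p}(1+u)^{2/p}.
\]
This is still off by the factor $(1+u)^{2/p-1}$, which is $\geq1$, so a sharper route is needed.

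Instead we use concavity directly. A concave function lies below its tangent lines, but here the derivative at $0$ may be infinite when $p<2$ and $G=0$. So we argue more simply:
\begin{itemize}
\item For $u\leq1$, monotonicity gives $\phi(u)\leq\phi(1)=(B+G)^{2/p}$.
\item For $u\geq1$, concavity together with $\phi(0)\geq0$ gives $\phi(u)/u\leq\phi(1)/1$, because $u\mapsto\phi(u)/u$ is nonincreasing for concave $\phi\geq0$. Hence $\phi(u)\leq(B+G)^{2/p}u$.
\end{itemize}
In both cases $\phi(u)\leq(B+G)^{2/p}(1+u)$. This is exactly the linear envelope in the statement.
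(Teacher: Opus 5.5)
Your proof is correct and matches the paper's argument essentially step for step: the same case split ($B=0$, $G=0$, $B,G>0$), the same second-derivative formula extended to $u=0$ by continuity, and the same envelope argument. That argument uses monotonicity for $u\le 1$, and for $u\ge 1$ the chord bound $\phi(u)\le u\,\phi(1)$, which follows from concavity and $\phi(0)\ge 0$. One harmless slip in an aside: the derivative at $0$ is infinite when $B,G>0$ and $p<2$, not when $G=0$ (where $\phi$ is linear), but you never use that remark.
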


\begin{proof}
If $B=0$ or $G=0$, the squared local noise scale is respectively constant
or linear in the squared distance, and the conclusions follow directly.
Suppose now that $B,G>0$. At a positive squared distance $r$, direct
differentiation gives
\begin{align*}
    \frac{\mathrm{d}^2}{\mathrm{d}r^2}
      [Br^{p/2}+G]^{2/p}=
      B\left(\frac p2-1\right)G
      r^{p/2-2}(Br^{p/2}+G)^{2/p-2}
      \leq0,
\end{align*}
because $p\leq2$. Continuity extends concavity to $r=0$.
Monotonicity is clear from the first derivative.

For $0\leq r\leq1$, monotonicity gives the bound
in~(\ref{eq:app-noise-linear-envelope}). For $r>1$, concavity at
$1=r^{-1}r+(1-r^{-1})0$ and nonnegativity at zero give
\[
    [B+G]^{2/p}
    \geq r^{-1}[Br^{p/2}+G]^{2/p}.
\]
This is stronger than~(\ref{eq:app-noise-linear-envelope}) for $r>1$.
\end{proof}

\subsubsection{From drift error to relative Fisher information}
\label{app:proof-robust-fisher}

We restate the following two results from
\citet{balasubramanian22a} in our notation and use them without repeating
their proofs. The first gives KL dissipation for an interpolation that uses
a possibly random frozen gradient vector.

\begin{lemma}[{\cite[Lemma~23]{balasubramanian22a}}]
\label{lem:robust-frozen-drift}
Let $(X,g)$ be jointly distributed random vectors with finite second
moments, and let $(W_t)_{t\geq0}$ be a Brownian motion independent of
$(X,g)$. Define
\[
    Y_t=X-tg+\sqrt2 W_t,
    \qquad \nu_t=\operatorname{Law}(Y_t).
\]
Then, for $t>0$,
\begin{align}
    \frac{\mathrm{d}}{\mathrm{d}t}\mathrm{KL}(\nu_t\|\pi)
    &\leq-\frac34\mathrm{FI}(\nu_t\|\pi)
      +\E\bigl\|\nabla V(Y_t)-\E[g\mid Y_t]\bigr\|^2
      \notag\\
    &\leq-\frac34\mathrm{FI}(\nu_t\|\pi)
      +\E\|\nabla V(Y_t)-g\|^2.
    \label{eq:app-frozen-drift}
\end{align}
\end{lemma}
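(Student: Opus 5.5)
The plan is the standard Fokker--Planck computation for the law of $Y_t$, followed by Young's inequality. Write $\nu_t$ also for the Lebesgue density of $\nu_t$; for $t>0$ this density exists. Since $Y_t=(X-tg)+\sqrt2W_t$ with $W_t$ independent of $(X,g)$, $\nu_t$ is the law of $X-tg$ convolved with the Gaussian $\mathcal N(0,2tI_d)$. Hence it is $C^\infty$, strictly positive, and has finite second moment, because $(X,g)$ has finite second moments.

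First I would derive the evolution equation. Define the conditional drift $\bar g_t(y):=\E[g\mid Y_t=y]$. For a test function $\varphi\in C_c^\infty$, It\^o's formula applied to $\varphi(Y_t)$ with $\mathrm dY_t=-g\,\mathrm dt+\sqrt2\,\mathrm dW_t$ gives
\[
\frac{\mathrm d}{\mathrm dt}\E\varphi(Y_t)=\E\bigl[-\langle g,\nabla\varphi(Y_t)\rangle+\Delta\varphi(Y_t)\bigr].
\]
Conditioning on $Y_t$ replaces $g$ by $\bar g_t(Y_t)$. So $\nu_t$ weakly solves
\[
\partial_t\nu_t=\nabla\cdot(\nu_t\bar g_t)+\Delta\nu_t
=\nabla\cdot\bigl(\nu_t(\bar g_t+\nabla\log\nu_t)\bigr).
\]
This is the marginal Fokker--Planck equation with Markovian projection $\bar g_t$. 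Its drift is $-\bar g_t$, not the exact $-\nabla V$.

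Next I would differentiate the KL divergence and integrate by parts. Write $\psi_t:=\log(\nu_t/\pi)$, so that $\nabla\psi_t=\nabla\log\nu_t+\nabla V$. Since $\int\partial_t\nu_t=0$,
\[
\frac{\mathrm d}{\mathrm dt}\mathrm{KL}(\nu_t\|\pi)=\int\partial_t\nu_t\,\psi_t
=-\int\langle\nabla\psi_t,\;\nabla\psi_t-\nabla V+\bar g_t\rangle\,\mathrm d\nu_t .
\]
This equals
\[
-\mathrm{FI}(\nu_t\|\pi)+\E\langle\nabla\psi_t(Y_t),\nabla V(Y_t)-\bar g_t(Y_t)\rangle .
\]
Young's inequality $\langle a,b\rangle\leq\tfrac14\|a\|^2+\|b\|^2$ bounds the cross term by $\tfrac14\mathrm{FI}(\nu_t\|\pi)+\E\|\nabla V(Y_t)-\bar g_t(Y_t)\|^2$. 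This yields the first inequality in~(\ref{eq:app-frozen-drift}). The second inequality is conditional Jensen: $\|\nabla V(Y_t)-\E[g\mid Y_t]\|^2\leq\E[\|\nabla V(Y_t)-g\|^2\mid Y_t]$, since $\nabla V(Y_t)$ is $Y_t$-measurable.

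The main obstacle is justifying these steps rigorously. They require differentiating under the integral, integrating by parts without boundary terms, and handling $\nabla V$ that is only $C^1$ with possibly infinite Fisher information. If the right-hand side is infinite there is nothing to prove, so assume $\E\|\nabla V(Y_t)-g\|^2<\infty$. I would then use the Gaussian smoothing to obtain explicit bounds on $\nabla\log\nu_t$ and $\partial_t\nu_t$, namely Gaussian-type tails with polynomial growth of $\nabla\log\nu_t$. I would multiply by smooth cutoffs $\chi_R$, perform the computation on the truncated integrals, and let $R\to\infty$ using the finite second moments and monotone convergence for the FI term. Equivalently, I would prove the integrated inequality on $[s,t]$ for $0<s<t$ and pass to the derivative. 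This is exactly the regularization carried out in \citet{balasubramanian22a}, which the paper cites, so I would follow it.
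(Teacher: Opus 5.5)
Your proposal is correct and follows the same argument as \citet[Lemma~23]{balasubramanian22a}, which the paper restates and uses without reproof. That argument is the marginal Fokker--Planck equation with the projected drift $\E[g\mid Y_t]$, one integration by parts against $\log(\nu_t/\pi)$, Young's inequality to absorb a quarter of the Fisher information, and conditional Jensen for the second inequality. One correction to your regularization sketch: $\nu_t$ has only the finite-second-moment tails of the law of $X-tg$, not Gaussian tails, so the cutoff argument must rest on moment bounds (Tweedie's formula gives at most linear growth of $\nabla\log\nu_t$), and integrability of $V$ under $\nu_t$ must come from the setting, e.g.\ global $L$-smoothness as in the paper's application.
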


The second result controls the gradient second moment by relative Fisher
information. It is the point at which dimension enters the globally smooth
analysis.

\begin{lemma}[{\cite[Lemma~24]{balasubramanian22a}}]
\label{lem:robust-gradient-energy}
Under Assumption~\ref{ass:global-smoothness}, every probability measure
$\nu$ satisfies
\begin{equation}
    \E_{\nu}\|\nabla V\|^2
    \leq \mathrm{FI}(\nu\|\pi)+2Ld
    \label{eq:app-gradient-energy}.
\end{equation}
\end{lemma}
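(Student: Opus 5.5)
The plan is to prove the bound first for measures $\nu$ with a smooth, positive, rapidly decaying density $\rho$, and then extend to general $\nu$ by approximation. If $\mathrm{FI}(\nu\|\pi)=+\infty$ there is nothing to prove, so assume it is finite. Then $\nu\ll\pi$ with $f=\mathrm d\nu/\mathrm d\pi$, and the score identity $\nabla\log\rho=\nabla\log f-\nabla V$ gives $\mathrm{FI}(\nu\|\pi)=\E_\nu\|\nabla\log\rho+\nabla V\|^2$.

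The core computation adds and subtracts the score:
\begin{equation*}
\E_\nu\|\nabla V\|^2=\E_\nu\langle\nabla V,\nabla V+\nabla\log\rho\rangle-\int\langle\nabla V,\nabla\rho\rangle\,\mathrm dx .
\end{equation*}
The first term is handled by Cauchy--Schwarz and Young:
\begin{equation*}
\E_\nu\langle\nabla V,\nabla V+\nabla\log\rho\rangle\leq\tfrac12\E_\nu\|\nabla V\|^2+\tfrac12\mathrm{FI}(\nu\|\pi).
\end{equation*}
For the second term, integrate by parts to get $\int\Delta V\,\rho\,\mathrm dx$. Under Assumption~\ref{ass:global-smoothness}, $\nabla V$ is $L$-Lipschitz, hence differentiable almost everywhere (Rademacher) with weak Hessian bounded in operator norm by $L$. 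Therefore $\Delta V\leq Ld$ in the distributional sense, and this term is at most $Ld$. Rearranging yields $\tfrac12\E_\nu\|\nabla V\|^2\leq\tfrac12\mathrm{FI}(\nu\|\pi)+Ld$, which is \eqref{eq:app-gradient-energy}.

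Rearranging is only legitimate if $\E_\nu\|\nabla V\|^2<\infty$; otherwise we would be subtracting $\infty$ from $\infty$. The integration by parts also needs vanishing boundary terms. To handle both, I will run the same computation with a smooth cutoff $\chi_R$ supported in the ball of radius $2R$, equal to $1$ on the ball of radius $R$, with $\|\nabla\chi_R\|\leq 2/R$. I then work with $\E_\nu[\chi_R^2\|\nabla V\|^2]$, which is finite because $\nabla V$ is continuous. The cutoff produces an extra term $\int 2\chi_R\langle\nabla V,\nabla\chi_R\rangle\rho$. Since $\|\nabla V(x)\|\leq\|\nabla V(0)\|+L\|x\|$, this term is bounded by $\tfrac{C}{R}\int_{R\le\|x\|\le 2R}(1+\|x\|)\rho$. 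That is $O(1)$ uniformly in $R$ and tends to $0$ when $\nu$ has a finite first moment. It can also be absorbed with a Young split, because $\|\nabla\chi_R\|\,\|x\|$ stays bounded on the support of $\nabla\chi_R$. Letting $R\to\infty$ by monotone convergence gives the bound.

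The main obstacle is the reduction from a general $\nu$ with $\sqrt f\in W^{1,2}_{\mathrm{loc}}$ to the smooth case. I would write everything in terms of $\sqrt f$ and use $\nabla\rho=2\sqrt f\,\nabla\sqrt f\,\pi-\rho\nabla V$. The cutoff computation then becomes
\begin{equation*}
\int\chi_R^2\|\nabla V\|^2\rho=-\int\chi_R^2\langle\nabla V,\nabla\rho\rangle+2\int\chi_R^2\sqrt f\langle\nabla V,\nabla\sqrt f\rangle\pi .
\end{equation*}
Every term here makes sense for compactly supported test weights with only local Sobolev regularity: integrate by parts against the $W^{1,\infty}_{\mathrm{loc}}$ field $\chi_R^2\nabla V$, and use density of smooth functions in $W^{1,2}_{\mathrm{loc}}$ on the ball $\{\|x\|\le 2R\}$. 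This avoids mollifying $\nu$ globally. The rest of the argument is the same Young absorption followed by $R\to\infty$.
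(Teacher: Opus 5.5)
Your proposal is correct and follows the paper's argument. The paper cites \citet{balasubramanian22a} for this bound but reproves it as the $L_1=0$ case of Lemma~\ref{app:adaptive-fisher-gradient}, using the same ingredients as you: integration by parts, the a.e.\ Jacobian estimate giving $\Delta V\le dL$, and Cauchy--Schwarz/Young absorption of the cross term. Only the justification step differs. You integrate by parts directly against the weight $\chi_R^2$ using local Sobolev regularity and then let $R\to\infty$ monotonically. The paper instead approximates $\sqrt{\mathrm d\nu/\mathrm d\pi}$ by cut-off, mollified $C_c^\infty$ functions and passes to the limit by lower semicontinuity.

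Your cutoff term also vanishes for every probability measure $\nu$, so you need neither the finite-first-moment caveat nor a Young-split absorption. Since $\|x\|\le 2R$ on the annulus, the term is at most $C(R^{-1}+2)\,\nu(\{\|x\|\geq R\})$, which tends to $0$.
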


The preceding two lemmas yield the following transfer principle. It isolates
the Langevin discretization error from the mean-squared error of the robust
drift estimates.
Here $K_0:=\mathrm{KL}(\mu_0\|\pi)$ is the initial KL divergence.

\begin{proposition}[Mean-squared drift error implies Fisher control]
\label{prop:robust-mse-to-fi}
Suppose Assumption~\ref{ass:global-smoothness} holds,
$K_0<\infty$, and $\E\|X_0-x_{\mathrm{ref}}\|^2<\infty$. Let
\[
    e_k:=\E\|\widehat{\mathcal G}_k-\nabla V(X_{kh})\|^2,
    \qquad
    \bar e_N:=\frac1N\sum_{k=0}^{N-1}e_k,
\]
and suppose $e_k<\infty$ for $k=0,\ldots,N-1$.
If $0<h\leq(8L)^{-1}$, then
\begin{align}
    \frac1{Nh}\int_0^{Nh}\mathrm{FI}(\mu_t\|\pi)\,\mathrm{d}t
    &\leq \frac{2K_0}{Nh}+8L^2dh+\frac{64}{3}L^3dh^2+
      \left(4+\frac{16}{3}L^2h^2\right)\bar e_N.
    \label{eq:app-mse-to-fi}
\end{align}
\end{proposition}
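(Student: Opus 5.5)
On each interval $[kh,(k+1)h]$ I will apply Lemma~\ref{lem:robust-frozen-drift} with $X=X_{kh}$ and $g=\widehat{\mathcal G}_k$. By the sampling convention, the Brownian increment on that interval is independent of $(X_{kh},\widehat{\mathcal G}_k)$. The finiteness of $e_k$, together with the second moments of $X_{kh}$ (propagated from $\E\|X_0-x_{\mathrm{ref}}\|^2<\infty$ through the Gaussian increments and the linear-growth drift), supplies the required second moments. The lemma gives, for $t\in(kh,(k+1)h)$,
\[
\frac{\mathrm d}{\mathrm dt}\mathrm{KL}(\mu_t\|\pi)\leq-\tfrac34\mathrm{FI}(\mu_t\|\pi)+\E\|\nabla V(X_t)-\widehat{\mathcal G}_k\|^2 .
\]
I split the error term as $\E\|\nabla V(X_t)-\widehat{\mathcal G}_k\|^2\leq 2\E\|\nabla V(X_t)-\nabla V(X_{kh})\|^2+2e_k$. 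By Assumption~\ref{ass:global-smoothness}, the first part is at most $2L^2\E\|X_t-X_{kh}\|^2$. With $s=t-kh\leq h$,
\[
\E\|X_t-X_{kh}\|^2=s^2\E\|\widehat{\mathcal G}_k\|^2+2ds\leq 2s^2\E\|\nabla V(X_{kh})\|^2+2s^2e_k+2ds,
\]
where the cross term with the Brownian increment vanishes by independence.

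The main obstacle is that $\E\|\nabla V(X_{kh})\|^2$ refers to the left endpoint, whereas Lemma~\ref{lem:robust-gradient-energy} controls gradient energy by the Fisher information at time $t$. I will transfer back with
\[
\|\nabla V(X_{kh})\|^2\leq 2\|\nabla V(X_t)\|^2+2L^2\|X_t-X_{kh}\|^2 .
\]
Since $L^2s^2\leq 1/64$, the resulting $\E\|X_t-X_{kh}\|^2$ term on the right can be absorbed into the left, which gives $\E\|X_t-X_{kh}\|^2\lesssim s^2\E\|\nabla V(X_t)\|^2+s^2e_k+ds$. Then Lemma~\ref{lem:robust-gradient-energy} replaces $\E\|\nabla V(X_t)\|^2$ by $\mathrm{FI}(\mu_t\|\pi)+2Ld$. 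The FI contribution carries a coefficient of order $L^2h^2\leq1/64$, small enough to merge with $-\tfrac34\mathrm{FI}$ and leave roughly $-\tfrac12\mathrm{FI}$. I will track the constants so that the leftover terms match the stated form: $8L^2dh$ comes from $L^2ds$ integrated and averaged, $\tfrac{64}{3}L^3dh^2$ from $L^2s^2\cdot Ld$, and $(4+\tfrac{16}{3}L^2h^2)\bar e_N$ from the two $e_k$ contributions. Averaging over $s\in[0,h]$ produces the $\tfrac13$ factors from $\int_0^h s^2\,\mathrm ds$.

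Finally, I will integrate the differential inequality over $[0,Nh]$. Since KL is continuous across the grid points (the interpolation is pathwise continuous), the per-interval inequalities telescope. Using $\mathrm{KL}(\mu_{Nh}\|\pi)\geq0$ gives $\tfrac12\int_0^{Nh}\mathrm{FI}\leq K_0+\cdots$. Dividing by $Nh/2$ produces the leading $2K_0/(Nh)$ and doubles the remaining coefficients, and I will choose the absorption constants in the previous step so that they land exactly on those in~(\ref{eq:app-mse-to-fi}).
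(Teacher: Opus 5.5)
Your proposal is correct and follows the paper's proof essentially step for step: the same application of Lemma~\ref{lem:robust-frozen-drift} to $(X_{kh},\widehat{\mathcal G}_k)$, and the same split into $2L^2\E\|X_{kh+u}-X_{kh}\|^2+2e_k$. The rest also matches: transferring $\nabla V(X_{kh})$ to $\nabla V(X_{kh+u})$ with absorption via $(1-4L^2u^2)^{-1}\le 16/15$, applying Lemma~\ref{lem:robust-gradient-energy}, absorbing the $\mathrm{FI}$ term into $-\tfrac34$ to leave $-\tfrac12$, then integrating per interval, summing, and dividing by $Nh/2$. If you carry the factor $16/15$ through, you get coefficients $\tfrac{64}{15}$, $\tfrac{512}{45}$, and $\tfrac{128}{45}$, which are already below the stated $8$, $\tfrac{64}{3}$, and $\tfrac{16}{3}$ (the paper simply loosens to these), so no special choice of absorption constants is needed.
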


\begin{proof}
Fix $k$ and $u\in[0,h]$, and set
\[
    \Delta_{k,u}:=\E\|X_{kh+u}-X_{kh}\|^2,
    \qquad
    I_{kh+u}:=\mathrm{FI}(\mu_{kh+u}\|\pi).
\]
Independence and centering of the Brownian increment give
\begin{equation}
    \Delta_{k,u}
    =u^2\E\|\widehat{\mathcal G}_k\|^2+2du.
    \label{eq:app-increment}
\end{equation}
Twice applying $\|a+b\|^2\leq2\|a\|^2+2\|b\|^2$ and using global
$L$-smoothness yields
\[
    \E\|\widehat{\mathcal G}_k\|^2
    \leq4\E\|\nabla V(X_{kh+u})\|^2
       +4L^2\Delta_{k,u}+2e_k.
\]
Since $u\leq h\leq(8L)^{-1}$,
$(1-4L^2u^2)^{-1}\leq16/15<4/3$. Solving
~(\ref{eq:app-increment}) for $\Delta_{k,u}$ therefore gives
\begin{equation}
    \Delta_{k,u}
    \leq\frac{16}{3}u^2\E\|\nabla V(X_{kh+u})\|^2
      +\frac83u^2e_k+\frac83du.
    \label{eq:app-increment-bound}
\end{equation}
Apply Lemma~\ref{lem:robust-gradient-energy} with
$\nu=\mu_{kh+u}$. Since
$I_{kh+u}=\mathrm{FI}(\mu_{kh+u}\|\pi)$, it gives
\begin{equation}
    \E\|\nabla V(X_{kh+u})\|^2
    \leq I_{kh+u}+2Ld.
    \label{eq:app-gradient-energy-applied}
\end{equation}
Moreover, global $L$-smoothness and the definition of $e_k$ imply
\[
    \E\|\nabla V(X_{kh+u})-\widehat{\mathcal G}_k\|^2
    \leq 2L^2\Delta_{k,u}+2e_k.
\]
Substituting~(\ref{eq:app-increment-bound}) and
~(\ref{eq:app-gradient-energy-applied}) into this inequality gives
\begin{align}
    \E\|\nabla V(X_{kh+u})-\widehat{\mathcal G}_k\|^2
    &\leq\frac{32}{3}L^2u^2I_{kh+u}
      +\frac{64}{3}L^3du^2+\frac{16}{3}L^2du
      +\left(2+\frac{16}{3}L^2u^2\right)e_k
      \notag\\
    &\leq
      16L^2u^2I_{kh+u}+32L^3du^2+8L^2du
      +(2+8L^2u^2)e_k.
    \label{eq:app-within-step-drift-error}
\end{align}

We next apply Lemma~\ref{lem:robust-frozen-drift} with $X=X_{kh}$, $g=\widehat{\mathcal G}_k$, and $W_u=W_{kh+u}-W_{kh}$. The fresh Brownian motion above is independent of this pair
by the standing sampling convention. Under this substitution,
\[
    Y_u=X_{kh}-u\widehat{\mathcal G}_k
          +\sqrt2(W_{kh+u}-W_{kh})=X_{kh+u},
    \qquad \nu_u=\mu_{kh+u}.
\]
Consequently, the second inequality in~(\ref{eq:app-frozen-drift}) gives,
for $u\in(0,h]$,
\begin{equation}
    \frac{\mathrm d}{\mathrm du}
       \mathrm{KL}(\mu_{kh+u}\|\pi)
    \leq-\frac34 I_{kh+u}
       +\E\|\nabla V(X_{kh+u})-\widehat{\mathcal G}_k\|^2.
    \label{eq:app-frozen-drift-applied}
\end{equation}
Thus, we apply the lemma to the joint law rather than conditioning on the
realized state, which would replace $\mu_{kh}$ by a point mass.

Substitute~(\ref{eq:app-within-step-drift-error}) into
~(\ref{eq:app-frozen-drift-applied}). Since
$16L^2u^2\leq16L^2h^2\leq1/4$, the resulting Fisher information term is
absorbed into the coefficient $3/4$. Integrating over $u\in[0,h]$ gives
\begin{equation}
    \mathrm{KL}(\mu_{(k+1)h}\|\pi)
      -\mathrm{KL}(\mu_{kh}\|\pi)\leq
      -\frac12\int_{kh}^{(k+1)h}\mathrm{FI}(\mu_t\|\pi)\,\mathrm{d}t
      +\frac{32}{3}L^3dh^3+4L^2dh^2
      +\left(2h+\frac83L^2h^3\right)e_k.
    \label{eq:app-one-step-kl}
\end{equation}
Summing~(\ref{eq:app-one-step-kl}) over $k$, discarding the nonnegative
terminal KL divergence, and dividing by $Nh/2$ proves
~(\ref{eq:app-mse-to-fi}).
\end{proof}

\subsubsection{Trajectory moment control and completion of the finite bound}
\label{app:proof-robust-trajectory}

We first consider $B>0$, using the known initial second-moment bound
$\E\|X_0-x_{\mathrm{ref}}\|^2\leq M_{2,0}$ in
Theorem~\ref{thm:full-robust}. The case $B=0$ is treated
separately below. Write $T=Nh$ and let
\begin{equation*}
    \overline M_N
    :=\frac1N\sum_{k=0}^{N-1}\E\|X_{kh}-x_{\mathrm{ref}}\|^2.
\end{equation*}
The prescribed clipping radius is
$\lambda_k=s_p(X_{kh})b^{1/p}$, where
$s_p(x)=[B\|x-x_{\mathrm{ref}}\|^p+G]^{1/p}$.
Lemma~\ref{app:robust-estimation}, conditioning first on the current
state, and two applications of Jensen's inequality using
Lemma~\ref{lem:robust-noise-concavity} give
\begin{align}
    e_k
    &\leq C_pb^{-2(p-1)/p}
      \E\!\left[(B\|X_{kh}-x_{\mathrm{ref}}\|^p+G)^{2/p}\right]
      \notag\\
    &\leq C_pb^{-2(p-1)/p}
      [B(\E\|X_{kh}-x_{\mathrm{ref}}\|^2)^{p/2}+G]^{2/p},
    \label{eq:app-error-at-step}\\
    \bar e_N
    &\leq C_pb^{-2(p-1)/p}
      [B\overline M_N^{p/2}+G]^{2/p}.
    \label{eq:app-average-error-moment}
\end{align}
All quantities in~(\ref{eq:app-error-at-step}) are finite. Indeed, the initial second-moment bound starts an induction. If
$X_{kh}$ has a finite second moment, Lemma~\ref{app:robust-estimation}
and~(\ref{eq:app-noise-linear-envelope}) imply that $e_k=\E\|\widehat{\mathcal G}_k-\nabla V(X_{kh})\|^2<\infty$. Global $L$-smoothness then gives $\E\|\nabla V(X_{kh})\|^2<\infty$, so
$\E\|\widehat{\mathcal G}_k\|^2<\infty$. The update~(\ref{sec3:geo-lmc}) therefore gives a finite second moment for
$X_{(k+1)h}$.

Proposition~\ref{prop:robust-mse-to-fi} and $Lh\leq1/8$ imply
\begin{equation}
    \int_0^T\mathrm{FI}(\mu_t\|\pi)\,\mathrm{d}t\leq2K_0+\frac43LdT+\frac{49}{12}T\bar e_N.
    \label{eq:app-integrated-fi-simple}
\end{equation}
To see the constants, after multiplying~(\ref{eq:app-mse-to-fi}) by
$T$, the two discretization terms are at most $LdT$ and $LdT/3$, while
$4+(16/3)L^2h^2\leq49/12$.

We next control the energy of the applied robust drifts. The estimate used
to derive~(\ref{eq:app-increment-bound}), together with
~(\ref{eq:app-increment}), gives
\[
    (1-4L^2u^2)\E\|\widehat{\mathcal G}_k\|^2
    \leq4\E\|\nabla V(X_{kh+u})\|^2+8L^2du+2e_k.
\]
Use $(1-4L^2u^2)^{-1}\leq16/15$, integrate over $u\in[0,h]$, and apply
Lemma~\ref{lem:robust-gradient-energy} to obtain
\[
    h\E\|\widehat{\mathcal G}_k\|^2
    \leq\frac{64}{15}\int_{kh}^{(k+1)h}
          \mathrm{FI}(\mu_t\|\pi)\,\mathrm{d}t
      +\frac{128}{15}Ldh+\frac{64}{15}L^2dh^2
      +\frac{32}{15}he_k.
\]
Summing this inequality, substituting~(\ref{eq:app-integrated-fi-simple}),
and using $Lh\leq1/8$ gives
\begin{equation}
    h\sum_{k=0}^{N-1}\E\|\widehat{\mathcal G}_k\|^2
    \leq\frac{128}{15}K_0+
      \frac{664}{45}LdT+
      \frac{176}{9}T\bar e_N.
    \label{eq:app-applied-drift-energy}
\end{equation}

For $t_k=kh$, the summed recursion is
\[
    X_{kh}-x_{\mathrm{ref}}
    =X_0-x_{\mathrm{ref}}-h\sum_{j=0}^{k-1}\widehat{\mathcal G}_j
      +\sqrt2(W_{kh}-W_0).
\]
The three term square inequality, Cauchy--Schwarz for the drift sum, and
$\E\|W_{kh}-W_0\|^2=dt_k$ yield
\[
    \E\|X_{kh}-x_{\mathrm{ref}}\|^2
    \leq3M_{2,0}
      +3t_kh\sum_{j=0}^{N-1}\E\|\widehat{\mathcal G}_j\|^2
      +6dt_k.
\]
Since $N^{-1}\sum_{k=0}^{N-1}t_k=(N-1)h/2\leq T/2$, averaging over $k$
and using~(\ref{eq:app-applied-drift-energy}) proves
\begin{equation}
    \overline M_N
    \leq3M_{2,0}+\frac{64}{5}K_0T
      +\frac{332}{15}LdT^2+3dT
      +\frac{88}{3}T^2\bar e_N.
    \label{eq:app-trajectory-bound}
\end{equation}

It remains to close~(\ref{eq:app-average-error-moment}) and
~(\ref{eq:app-trajectory-bound}) without assuming the desired error bound.
Denote the right-hand side of~(\ref{eq:app-trajectory-bound}) by $S_N$.
Then
\begin{equation}
    \bar e_N
    \leq C_pb^{-2(p-1)/p}[BS_N^{p/2}+G]^{2/p}.
    \label{eq:app-error-self-consistency}
\end{equation}
At an arbitrary error level $\delta^2$, the corresponding value of $S_N$
is exactly the deterministic radius squared from
Theorem~\ref{thm:full-robust}, which we recall here:
\[
    R_{T,\delta}^2
    =3M_{2,0}+\frac{64}{5}K_0T
      +\frac{332}{15}LdT^2+3dT+\frac{88}{3}T^2\delta^2.
\]
The prescribed fixed batch size is
\[
    b=b(T,\delta)
    :=\max\left\{1,\left\lceil
      \left(
      \frac{2C_p[B R_{T,\delta}^{p}+G]^{2/p}}{\delta^2}
      \right)^{p/[2(p-1)]}\right\rceil\right\}.
\]
This choice ensures
\begin{equation}
    C_pb^{-2(p-1)/p}
    [B(R_{T,\delta}^2)^{p/2}+G]^{2/p}
    \leq\frac{\delta^2}{2}.
    \label{eq:app-batch-absorption}
\end{equation}

Suppose for contradiction that $\bar e_N\geq\delta^2>0$, and set
$\vartheta=\delta^2/\bar e_N\in(0,1]$. Comparing $S_N$ with
$R_{T,\delta}^2$ gives
$R_{T,\delta}^2\geq\vartheta S_N$. By the monotonicity and concavity in
squared distance established in
Lemma~\ref{lem:robust-noise-concavity},
\[
    [B(R_{T,\delta}^2)^{p/2}+G]^{2/p}
    \geq[B(\vartheta S_N)^{p/2}+G]^{2/p}
    \geq\vartheta[BS_N^{p/2}+G]^{2/p}.
\]
Combining this inequality with~(\ref{eq:app-error-self-consistency}) and
~(\ref{eq:app-batch-absorption}) yields
\[
    \bar e_N
    \leq\vartheta^{-1}\frac{\delta^2}{2}
    =\frac{\bar e_N}{2},
\]
a contradiction. Therefore $\bar e_N<\delta^2$. The factor two in the
batch choice avoids a boundary case in this non-strict concavity argument.

Substituting $\bar e_N<\delta^2$ into
~(\ref{eq:app-mse-to-fi}) and using convexity of relative Fisher
information under mixtures gives
\begin{align}
    \mathrm{FI}(\bar\mu_{Nh}\|\pi)
    &\leq\frac1{Nh}\int_0^{Nh}\mathrm{FI}(\mu_t\|\pi)\,\mathrm{d}t
    \notag\\
    &\leq\frac{2K_0}{Nh}+8L^2dh+\frac{64}{3}L^3dh^2
      +\left(4+\frac{16}{3}L^2h^2\right)\delta^2
    \notag\\
    &\leq\frac{2K_0}{Nh}+\frac{32}{3}L^2dh
      +\frac{49}{12}\delta^2,
    \label{eq:app-full-robust-fisher}
\end{align}
where the last line again uses $Lh\leq1/8$. This proves the finite-time
Fisher-information bound in Theorem~\ref{thm:full-robust}.

\paragraph{Removing the initial second-moment condition when $B=0$.}
In the state-independent case, the local-noise scale is the constant
$G^{1/p}$, and the batch size reduces to
\[
    b(T,\delta)
    =\max\left\{1,\left\lceil
      \left(\frac{2C_pG^{2/p}}{\delta^2}\right)^{p/[2(p-1)]}
      \right\rceil\right\}.
\]
Lemma~\ref{app:robust-estimation} gives the uniform bound
$e_k\leq C_pG^{2/p}b^{-2(p-1)/p}\leq\delta^2/2$ under the stated
batch choice. Thus, whenever the initial second moment is finite,
Proposition~\ref{prop:robust-mse-to-fi} directly
proves~(\ref{eq:app-full-robust-fisher}). No trajectory moment argument is
needed in this case.

We now remove the finite initial second-moment condition. Assuming only
$K_0<\infty$, condition the entire construction on
$E_R:=\{\|X_0-x_{\mathrm{ref}}\|\leq R\}$, for sufficiently large
$R$ that $p_R:=\mathbb P(E_R)>0$. The conditioned process has bounded
initial states and uses the same oracle kernels and fresh Brownian
increments. Consequently, its grid-point second moments are finite, and
the same uniform error bound holds independently of $R$. If
$f_0=\mathrm{d}\mu_0/\mathrm{d}\pi$, its initial KL divergence is
\[
    K_{0,R}=\frac1{p_R}\int_{\{\|x-x_{\mathrm{ref}}\|\leq R\}}
                \log f_0\,\mathrm{d}\mu_0-\log p_R
      \longrightarrow K_0.
\]
For each $t\leq Nh$, the conditional law
$\mu_{t,R}:=\operatorname{Law}(X_t\mid E_R)$ converges to $\mu_t$
in total variation as $p_R\uparrow1$. Lower semicontinuity of relative FI
and Fatou's lemma therefore pass the integrated FI bound to the original
process. Convexity then gives~(\ref{eq:app-full-robust-fisher}) for
$\bar\mu_{Nh}$. This argument includes exact gradients ($B=G=0$) and
does not require continuity of the oracle kernel in the state.

For $B>0$, the state-dependent error is controlled by
~(\ref{eq:app-average-error-moment}), so the known initial second-moment
bound remains an assumption of this proof. Finite KL alone does not imply
it for general non-log-concave targets.

\subsubsection{Accuracy schedule and oracle complexity}
\label{app:proof-robust-complexity}

We finish by deriving the stated iteration and stochastic-oracle
complexities. Set
\begin{equation}
    \delta^2=\frac{\varepsilon}{16},
    \qquad
    h=\min\left\{\frac1{8L},
      \frac{\varepsilon}{32L^2d}\right\},
    \qquad
    T=h\max\left\{1,
      \left\lceil\frac{6K_0}{\varepsilon h}\right\rceil
      \right\}.
    \label{eq:app-robust-accuracy-schedule}
\end{equation}
Since the method uses fixed steps of length $h$,
$N_T=T/h$ equals the maximum in~(\ref{eq:app-robust-accuracy-schedule}).
The three terms in the last line of
~(\ref{eq:app-full-robust-fisher}) are bounded, respectively, by
$\varepsilon/3$, $\varepsilon/3$, and
$49\varepsilon/192<\varepsilon/3$. Hence $\mathrm{FI}(\bar\mu_T\|\pi)\leq\varepsilon$.
Since $\max\{1,\lceil z\rceil\}\leq1+z$ for $z\geq0$,
\begin{equation}
    N_T
    \leq1+\frac{6K_0}{\varepsilon}
      \max\left\{8L,\frac{32L^2d}{\varepsilon}\right\}.
    \label{eq:app-robust-iteration-count}
\end{equation}

Abbreviate $R=R_{T,\delta}$. When $B=0$, we set $M_{2,0}=0$ in this deterministic
schedule. This convention asserts no second-moment bound, and the batch
size depends only on $G$. Therefore
\begin{equation}
    Q_T=4bN_T \leq4N_T\left[
      1+\left(
      \frac{2C_p[B R^p+G]^{2/p}}
           {\delta^2}
      \right)^{p/[2(p-1)]}
      \right].
    \label{eq:app-robust-finite-oracle-count}
\end{equation}
This is the claimed finite stochastic oracle bound.

For completeness, we make every accuracy exponent explicit. Suppose
$K_0>0$ and
\begin{equation*}
    0<\varepsilon\leq
    \min\{4Ld,\sqrt{32K_0L^2d}\}.
\end{equation*}
Then $h=\varepsilon/(32L^2d)\leq K_0/\varepsilon$, and
\begin{equation}
    T
    \leq h+\frac{6K_0}{\varepsilon}
    \leq\frac{7K_0}{\varepsilon}.
    \label{eq:app-robust-time-bound}
\end{equation}
Substituting~(\ref{eq:app-robust-time-bound}) and
$\delta^2=\varepsilon/16$ into the definition of
$R$ gives
\begin{align}
    R^2\leq \widehat R^2:=3M_{2,0}
      +\frac{16268}{15}\frac{LdK_0^2}{\varepsilon^2}
      +\frac{(5383/30)K_0^2+21dK_0}{\varepsilon}.
    \label{eq:app-robust-radius-envelope}
\end{align}
The arithmetic follows by bounding each occurrence of $T$ by
$7K_0/\varepsilon$ in $R_{T,\delta}^2$ and collecting the
$\varepsilon^{-2}$ and $\varepsilon^{-1}$ terms.
Equations~(\ref{eq:app-robust-iteration-count}),
~(\ref{eq:app-robust-finite-oracle-count}), and monotonicity of the local
$L^p$ noise scale imply
\begin{align*}
    Q_T
    =\mathcal O\!\left[
      \left(1+\frac{K_0L^2d}{\varepsilon^2}\right)
      \left\{1+
      \frac{[B\widehat R^p+G]^{1/(p-1)}}
           {\varepsilon^{p/[2(p-1)]}}
      \right\}
      \right],
\end{align*}
where the hidden numerical constant depends only on $p$.

For fixed positive $K_0,L$ and fixed remaining problem parameters,
$\widehat R\leq64K_0\sqrt{Ld}/\varepsilon$ for all
sufficiently small $\varepsilon$. Indeed, the leading coefficient
$16268/15$ in~(\ref{eq:app-robust-radius-envelope}) is smaller than
$64^2$, and all other terms have lower order in $\varepsilon^{-1}$.
Consequently, when $B>0$,
\begin{equation}
    Q_T
    =\mathcal O\!\left(
      \frac{B^{1/(p-1)}K_0^{1+p/(p-1)}
        L^{2+p/[2(p-1)]}d^{1+p/[2(p-1)]}}
      {\varepsilon^{2+3p/[2(p-1)]}}
      \right).
    \label{eq:app-robust-growing-noise-rate}
\end{equation}
Under the convention $K_0=\mathcal O(d)$ from
Section~\ref{sec:preliminaries}, this becomes
\begin{equation*}
    Q_T
    =\mathcal O\!\left(
      \frac{B^{1/(p-1)}
        L^{2+p/[2(p-1)]}d^{2+3p/[2(p-1)]}}
      {\varepsilon^{2+3p/[2(p-1)]}}
      \right),
    \qquad
    N_T
    =\mathcal O\!\left(\frac{L^2d^2}{\varepsilon^2}\right).
\end{equation*}

\paragraph{State-independent finite-$p$ noise.}
If $B=0$, the radius drops out of the batch size, so it is important to
specialize the finite bound directly rather than use
~(\ref{eq:app-robust-growing-noise-rate}). Equations
~(\ref{eq:app-robust-iteration-count}) and
~(\ref{eq:app-robust-finite-oracle-count}) give
\begin{equation}
    Q_T
    =\mathcal O\!\left(
      \frac{K_0L^2d\,G^{1/(p-1)}}
           {\varepsilon^{2+p/[2(p-1)]}}
      \right),
    \label{eq:app-robust-uniform-p-rate}
\end{equation}
up to the lower-order iteration term. Under $K_0=\mathcal O(d)$, replace
$K_0d$ in~(\ref{eq:app-robust-uniform-p-rate}) by $d^2$.

This completes the proof of Theorem~\ref{thm:full-robust}.

\subsection{Proof of Corollary~\ref{cor:robust-special-cases}}

The iteration bound follows directly from
~(\ref{eq:app-robust-iteration-count}) under $K_0=\mathcal O(d)$.

\paragraph{State-independent second-moment noise.}
Taking $p=2$ and $B=0$ in
~(\ref{eq:app-robust-uniform-p-rate}) yields
\begin{equation*}
    Q_T
    =\mathcal O\!\left(
      \frac{K_0L^2dG}{\varepsilon^3}\right)
    =\mathcal O\!\left(
      \frac{L^2d^2G}{\varepsilon^3}\right)
    \quad\text{when }K_0=\mathcal O(d).
\end{equation*}
Because $G>0$ is fixed, this term absorbs the lower-order iteration
complexity as $\varepsilon\downarrow0$.

\paragraph{BG-0 noise.}
Taking $p=2$ in~(\ref{eq:app-robust-growing-noise-rate}) gives
\begin{equation*}
    Q_T
    =\mathcal O\!\left(
      \frac{BK_0^3L^3d^2}{\varepsilon^5}\right)
    =\mathcal O\!\left(
      \frac{BL^3d^5}{\varepsilon^5}\right)
    \quad\text{when }K_0=\mathcal O(d).
\end{equation*}

\paragraph{Exact gradients.}
If $B=G=0$, the conditional moment bound in
Assumption~\ref{ass:BG-p} is zero. Thus, conditionally on each current
state, every oracle call equals the gradient at that state almost surely.
Since $b=1$, the four calls in the single robust replicate are identical,
so the construction generates the standard exact-gradient LMC iterates.
Replacing these four identical calls by one exact-gradient evaluation
leaves the iterates unchanged. Thus $Q_T=N_T$, and
~(\ref{eq:app-robust-iteration-count}) gives
\begin{equation*}
    Q_T=N_T
    =\mathcal O\!\left(
      \frac{K_0L^2d}{\varepsilon^2}\right)
    =\mathcal O\!\left(
      \frac{L^2d^2}{\varepsilon^2}\right)
    \quad\text{when }K_0=\mathcal O(d).
\end{equation*}

This proves Corollary~\ref{cor:robust-special-cases}.

\subsection{Minibatched SG-LMC under uniform variance}
\label{app:sg-lmc-comparison}

For comparison with the $p=2$, $B=0$ specialization of R-LMC, we make
the batch-size choice in the SG-LMC Fisher information bound explicit.
The next lemma specializes \citet[Theorem~15]{balasubramanian22a} to
unbiased minibatches.

\begin{lemma}[SG-LMC Fisher-information and oracle bounds]
\label{lem:sg-lmc-fi}
Suppose $V\in C^2(\R^d)$ satisfies Assumption~\ref{ass:global-smoothness},
$K_0:=\mathrm{KL}(\mu_0\|\pi)<\infty$, and the stochastic oracle satisfies,
for every $x$,
\[
    \E_\xi\mathcal G(x,\xi)=\nabla V(x),
    \qquad
    \E_\xi\|\mathcal G(x,\xi)-\nabla V(x)\|^2\leq \sigma^2.
\]
Run~(\ref{eq:common-lmc}) with $H_k=h$ and $F_k=-\frac1b\sum_{j=1}^{b}\mathcal G(X_{kh},\xi_{k,j}),$
where the $b$ oracle calls are conditionally independent at each
step, and independent of the next Brownian increment. For integers
$N,b\geq1$ and $0<h<(14L)^{-1}$, set $T=Nh$. Then $Q_T=Nb$ and
\begin{equation}
    \mathrm{FI}(\bar\mu_T\|\pi)
    \leq \overline{\mathrm{FI}}_T
    \leq \frac{2K_0}{T}+16L^2dh+\frac{8\sigma^2}{b}.
    \label{eq:app-sg-lmc-fi}
\end{equation}
Moreover, for every $\varepsilon>0$ there are choices of $h,b,T$ such
that $\mathrm{FI}(\bar\mu_T\|\pi)\leq\varepsilon$. For these choices, set
$N_T:=T/h$ and $Q_T:=bN_T$. If $K_0=\mathcal O(d)$ and $\sigma^2>0$ is
fixed, then, for sufficiently small $\varepsilon$,
\[
    N_T=\mathcal O\!\left(\frac{L^2d^2}{\varepsilon^2}\right),
    \qquad
    Q_T=\mathcal O\!\left(
        \frac{L^2d^2\sigma^2}{\varepsilon^3}\right).
\]
\end{lemma}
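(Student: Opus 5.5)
The plan is to reuse the one-step KL-dissipation machinery of Proposition~\ref{prop:robust-mse-to-fi}, now with plain minibatch averaging in place of the robust estimator. The first step is to bound the drift error for $g=\widehat{\mathcal G}_k:=-F_k$. Conditional independence of the $b$ calls and unbiasedness give
\[
e_k:=\E\|\widehat{\mathcal G}_k-\nabla V(X_{kh})\|^2\leq\sigma^2/b .
\]
This holds after conditioning on $X_{kh}$, so it needs no trajectory moment control and no initial second-moment assumption. Finiteness of second moments along the grid follows by induction from global smoothness whenever $\E\|X_0\|^2<\infty$. The general case $K_0<\infty$ only is handled as in the $B=0$ paragraph of the proof of Theorem~\ref{thm:full-robust}: condition on $\{\|X_0-x_{\mathrm{ref}}\|\leq R\}$, then use lower semicontinuity of FI together with Fatou's lemma.

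Next I apply Proposition~\ref{prop:robust-mse-to-fi}. Its proof uses only Lemmas~\ref{lem:robust-frozen-drift} and~\ref{lem:robust-gradient-energy}, global smoothness, and independence of the fresh Brownian increment from $(X_{kh},g)$, so it transfers verbatim. Since $h<(14L)^{-1}\leq(8L)^{-1}$, it gives
\[
\overline{\mathrm{FI}}_T\leq\frac{2K_0}{T}+8L^2dh+\frac{64}{3}L^3dh^2+\Bigl(4+\tfrac{16}{3}L^2h^2\Bigr)\frac{\sigma^2}{b}.
\]
Using $Lh\leq1/14$, the middle terms are at most $16L^2dh$ and the last coefficient is at most $8$, which yields~(\ref{eq:app-sg-lmc-fi}). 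Convexity of FI then bounds $\mathrm{FI}(\bar\mu_T\|\pi)$ by $\overline{\mathrm{FI}}_T$. Alternatively, one can cite \citet[Theorem~15]{balasubramanian22a} directly with bias zero and variance $\sigma^2/b$. The stated constants leave room for either route, and $Q_T=Nb$ is immediate.

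For the complexity, choose $b=\lceil 24\sigma^2/\varepsilon\rceil$, $h=\min\{(15L)^{-1},\varepsilon/(48L^2d)\}$, and $T=h\lceil 6K_0/(\varepsilon h)\rceil$. Each of the three terms is then at most $\varepsilon/3$. For small $\varepsilon$,
\[
N_T\leq1+\frac{6K_0\cdot48L^2d}{\varepsilon^2}=\mathcal O\!\left(\frac{L^2d^2}{\varepsilon^2}\right)
\]
under $K_0=\mathcal O(d)$. Multiplying by $b=\mathcal O(\sigma^2/\varepsilon)$ gives $Q_T=\mathcal O(L^2d^2\sigma^2/\varepsilon^3)$.

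The only delicate point is checking that the hypothesis $V\in C^2$, or alternatively the joint-law version of Lemma~\ref{lem:robust-frozen-drift}, is enough for the KL derivative computation. This is already used in the paper, so I expect no real obstacle beyond tracking constants.
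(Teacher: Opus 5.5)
Your proof is correct, and it differs from the paper's only in which transfer bound you invoke. The paper verifies that the minibatch average is unbiased with conditional variance at most $\sigma^2/b$. It then checks Assumption~14 of \citet{balasubramanian22a} with $\widehat V=V$ and zero bias, and reads~(\ref{eq:app-sg-lmc-fi}) directly off their Theorem~15. You instead feed $e_k\le\sigma^2/b$ into the paper's own Proposition~\ref{prop:robust-mse-to-fi}. Its proof indeed uses nothing about the robust estimator beyond finite second moments and independence from the next Brownian increment, and your constant check is right: with $Lh\le 1/14$, $8+\tfrac{64}{3}Lh<16$ and $4+\tfrac{16}{3}L^2h^2<8$.

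What each route buys: yours is self-contained within the paper, gives slightly sharper constants, and needs only $h\le(8L)^{-1}$. The cost is that the proposition assumes a finite initial second moment. You therefore need the ball-conditioning, lower-semicontinuity and Fatou argument from the $B=0$ paragraph, which the paper's one-line citation does not have to spell out. The complexity part matches the paper's: each term is bounded by $\varepsilon/3$.

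Two edge cases are not covered literally by your schedule. If $K_0=0$, then $\lceil 6K_0/(\varepsilon h)\rceil=0$ gives $N=0$, violating $N\ge1$; the paper avoids this by using $6(K_0+1)$. If $\sigma=0$, then $b=\lceil 24\sigma^2/\varepsilon\rceil=0$. Taking a maximum with $1$ in both places fixes this without changing the rates.
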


\begin{proof}
Fix a state $x$ and write
$\eta_j=\mathcal G(x,\xi_{k,j})-\nabla V(x)$. Independence and
$\E_\xi\eta_j=0$ make the cross terms vanish, so
\[
    \E_\xi\left\|\frac1b\sum_{j=1}^b\eta_j\right\|^2
    =\frac1{b^2}\sum_{j=1}^b\E_\xi\|\eta_j\|^2
    \leq\frac{\sigma^2}{b}.
\]
Thus the batch average is unbiased and has variance at most
$\sigma^2/b$ at every state. Its expected gradient is $\nabla V$,
which is $L$-Lipschitz by Assumption~\ref{ass:global-smoothness}.
Assumption~14 of~\citet{balasubramanian22a} therefore holds with
their $\widehat V=V$, $\widehat L=L$, squared bias zero, and variance
$\sigma^2/b$. Their Theorem~15 gives the time-averaged FI inequality
in~(\ref{eq:app-sg-lmc-fi}); convexity of FI gives the bound for
$\bar\mu_T$. Each of the $N$ steps uses $b$ oracle calls, so $Q_T=Nb$.

To establish the complexity claim, we bound each term
of~(\ref{eq:app-sg-lmc-fi}) by $\varepsilon/3$. It suffices to require
\[
    N\geq\frac{6K_0}{\varepsilon h},
    \qquad
    h\leq\frac{\varepsilon}{48L^2d},
    \qquad
    \frac{8\sigma^2}{b}\leq\frac{\varepsilon}{3}
    \ \Longleftrightarrow\
    b\geq\frac{24\sigma^2}{\varepsilon}.
\]
The batch condition does not involve $L$ or $d$. Choose the integer
batch size and a step size satisfying the lemma's restriction by setting
\begin{equation}
    h:=\min\left\{\frac1{28L},\frac{\varepsilon}{48L^2d}\right\}, \quad b:=\min\left\{\frac1{28L},\frac{\varepsilon}{48L^2d}\right\}, \quad N_T
    :=\left\lceil\frac{6(K_0+1)}{\varepsilon h}\right\rceil, \quad T:=N_Th, \quad Q_T:=N_Tb.
    \label{eq:app-sg-lmc-schedule}
\end{equation}
These choices make $h<(14L)^{-1}$ and control the three
terms of~(\ref{eq:app-sg-lmc-fi}) separately:
\begin{align*}
    \frac{2K_0}{N_Th}\leq\frac{\varepsilon K_0}{3(K_0+1)}
      \leq\frac{\varepsilon}{3}, \quad 16L^2d h \leq\frac{\varepsilon}{3},\quad \frac{8\sigma^2}{b} \leq\frac{\varepsilon}{3}.
\end{align*}
Substituting these bounds into~(\ref{eq:app-sg-lmc-fi}) proves the FI
claim. The last line also shows why it suffices to take
$b$ proportional to $\sigma^2/\varepsilon$, without an
additional factor of $Ld$.

For $\varepsilon\leq(12/7)Ld$, the second branch of
$h$ is active. The ceiling bounds then give, step by step,
\[
    N_T
    \leq 1+\frac{288(K_0+1)L^2d}{\varepsilon^2},
    \qquad
    b
    \leq 1+\frac{24\sigma^2}{\varepsilon},
\]
and hence
\begin{equation*}
    Q_T
    =N_Tb
    \leq
    \left(1+\frac{288(K_0+1)L^2d}{\varepsilon^2}\right)
    \left(1+\frac{24\sigma^2}{\varepsilon}\right).
\end{equation*}
For fixed $\sigma^2>0$ and sufficiently small $\varepsilon$, the
leading term is
$\mathcal O((K_0+1)L^2d\sigma^2/\varepsilon^3)$.
Using $K_0=\mathcal O(d)$ gives the displayed iteration and oracle rates.
\end{proof}

The SG-LMC calculation in \citet[Appendix~B.3]{sahin2026variance} uses
$b=\lceil\sigma^2\sqrt{N_T}\rceil$. Substituting
$N_T=\mathcal O(L^2d^2/\varepsilon^2)$ into that schedule gives
$b=\mathcal O(Ld\sigma^2/\varepsilon)$ and the reported
$Q_T=\mathcal O(L^3d^3\sigma^2/\varepsilon^3)$ when
$K_0=\mathcal O(d)$ and $\sigma^2>0$ is fixed.
The accuracy-based choice~(\ref{eq:app-sg-lmc-schedule}) follows from
the same FI bound and avoids the extra $Ld$ factor in that schedule.
The SL-VRLD method of~\citet{sahin2026variance} achieves the same total
oracle order with $\gO(1)$ expected oracle calls per update under
an additional mean-square Lipschitz condition on stochastic gradients.
The selected SG-LMC schedule above uses a batch of size
$b=\gO(\sigma^2/\varepsilon)$ at every update, so its oracle
work per update grows as $\varepsilon$ decreases.

\subsection{Proof of Theorem~\ref{thm:full-adaptive}}
\label{app:proof-adaptive}

This subsection proves the exact gradient adaptive-step result. We first set
up the adaptive intervals and continuous-time interpolation and explain the
roles of the constants used in the finite horizon bound.
Subsection~\ref{app:proof-adaptive-endpoint} establishes endpoint control
under symmetric generalized smoothness, and
Subsection~\ref{app:proof-adaptive-gradient-moment} gives a gradient
second moment bound. Subsection~\ref{app:proof-adaptive-kl} proves KL
dissipation for an adapted approximate drift, while
Subsection~\ref{app:proof-adaptive-freezing} bounds the discretization error
caused by holding the gradient fixed over each adaptive interval.
Subsection~\ref{app:proof-adaptive-closure} then absorbs the resulting error
term and controls the random number of adaptive intervals. Finally,
Subsection~\ref{app:proof-adaptive-complexity} verifies the accuracy schedule
and derives the iteration and oracle complexities. No finite initial second moment
assumption is needed here, we require only
$K_0=\mathrm{KL}(\mu_0\|\pi)<\infty$.

Throughout this subsection, $K_0:=\mathrm{KL}(\mu_0\|\pi)$ and
$0<\gamma\leq1$.
For the adaptive-step method in~(\ref{eq:adaptive-stepsize}), let
$\mathcal F_k$ be the history at $\tau_k$, before the next Brownian
increment. For each interval with $\tau_k<T$, set
$\ell_k:=\min\{H_k,T-\tau_k\}$. Thus $\ell_k=H_k$ on every complete
interval, and only the last occupied interval can be shortened.
For $0\leq s\leq\ell_k$, the interpolation in
Definition~\ref{def:algorithmic-laws}, with
$F_k=-\nabla V(X_{\tau_k})$, is
\begin{equation}
\label{app:adaptive:eq:interpolation}
X_{\tau_k+s}
=X_{\tau_k}-s\nabla V(X_{\tau_k})
+\sqrt{2}\bigl(W_{\tau_k+s}-W_{\tau_k}\bigr).
\end{equation}

We recall each coefficient below when it enters the argument, together with
the term it controls.

\subsubsection{Endpoint and gradient moment estimates}
\label{app:proof-adaptive-endpoint}

The next lemma states the gradient variation bounds of
\citet[Proposition~1, items 1--2]{chen2023generalized} and establishes the
local regularity properties needed below. The polynomial and exponential cases
must be kept separate because the displacement exponent $2/(1-\gamma)$
diverges as $\gamma\uparrow1$.
The local regularity argument additionally shows that the Jacobian bound used
in Lemma~\ref{app:adaptive-fisher-gradient} holds almost everywhere without
assuming that $V$ is twice differentiable.

\begin{lemma}[Gradient variation bounds and local regularity]
\label{app:adaptive:lem:endpoint}
The gradient $\nabla V$ is locally Lipschitz.  If $0<\gamma<1$, then
\begin{equation}
\begin{aligned}
\|\nabla V(y)-\nabla V(x)\|
&\leq L_0(1+2^{\gamma^2/(1-\gamma)})\|y-x\|\\
&\quad+L_1\,2^{\gamma^2/(1-\gamma)}3^\gamma
 \|\nabla V(x)\|^\gamma\|y-x\|\\
&\quad+L_1^{1/(1-\gamma)}2^{\gamma^2/(1-\gamma)}3^\gamma
 (1-\gamma)^{\gamma/(1-\gamma)}\|y-x\|^{1/(1-\gamma)},
\end{aligned}
\label{app:adaptive:eq:endpoint}
\end{equation}
and
\begin{equation}
\begin{aligned}
\|\nabla V(y)-\nabla V(x)\|^2
&\leq3L_0^2(1+2^{\gamma^2/(1-\gamma)})^2\|y-x\|^2\\
&\quad+3^{2\gamma+1}2^{2\gamma^2/(1-\gamma)}L_1^2
 \|\nabla V(x)\|^{2\gamma}\|y-x\|^2\\
&\quad+3^{2\gamma+1}2^{2\gamma^2/(1-\gamma)}L_1^{2/(1-\gamma)}
 (1-\gamma)^{2\gamma/(1-\gamma)}\|y-x\|^{2/(1-\gamma)}.
\end{aligned}
\label{app:adaptive:eq:endpoint-square}
\end{equation}
If $\gamma=1$, then
\begin{equation}
\label{app:adaptive:eq:endpoint-one}
\|\nabla V(y)-\nabla V(x)\|
\leq
\|y-x\|\bigl(L_0+L_1\|\nabla V(x)\|\bigr)
e^{L_1\|y-x\|}.
\end{equation}
Moreover,
$\|D(\nabla V)(x)\|_{\mathrm{op}}
\leq L_0+L_1\|\nabla V(x)\|^\gamma$ almost everywhere.
\end{lemma}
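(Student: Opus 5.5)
The plan is to reduce everything to one scalar quantity: the supremum of the gradient norm along the segment $[x,y]$. Write $r:=\|y-x\|$, $z_t:=x+t(y-x)$, and $S(t):=\sup_{s\in[0,t]}\|\nabla V(z_s)\|$. Since $V\in C^1$, $S(t)$ is finite and continuous in $t$.

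\emph{Local Lipschitz continuity.} On any ball of radius $\rho$, $\|\nabla V\|$ is bounded by some $M_\rho<\infty$, and every segment between two points of the ball stays in the ball. Assumption~\ref{ass:generalized-smoothness} then gives the Lipschitz constant $L_0+L_1M_\rho^\gamma$ on that ball.

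\emph{Case $0<\gamma<1$.} Apply Assumption~\ref{ass:generalized-smoothness} on each subsegment $[x,z_t]$ and take the supremum over $t\in[0,1]$. Writing $S:=S(1)$, this gives the implicit inequality
\[
S\leq \|\nabla V(x)\|+rL_0+rL_1S^\gamma .
\]
To solve it, split into two cases.
\begin{itemize}
\item If $rL_1S^\gamma\leq S/2$, then $S\leq 2(\|\nabla V(x)\|+rL_0)$.
\item Otherwise $S^{1-\gamma}<2rL_1$, so $S<(2rL_1)^{1/(1-\gamma)}$.
\end{itemize}
In either case $S$ is bounded by a sum of three terms: $\|\nabla V(x)\|$, $rL_0$, and $(rL_1)^{1/(1-\gamma)}$. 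Raising this to the power $\gamma$ costs a factor $3^\gamma$, via $(a+b+c)^\gamma\leq 3^\gamma\max\{a,b,c\}^\gamma\leq 3^\gamma(a^\gamma+b^\gamma+c^\gamma)$. Substituting back into Assumption~\ref{ass:generalized-smoothness} yields~(\ref{app:adaptive:eq:endpoint}).

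To reach the exact constants $2^{\gamma^2/(1-\gamma)}$ and $(1-\gamma)^{\gamma/(1-\gamma)}$, I would follow the sharper Young-inequality solution of the implicit inequality in \citet[Proposition~1]{chen2023generalized} rather than the crude two-case split. The squared bound~(\ref{app:adaptive:eq:endpoint-square}) then follows from $(a+b+c)^2\leq3(a^2+b^2+c^2)$.

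\emph{Case $\gamma=1$.} Here the implicit inequality cannot be solved algebraically, so I would use a Grönwall argument along the segment. For $t<t'$, Assumption~\ref{ass:generalized-smoothness} on $[z_t,z_{t'}]$ gives
\[
S(t')\leq S(t)+r(t'-t)\bigl(L_0+L_1S(t')\bigr).
\]
Partition $[0,1]$ into $n$ pieces, iterate this inequality, and let $n\to\infty$. This yields
\[
S(t)+\frac{L_0}{L_1}\leq\Bigl(\|\nabla V(x)\|+\frac{L_0}{L_1}\Bigr)e^{L_1rt}.
\]
Inserting this bound on $S(1)$ into $\|\nabla V(y)-\nabla V(x)\|\leq r(L_0+L_1S(1))$ gives~(\ref{app:adaptive:eq:endpoint-one}). (If $L_1=0$ the claim is immediate.) I expect this step to be the main obstacle: making the discrete iteration rigorous with the implicit $S(t')$ on the right, which requires mesh small enough that $rL_1/n<1$ and a careful passage to the limit.

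\emph{Jacobian bound.} Since $\nabla V$ is locally Lipschitz, Rademacher's theorem gives differentiability of $\nabla V$ almost everywhere. At such a point $x$, fix a unit vector $v$ and take $y=x+\epsilon v$. Assumption~\ref{ass:generalized-smoothness} gives
\[
\frac{\|\nabla V(x+\epsilon v)-\nabla V(x)\|}{\epsilon}\leq L_0+L_1\sup_{[x,x+\epsilon v]}\|\nabla V\|^\gamma .
\]
Letting $\epsilon\downarrow0$ and using continuity of $\nabla V$ gives $\|D(\nabla V)(x)v\|\leq L_0+L_1\|\nabla V(x)\|^\gamma$. Taking the supremum over unit $v$ proves the operator-norm bound.
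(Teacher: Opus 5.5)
Your route coincides with the paper's for the local Lipschitz property and the almost-everywhere Jacobian bound: compactness of balls, then Rademacher plus the difference quotient along $x+\epsilon v$. For $0<\gamma<1$, you and the paper both take the stated constants from \citet[Proposition~1]{chen2023generalized} and square using $(a+b+c)^2\leq3(a^2+b^2+c^2)$.

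\textbf{The case $\gamma=1$.} This is the genuine difference. The paper cites Chen et al., whereas you prove~(\ref{app:adaptive:eq:endpoint-one}) by a discrete Gr\"onwall argument. Your argument is correct, and the step you flag as the main obstacle is harmless. For $L_1>0$ and $rL_1(t'-t)<1$, your one-step inequality rearranges to $S(t')+L_0/L_1\leq (S(t)+L_0/L_1)/(1-rL_1(t'-t))$. Hence, for every fixed $n>rL_1$, $S(1)+L_0/L_1\leq(\|\nabla V(x)\|+L_0/L_1)(1-rL_1/n)^{-n}$. The left side does not depend on $n$, so letting $n\to\infty$ is immediate. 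Inserting the result into $\|\nabla V(y)-\nabla V(x)\|\leq rL_1(S(1)+L_0/L_1)$ gives exactly the claimed bound. This gives a self-contained proof that also covers $L_0=0$ directly. Citing Chen et al., whose result is stated for $L_0,L_1>0$, instead requires the paper's perturbation $L_i\to L_i+\delta$ with $\delta\downarrow0$. You need that same perturbation, which you do not mention, when you invoke their constants for $\gamma<1$.

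\textbf{The crude two-case split.} Here is one caution. Raising $S\lesssim\|\nabla V(x)\|+rL_0+(rL_1)^{1/(1-\gamma)}$ to the power $\gamma$ and multiplying by $rL_1$ produces a cross term $L_1r(rL_0)^\gamma$. This term has none of the three displayed shapes. Young's inequality, $(rL_0)^\gamma(rL_1)\leq\gamma rL_0+(1-\gamma)(rL_1)^{1/(1-\gamma)}$, absorbs it. However, the resulting coefficient of $L_1^{1/(1-\gamma)}r^{1/(1-\gamma)}$ is of order $2^{\gamma/(1-\gamma)}$, which exceeds the stated $2^{\gamma^2/(1-\gamma)}3^\gamma(1-\gamma)^{\gamma/(1-\gamma)}$. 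The split therefore proves only a bound of the same form. The citation is genuinely needed for the exact constants, which later enter $A_\gamma$, $\widetilde{A}_\gamma$, and the related coefficients.
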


\begin{proof}
The endpoint inequalities~(\ref{app:adaptive:eq:endpoint}) and
~(\ref{app:adaptive:eq:endpoint-one}) are given by
\citet[Proposition~1, items 1--2]{chen2023generalized}, applied with
$f=V$ and $\alpha=\gamma$. Their statement takes $L_0,L_1>0$. If either
constant is zero, Assumption~\ref{ass:generalized-smoothness} remains valid
with $L_0+\delta$ and $L_1+\delta$. Applying the cited result and letting
$\delta\downarrow0$ gives the displayed bounds. Squaring
~(\ref{app:adaptive:eq:endpoint}) and using
$(a+b+c)^2\leq3(a^2+b^2+c^2)$ proves
~(\ref{app:adaptive:eq:endpoint-square}), since
$2+2\gamma/(1-\gamma)=2/(1-\gamma)$.

It remains to establish the local regularity claims. Since
$V\in C^1(\mathbb R^d)$, its gradient is continuous. Fix a radius $R>0$.
Compactness gives
\[
\sup_{\|z\|\leq R}\|\nabla V(z)\|<\infty.
\]
If $\|x\|,\|y\|\leq R$, the line segment $[x,y]$ lies in the same ball.
Assumption~\ref{ass:generalized-smoothness} therefore yields
\[
\|\nabla V(y)-\nabla V(x)\|
\leq
\left[L_0+L_1
\left(\sup_{\|z\|\leq R}\|\nabla V(z)\|\right)^\gamma\right]
\|y-x\|.
\]
Thus, $\nabla V$ is Lipschitz on every compact ball.  Rademacher's theorem
implies differentiability almost everywhere.  At every differentiability
point, apply Assumption~\ref{ass:generalized-smoothness} with $y=x+tu$,
divide by $|t|$, and let $t\to0$.  Continuity of $\nabla V$ along the
shrinking segment gives, for every unit vector $u$,
\[
\|D(\nabla V)(x)u\|
\leq L_0+L_1\|\nabla V(x)\|^\gamma.
\]
Taking the supremum over unit vectors proves the almost everywhere Jacobian
bound.
\end{proof}

\subsubsection{Gradient second moment bound}
\label{app:proof-adaptive-gradient-moment}

The endpoint estimates introduce powers of $\|\nabla V\|$ into the
discretization error. The following lemma bounds the second moment of the
gradient using relative Fisher information, extending the corresponding
globally smooth bounds of \citet[Lemma~20]{chewi2025analysis} and
\citet[Lemma~24]{balasubramanian22a}.

\begin{lemma}[Gradient second moment bound]
\label{app:adaptive-fisher-gradient}
For every probability measure $\mu$ with finite relative Fisher information,
\begin{equation}
\label{app:adaptive:eq:fisher-gradient}
\int_{\mathbb R^d}\|\nabla V(x)\|^2\,\mu(\mathrm dx)
\leq
2\,\mathrm{FI}(\mu\|\pi)
+4dL_0
+(2-\gamma)\gamma^{\gamma/(2-\gamma)}
 (2dL_1)^{2/(2-\gamma)}.
\end{equation}
If $L_1=0$, the sharper globally smooth bound
\begin{equation}
\label{app:adaptive:eq:fisher-gradient-global}
\int_{\mathbb R^d}\|\nabla V(x)\|^2\,\mu(\mathrm dx)
\leq \mathrm{FI}(\mu\|\pi)+2dL_0
\end{equation}
holds, recovering \citet[Lemma~24]{balasubramanian22a}.
\end{lemma}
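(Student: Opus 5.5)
The plan is to follow the integration-by-parts argument behind \citet[Lemma~24]{balasubramanian22a}, replacing the global bound $\Delta V\leq dL$ with the almost-everywhere Jacobian bound of Lemma~\ref{app:adaptive:lem:endpoint}. Write $f=\mathrm d\mu/\mathrm d\pi$ and $\rho=f\pi$. The identity I aim for is
\[
\int\|\nabla V\|^2\,\mathrm d\mu
=\int\langle\nabla V,\nabla V\rangle f\,\mathrm d\pi
=\int \Delta V\, f\,\mathrm d\pi-\int\langle\nabla V,\nabla f\rangle\,\mathrm d\pi .
\]
It comes from $\nabla\pi=-\pi\nabla V$, so that $\int\langle\nabla V,\nabla(f\pi)\rangle=-\int \Delta V\,\rho$ formally. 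Here $\Delta V$ is understood as the a.e.\ trace of $D(\nabla V)$. Since $\nabla V$ is locally Lipschitz, its distributional divergence coincides with that a.e.\ trace, so the integration by parts is legitimate against compactly supported test functions.

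The two terms are bounded separately. By Lemma~\ref{app:adaptive:lem:endpoint}, $\Delta V\leq d\|D\nabla V\|_{\mathrm{op}}\leq dL_0+dL_1\|\nabla V\|^\gamma$ a.e. For the cross term, write $\nabla f=2\sqrt f\,\nabla\sqrt f$ and apply Cauchy--Schwarz:
\[
\Bigl|\int\langle\nabla V,\nabla f\rangle\,\mathrm d\pi\Bigr|
\leq 2\Bigl(\int\|\nabla V\|^2\,\mathrm d\mu\Bigr)^{1/2}\Bigl(\tfrac14\mathrm{FI}\Bigr)^{1/2}
\leq \tfrac12\int\|\nabla V\|^2\,\mathrm d\mu+\tfrac12\mathrm{FI}.
\]
Setting $m:=\int\|\nabla V\|^2\,\mathrm d\mu$ and using Jensen, $\int\|\nabla V\|^\gamma\,\mathrm d\mu\leq m^{\gamma/2}$, these combine to
\[
\tfrac12 m\leq \tfrac12\mathrm{FI}+dL_0+dL_1 m^{\gamma/2}.
\]
Young's inequality with exponents $2/\gamma$ and $2/(2-\gamma)$ bounds $dL_1m^{\gamma/2}\leq \tfrac14 m+c_\gamma (dL_1)^{2/(2-\gamma)}$. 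Absorbing $\tfrac14 m$ gives $m\leq 2\mathrm{FI}+4dL_0+4c_\gamma(dL_1)^{2/(2-\gamma)}$. Optimizing the Young constant should reproduce the stated $(2-\gamma)\gamma^{\gamma/(2-\gamma)}(2dL_1)^{2/(2-\gamma)}$, and I would check that arithmetic at the end.

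For $L_1=0$, I would not use the lossy Young step. Instead, taking the square-root form gives $m\leq dL_0+\sqrt{m\,\mathrm{FI}}$, which solves to $m\leq \mathrm{FI}+2dL_0$ via $\sqrt{m\,\mathrm{FI}}\leq m/2+\mathrm{FI}/2$, matching the known bound.

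The main obstacle is rigor, since a priori $m$ might be infinite and the boundary terms must vanish. I would handle this by truncation: insert a smooth cutoff $\chi_R$ with $\|\nabla\chi_R\|\leq 1/R$, and possibly cap $\|\nabla V\|$ via $\chi_R\nabla V$, so that all integrals are finite. Then I would run the inequality for $m_R=\int\chi_R\|\nabla V\|^2\,\mathrm d\mu$. The extra term $\int f\langle\nabla V,\nabla\chi_R\rangle\,\mathrm d\pi$ is controlled by $R^{-1}m^{1/2}$-type quantities and absorbed, after which monotone convergence finishes the argument. An alternative is to approximate $\mu$ by measures with smooth compactly supported densities relative to $\pi$ and use lower semicontinuity. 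I expect the cutoff route to work because $\sqrt f\in W^{1,2}_{\mathrm{loc}}$ with finite FI makes every local integration by parts valid.
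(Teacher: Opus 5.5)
Your computation is the paper's own: the same integration-by-parts identity, the a.e.\ trace bound $\operatorname{div}\nabla V\le dL_0+dL_1\|\nabla V\|^\gamma$ from Lemma~\ref{app:adaptive:lem:endpoint}, Cauchy--Schwarz on the cross term, Jensen, and Young with exponents $2/\gamma$ and $2/(2-\gamma)$. The optimal Young constant $c_\gamma=\tfrac{2-\gamma}{2}(2\gamma)^{\gamma/(2-\gamma)}$ gives $4c_\gamma(dL_1)^{2/(2-\gamma)}=(2-\gamma)\gamma^{\gamma/(2-\gamma)}(2dL_1)^{2/(2-\gamma)}$ exactly, and your $L_1=0$ rearrangement is the paper's. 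The cross term should carry a plus sign, $\int\|\nabla V\|^2\,\mathrm d\mu=\int\Delta V\,f\,\mathrm d\pi+\int\langle\nabla V,\nabla f\rangle\,\mathrm d\pi$; this slip is harmless because you only bound its absolute value. For rigor, the paper takes your second option. It proves the bound when $\sqrt f\in C_c^\infty$, approximates a general $\sqrt f$ by cutoff and mollification in the weighted Sobolev norm, and passes to the limit using lower semicontinuity of $\mu\mapsto\int\|\nabla V\|^2\,\mathrm d\mu$.

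Your primary cutoff route needs one repair. Suppose $\chi_R$ satisfies only $\|\nabla\chi_R\|\le 1/R$. The boundary term $\int f\langle\nabla V,\nabla\chi_R\rangle\,\mathrm d\pi$ lives where $0<\chi_R<1$, so Cauchy--Schwarz yields only $R^{-1}\bigl(\int_{\{\nabla\chi_R\neq0\}}f\|\nabla V\|^2\,\mathrm d\pi\bigr)^{1/2}$. This quantity is not controlled by $m_R=\int\chi_R f\|\nabla V\|^2\,\mathrm d\pi$. Its growth in $R$ is also unknown, since there is no growth bound on $\nabla V$ and $m$ may a priori be infinite. So it can be neither absorbed nor sent to zero.

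Take instead $\chi_R=\eta_R^2$, where $\eta_R(x)=\eta(x/R)$ for a radially nonincreasing cutoff $\eta$. Then $\|\nabla\chi_R\|\le 2\sqrt{\chi_R}\,\|\nabla\eta\|_\infty/R$, so the boundary term is at most $2\|\nabla\eta\|_\infty R^{-1}m_R^{1/2}$. Since $m_R\le\sup_{\operatorname{supp}\chi_R}\|\nabla V\|^2<\infty$, the absorption is legitimate, and monotone convergence as $R\to\infty$ finishes the argument. The paper's route builds in this squared cutoff automatically, because it truncates $\sqrt f$ rather than $f$.
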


\begin{proof}
First suppose that $\mathrm d\mu=q^2\,\mathrm d\pi$, where
$q\in C_c^\infty(\mathbb R^d)$ and $\int q^2\,\mathrm d\pi=1$.  Weak
integration by parts is valid because $\nabla V$ is locally Lipschitz and
$\nabla\pi(x)=-\pi(x)\nabla V(x)$.  Let
\[
J:=\int q^2\|\nabla V\|^2\,\mathrm d\pi.
\]
Then
\[
J
=\int q^2\operatorname{div}(\nabla V)\,\mathrm d\pi
+2\int q\langle\nabla q,\nabla V\rangle\,\mathrm d\pi.
\]
The Jacobian bound in Lemma~\ref{app:adaptive:lem:endpoint} implies
$\operatorname{div}(\nabla V)
\leq dL_0+dL_1\|\nabla V\|^\gamma$ almost everywhere.  Jensen's inequality
and $2ab\leq a^2+b^2$ therefore give
\[
J\leq dL_0+dL_1J^{\gamma/2}
+\frac12\,\mathrm{FI}(\mu\|\pi)+\frac12J.
\]
When $L_1=0$, rearranging this inequality directly gives
~(\ref{app:adaptive:eq:fisher-gradient-global}).
For $L_1>0$, rearranging the same inequality gives
\[
J\leq
2dL_0+2dL_1J^{\gamma/2}+\mathrm{FI}(\mu\|\pi).
\]
Young's inequality states that
$uv\leq u^r/r+v^{r'}/r'$ for $u,v\geq0$ and $r,r'>1$, satisfying $1/r+1/r'=1$. Apply it with $r=2/\gamma$,
$r'=2/(2-\gamma)$,
$u=(J/\gamma)^{\gamma/2}$, and
$v=2dL_1\gamma^{\gamma/2}$ to obtain
\[
2dL_1J^{\gamma/2}
\leq
\frac{J}{2}
+\frac{2-\gamma}{2}\gamma^{\gamma/(2-\gamma)}
 (2dL_1)^{2/(2-\gamma)}.
\]
Substituting this bound and moving $J/2$ to the left gives
\[
J\leq
2\,\mathrm{FI}(\mu\|\pi)
+4dL_0
+(2-\gamma)\gamma^{\gamma/(2-\gamma)}
 (2dL_1)^{2/(2-\gamma)},
\]
which is~(\ref{app:adaptive:eq:fisher-gradient}).

For a general $\mu$ of finite Fisher information,
$q=\sqrt{\mathrm d\mu/\mathrm d\pi}$ belongs to the weighted Sobolev space
with energy
$\int\|\nabla q\|^2\,\mathrm d\pi
=\mathrm{FI}(\mu\|\pi)/4$.  Compact cutoffs followed by local
mollification approximate $q$ in this norm because the positive continuous
density $\pi(x)$ is bounded above and away from zero on compact sets.
Normalize the approximants in $L^2(\pi)$.  Their squared densities converge
in total variation, their energies converge, and the integral of the
nonnegative continuous function $\|\nabla V\|^2$ is lower semicontinuous.
Passing to the limit proves the result.
\end{proof}

\subsubsection{Exact continuation and KL dissipation}
\label{app:proof-adaptive-kl}

The proof first analyzes a process truncated after $n$ adaptive steps and then
lets $n\to\infty$. The next lemma ensures that the truncated process
can be continued by the exact Langevin diffusion without imposing a global
growth condition on $\nabla V$.

\begin{lemma}[Conservative exact continuation]
\label{app:adaptive-continuation}
For every initial point in $\mathbb R^d$, the diffusion
\[
\mathrm dY_t=-\nabla V(Y_t)\,\mathrm dt+\sqrt2\,\mathrm dW_t
\]
has a unique strong solution for all $t\geq0$, and $\pi$ is invariant.
It can continue the interpolation of Definition~\ref{def:algorithmic-laws}
after any fixed finite number of adaptive intervals.
\end{lemma}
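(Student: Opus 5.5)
The plan is to establish the claims in three stages: local well-posedness, non-explosion, and then invariance and continuation. Non-explosion is where all the work lies. We deliberately avoid Lyapunov or dissipativity conditions, since none are assumed. Instead we use only $0<Z<\infty$ together with the symmetric (gradient) structure of the drift.

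\emph{Local well-posedness.} By Lemma~\ref{app:adaptive:lem:endpoint}, $\nabla V$ is locally Lipschitz. Standard localization therefore yields a pathwise unique strong solution $Y$ up to an explosion time
\[
\zeta:=\lim_{R\to\infty}\inf\{t:\norm{Y_t}\geq R\}.
\]
For each $R$, take a globally Lipschitz drift agreeing with $-\nabla V$ on the ball of radius $R$; the solutions for different $R$ are consistent up to the exit time. It remains to show that $\mathbb P_x(\zeta<\infty)=0$ for every $x\in\sR^d$.

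\emph{Non-explosion via recurrence of the Dirichlet form.} This is the main obstacle. Consider the form
\[
\mathcal E(f,f)=\int\norm{\nabla f}^2\,\mathrm d\pi
\]
on $L^2(\pi)$, the closure of $C_c^\infty$. Its generator extends $\mathcal L=\Delta-\langle\nabla V,\nabla\cdot\rangle$. The key steps are as follows.
\begin{enumerate}
\item Since $\pi(\sR^d)=1$, take cutoffs $\phi_n(x)=\phi(x/n)$. These satisfy $\phi_n\to1$ pointwise and $\mathcal E(\phi_n,\phi_n)\leq n^{-2}\norm{\nabla\phi}_\infty^2\to0$. Hence $1$ lies in the extended domain with $\mathcal E(1,1)=0$, so the form is recurrent, and in particular conservative (Fukushima--Oshima--Takeda).
\item The associated Hunt diffusion therefore never explodes for quasi-every, hence Lebesgue-almost-every, starting point.
\item Before the exit from each ball, this diffusion solves the localized SDE. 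Pathwise uniqueness identifies it with $Y$ up to those exit times, so $\mathbb P_x(\zeta<\infty)=0$ for a.e.\ $x$.
\end{enumerate}
To upgrade from almost every $x$ to every $x$, I would use the Markov property at a small time $t$:
\[
\mathbb P_x(\zeta<\infty)\leq\mathbb P_x(\zeta\leq t)+\E_x\bigl[\mathbb P_{Y_t}(\zeta<\infty);\,t<\zeta\bigr].
\]
The second term vanishes because, before exiting a ball, the law of $Y_t$ is absolutely continuous. This follows from Girsanov against Brownian motion using the bounded localized drift. Letting $t\downarrow0$ kills the first term, since $\zeta>0$ almost surely.

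\emph{Invariance and continuation.} The semigroup $P_t$ is $\pi$-symmetric and conservative ($P_t1=1$). Hence
\[
\int P_tf\,\mathrm d\pi=\int f\,P_t1\,\mathrm d\pi=\int f\,\mathrm d\pi,
\]
so $\pi$ is invariant. For the continuation claim, fix $n$. On $\{\tau_n<\infty\}$, which has full measure after finitely many intervals with $H_k>0$, the state $X_{\tau_n}$ is a finite random vector. I would start a fresh Brownian motion $W_{\tau_n+\cdot}-W_{\tau_n}$, independent of $\mathcal F_n$, and solve the SDE from $X_{\tau_n}$. Strong existence from every deterministic initial point, combined with measurable dependence on the initial condition, gives a globally defined continuation. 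Pathwise uniqueness then makes the concatenated process well defined.
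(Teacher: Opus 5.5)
Your argument is correct, but it follows a different route from the paper. The paper's proof is a direct citation of \citet{rockner2016nonsymmetric} for distorted Brownian motion with density $e^{-V}$:
\begin{itemize}
\item their hypotheses need only $V\in C^1$, so that $\|\nabla V\|$ is locally bounded;
\item their Remark~2.8 turns $Z<\infty$ into conservativeness;
\item their Theorems~3.6 and~4.1 give a unique strong solution with infinite lifetime from \emph{every} starting point, which that work obtains via a strong Feller property;
\item symmetry gives invariance, and the time change $Y_t=\widetilde Y_{2t}$ fixes the normalization.
\end{itemize}
The continuation step is the same as yours, via the strong Markov property of Brownian motion.

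You instead rebuild the three ingredients.
\begin{itemize}
\item \emph{Local well-posedness.} You use classical It\^o theory, which requires $\nabla V$ to be locally Lipschitz. That is legitimately available here from Lemma~\ref{app:adaptive:lem:endpoint} under Assumption~\ref{ass:generalized-smoothness}, but not under $V\in C^1$ alone, so your version is less general than the cited one.
\item \emph{Non-explosion.} Your cutoff argument for recurrence isolates exactly where $Z<\infty$ enters. It is essentially the mechanism behind the cited conservativeness remark.
\item \emph{Every starting point.} Your Markov-property-plus-Girsanov argument upgrades ``almost every'' to ``every'' starting point. It replaces the strong Feller machinery and is correct as written.
\end{itemize}

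The one step you compress is the claim that the Hunt diffusion solves the localized SDE. To make it rigorous, proceed as follows.
\begin{itemize}
\item Apply the Fukushima decomposition to $C_c^\infty$ functions that agree with the coordinates on $B_R$. These lie in the generator's domain with $Lf=\Delta f-\langle\nabla V,\nabla f\rangle$.
\item Identify the martingale parts as $\sqrt2$ times a Brownian motion, using the energy measure $2\|\nabla f\|^2\,\mathrm d\pi$ and L\'evy's characterization.
\item Invoke Yamada--Watanabe with pathwise uniqueness of the localized SDE. This equates the laws of the two processes stopped at exit from $B_R$, for almost every starting point.
\end{itemize}
Your invariance computation also depends on this identification. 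The identity $\int P_tf\,\mathrm d\pi=\int f\,\mathrm d\pi$ is a statement about the $L^2(\pi)$ semigroup of the form. It transfers to the transition semigroup of $Y$ only because the two agree $\pi$-almost everywhere.
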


\begin{proof}
Apply \citet{rockner2016nonsymmetric} with density $\rho=e^{-V}$ and
their additional drift vector field identically zero.  Since $V\in C^1$,
$\sqrt\rho=e^{-V/2}\in H^{1,2}_{\mathrm{loc}}$ and
$\|\nabla\rho\|/\rho=\|\nabla V\|$ is locally bounded, so their
hypotheses (H1)--(H3) hold.  Moreover, $\int\rho\,\mathrm dx=Z<\infty$
gives (H4), conservativity, by their Remark~2.8.
Their Theorems~3.6 and~4.1 therefore give a unique strong solution from
every initial point, with infinite lifetime, for
\[
\mathrm d\widetilde Y_t
=-\tfrac12\nabla V(\widetilde Y_t)\,\mathrm dt
 +\mathrm d\widetilde W_t.
\]
Setting $Y_t=\widetilde Y_{2t}$ gives the normalization in the statement.
The zero additional drift makes their semigroup symmetric, so their
identity~(2.10) gives invariance of $\pi$.  This invariance is preserved
by the constant time rescaling.

For the continuation, Definition~\ref{def:algorithmic-laws} selects both
the finite force $F_k$ and the finite positive length $H_k$ using only
information available at $\tau_k$.  Hence each $\tau_n$ is a finite
stopping time and $X_{\tau_n}\in\mathbb R^d$ almost surely for finite $n$.
By the strong Markov property of Brownian motion,
$W_{\tau_n+t}-W_{\tau_n}$ is a Brownian motion independent of that history.
Starting the exact strong solution at $X_{\tau_n}$ with these increments
therefore gives the required continuation.
\end{proof}

The next lemma is the analytical bridge from drift approximation to Fisher
information. Its one interval mechanism is the frozen drift calculation of
\citet[Lemma~23]{balasubramanian22a}. The formulation below permits a
progressively selected drift and is therefore suitable for adaptive step
sizes.

\begin{lemma}[Adapted-drift KL-dissipation inequality]
\label{app:adaptive-kl}
Let a conservative continuous process satisfy
\[
\mathrm dX_t=F_t\,\mathrm dt+\sqrt2\,\mathrm dW_t,
\]
where the force $F_t$ is progressive and locally square integrable. Write
$\mu_t:=\operatorname{Law}(X_t)$ and suppose
$K_0<\infty$ and
\[
\mathcal A_T
:=\E\int_0^T\|\nabla V(X_t)+F_t\|^2\,\mathrm dt<\infty.
\]
Then
\begin{equation}
\label{app:adaptive:eq:kl-dissipation}
\mathrm{KL}(\mu_T\|\pi)
+\frac34\int_0^T\mathrm{FI}(\mu_t\|\pi)\,\mathrm dt
\leq K_0+\mathcal A_T.
\end{equation}
\end{lemma}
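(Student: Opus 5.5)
The plan is to prove the inequality first for a regularized process and then pass to the limit, following the entropy-dissipation computation behind \citet[Lemma~23]{balasubramanian22a} but without freezing the drift on a deterministic interval. The first step is to replace the progressive drift by its \emph{Markovian projection}. Let $\bar F_t(x):=\E[F_t\mid X_t=x]$. Because $F$ is locally square integrable, Gyöngy's mimicking theorem, or more directly the weak Fokker--Planck identity obtained from Itô's formula applied to $\varphi(X_t)$ with $\varphi\in C_c^\infty$, shows that $\mu_t$ solves
\begin{equation*}
\partial_t\mu_t=-\operatorname{div}(\mu_t\bar F_t)+\Delta\mu_t
\end{equation*}
in the distributional sense. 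By conditional Jensen, $\E\|\nabla V(X_t)+\bar F_t(X_t)\|^2\le\E\|\nabla V(X_t)+F_t\|^2$. It therefore suffices to treat the Markovian drift $\bar F_t$ and to keep the bound in terms of $\mathcal A_T$.

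The second step is the formal computation. Write $\rho_t$ for the density of $\mu_t$. Then
\begin{equation*}
\frac{\mathrm d}{\mathrm dt}\mathrm{KL}(\mu_t\|\pi)
=\int\bigl\langle \bar F_t-\nabla\log\rho_t,\nabla\log(\rho_t/\pi)\bigr\rangle\,\mathrm d\mu_t .
\end{equation*}
Write $\bar F_t=-\nabla V+(\bar F_t+\nabla V)$ and use $\nabla\log(\rho_t/\pi)=\nabla\log\rho_t+\nabla V$. The derivative becomes
\begin{equation*}
-\mathrm{FI}(\mu_t\|\pi)+\E\langle \bar F_t+\nabla V,\nabla\log(\rho_t/\pi)\rangle
\le-\tfrac34\mathrm{FI}(\mu_t\|\pi)+\E\|\bar F_t(X_t)+\nabla V(X_t)\|^2,
\end{equation*}
where the last inequality uses Young's inequality $\langle a,b\rangle\le\|a\|^2+\tfrac14\|b\|^2$. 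Integrating over $[0,T]$ gives~(\ref{app:adaptive:eq:kl-dissipation}).

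The third step, which I expect to be the main obstacle, is justifying this computation without smoothness or growth assumptions on $V$ or $F$. I would avoid differentiating the KL divergence altogether and use Girsanov's theorem instead. Let $P$ be the path law of $X$ on $[0,T]$, and let $Q$ be the path law of the exact Langevin diffusion of Lemma~\ref{app:adaptive-continuation} started from $\mu_0$. Because $\pi$ is invariant for this diffusion, $Q$ has time marginals equal to $\pi$ at each time whenever it is started from $\pi$. Localizing with stopping times $\sigma_n$ (exit from balls, or $\int_0^t\|F_s+\nabla V(X_s)\|^2\,\mathrm ds\ge n$) and using conservativity gives the pathwise entropy bound
\begin{equation*}
\mathrm{KL}(P\|Q)\le\tfrac14\,\E\int_0^T\|F_t+\nabla V(X_t)\|^2\,\mathrm dt .
\end{equation*}
This already bounds $\mathrm{KL}(\mu_T\|\pi)$, but it does not yet produce the Fisher-information term. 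To recover that term, I would run the differential argument on a smoothed version: convolve with a small Gaussian, or equivalently add an independent $\sqrt{2\eta}$ Brownian perturbation, so that $\rho_t$ is smooth and positive. I would also mollify $V$ on compacts, using that $\nabla V$ is locally Lipschitz by Lemma~\ref{app:adaptive:lem:endpoint}. The finiteness of $\mathcal A_T$, together with the entropy bound from the Girsanov argument, supplies the integrability needed for the integration by parts in the second step, with the boundary terms killed by cutoffs.

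Finally, I would let the regularization vanish. The KL divergence and the Fisher information are both lower semicontinuous under weak convergence, so Fatou's lemma passes the integrated inequality to the limit. The error term converges by dominated convergence using $\mathcal A_T<\infty$, and this yields~(\ref{app:adaptive:eq:kl-dissipation}).
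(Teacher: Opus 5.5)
Your formal computation is correct: the projection onto $\bar F_t$, the formula for the derivative of $\mathrm{KL}(\mu_t\|\pi)$, and Young's inequality $\langle a,b\rangle\le\|a\|^2+\tfrac14\|b\|^2$ all check out. Your Girsanov path-entropy bound is also the paper's first step.

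The gap is in your third step, which is where the content of the lemma lies. You claim that $\mathcal A_T<\infty$ and the path-entropy bound ``supply the integrability'' for the regularized computation and its limit. Under the stated hypotheses this is false. $\mathcal A_T$ controls only the combination $F_t+\nabla V(X_t)$, the force is only locally square integrable, and nothing bounds $\E\int_0^T\|\nabla V(X_t)\|^2\,\mathrm dt$ a priori. This matters for how the lemma is used: the paper applies it to processes ending in an exact Langevin continuation. There $\mathcal A_T$ gets no contribution, and gradient moments are obtained only afterwards, from the finite Fisher integral this lemma produces combined with Lemma~\ref{app:adaptive-fisher-gradient}. The failures are concrete:
\begin{itemize}
\item After smoothing to $Y_t^\eta=X_t+\sqrt{2\eta}Z$, your error term is bounded by $\E\|F_t+\nabla V(X_t+\sqrt{2\eta}Z)\|^2$. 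Its excess over the $\mathcal A_T$ integrand is governed by $\E\|\nabla V(X_t+\sqrt{2\eta}Z)-\nabla V(X_t)\|^2$. For a merely locally Lipschitz gradient, nothing in the hypotheses dominates this. Even under Assumption~\ref{ass:generalized-smoothness} you need moments of $\|\nabla V(X_t)\|$. So ``dominated convergence using $\mathcal A_T<\infty$'' does not apply, and mollifying $V$ creates the same problem through local Lipschitz constants.
\item The smoothed law has finite Fisher information relative to $\pi$ only if $\E\|\nabla V(Y_t^\eta)\|^2<\infty$.
\item The cutoff boundary terms, of the form $\int_0^T\!\int|\nabla\chi_R|\,\rho_t\,\|\nabla\log(\rho_t/\pi)-(\bar F_t+\nabla V)\|\,|1+\log(\rho_t/\pi)|$, can only be shown to vanish given score integrability, which is the conclusion.
\item The right-hand side needs $\limsup_{\eta\downarrow0}\mathrm{KL}(\mu_0*\mathcal N(0,2\eta I_d)\|\pi)\le K_0$, which requires growth control on $V$ that you do not address.
\end{itemize}
As designed, the justification is circular in exactly the quantity it is meant to bound.

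The paper never differentiates $\mathrm{KL}(\mu_t\|\pi)$ and needs no regularity or moment input.
\begin{itemize}
\item After the Girsanov bound $\mathrm{KL}(P\|R)\le K_0+\tfrac14\mathcal A_T$, it replaces $P$ by a Markov path law $Q$ with the same marginals and no larger entropy (Baradat--L\'eonard).
\item It writes $\mathrm{KL}(Q\|R)$ twice. Forward, it equals $K_0+\tfrac14\int_0^T\!\int\|u_t\|^2\,\mathrm d\mu_t\,\mathrm dt$; in reversed time, it equals $\mathrm{KL}(\mu_T\|\pi)+\tfrac14\int_0^T\!\int\|v_t\|^2\,\mathrm d\mu_t\,\mathrm dt$.
\item It invokes the time-reversal theorem of Cattiaux--Conforti--Gentil--L\'eonard. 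From finite path entropy alone, this yields $u_t+v_t=2\nabla\log(\mathrm d\mu_t/\mathrm d\pi)$ and $\int_0^T\mathrm{FI}(\mu_t\|\pi)\,\mathrm dt<\infty$.
\item Subtracting the two identities gives $\mathrm{KL}(\mu_T\|\pi)-K_0=-\int_0^T\mathrm{FI}(\mu_t\|\pi)\,\mathrm dt+\int_0^T\!\int\langle u_t,\nabla\log\tfrac{\mathrm d\mu_t}{\mathrm d\pi}\rangle\,\mathrm d\mu_t\,\mathrm dt$. Your Young step together with $\int_0^T\!\int\|u_t\|^2\,\mathrm d\mu_t\,\mathrm dt\le\mathcal A_T$ then finishes.
\end{itemize}
The missing ingredient in your plan is this time-reversal identity, or some other mechanism that produces score integrability from finite path entropy without assuming gradient moments.
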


\begin{proof}
Let $P$ be the law of $(X_t)_{0\leq t\leq T}$ on
$C([0,T],\mathbb R^d)$, and let $R$ be the law of the exact Langevin
diffusion started from $\pi$. Lemma~\ref{app:adaptive-continuation}
provides this conservative reference process. The symmetry used in its
proof makes $R$ reversible. Local Lipschitzness of $\nabla V$ gives
uniqueness of its martingale problem.

\paragraph{Finite relative entropy of the trajectory.}
Set $e_t:=F_t+\nabla V(X_t)$ and
$\sigma_n:=\inf\{t\in[0,T]:\int_0^t\|e_s\|^2\,\mathrm ds\geq n\}\wedge T$,
with $\inf\varnothing=\infty$. Follow $X$ until $\sigma_n$, then continue
with the exact Langevin diffusion using the same Brownian motion.
Denote the resulting trajectory law by $P_n$. On the original probability
space, the exponential
\[
Z_n=\exp\left(
-\frac1{\sqrt2}\int_0^{\sigma_n}\langle e_t,\mathrm dW_t\rangle
-\frac14\int_0^{\sigma_n}\|e_t\|^2\,\mathrm dt
\right)
\]
is the density of a new probability measure relative to the original one.
Novikov's criterion applies because the stochastic integral in its exponent
has quadratic variation at most $n/2$.
Its conditional expectation given the initial information is one, so the
initial law remains $\mu_0$. Girsanov's theorem turns the continued process
into exact Langevin dynamics under this new measure.
Writing $R_{\mu_0}$ for its law, data processing and the expectation of
$-\log Z_n$ give
\[
\mathrm{KL}(P_n\|R_{\mu_0})
\leq\frac14\E\int_0^{\sigma_n}\|e_t\|^2\,\mathrm dt
\leq\frac14\mathcal A_T.
\]
Since $P_n$ has initial law $\mu_0$ and
$\mathrm dR_{\mu_0}/\mathrm dR
=(\mathrm d\mu_0/\mathrm d\pi)(X_0)$, the relative entropy chain rule adds
$K_0$ when the reference is changed to $R$. The continued and original
paths agree on $\{\int_0^T\|e_t\|^2\,\mathrm dt<n\}$, whose probability
tends to one. Thus $P_n\to P$ in total variation, and lower semicontinuity
gives
\begin{equation}
\label{app:adaptive:eq:path-entropy}
\mathrm{KL}(P\|R)\leq K_0+\frac14\mathcal A_T<\infty.
\end{equation}

\paragraph{A Markov law with the same marginals.}
Minimize $\mathrm{KL}(Q\|R)$ over trajectory laws satisfying
$Q_t=\mu_t$ for every $t\in[0,T]$.
Equation~(\ref{app:adaptive:eq:path-entropy}) supplies a feasible law
with finite relative entropy. By
\citet[Proposition~4.3 and Theorem~4.5]{baradat2020minimizing},
the minimizer exists and is Markov. Hence
\begin{equation}
\label{app:adaptive:eq:markov-entropy}
Q_t=\mu_t,\qquad
\mathrm{KL}(Q\|R)\leq K_0+\frac14\mathcal A_T.
\end{equation}
This auxiliary law preserves all quantities in the lemma's conclusion.
The original process need not be Markov.

\paragraph{Time reversal and Fisher information.}
Write the forward drift of $Q$ as $-\nabla V(x)+u_t(x)$ and the
drift of its time reversal at time $T-t$ as $-\nabla V(x)+v_t(x)$.
The finite relative entropy theorem of
\citet[Theorem~2.4]{leonard2012girsanov}, applied in both directions
with covariance $2I_d$, gives
\begin{equation}
\label{app:adaptive:eq:forward-backward-entropy}
\begin{aligned}
\mathrm{KL}(Q\|R)
&=K_0+\frac14\int_0^T\int\|u_t\|^2\,\mathrm d\mu_t\,\mathrm dt\\
&=\mathrm{KL}(\mu_T\|\pi)
 +\frac14\int_0^T\int\|v_t\|^2\,\mathrm d\mu_t\,\mathrm dt.
\end{aligned}
\end{equation}
The equality condition in that theorem follows from uniqueness of the
reference martingale problem. Reversing both path laws preserves relative
entropy, and reversibility makes the reversed reference equal to $R$.

We apply \citet[Theorem~1.16]{cattiaux2023time} to $Q$ and $R$.
Its reference hypotheses hold because the forward and reversed reference
drifts are $-\nabla V$, which is locally Lipschitz, the covariance is
$2I_d$, and every reference marginal has density $\pi$.
The theorem gives, for $\mathrm dt\,\mu_t(\mathrm dx)$ almost every $(t,x)$,
\begin{equation}
\label{app:adaptive:eq:relative-score}
u_t+v_t=2\nabla\log\frac{\mathrm d\mu_t}{\mathrm d\pi},
\qquad
\int_0^T\mathrm{FI}(\mu_t\|\pi)\,\mathrm dt<\infty,
\end{equation}
where the gradient is understood weakly.
Subtracting the two identities in
~(\ref{app:adaptive:eq:forward-backward-entropy}) and substituting
~(\ref{app:adaptive:eq:relative-score}) yields
\[
\begin{aligned}
\mathrm{KL}(\mu_T\|\pi)-K_0
={}&-\int_0^T\mathrm{FI}(\mu_t\|\pi)\,\mathrm dt\\
&+\int_0^T\int
\left\langle u_t,\nabla\log\frac{\mathrm d\mu_t}{\mathrm d\pi}\right\rangle
\,\mathrm d\mu_t\,\mathrm dt.
\end{aligned}
\]
Young's inequality
$\langle u,s\rangle\leq\|u\|^2+\|s\|^2/4$
bounds the last term. Moreover,
~(\ref{app:adaptive:eq:markov-entropy}) and the first identity in
~(\ref{app:adaptive:eq:forward-backward-entropy}) give
$\int_0^T\int\|u_t\|^2\,\mathrm d\mu_t\,\mathrm dt\leq\mathcal A_T$.
Together these prove~(\ref{app:adaptive:eq:kl-dissipation}).
\end{proof}

\subsubsection{Gaussian estimates and a reusable approximate-drift lemma}
\label{app:proof-adaptive-gaussian}

Both the exact gradient and robust adaptive-step analyses require moments of a
Gaussian displacement.  We collect the three estimates used below.

\begin{lemma}[Gaussian moment and exponential bounds]
\label{app:adaptive:lem:gaussian}
If $Z\sim\mathcal N(0,I_d)$, then, for $a\geq2$ and $\lambda\geq0$,
\begin{equation}
\label{app:adaptive:eq:gaussian}
\E\|Z\|^a\leq(d+a)^{a/2},
\qquad
\E e^{\lambda\|Z\|}\leq e^{\lambda\sqrt d+\lambda^2/2},
\qquad
\E\!\left[\|Z\|^2e^{\lambda\|Z\|}\right]
\leq(d+1)e^{\lambda\sqrt d+\lambda^2}.
\end{equation}
\end{lemma}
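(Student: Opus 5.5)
The statement is Lemma~\ref{app:adaptive:lem:gaussian}, with three Gaussian estimates. I would prove them separately, using only the chi-square structure of $\|Z\|^2$ and the concentration of the Lipschitz function $z\mapsto\|z\|$.

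\emph{First bound.} The variable $\|Z\|^2$ is $\chi^2_d$, so for $a\geq2$,
\[
\E\|Z\|^a=2^{a/2}\frac{\Gamma(d/2+a/2)}{\Gamma(d/2)}.
\]
Set $m=a/2\geq1$. It then suffices to show $2^m\Gamma(d/2+m)/\Gamma(d/2)\leq(d+2m)^m$.
\begin{itemize}
\item \emph{Integer $m$.} The left side is $\prod_{j=0}^{m-1}(d+2j)$, and each factor is at most $d+2m$.
\item \emph{General $m$.} Write $m=n+\theta$ with $n=\lfloor m\rfloor\geq1$ and $\theta\in[0,1)$. The integer factors give $\prod_{j=0}^{n-1}(d+2j)\leq(d+2m)^n$. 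The remaining ratio $\Gamma(x+\theta)/\Gamma(x)$, with $x=d/2+n$, is at most $x^\theta$ by log-convexity of $\Gamma$ (Gautschi's inequality).
\end{itemize}
Combining the two pieces gives a bound of $(d+2m)^n(d+2n)^\theta\leq(d+2m)^m$.

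\emph{Second bound.} I would use Gaussian concentration. Since $\|\cdot\|$ is $1$-Lipschitz, the sub-Gaussian Laplace bound gives
\[
\E e^{\lambda(\|Z\|-\E\|Z\|)}\leq e^{\lambda^2/2}.
\]
Jensen's inequality gives $\E\|Z\|\leq\sqrt{\E\|Z\|^2}=\sqrt d$. Together these yield $\E e^{\lambda\|Z\|}\leq e^{\lambda\sqrt d+\lambda^2/2}$.

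\emph{Third bound.} I would apply Cauchy--Schwarz:
\[
\E\bigl[\|Z\|^2e^{\lambda\|Z\|}\bigr]\leq(\E\|Z\|^4)^{1/2}(\E e^{2\lambda\|Z\|})^{1/2}.
\]
The exact fourth moment is $\E\|Z\|^4=d(d+2)\leq(d+1)^2$, since $d^2+2d\leq d^2+2d+1$. The second bound with $2\lambda$ gives $(\E e^{2\lambda\|Z\|})^{1/2}\leq e^{\lambda\sqrt d+\lambda^2}$. The product is exactly $(d+1)e^{\lambda\sqrt d+\lambda^2}$, matching the stated constant.

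The main obstacle is the first bound for non-integer $a$, which needs the Gamma-ratio inequality; the other two bounds follow directly from standard concentration.
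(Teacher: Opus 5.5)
Your proof is correct. The third bound is exactly the paper's argument: Cauchy--Schwarz, $\E\|Z\|^4=d(d+2)\leq(d+1)^2$, and the exponential bound at $2\lambda$. The first two bounds follow a different route.

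For the moment bound, the paper never uses the $\chi^2_d$ Gamma formula. With $R=\|Z\|$, it uses the radial integration-by-parts recursion $\E R^a=(d+a-2)\E R^{a-2}$ and then H\"older in the form $\E R^{a-2}\leq(\E R^a)^{(a-2)/a}$. This self-bounds to $\E R^a\leq(d+a-2)^{a/2}$ for every real $a>2$ at once. You instead peel off the integer factors of the Gamma ratio and control the fractional remainder by log-convexity of $\Gamma$. The principle is the same (log-convexity of the moments in the paper, of $\Gamma$ in yours), but the paper avoids your integer/fractional case split.

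For the exponential moment, the paper derives $M'(\lambda)/M(\lambda)\leq\lambda+\sqrt d$ for $M(\lambda)=\E e^{\lambda\|Z\|}$ by elementary means. It uses a second radial integration by parts, the identity $(d-1)\E\|Z\|^{-1}=\E\|Z\|$, and the nonpositive covariance of the increasing $e^{\lambda R}$ with the decreasing $R^{-1}$. The case $d=1$ is handled separately through the boundary term $\sqrt{2/\pi}$. Your appeal to Gaussian concentration of the $1$-Lipschitz map $z\mapsto\|z\|$ is shorter and needs no $d=1$ case, but it imports a much deeper result, and the constant matters. You need the sharp Tsirelson--Ibragimov--Sudakov/Herbst bound $\E e^{\lambda(f-\E f)}\leq e^{\lambda^2/2}$, which comes from the Gaussian log-Sobolev inequality. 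A weaker version such as the Maurey--Pisier constant $\pi^2\lambda^2/8$ would not give the stated estimate, so cite the sharp form explicitly.

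In short, the paper's route is fully elementary and self-contained, while yours trades that for brevity by invoking two standard facts.
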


\begin{proof}
Let $R:=\|Z\|$, whose density on $[0,\infty)$ is proportional to
$r^{d-1}e^{-r^2/2}$. Radial integration by parts gives
\[
 \E R^a=(d+a-2)\E R^{a-2}.
\]
For $a>2$, H\"older's inequality with conjugate exponents
$a/(a-2)$ and $a/2$ gives
\[
 \E R^{a-2}\leq(\E R^a)^{(a-2)/a}.
\]
Combining the two displays yields
$\E R^a\leq(d+a-2)^{a/2}\leq(d+a)^{a/2}$. For $a=2$, the same claimed
bound follows from $\E R^2=d$. For $d>1$, let
$M(\lambda):=\E e^{\lambda\|Z\|}$. A second radial
integration by parts gives
\[
M'(\lambda)
=\lambda M(\lambda)+(d-1)\E[\|Z\|^{-1}e^{\lambda\|Z\|}].
\]
The covariance of an increasing and a decreasing function of the same real
random variable is nonpositive, and therefore
\[
\frac{M'(\lambda)}{M(\lambda)}
\leq\lambda+(d-1)\E\|Z\|^{-1}
=\lambda+\E\|Z\|
\leq\lambda+\sqrt d.
\]
For $d=1$, the integration by parts boundary term is $\sqrt{2/\pi}$, which
gives the same claimed bound with $\sqrt d=1$.  Integration from zero to
$\lambda$ proves the exponential-moment inequality.  Finally,
Cauchy--Schwarz, $\E\|Z\|^4=d(d+2)\leq(d+1)^2$, and the exponential bound at
$2\lambda$ prove the last inequality.
\end{proof}

The next lemma is not required to complete the exact gradient analysis. We
state it here because it bounds the discretization error under generalized
smoothness for any approximate drift. This result is used directly in the
proof of the robust adaptive method. Its coefficients distinguish the
constant terms from those involving the population gradient or the
approximate drift. The frozen gradient bound under global smoothness in the
proof of \citet[Theorem~4]{balasubramanian22a} provides the basis for this
result. The lemma extends that argument to generalized smoothness and a
general approximate drift.

Recall $\omega:=L_1/(1+L_0)$, which is the coefficient of the force term
in the adaptive step size. In the bound below, $\widetilde{A}_\gamma$ is
the constant term, $\widetilde{U}_\gamma$ multiplies the squared
population gradient, and $\widetilde{V}_\gamma$ multiplies the squared
approximate drift. Their formulas depend on the smoothness regime.
If $L_1=0$, they are
\[
\widetilde{A}_\gamma(h)=L_0^2dh,\qquad
\widetilde{U}_\gamma(h)=0,\qquad
\widetilde{V}_\gamma(h)=\tfrac13L_0^2h^2.
\]
If $L_1>0$ and $0<\gamma<1$, the coefficients are
\begin{align*}
\widetilde{A}_\gamma(h)
&=3\left[L_0^2(1+2^{\gamma^2/(1-\gamma)})^2
       +3^{2\gamma}2^{2\gamma^2/(1-\gamma)}L_1^2\right]dh\\
&\quad+3^{2\gamma+1}2^{(2\gamma^2+\gamma+2)/(1-\gamma)}
       (1-\gamma)^{2\gamma/(1-\gamma)}L_1^{2/(1-\gamma)}
       \left[\left(d+\frac{2}{1-\gamma}\right)h\right]^{1/(1-\gamma)},\\
\widetilde{U}_\gamma(h)
&=3^{2\gamma}2^{2\gamma^2/(1-\gamma)}
       \left[\gamma(1+L_0)^2h^2+3L_1^2dh\right],\\
\widetilde{V}_\gamma(h)
&=\left[L_0^2(1+2^{\gamma^2/(1-\gamma)})^2
 +(1-\gamma)3^{2\gamma}2^{2\gamma^2/(1-\gamma)}(1+L_0)^2\right]h^2\\
&\quad+3^{2\gamma+1}2^{(2\gamma^2+\gamma+1)/(1-\gamma)}
       (1-\gamma)^{2\gamma/(1-\gamma)}[(1+L_0)h]^{2/(1-\gamma)}.
\end{align*}
For $L_1>0$ and $\gamma=1$, the Gaussian exponential bound introduces
\[
E_1^{\mathrm{ex}}(h)
:=\exp\{2(1+L_0)h+2\sqrt2L_1\sqrt{dh}+8L_1^2h\},
\]
and the corresponding coefficients are
\begin{align*}
\widetilde{A}_1(h)&=4E_1^{\mathrm{ex}}(h)L_0^2(d+1)h,\\
\widetilde{U}_1(h)&=E_1^{\mathrm{ex}}(h)
       \{\tfrac43(1+L_0)^2h^2+4L_1^2(d+1)h\},\\
\widetilde{V}_1(h)&=\tfrac43E_1^{\mathrm{ex}}(h)L_0^2h^2.
\end{align*}

\begin{lemma}[Adaptive-step discretization bound for approximate drifts]
\label{app:adaptive-power-freezing}
Let $\mathcal F$ be a sigma-field and let $X,F$ be
$\mathcal F$-measurable $\mathbb R^d$-valued random vectors.  Let $H,\ell$
be nonnegative $\mathcal F$-measurable random variables satisfying
\[
0\leq\ell\leq H
\leq\frac{h}{1+\omega\|F\|^\gamma}.
\]
Let $(W_s)_{s\geq0}$ be a standard $d$-dimensional Brownian motion
independent of $\mathcal F$, and define
\[
\mathcal D_s:=sF+\sqrt2W_s.
\]
Use the coefficients $\widetilde{A}_\gamma,\widetilde{U}_\gamma,
\widetilde{V}_\gamma$ defined above for the applicable regime.
Then, almost surely,
\begin{equation*}
\E\!\left[
 \int_0^\ell
 \|\nabla V(X+\mathcal D_s)-\nabla V(X)\|^2\,\mathrm ds
 \;\middle|\;\mathcal F\right]
\leq
\ell\left\{
 \widetilde{A}_\gamma(h)
 +\widetilde{U}_\gamma(h)\|\nabla V(X)\|^2
 +\widetilde{V}_\gamma(h)\|F\|^2
\right\}.
\end{equation*}
Each coefficient is nondecreasing in $h$ and, for $0<h\leq1$, is at most
$h$ times its value at one.
\end{lemma}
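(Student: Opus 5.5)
The plan is to condition on $\mathcal F$, so that $X,F,H,\ell$ are frozen and only the Brownian motion $W$ is random. We then bound $\E[\|\nabla V(X+\mathcal D_s)-\nabla V(X)\|^2\mid\mathcal F]$ pointwise in $s\in[0,\ell]$ and integrate. Every bound has the form (constant) $\times s^{j}$ with $j\geq 0$, so $s\leq\ell\leq H\leq h$ turns $\int_0^\ell$ into $\ell$ times the displayed coefficient at $h$. The two inputs are the endpoint estimates of Lemma~\ref{app:adaptive:lem:endpoint} with $x=X$, $y=X+\mathcal D_s$, and the Gaussian moments of Lemma~\ref{app:adaptive:lem:gaussian} applied to $W_s\overset{d}{=}\sqrt s Z$. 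Throughout we write $\|\mathcal D_s\|^a\leq 2^{a-1}(s^a\|F\|^a+2^{a/2}\|W_s\|^a)$.

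\emph{Case $L_1=0$.} Global $L_0$-smoothness gives $\|\nabla V(X+\mathcal D_s)-\nabla V(X)\|^2\leq L_0^2\|\mathcal D_s\|^2$. Since $W_s$ is centered and independent of $\mathcal F$, we have $\E[\|\mathcal D_s\|^2\mid\mathcal F]=s^2\|F\|^2+2ds$ exactly. Integrating over $[0,\ell]$ gives $L_0^2(\ell^3\|F\|^2/3+d\ell^2)$, which is at most $\ell(\tfrac13L_0^2h^2\|F\|^2+L_0^2dh)$.

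\emph{Case $L_1>0$, $0<\gamma<1$.} We use~(\ref{app:adaptive:eq:endpoint-square}), which has three terms.
\begin{itemize}
\item The $L_0^2\|\mathcal D_s\|^2$ term is handled as above.
\item The cross term $L_1^2\|\nabla V(X)\|^{2\gamma}\|\mathcal D_s\|^2$ is where the adaptive step is essential. We split $\|\mathcal D_s\|^2\leq 2s^2\|F\|^2+4\|W_s\|^2$.
\begin{itemize}
\item For the drift piece, the constraint $s\leq h/(1+\omega\|F\|^\gamma)$ gives $L_1 s\|F\|^\gamma\leq (1+L_0)h$. Hence
\[L_1^2s^2\|F\|^2\|\nabla V(X)\|^{2\gamma}\leq (1+L_0)^2h^2\,\|\nabla V(X)\|^{2\gamma}\|F\|^{2-2\gamma}.\]
Young's inequality with exponents $1/\gamma$ and $1/(1-\gamma)$ then gives $\|\nabla V(X)\|^{2\gamma}\|F\|^{2-2\gamma}\leq\gamma\|\nabla V(X)\|^2+(1-\gamma)\|F\|^2$. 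This yields the $\gamma(1+L_0)^2h^2$ part of $\widetilde U_\gamma$ and the $(1-\gamma)(1+L_0)^2h^2$ part of $\widetilde V_\gamma$.
\item For the Brownian piece, $\E\|W_s\|^2=ds$. Bounding $\|\nabla V(X)\|^{2\gamma}\leq 1+\|\nabla V(X)\|^2$ gives the $L_1^2dh$ contributions to $\widetilde A_\gamma$ and $\widetilde U_\gamma$.
\end{itemize}
\item The top-order term $L_1^{2/(1-\gamma)}\|\mathcal D_s\|^{2/(1-\gamma)}$ also splits into two pieces.
\begin{itemize}
\item The Brownian piece uses $\E\|Z\|^a\leq(d+a)^{a/2}$ with $a=2/(1-\gamma)$. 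This produces $[(d+\tfrac{2}{1-\gamma})h]^{1/(1-\gamma)}$.
\item The drift piece writes $(s\|F\|)^{2/(1-\gamma)}$ and again uses the step constraint to convert powers of $\|F\|^{\gamma}$ into powers of $(1+L_0)h$. This leaves a multiple of $\|F\|^2$, with coefficient of order $[(1+L_0)h]^{2/(1-\gamma)}$.
\end{itemize}
\end{itemize}

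\emph{Case $\gamma=1$.} Here we use~(\ref{app:adaptive:eq:endpoint-one}) squared:
\[(L_0+L_1\|\nabla V(X)\|)^2\,\|\mathcal D_s\|^2e^{2L_1\|\mathcal D_s\|}.\]
We bound $e^{2L_1 s\|F\|}\leq e^{2(1+L_0)h}$ by the step constraint. The factor $\E[\|W_s\|^{j}e^{2\sqrt2L_1\|W_s\|}]$ for $j\in\{0,2\}$ is controlled by the exponential and weighted-exponential bounds of Lemma~\ref{app:adaptive:lem:gaussian} with $\lambda=2\sqrt2L_1\sqrt s$. This produces $E_1^{\mathrm{ex}}(h)$. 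The remaining products $L_1^2\|\nabla V(X)\|^2s^2\|F\|^2$ are again reduced with $L_1s\|F\|\leq(1+L_0)h$, which gives the $\tfrac43(1+L_0)^2h^2$ term in $\widetilde U_1$.

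Finally, each coefficient is a sum of nonnegative powers $h^{j}$ with $j\geq1$, with nondecreasing factors such as $E_1^{\mathrm{ex}}$. Hence each is nondecreasing in $h$ and at most $h$ times its value at $1$ when $h\leq 1$.

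The main obstacle is the bookkeeping of the cross and top-order terms. The goal there is to show that every appearance of $\|F\|$ beyond the quadratic power is absorbed by the adaptive step, with no leftover superquadratic dependence on $\|F\|$ or $\|\nabla V(X)\|$. I expect the $\gamma$-dependent Young splitting and the $2/(1-\gamma)$-power drift term to need care to land exactly on the stated constants.
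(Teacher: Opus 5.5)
Your proposal follows the paper's proof essentially step for step. It conditions on $\mathcal F$, applies the endpoint bounds of Lemma~\ref{app:adaptive:lem:endpoint} at $x=X$, and uses the Gaussian moment and exponential bounds of Lemma~\ref{app:adaptive:lem:gaussian}. It then applies weighted Young with exponents $1/\gamma$ and $1/(1-\gamma)$, absorbs excess powers of $\|F\|$ through $\ell\|F\|^\gamma\leq h/\omega$, and uses the split $\|\mathcal D_s\|^2\leq2s^2\|F\|^2+4s\|Z\|^2$ together with the factor $E_1^{\mathrm{ex}}(h)$ at $\gamma=1$.

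The one correction is the cross term $L_1^2\|\nabla V(X)\|^{2\gamma}\|\mathcal D_s\|^2$ when $0<\gamma<1$. Splitting $\|\mathcal D_s\|^2\leq2s^2\|F\|^2+4\|W_s\|^2$ there doubles its contributions to $\widetilde A_\gamma$, $\widetilde U_\gamma$, and $\widetilde V_\gamma$. Bounding $s^2\leq h^2$ pointwise before integrating loses a further factor of $3$ on the drift piece, giving $3^{2\gamma+1}$ instead of $3^{2\gamma}$. To land on the stated coefficients, do as the paper does, and as you already did for the $L_0$ term. Since the prefactor is $\mathcal F$-measurable, use the exact identity $\E[\|\mathcal D_s\|^2\mid\mathcal F]=s^2\|F\|^2+2ds$. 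Integrate it to $\ell^3\|F\|^2/3+d\ell^2$, and only then apply $\ell^2\|F\|^{2\gamma}\leq h^2/\omega^2$ and $\|\nabla V(X)\|^{2\gamma}\leq1+\|\nabla V(X)\|^2$.
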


\begin{proof}
Condition on $\mathcal F$ and write
$g:=\|\nabla V(X)\|$ and $w:=\|F\|$. All expectations in this proof
are conditional on $\mathcal F$. Direct Gaussian calculation gives
\begin{equation}
\label{app:adaptive:eq:approx-second moment}
\E[\|\mathcal D_s\|^2\mid\mathcal F]=s^2w^2+2ds.
\end{equation}

Suppose first that $L_1>0$ and $0<\gamma<1$.  Substituting
~(\ref{app:adaptive:eq:approx-second moment}) into the first two terms of
~(\ref{app:adaptive:eq:endpoint-square}), integrating, and using the
step-size bound gives
\begin{align*}
&3L_0^2(1+2^{\gamma^2/(1-\gamma)})^2
\int_0^\ell\E\|\mathcal D_s\|^2\,\mathrm ds \leq\ell L_0^2(1+2^{\gamma^2/(1-\gamma)})^2
 (h^2w^2+3dh),\\
&3^{2\gamma+1}2^{2\gamma^2/(1-\gamma)}L_1^2g^{2\gamma}
\int_0^\ell\E\|\mathcal D_s\|^2\,\mathrm ds \leq
\ell\,2^{2\gamma^2/(1-\gamma)}3^{2\gamma}(1+L_0)^2h^2
\{\gamma g^2+(1-\gamma)w^2\}\\
&\qquad\qquad\qquad\qquad\quad\qquad\qquad\qquad\qquad+3^{2\gamma+1}2^{2\gamma^2/(1-\gamma)}
 \ell L_1^2dh(1+g^2).
\end{align*}
For the first term on the right of the second inequality, we used
\[
\ell^2g^{2\gamma}w^2
\leq h^2\omega^{-2}g^{2\gamma}w^{2-2\gamma}
\leq h^2\omega^{-2}\{\gamma g^2+(1-\gamma)w^2\},
\]
where the last inequality is weighted Young's inequality.  For the Brownian
term, we used $g^{2\gamma}\leq1+g^2$.

For every $a\geq2$, Lemma~\ref{app:adaptive:lem:gaussian} gives
\begin{equation*}
\E\|\mathcal D_s\|^a
\leq2^{a-1}s^aw^a
+2^{3a/2-1}s^{a/2}(d+a)^{a/2}.
\end{equation*}
Set $a=2/(1-\gamma)$, integrate over $[0,\ell]$, and bound the integration
denominators below by one. The adaptive step bound gives
\[
\ell^{2/(1-\gamma)}w^{2/(1-\gamma)}
\leq (h/\omega)^{2/(1-\gamma)}w^2,
\qquad
\ell^{1/(1-\gamma)}\leq h^{1/(1-\gamma)}.
\]
These are exactly the final terms in the coefficients
$\widetilde{V}_\gamma$ and $\widetilde{A}_\gamma$ defined above, proving the claim
when $0<\gamma<1$.

Now suppose that $L_1>0$ and $\gamma=1$.  Since
$sw\leq h/\omega=(1+L_0)h/L_1$, Lemma~\ref{app:adaptive:lem:gaussian}
implies
\[
\E\!\left[
 \|\mathcal D_s\|^2e^{2L_1\|\mathcal D_s\|}
 \;\middle|\;\mathcal F\right]
\leq E_1^{\mathrm{ex}}(h)\{2s^2w^2+4s(d+1)\}.
\]
Square~(\ref{app:adaptive:eq:endpoint-one}), use
$(L_0+L_1g)^2\leq2(L_0^2+L_1^2g^2)$, and integrate to obtain
\[
\E\!\left[
 \int_0^\ell
 \|\nabla V(X+\mathcal D_s)-\nabla V(X)\|^2\,\mathrm ds
 \;\middle|\;\mathcal F\right]
\leq
\ell E_1^{\mathrm{ex}}(h)(L_0^2+L_1^2g^2)
\left\{\frac43\ell^2w^2+4(d+1)\ell\right\}.
\]
For terms multiplied by $L_0^2$, use $\ell\leq h$. For the mixed term,
use $\ell^2w^2\leq h^2\omega^{-2}$. This gives the coefficients
$\widetilde{A}_1,\widetilde{U}_1,\widetilde{V}_1$ defined above.

Finally, if $L_1=0$, global $L_0$-smoothness and
~(\ref{app:adaptive:eq:approx-second moment}) give directly
\[
\E\!\left[
 \int_0^\ell
 \|\nabla V(X+\mathcal D_s)-\nabla V(X)\|^2\,\mathrm ds
 \;\middle|\;\mathcal F\right]
\leq
\ell\left\{L_0^2dh+\frac13L_0^2h^2w^2\right\}.
\]
This gives the coefficients defined above for $L_1=0$. The stated
monotonicity follows directly from the formulas. At $\gamma=1$,
$E_1^{\mathrm{ex}}(h)$ is increasing.
\end{proof}

\subsubsection{Discretization error for adaptive LMC with exact gradients}
\label{app:proof-adaptive-freezing}

We now exploit that the drift and the step-size rule use the same exact
gradient.  This produces sharper coefficients than the reusable
approximate drift lemma and is essential for recovering the globally smooth
rate when $L_1=0$. We use the coefficients $A_\gamma,B_\gamma,Q_\gamma$
defined in Subsection~\ref{app:notation-constants}.
We restate their formulas where they are used in the proof.

\begin{lemma}[Discretization error for adaptive LMC with exact gradients]
\label{app:adaptive:lem:exact-freezing}
Fix $T>0$ and consider a path on $[0,T]$ formed by finitely many intervals of
the adaptive step method~(\ref{eq:adaptive-stepsize}), where the decision to
take an interval is made using its starting history. If these intervals end
at a time $\sigma<T$, continue the path by the exact Langevin diffusion on
$[\sigma,T]$. Define the piecewise force, with the sign convention
of~(\ref{eq:common-lmc}), by
\begin{equation*}
F_t^{\mathrm{fr}}
:=
\begin{cases}
F_k=-\nabla V(X_{\tau_k}),
  &t\in[\tau_k,\tau_k+\ell_k),\\
-\nabla V(X_t),
  &t\in[\sigma,T]\text{ on the exact continuation}.
\end{cases}
\end{equation*}
A final adaptive interval cut off at $T$ uses the first line until $T$.
Define
\begin{equation}
  e_T :=\frac1T\E\int_0^T\|\nabla V(X_t)+F_t^{\mathrm{fr}}\|^2\,\mathrm dt \quad \text{and} \quad m_T:=\frac1T\E\int_0^T\|F_t^{\mathrm{fr}}\|^2\,\mathrm dt.
\end{equation}
Then
\begin{equation}
\label{app:adaptive:eq:exact-freezing}
e_T
\leq A_\gamma(h)+B_\gamma(h)m_T+Q_\gamma(h)m_T^{\gamma/2}.
\end{equation}
\end{lemma}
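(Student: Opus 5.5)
The exact continuation contributes nothing to $e_T$, since on $[\sigma,T]$ we have $\nabla V(X_t)+F^{\mathrm{fr}}_t=0$. So I only need to bound, for each adaptive interval, the conditional integral
\[
\E\Big[\int_0^{\ell_k}\|\nabla V(X_{\tau_k}+\mathcal D_s)-\nabla V(X_{\tau_k})\|^2\,\mathrm ds\,\Big|\,\mathcal F_k\Big],
\qquad \mathcal D_s=-s\nabla V(X_{\tau_k})+\sqrt2(W_{\tau_k+s}-W_{\tau_k}).
\]
The key bound is
\[
\ell_k\{A_\gamma(h)+B_\gamma(h)g_k^2+Q_\gamma(h)g_k^\gamma\}, \qquad g_k:=\|\nabla V(X_{\tau_k})\|=\|F_k\|.
\]
The plan is to redo the proof of Lemma~\ref{app:adaptive-power-freezing} with $F=-\nabla V(X)$, so that $w=g$. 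The step-size bound $\ell\le h/(1+\omega g^\gamma)$ then lets me treat the mixed term $L_1^2g^{2\gamma}\E\|\mathcal D_s\|^2$ more sharply than the weighted Young split used there. This sharper treatment is where $Q_\gamma$ comes from, and it also gives the smooth-case constants.

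\textbf{Case $0<\gamma<1$.} I start from the squared endpoint inequality~(\ref{app:adaptive:eq:endpoint-square}) and use $\E\|\mathcal D_s\|^2=s^2g^2+2ds$. The $L_0^2$ term gives $\ell L_0^2(\cdot)^2(h^2g^2+3dh)$. For the mixed term, the drift part $s^2g^{2+2\gamma}$ integrates to at most $\ell\,\ell^2 g^{2\gamma}g^2/3$. Using $\ell g^\gamma\le h/\omega=(1+L_0)h/L_1$, this becomes $\ell (1+L_0)^2h^2g^2/L_1^2$, and multiplying by $L_1^2$ yields the $(1+L_0)^2h^2$ piece of $B_\gamma$. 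The Brownian part is $L_1^2g^{2\gamma}\cdot d\ell$. Here I use $\ell\le h$ once and $L_1g^\gamma\ell\le(1+L_0)h$ partially: writing $L_1^2g^{2\gamma}\ell^2 d\le L_1 g^\gamma\,(1+L_0)hd\,\ell$ produces the $L_1(1+L_0)dh\,g^\gamma$ term, which is $Q_\gamma$. The high-order term $\|\mathcal D_s\|^{2/(1-\gamma)}$ is handled exactly as in Lemma~\ref{app:adaptive-power-freezing}, giving the final pieces of $A_\gamma$ and $B_\gamma$.

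\textbf{Case $\gamma=1$.} I use~(\ref{app:adaptive:eq:endpoint-one}) together with the Gaussian exponential bound of Lemma~\ref{app:adaptive:lem:gaussian}, again via $sg\le(1+L_0)h/L_1$. The term $L_1^2g^2\ell$ is bounded by $L_1g(1+L_0)h$, giving $Q_1$.

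\textbf{Case $L_1=0$.} Global smoothness gives $L_0^2(s^2g^2+2ds)$. Integrating yields $\ell(dL_0^2h+L_0^2h^2g^2/3)$, which matches~(\ref{app:eq:exact-coefficients-smooth}).

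Finally I sum over intervals and take expectations, which gives $T e_T\le A_\gamma T+B_\gamma\E\sum\ell_kg_k^2+Q_\gamma\E\sum\ell_kg_k^\gamma$. The last sum is $\E\int\|F^{\mathrm{fr}}\|^\gamma$ over the adaptive part, and Jensen in the normalized measure $T^{-1}\,\mathrm d t\,\mathrm d\mathbb P$ bounds it by $T m_T^{\gamma/2}$. Measurability is not an issue: since the decision to take each interval is made from its starting history, the tower property applies interval by interval. The number of intervals is finite, so the sums are legitimate.

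The main obstacle is tracking the constants in the power case so that they match $A_\gamma$, $B_\gamma$ and $Q_\gamma$ as defined. In particular, the Gaussian high-moment term must be split so that its drift part lands in $B_\gamma$ through $(\ell g)^{2/(1-\gamma)}\le((1+L_0)h)^{2/(1-\gamma)}g^2$. That inequality needs $\ell g\le (h/\omega)g^{1-\gamma}$, and the exponent bookkeeping there is delicate.
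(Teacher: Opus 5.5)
Your proposal is correct and matches the paper's proof essentially step for step. You condition on the history at each interval start and use $\omega\ell g^\gamma\le h$ to send the drift part of the $L_1^2g^{2\gamma}$ cross term into the $(1+L_0)^2h^2g^2$ piece of $B_\gamma$ and its Brownian part $L_1^2g^{2\gamma}d\ell^2$ into $\ell\,Q_\gamma(h)g^\gamma$. You then absorb the high-order displacement moment via $(\ell g)^{2/(1-\gamma)}\le(h/\omega)^{2/(1-\gamma)}g^2$, treat $\gamma=1$ with the exponential Gaussian bound, sum over intervals, and apply Jensen to obtain $m_T^{\gamma/2}$, exactly as the paper does.

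One small clarification of your framing: with $w=g$, the weighted Young split of Lemma~\ref{app:adaptive-power-freezing} is already lossless for the drift part. The entire gain over that lemma therefore comes from the Brownian cross term, where $\ell g^{2\gamma}\le(h/\omega)g^\gamma$ replaces the cruder $\ell\le h$, $g^{2\gamma}\le1+g^2$.
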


\begin{proof}
On an adaptive interval, condition on the complete information at time
$\tau_k$, and let $x$ denote the resulting value of $X_{\tau_k}$.  Put
$\ell:=\ell_k$ and $g:=\|\nabla V(x)\|$. All expectations until the
summation over intervals below are conditional on this history. For every fixed
$0\leq s\leq\ell$, equation~(\ref{app:adaptive:eq:interpolation}) gives
\begin{equation}
\label{app:adaptive:eq:exact-displacement}
X_{\tau_k+s}-X_{\tau_k}
\stackrel{\mathrm d}{=}
-s\nabla V(x)+\sqrt{2s}Z,
\qquad Z\sim\mathcal N(0,I_d).
\end{equation}
The step-size rule and final-interval truncation imply
\begin{equation}
\label{app:adaptive:eq:step-bound}
0\leq\ell\leq H_k
=\frac{h}{1+\omega g^\gamma}\leq h,
\qquad
\omega\ell g^\gamma\leq h.
\end{equation}
Consequently,
\begin{equation}
\label{app:adaptive:eq:exact-second moment}
\E\|X_{\tau_k+s}-X_{\tau_k}\|^2=s^2g^2+2ds.
\end{equation}

If $L_1=0$, global $L_0$-smoothness and
~(\ref{app:adaptive:eq:exact-second moment}) give
\begin{equation}
\E\int_0^\ell
\|\nabla V(X_{\tau_k+s})-\nabla V(X_{\tau_k})\|^2\,\mathrm ds\leq
L_0^2\left(\frac{\ell^3g^2}{3}+d\ell^2\right)
\leq
\ell\left\{dL_0^2h+\frac13L_0^2h^2g^2\right\}.
\label{app:adaptive:eq:local-smooth}
\end{equation}
Now suppose $L_1>0$ and $0<\gamma<1$.  For every $a\geq2$, the triangle
inequality, scalar convexity, and Lemma~\ref{app:adaptive:lem:gaussian} give
\begin{align}
\E\|X_{\tau_k+s}-X_{\tau_k}\|^a
&\leq2^{a-1}s^ag^a
+2^{3a/2-1}s^{a/2}(d+a)^{a/2}.
\label{app:adaptive:eq:exact-moment}
\end{align}
For each $s\in[0,\ell]$, apply~(\ref{app:adaptive:eq:endpoint-square})
from Lemma~\ref{app:adaptive:lem:endpoint} with $x=X_{\tau_k}$ and
$y=X_{\tau_k+s}$. The bound holds for every pair of points, so it applies
to these states on each realization of the path. Using
$\|\nabla V(X_{\tau_k})\|=g$, we obtain
\[
\begin{aligned}
\|\nabla V(X_{\tau_k+s})-\nabla V(X_{\tau_k})\|^2
&\leq3L_0^2(1+2^{\gamma^2/(1-\gamma)})^2
 \|X_{\tau_k+s}-X_{\tau_k}\|^2\\
&\quad+3^{2\gamma+1}2^{2\gamma^2/(1-\gamma)}L_1^2g^{2\gamma}
 \|X_{\tau_k+s}-X_{\tau_k}\|^2\\
&\quad+3^{2\gamma+1}2^{2\gamma^2/(1-\gamma)}L_1^{2/(1-\gamma)}
 (1-\gamma)^{2\gamma/(1-\gamma)}
 \|X_{\tau_k+s}-X_{\tau_k}\|^{2/(1-\gamma)}.
\end{aligned}
\]
Conditioning on the history at $\tau_k$ fixes $g$ and $\ell$, so the factors
outside the displacement norms can be taken outside the conditional
expectations. Taking these expectations and integrating over $s\in[0,\ell]$
gives the following bound. Tonelli's theorem permits interchanging the
expectation and integral because the integrands are nonnegative.
\begin{align}
&\E\int_0^\ell
 \|\nabla V(X_{\tau_k+s})-\nabla V(X_{\tau_k})\|^2\,\mathrm ds
\notag\\
&\quad\leq3L_0^2(1+2^{\gamma^2/(1-\gamma)})^2
 \int_0^\ell\E\|X_{\tau_k+s}-X_{\tau_k}\|^2\,\mathrm ds
\notag\\
&\quad+3^{2\gamma+1}2^{2\gamma^2/(1-\gamma)}L_1^2g^{2\gamma}
 \int_0^\ell\E\|X_{\tau_k+s}-X_{\tau_k}\|^2\,\mathrm ds
\notag\\
&\quad+3^{2\gamma+1}2^{2\gamma^2/(1-\gamma)}L_1^{2/(1-\gamma)}
 (1-\gamma)^{2\gamma/(1-\gamma)}
 \int_0^\ell\E\|X_{\tau_k+s}-X_{\tau_k}\|^{2/(1-\gamma)}\,\mathrm ds.
\label{app:adaptive:eq:endpoint-decomposition}
\end{align}
By~(\ref{app:adaptive:eq:exact-second moment}) and
~(\ref{app:adaptive:eq:step-bound}), the first term on the right of
~(\ref{app:adaptive:eq:endpoint-decomposition}) is at most
\begin{equation}
\label{app:adaptive:eq:first-contribution}
\ell L_0^2(1+2^{\gamma^2/(1-\gamma)})^2(h^2g^2+3dh).
\end{equation}
For the second term,
\begin{equation*}
\ell^2g^{2+2\gamma}\leq\frac{h^2}{\omega^2}g^2
\quad \text{and} \quad
\ell g^{2\gamma}\leq\frac h\omega g^\gamma,
\end{equation*}
and hence the second term is at most
\begin{equation}
\label{app:adaptive:eq:second-contribution}
\ell\,2^{2\gamma^2/(1-\gamma)}3^{2\gamma}
\left\{(1+L_0)^2h^2g^2+3L_1(1+L_0)dhg^\gamma\right\}.
\end{equation}
For the third term, set $a=2/(1-\gamma)$ in
~(\ref{app:adaptive:eq:exact-moment}) and integrate.  Bounding the two
integration denominators below by one gives
\begin{align}
&3^{2\gamma+1}2^{2\gamma^2/(1-\gamma)}L_1^{2/(1-\gamma)}
 (1-\gamma)^{2\gamma/(1-\gamma)}
\int_0^\ell\E\|X_{\tau_k+s}-X_{\tau_k}\|^{2/(1-\gamma)}\,\mathrm ds
\notag\\
&\quad\leq
\ell\,3^{2\gamma+1}2^{(2\gamma^2+\gamma+1)/(1-\gamma)}
 L_1^{2/(1-\gamma)}(1-\gamma)^{2\gamma/(1-\gamma)}(\ell g)^{2/(1-\gamma)}
\notag\\
&\qquad+
\ell\,3^{2\gamma+1}2^{(2\gamma^2+\gamma+2)/(1-\gamma)}
 L_1^{2/(1-\gamma)}(1-\gamma)^{2\gamma/(1-\gamma)}
 \left[\left(d+\frac{2}{1-\gamma}\right)\ell\right]^{1/(1-\gamma)}.
\label{app:adaptive:eq:third-before-step}
\end{align}
Since $2\gamma/(1-\gamma)=2/(1-\gamma)-2$,
~(\ref{app:adaptive:eq:step-bound}) implies
\begin{equation}
\label{app:adaptive:eq:third-step-powers}
\ell^{2/(1-\gamma)}g^{2/(1-\gamma)}
\leq(h/\omega)^{2/(1-\gamma)}g^2,
\qquad
\ell^{1/(1-\gamma)}\leq h^{1/(1-\gamma)}.
\end{equation}
Substituting~(\ref{app:adaptive:eq:third-step-powers}) into
~(\ref{app:adaptive:eq:third-before-step}) shows that the third term
is at most
\begin{equation}
\label{app:adaptive:eq:third-contribution}
\begin{aligned}
&\ell\,3^{2\gamma+1}2^{(2\gamma^2+\gamma+1)/(1-\gamma)}
 (1-\gamma)^{2\gamma/(1-\gamma)}[(1+L_0)h]^{2/(1-\gamma)}g^2\\
&\quad+
\ell\,3^{2\gamma+1}2^{(2\gamma^2+\gamma+2)/(1-\gamma)}
 L_1^{2/(1-\gamma)}(1-\gamma)^{2\gamma/(1-\gamma)}
 \left[\left(d+\frac{2}{1-\gamma}\right)h\right]^{1/(1-\gamma)}.
\end{aligned}
\end{equation}
Combining~(\ref{app:adaptive:eq:first-contribution}),
~(\ref{app:adaptive:eq:second-contribution}), and
~(\ref{app:adaptive:eq:third-contribution}) gives the coefficients
\begin{align*}
A_\gamma(h)
&=3L_0^2(1+2^{\gamma^2/(1-\gamma)})^2dh\\
&\quad+3^{2\gamma+1}2^{(2\gamma^2+\gamma+2)/(1-\gamma)}
 (1-\gamma)^{2\gamma/(1-\gamma)}L_1^{2/(1-\gamma)}
 \left[\left(d+\frac{2}{1-\gamma}\right)h\right]^{1/(1-\gamma)},\\
B_\gamma(h)
&=\left[L_0^2(1+2^{\gamma^2/(1-\gamma)})^2
 +3^{2\gamma}2^{2\gamma^2/(1-\gamma)}(1+L_0)^2\right]h^2\\
&\quad+3^{2\gamma+1}2^{(2\gamma^2+\gamma+1)/(1-\gamma)}
 (1-\gamma)^{2\gamma/(1-\gamma)}[(1+L_0)h]^{2/(1-\gamma)},\\
Q_\gamma(h)&=3^{2\gamma+1}2^{2\gamma^2/(1-\gamma)}L_1(1+L_0)dh,
\end{align*}
and therefore
\begin{equation}
  \E\int_0^\ell\|\nabla V(X_{\tau_k+s})-\nabla V(X_{\tau_k})\|^2\,\mathrm ds\leq
\ell\left\{
A_\gamma(h)+B_\gamma(h)g^2+Q_\gamma(h)g^\gamma
\right\}.
\label{app:adaptive:eq:local-interior}
\end{equation}

It remains to treat $L_1>0$ and $\gamma=1$.  Equations
~(\ref{app:adaptive:eq:exact-displacement}) and
~(\ref{app:adaptive:eq:step-bound}) imply
\begin{equation*}
L_1\|X_{\tau_k+s}-X_{\tau_k}\|
\leq(1+L_0)h+L_1\sqrt{2h}\|Z\|,
\qquad
\|X_{\tau_k+s}-X_{\tau_k}\|^2
\leq2s^2g^2+4s\|Z\|^2.
\end{equation*}
Recall the exponential factor
\[
E_1^{\mathrm{ex}}(h)
=\exp\{2(1+L_0)h+2\sqrt2L_1\sqrt{dh}+8L_1^2h\}.
\]
Lemma~\ref{app:adaptive:lem:gaussian} then gives
\begin{equation}
\E\!\left[\|X_{\tau_k+s}-X_{\tau_k}\|^2e^{2L_1\|X_{\tau_k+s}-X_{\tau_k}\|}\right]\leq E_1^{\mathrm{ex}}(h)\{2s^2g^2+4s(d+1)\}.
\label{app:adaptive:eq:one-exponential-moment}
\end{equation}
Square~(\ref{app:adaptive:eq:endpoint-one}), use
$(L_0+L_1g)^2\leq2(L_0^2+L_1^2g^2)$, and integrate
~(\ref{app:adaptive:eq:one-exponential-moment}) to obtain
\begin{equation}
\E\int_0^\ell
\|\nabla V(X_{\tau_k+s})-\nabla V(X_{\tau_k})\|^2\,\mathrm ds\leq
E_1^{\mathrm{ex}}(h)(L_0^2+L_1^2g^2)
\left\{\frac43\ell^3g^2+4(d+1)\ell^2\right\}.
\label{app:adaptive:eq:one-before-step}
\end{equation}
The step-size bound gives
\begin{equation*}
\ell\leq h,
\qquad
\ell^2g^2\leq h^2g^2,
\qquad
L_1^2\ell^2g^4\leq(1+L_0)^2h^2g^2,
\qquad
L_1^2\ell g^2\leq L_1(1+L_0)hg.
\end{equation*}
Substituting these bounds into~(\ref{app:adaptive:eq:one-before-step}) gives
\begin{equation}
\E\int_0^\ell
\|\nabla V(X_{\tau_k+s})-\nabla V(X_{\tau_k})\|^2\,\mathrm ds\leq
\ell\{A_1(h)+B_1(h)g^2+Q_1(h)g\},
\label{app:adaptive:eq:local-one}
\end{equation}
where $A_1(h)=4E_1^{\mathrm{ex}}(h)L_0^2(d+1)h$, $B_1(h)=\tfrac43E_1^{\mathrm{ex}}(h)
       \{L_0^2+(1+L_0)^2\}h^2$, and $Q_1(h)=4E_1^{\mathrm{ex}}(h)L_1(1+L_0)(d+1)h$. 

Take expectations over the histories at interval starts and sum
~(\ref{app:adaptive:eq:local-smooth}),
~(\ref{app:adaptive:eq:local-interior}), or
~(\ref{app:adaptive:eq:local-one}) over the adaptive intervals.  The
discretization error is zero on the exact continuation.  This gives
\[
e_T
\leq A_\gamma(h)+B_\gamma(h)m_T
+Q_\gamma(h)\frac1T\E\int_0^T
\|F_t^{\mathrm{fr}}\|^\gamma\,\mathrm dt.
\]
Jensen's inequality for the probability measure
$T^{-1}\mathrm dt\,\mathrm d\mathbb P$ yields
\begin{equation}
\label{app:adaptive:eq:frozen-jensen}
\frac1T\E\int_0^T\|F_t^{\mathrm{fr}}\|^\gamma\,\mathrm dt
\leq m_T^{\gamma/2}.
\end{equation}
This proves~(\ref{app:adaptive:eq:exact-freezing}).
\end{proof}

\subsubsection{Closure, nonaccumulation, and finite horizon bounds}
\label{app:proof-adaptive-closure}

We now use the preceding bounds and the constraint on the base step size to control the average discretization error $e_T$.

\begin{proof}[Proof of the finite horizon claims in
Theorem~\ref{thm:full-adaptive}]
We first perform the calculation when all terms are finite, and then justify
it using a finite step approximation. This order avoids assuming the
gradient integrability that the argument is meant to establish.
Recall the constant term in the gradient second moment bound in
Lemma~\ref{app:adaptive-fisher-gradient},
\[
C_\gamma
:=4dL_0+(2-\gamma)\gamma^{\gamma/(2-\gamma)}
                  (2dL_1)^{2/(2-\gamma)}.
\]
Write $M_T:=1+C_\gamma+K_0/T$ and
$\overline{\mathrm{FI}}_T:=T^{-1}\int_0^T
\mathrm{FI}(\mu_t\|\pi)\,\mathrm dt$.
For the interpolated process, the drift is $F_t^{\mathrm{fr}}$ and the
error action in Lemma~\ref{app:adaptive-kl} is $\mathcal A_T=Te_T$.
Dividing~(\ref{app:adaptive:eq:kl-dissipation}) by $T$ therefore gives
\[
\frac{\mathrm{KL}(\mu_T\|\pi)}{T}
+\frac34\overline{\mathrm{FI}}_T
\leq\frac{K_0}{T}+e_T.
\]
Since $\mathrm{KL}(\mu_T\|\pi)\geq0$, we obtain
\begin{equation}
\label{app:adaptive:eq:closure-fi}
\overline{\mathrm{FI}}_T
\leq\frac43\left(\frac{K_0}{T}+e_T\right).
\end{equation}

We next bound $m_T$, the average squared norm of the applied drift.
Write $F_t^{\mathrm{fr}}=-\nabla V(X_t)
+(F_t^{\mathrm{fr}}+\nabla V(X_t))$ and use
$\|u+v\|^2\leq2\|u\|^2+2\|v\|^2$ to obtain
\[
\|F_t^{\mathrm{fr}}\|^2
\leq2\|\nabla V(X_t)\|^2
+2\|F_t^{\mathrm{fr}}+\nabla V(X_t)\|^2.
\]
Taking expectations, integrating over $[0,T]$, and dividing by $T$ gives
\[
m_T\leq\frac2T\E\int_0^T\|\nabla V(X_t)\|^2\,\mathrm dt+2e_T.
\]
Lemma~\ref{app:adaptive-fisher-gradient}, applied to the law $\mu_t$ of
$X_t$, bounds the gradient second moment by
$\E\|\nabla V(X_t)\|^2\leq2\mathrm{FI}(\mu_t\|\pi)+C_\gamma$.
Substituting this bound into the preceding inequality yields
\begin{equation}
\label{app:adaptive:eq:closure-gradient}
m_T\leq4\overline{\mathrm{FI}}_T+2C_\gamma+2e_T.
\end{equation}
Finally, substituting~(\ref{app:adaptive:eq:closure-fi}) into
~(\ref{app:adaptive:eq:closure-gradient}) gives
\begin{equation}
\label{app:adaptive:eq:closure}
\begin{aligned}
m_T
&\leq\frac{16}{3}\frac{K_0}{T}+2C_\gamma+\frac{22}{3}e_T\\
&\leq8(M_T+e_T).
\end{aligned}
\end{equation}
The last inequality uses $16/3\leq8$, $2\leq8$, and $22/3\leq8$, together
with nonnegativity and $M_T=1+C_\gamma+K_0/T$.
Since $0<\gamma/2\leq1/2$ and $M_T\geq1$, concavity gives
\begin{equation}
\label{app:adaptive:eq:concavity-closure}
(M_T+e_T)^{\gamma/2}
\leq M_T^{\gamma/2}
+\frac\gamma2M_T^{\gamma/2-1}e_T
\leq M_T^{\gamma/2}+e_T.
\end{equation}

After applying the moment bound, we define
\[
R_\gamma(h,M)
:=A_\gamma(h)+8B_\gamma(h)M
  +8^{\gamma/2}Q_\gamma(h)M^{\gamma/2}
\]
to collect the terms independent of the discretization
error $e_T$. The coefficient multiplying $e_T$ is
\[
\Delta_\gamma(h)
:=8B_\gamma(h)+8^{\gamma/2}Q_\gamma(h).
\]
Substitute the bound for $m_T$ in
~(\ref{app:adaptive:eq:closure}) into the discretization bound for exact gradients
~(\ref{app:adaptive:eq:exact-freezing}), and then apply
~(\ref{app:adaptive:eq:concavity-closure}) to obtain
\begin{align}
e_T
&\leq A_\gamma(h)+8B_\gamma(h)(M_T+e_T)
 +8^{\gamma/2}Q_\gamma(h)(M_T+e_T)^{\gamma/2}\notag\\
&\leq R_\gamma(h,M_T)+\Delta_\gamma(h)e_T.
\label{app:adaptive:eq:scalar-feedback}
\end{align}
If $\Delta_\gamma(h)\leq1/2$, then
\begin{equation}
\label{app:adaptive:eq:error-absorbed}
e_T\leq2R_\gamma(h,M_T).
\end{equation}
Substituting~(\ref{app:adaptive:eq:error-absorbed}) into the Fisher
information bound~(\ref{app:adaptive:eq:closure-fi}) proves
\[
\overline{\mathrm{FI}}_T
\leq\frac{4K_0}{3T}+\frac83R_\gamma(h,M_T).
\]
Convexity of relative Fisher information gives
$\mathrm{FI}(\bar\mu_T\|\pi)\leq\overline{\mathrm{FI}}_T$, proving
~(\ref{app:eq:adaptive-fi}). The coefficients
$A_\gamma,B_\gamma,Q_\gamma$ defined above are
continuous and nondecreasing in $h$, vanish at zero, and for $0<h\leq1$
are at most $h$ times their values at one. Consequently,
$\Delta_\gamma(h)\leq h\Delta_\gamma(1)$. The base step constraint in
Theorem~\ref{thm:full-adaptive}, namely
$0<h\leq[1+2\Delta_\gamma(1)]^{-1}$, therefore implies
\[
\Delta_\gamma(h)
\leq\frac{\Delta_\gamma(1)}{1+2\Delta_\gamma(1)}<\frac12,
\]
which verifies the stated explicit sufficient condition.

\paragraph{Finite step approximation.}
The rearrangement leading to~(\ref{app:adaptive:eq:error-absorbed})
requires $e_T<\infty$, which has not yet been established for the full
adaptive process. We therefore first truncate the adaptive scheme after $n$
steps and prove finiteness for each fixed $n$. The preceding inequalities
then give bounds independent of $n$, allowing us to let $n\to\infty$ in the
next paragraph.

First suppose that $\mu_0$ has compact support.  Run at most $n$ adaptive
intervals and then append the exact diffusion from
Lemma~\ref{app:adaptive-continuation}.  This produces a conservative process
on every finite horizon.  Equations~(\ref{app:adaptive:eq:endpoint-square})
and~(\ref{app:adaptive:eq:endpoint-one}), together with
Lemma~\ref{app:adaptive:lem:gaussian}, imply at the end of each complete
adaptive interval that
\begin{equation*}
\E\!\left[
 \|\nabla V(X_{\tau_{k+1}})\|^2
 \;\middle|\;\mathcal F_k\right]
\leq C(h,d,L_0,L_1,\gamma)
\left(1+\|\nabla V(X_{\tau_k})\|^2\right),
\end{equation*}
where the finite constant does not depend on $k$. For $0<\gamma<1$, this
uses $2\gamma/(1-\gamma)=2/(1-\gamma)-2$. For $\gamma=1$, it uses the
exponential moments in~(\ref{app:adaptive:eq:gaussian}).  Continuity bounds
the initial gradient on the compact support, so induction gives finite
gradient second moments through every one of the first $n$ intervals.

The single-interval estimates in
~(\ref{app:adaptive:eq:local-smooth}),
~(\ref{app:adaptive:eq:local-interior}), and
~(\ref{app:adaptive:eq:local-one}) give finite total error action over the
adaptive intervals. This error is zero on the exact continuation.
Lemma~\ref{app:adaptive-kl} therefore
first establishes a finite Fisher integral, and
Lemma~\ref{app:adaptive-fisher-gradient} then gives a finite current-gradient
integral. All terms in~(\ref{app:adaptive:eq:closure-fi})--\allowbreak
(\ref{app:adaptive:eq:closure}) are consequently
finite, justifying the preceding absorption. The preliminary moment bounds
are used only to establish finiteness. Applying
~(\ref{app:adaptive:eq:scalar-feedback}) and
~(\ref{app:adaptive:eq:closure-fi}) now gives
~(\ref{app:adaptive:eq:error-absorbed}) and the resulting Fisher information
bound with constants independent of $n$.

\paragraph{Passing to the limit and excluding finite-time accumulation.}
Let
\[
N_{T,n}:=\#\{0\leq k<n:\tau_k<T\}.
\]
Every complete interval satisfies
\[
1=h^{-1}H_k
\bigl(1+\omega\|\nabla V(X_{\tau_k})\|^\gamma\bigr),
\]
and at most one occupied interval is truncated by $T$.  Thus, pathwise,
\begin{equation}
\label{app:adaptive:eq:capped-count}
N_{T,n}
\leq1+\frac1h\int_0^{T\wedge\tau_n}
\left(1+\omega\|F_t^{\mathrm{fr}}\|^\gamma\right)\,\mathrm dt.
\end{equation}
Here $T\wedge\tau_n=\min\{T,\tau_n\}$. Bounding the number of steps by
integrating the reciprocal step duration follows the argument of
\citet[Lemma~2]{fang2020adaptive} for adaptive Euler--Maruyama schemes.
Take expectations in
~(\ref{app:adaptive:eq:capped-count}), apply
~(\ref{app:adaptive:eq:frozen-jensen}),
~(\ref{app:adaptive:eq:closure}), and
~(\ref{app:adaptive:eq:error-absorbed}) to obtain, uniformly in $n$,
\begin{equation}
\label{app:adaptive:eq:capped-count-expectation}
\E N_{T,n}
\leq1+\frac{T}{h}
\left[1+\omega
\{8(M_T+2R_\gamma(h,M_T))\}^{\gamma/2}\right].
\end{equation}

Construct all finite prefixes using the same Brownian motion and initial
state.  On $\{\tau_n\leq T\}$, every one of the first $n$ intervals starts
before $T$, so $N_{T,n}=n$.  Markov's inequality and
~(\ref{app:adaptive:eq:capped-count-expectation}) give
\[
\mathbb P(\tau_n\leq T)
\leq\frac{\E N_{T,n}}{n}\longrightarrow0.
\]
Applying this argument to every positive integer horizon proves
$\tau_n\uparrow\infty$ almost surely.  For each fixed $T$, the truncated and
original path laws agree on $\{\tau_n>T\}$ and hence converge in total
variation on $[0,T]$.  Lower semicontinuity of Fisher information and Fatou's
lemma pass~(\ref{app:eq:adaptive-fi}). Monotone convergence passes
~(\ref{app:adaptive:eq:capped-count-expectation}) and proves
~(\ref{app:eq:adaptive-events}).  Because the step durations are strictly
positive, the definition of $N_T$ in
Definition~\ref{def:algorithmic-laws} now agrees with the number of occupied
intervals $\#\{k:\tau_k<T\}$.

\paragraph{General finite KL initialization.}
We now remove the temporary compact support assumption on $\mu_0$ and
extend the preceding bounds to any initial law with
$K_0=\mathrm{KL}(\mu_0\|\pi)<\infty$, as required by the theorem.
We approximate $\mu_0$ by conditioning it on expanding balls and then
pass to the limit.
Keep $h,T$ fixed and use the same update rule and Brownian driving noise
throughout. The base step constraint in Theorem~\ref{thm:full-adaptive}
does not depend on the initial law, so it remains valid for every
approximating initialization.

Choose increasing balls $A_R$ with $p_R:=\mu_0(A_R)>0$ and
$p_R\uparrow1$, and set $\mu_0^R:=\mu_0(\cdot\mid A_R)$. If
$f:=\mathrm d\mu_0/\mathrm d\pi$, then
\[
\mathrm{KL}(\mu_0^R\|\pi)
=p_R^{-1}\int_{A_R}f\log f\,\mathrm d\pi-\log p_R
\longrightarrow K_0.
\]
The negative part is integrable because $z\log z\geq-e^{-1}$, and
$K_0<\infty$ ensures that the positive part is integrable as well.
This justifies the convergence of the restricted integrals.
Apply the compact support argument to each $\mu_0^R$, using
$\mathrm{KL}(\mu_0^R\|\pi)$ in place of $K_0$. Conditioning on the
initial state gives
\[
\mathbb P_{\mu_0}(\tau_\infty<\infty,\ X_0\in A_R)=0.
\]
Since the balls exhaust the initial law, $\tau_\infty=\infty$ almost
surely also under $\mu_0$. The conditional law of the interpolated path
given its initial state defines a probability kernel, defined arbitrarily
on any exceptional $\mu_0$-null set of initial states. Applying this same
kernel to $\mu_0^R$ and $\mu_0$ gives convergence of their path laws in
total variation on every finite time horizon. Lower semicontinuity of
Fisher information and Fatou's lemma then pass the integrated Fisher bound
~(\ref{app:eq:adaptive-fi}). Its right side converges because
$\mathrm{KL}(\mu_0^R\|\pi)\to K_0$ and $R_\gamma(h,M)$ is continuous in $M$.
Furthermore,
\[
p_R\E_{\mu_0^R}N_T
=\E_{\mu_0}[N_T\mathbf1_{\{X_0\in A_R\}}]
\uparrow\E_{\mu_0}N_T.
\]
Here monotone convergence applies to
$N_T\mathbf1_{\{X_0\in A_R\}}$. Since $p_R\to1$ and the right side of
~(\ref{app:eq:adaptive-events}) is continuous in the initial KL divergence,
the expected iteration bound passes to the limit as well. Convexity and lower
semicontinuity of Fisher information follow, for example, from
\[
\mathrm{FI}(\mu\|\pi)
=\sup_{\varphi\in C_c^\infty(\mathbb R^d;\mathbb R^d)}
\int\left\{
2\langle\nabla V,\varphi\rangle
-2\operatorname{div}\varphi-\|\varphi\|^2
\right\}\,\mathrm d\mu.
\]
This also passes the time-averaged-law bound.  Finally, each occupied interval
uses exactly one exact gradient query at its start and no unused terminal
query.  Hence $Q_T=N_T$ pathwise.  This completes the proof of all
finite horizon claims.
\end{proof}

\subsubsection{Accuracy schedule and complexity}
\label{app:proof-adaptive-complexity}

We finish by verifying the explicit schedule in the full theorem and deriving
the stated complexity bounds.

\begin{proof}[Proof of the accuracy and complexity claims in
Theorem~\ref{thm:full-adaptive}]
Assume $K_0>0$ and $0<\varepsilon\leq1$. Recall the accuracy schedule
\[
T:=\frac{4K_0}{\varepsilon},\qquad
h:=\max\left\{u\in[0,(1+2\Delta_\gamma(1))^{-1}]:
R_\gamma(u,M_T)\leq\frac{3\varepsilon}{16}\right\},
\]
where $M_T=1+C_\gamma+K_0/T$. Hence
\[
\frac{K_0}{T}=\frac\varepsilon4,
\qquad
M_T=1+C_\gamma+\frac{\varepsilon}{4},
\qquad
R_\gamma(h,M_T)
\leq\frac{3\varepsilon}{16}.
\]
The set defining $h$ has a positive maximum because
$R_\gamma(\cdot,M_T)$ is continuous, nondecreasing, vanishes at
zero, and the threshold is positive. Since $M_T\geq1$ and all
coefficients are nonnegative,
\begin{equation}
R_\gamma(h,M_T)-\Delta_\gamma(h)=A_\gamma(h)+8B_\gamma(h)(M_T-1)+8^{\gamma/2}Q_\gamma(h)(M_T^{\gamma/2}-1)
\geq0.
\label{app:adaptive:eq:accuracy-absorption}
\end{equation}
Therefore
\[
\Delta_\gamma(h)
\leq\frac{3\varepsilon}{16}<\frac12.
\]
The finite horizon bound then gives the slightly sharper intermediate result
\begin{equation*}
\mathrm{FI}(\bar\mu_T\|\pi)
\leq\frac\varepsilon3
+\frac83\frac{3\varepsilon}{16}
=\frac{5\varepsilon}{6}
\leq\varepsilon.
\end{equation*}

Equations~(\ref{app:adaptive:eq:scalar-feedback}),
~(\ref{app:adaptive:eq:accuracy-absorption}), and
~(\ref{app:adaptive:eq:closure}) give
$e_T\leq3\varepsilon/8$ and $m_T\leq16M_T$. Since
$16^{\gamma/2}\leq4$, the expected count bound becomes
\begin{equation}
\label{app:adaptive:eq:accuracy-count-proof}
\E N_T
\leq1+\frac{4K_0}{\varepsilon h}
\left(1+4\omega M_T^{\gamma/2}\right),
\end{equation}
which is~(\ref{app:eq:adaptive-accuracy-count}).

\paragraph{The regime $L_1>0$ and $0<\gamma<1$.}
The formulas for $A_\gamma,B_\gamma,Q_\gamma$ in the proof of
Lemma~\ref{app:adaptive:lem:exact-freezing} and
$R_\gamma(u,M_T)=A_\gamma(u)+8B_\gamma(u)M_T
+8^{\gamma/2}Q_\gamma(u)M_T^{\gamma/2}$ give, as $u\downarrow0$,
\begin{align*}
R_\gamma(u,M_T)
&=3d\left(
L_0^2(1+2^{\gamma^2/(1-\gamma)})^2
+8^{\gamma/2}2^{2\gamma^2/(1-\gamma)}3^{2\gamma}L_1(1+L_0)
 M_T^{\gamma/2}
\right)u+o(u),
\end{align*}
uniformly for $0<\varepsilon\leq1$. Maximality of $h$ therefore
implies, for sufficiently small $\varepsilon$ at fixed model parameters,
\begin{equation}
\label{app:adaptive:eq:inverse-step-interior}
h^{-1}
=\mathcal O\!\left(
1+\frac{d[L_0^2+L_1(1+L_0)M_T^{\gamma/2}]}
         {\varepsilon}
\right).
\end{equation}
In this regime, the numerical constants hidden by $\mathcal O$ depend
only on $\gamma$.
To justify this inversion explicitly, the positive linear coefficient
above bounds $R_\gamma(u,M_T)/u$ from above by twice that
coefficient once $u$ is sufficiently small. A step proportional to
$\varepsilon$ divided by that coefficient is consequently admissible and,
for sufficiently small $\varepsilon$, lies below the fixed upper bound
in~(\ref{app:eq:adaptive-accuracy-schedule}). Hence the maximal admissible
step is at least as large. The threshold on $\varepsilon$ may depend on the
fixed smoothness parameters and dimension.
Combining~(\ref{app:adaptive:eq:accuracy-count-proof}) and
~(\ref{app:adaptive:eq:inverse-step-interior}), and recalling
$\omega=L_1/(1+L_0)$, bounds the leading $\varepsilon^{-2}$ term by
\begin{align}
\frac{\mathcal O(K_0d)}{\varepsilon^2}
&\left(1+\frac{L_1}{1+L_0}M_T^{\gamma/2}\right)
\left(L_0^2+L_1(1+L_0)M_T^{\gamma/2}\right).
\label{app:adaptive:eq:rate-product}
\end{align}
Because $L_0^2/(1+L_0)\leq1+L_0$, the product is bounded, up to a constant, by
\[
L_0^2+L_1(1+L_0)M_T^{\gamma/2}
+L_1^2M_T^\gamma.
\]
With
$S_\gamma:=1+dL_0+(dL_1)^{2/(2-\gamma)}$, the definition of $C_\gamma$
gives $M_T=\mathcal O(S_\gamma)$. Consequently,
\begin{equation*}
\E N_T=\E Q_T
=\mathcal O\!\left(
\frac{K_0d}{\varepsilon^2}
\left[L_0^2+L_1(1+L_0)S_\gamma^{\gamma/2}
      +L_1^2S_\gamma^\gamma\right]\right).
\end{equation*}
Using $(x+y)^a\leq x^a+y^a$ for $x,y\geq0$ and
$0<a\leq1$, we can further bound the expression
in brackets by
\begin{align*}
L_0^2+L_1(1+L_0)
 \{1+(dL_0)^{\gamma/2}+(dL_1)^{\gamma/(2-\gamma)}\}+L_1^2
 \{1+(dL_0)^\gamma+(dL_1)^{2\gamma/(2-\gamma)}\}.
\end{align*}

\paragraph{The endpoint $\gamma=1$.}
This endpoint is proved directly.  Since
$E_1^{\mathrm{ex}}(u)=1+\mathcal O(\sqrt u)$,
the formulas for $A_1,B_1,Q_1$ in the proof of the discretization error
bound in Lemma~\ref{app:adaptive:lem:exact-freezing} give
\begin{align*}
R_1(u,M_T)
&=(d+1)\left(
4L_0^2+8\sqrt2L_1(1+L_0)M_T^{1/2}
\right)u+o(u).
\end{align*}
Consequently,
\begin{equation*}
h^{-1}
=\mathcal O\!\left(
1+\frac{(d+1)[L_0^2+L_1(1+L_0)M_T^{1/2}]}
         {\varepsilon}
\right).
\end{equation*}
Applying the same product calculation as in
~(\ref{app:adaptive:eq:rate-product}) and using
$S_1=1+dL_0+d^2L_1^2$ gives
\begin{equation*}
\E N_T=\E Q_T
=\mathcal O\!\left(
\frac{K_0(d+1)}{\varepsilon^2}
\left[L_0^2+L_1(1+L_0)S_1^{1/2}+L_1^2S_1\right]
\right).
\end{equation*}
This establishes the endpoint directly, without taking a limit in the
$\gamma<1$ constants.

Recall $\Lambda:=\max\{1,L_0,L_1\}$. For $0<\gamma<1$,
$S_\gamma=\mathcal O((d\Lambda)^{2/(2-\gamma)})$, and at
$\gamma=1$, $d+1\leq2d$.  In the expanded bounds, the largest powers are
supplied by $L_1^2S_\gamma^\gamma$, giving
\[
\E N_T=\E Q_T
=\mathcal O\!\left(
\frac{K_0\Lambda^{4/(2-\gamma)}
d^{(2+\gamma)/(2-\gamma)}}{\varepsilon^2}
\right).
\]
The hidden numerical constant depends only on $\gamma$.
This proves the accuracy and complexity claims in
Theorem~\ref{thm:full-adaptive}.
\end{proof}

\subsection{Proof of Corollary~\ref{cor:adaptive-special-cases}}

\begin{proof}
The first item follows from~(\ref{app:eq:adaptive-compact-rate}) after
imposing $K_0=\mathcal O(d)$.

\paragraph{The globally $L_0$-smooth case $L_1=0$.}
Here $\omega=0$, so $H_k=h$ deterministically. Recall that
$A_\gamma(h)=dL_0^2h$, $B_\gamma(h)=L_0^2h^2/3$, and
$Q_\gamma(h)=0$ in this regime. Substituting these into
$R_\gamma(h,M_T)=A_\gamma(h)+8B_\gamma(h)M_T
+8^{\gamma/2}Q_\gamma(h)M_T^{\gamma/2}$ gives, independently of $\gamma$,
\begin{equation}
\label{app:adaptive:eq:smooth-remainder}
R_\gamma(h,M_T)
=dL_0^2h+\frac83L_0^2M_T h^2.
\end{equation}
Let $\overline M:=1+4dL_0+1/4$.  Whenever
\begin{equation*}
\varepsilon
\leq\min\left\{
1,\frac{32dL_0^2}{3},
\frac{4d^2L_0^2}{\overline M}
\right\},
\end{equation*}
the candidate $u_0=3\varepsilon/(32dL_0^2)$ belongs to $[0,1]$ and makes each
term on the right of~(\ref{app:adaptive:eq:smooth-remainder}) at most
$3\varepsilon/32$.  Maximality in
~(\ref{app:eq:adaptive-accuracy-schedule}) therefore gives
\[
h\geq u_0=\frac{3\varepsilon}{32dL_0^2}.
\]
The count is deterministic and equals
$N_T=\lceil T/h\rceil$. Hence
\[
Q_T=N_T
\leq1+\frac{128K_0dL_0^2}{3\varepsilon^2},
\]
which proves~(\ref{app:eq:adaptive-smooth-rate}).  Under
$K_0=\mathcal O(d)$, this is
$\mathcal O(d^2L_0^2/\varepsilon^2)$, recovering the globally smooth
exact gradient LMC order.  Neither the algorithm nor this specialization of
its accuracy schedule depends on $\gamma$.
\end{proof}


\subsection{Proof of Theorem~\ref{thm:full-combined}}
\label{app:proof-combined}

The robust estimator and the adaptive step size control different errors.
The former turns a finite-$p$ stochastic oracle into a gradient estimate with a
finite conditional second moment, while the latter controls the variation of
$\nabla V$ during an iteration. The adaptive interval length depends on the
same gradient estimator used in the update. Consequently, controlling the
estimator error at each iteration does not directly control its accumulation
over time. The term $\kappa s_p(X_{\tau_k})^2$ in the step-size denominator
allows us to bound this accumulated error through
~(\ref{app:eq:combined-error-count}). We first bound the discretization and
estimation errors, then derive the iteration and oracle complexities.

\subsubsection{Finite iteration localization and proof quantities}

We first suppose that the initial law is supported in a ball, while retaining
its actual finite KL divergence
$K_0:=\mathrm{KL}(\mu_0\|\pi)$, and truncate the RA-LMC construction
after $m$ iterations. If the truncation time $\tau_m$ occurs before $T$, we continue
with the conservative exact Langevin diffusion from
Lemma~\ref{app:adaptive-continuation}.  This continuation is only a proof
device and is not part of RA-LMC. The state at the truncation time has an absolutely
continuous law because its last Brownian increment is nondegenerate.

On an RA-LMC interval beginning at $\tau_k$, use the force from
~(\ref{eq:robust-lmc-adaptive-iterates}) and write
\begin{equation*}
 F_t^{\mathrm{fr}}:=F_k=-\frac1b\sum_{i=1}^b\widehat{\mathcal G}_{k,i},
 \qquad X_t^{\mathrm{fr}}:=X_{\tau_k},
 \qquad t\in[\tau_k,\tau_{k+1}),
\end{equation*}
Here $F_t^{\mathrm{fr}}$ and $X_t^{\mathrm{fr}}$ denote the drift and state
selected at the beginning of the interval and held constant throughout it.
After the truncation time, set $F_t^{\mathrm{fr}}=-\nabla V(X_t)$ and
$X_t^{\mathrm{fr}}=X_t$.  If $\tau_k<T$, let
$\ell_k:=\min\{H_k,T-\tau_k\}$ be the duration of that update before $T$.
Recall the adaptive rule and the definition of $\omega$,
\[
 H_k=\frac{h}{1+\omega\|F_k\|^\gamma
                 +\kappa s_p(X_{\tau_k})^2},
 \qquad \omega:=\frac{L_1}{1+L_0}.
\]
They imply
\begin{equation}
 H_k\leq h,\qquad
 \omega H_k\|F_k\|^\gamma\leq h,\qquad
 \kappa H_ks_p(X_{\tau_k})^2\leq h.
 \label{app:eq:combined-step-properties}
\end{equation}
Let $\mathcal F_k$ denote the history available immediately before the
fresh oracle batch at iteration $k$ is drawn.  In particular,
$X_{\tau_k}$ and $\tau_k$ are $\mathcal F_k$-measurable, whereas $F_k$
and $H_k$ also depend on the fresh batch. The clipping radius is
$\lambda_k=s_p(X_{\tau_k})b^{1/p}$. Lemma~\ref{app:robust-estimation}
gives
\begin{equation}
 \E\!\left[
 \|F_k+\nabla V(X_{\tau_k})\|^2\,\middle|\,\mathcal F_k\right]
 \leq \rho_b s_p(X_{\tau_k})^2,
 \qquad \rho_b=C_pb^{-2(p-1)/p}.
 \label{app:eq:combined-event-mse}
\end{equation}
Here $C_p$ is the estimator error coefficient established in
Lemma~\ref{app:robust-estimation}.
The moment justification proceeds in a fixed order.  Bounded initial support
gives finite initial state and gradient second moments.  The conditional
estimation bound above and $s_p(x)^2\leq C(1+\|x-x_{\mathrm{ref}}\|^2)$
give a finite second moment for each selected force.  The gradient variation
bounds in Lemma~\ref{app:adaptive:lem:endpoint}, together with the adaptive step
bound, then propagate finite state and gradient second moments through any
finite number of updates. For $L_1>0$ and $0<\gamma<1$, the adaptive
step bound gives
\[
 (H_k\|F_k\|)^{2/(1-\gamma)}
 \leq (h/\omega)^{2/(1-\gamma)}\|F_k\|^2.
\]
Thus the required displacement moment is controlled by the force second
moment. For $L_1>0$ and $\gamma=1$, the drift displacement is
bounded by $h/\omega$, and Gaussian exponential moments give the same
conclusion.  When $L_1=0$, global smoothness suffices.

Thus the expected integral of the squared drift error is finite before
truncation, and the drift error is zero on the exact continuation.
Lemma~\ref{app:adaptive-kl} gives finite
integrated FI. Lemma~\ref{app:adaptive-fisher-gradient} then gives a finite
integral of the gradient second moment over time, including on the
continuation. The path identity yields the bounds on expected squared
distances from $x_{\mathrm{ref}}$ used below. This order avoids
assuming the moment bounds that the subsequent closure will prove.

\subsubsection{Discretization, FI, and gradient and drift moment bounds}

For the process constructed above, which follows RA-LMC for at most $m$
iterations and then continues with the exact Langevin diffusion if needed,
define the following quantities.
\begin{align*}
 \overline{\mathrm{FI}}_T
 &:=\frac1T\int_0^T\FI(\mu_t\|\pi)\,\mathrm{d}t,
 &\overline e
 &:=\frac1T\E\int_0^T
   \|F_t^{\mathrm{fr}}+\nabla V(X_t^{\mathrm{fr}})\|^2\,\mathrm{d}t,\\
 \overline g
 &:=\frac1T\E\int_0^T
   \|\nabla V(X_t^{\mathrm{fr}})\|^2\,\mathrm{d}t,
 &\overline a
 &:=\frac1T\E\int_0^T\|F_t^{\mathrm{fr}}\|^2\,\mathrm{d}t,\\
 D_{\mathrm{disc}}
 &:=\E\int_0^T
   \|\nabla V(X_t)-\nabla V(X_t^{\mathrm{fr}})\|^2\,\mathrm{d}t.
\end{align*}
These quantities represent, respectively, FI, the average squared estimation
error, the average squared norm of $\nabla V(X_t^{\mathrm{fr}})$, the average
squared drift norm, and the accumulated discretization error.

Condition after the gradient estimate and step size have been selected but
before the Brownian increment is generated.  By
~(\ref{app:eq:combined-step-properties}),
$\ell_k\leq h/(1+\omega\|F_k\|^\gamma)$, so
Lemma~\ref{app:adaptive-power-freezing} applies.  Summing its local bound over
the occupied intervals gives
\begin{equation*}
 \frac{D_{\mathrm{disc}}}{T}
 \leq \widetilde{A}_\gamma(h)
      +\widetilde{U}_\gamma(h)\overline g
      +\widetilde{V}_\gamma(h)\overline a.
\end{equation*}
Here $\widetilde{A}_\gamma$, $\widetilde{U}_\gamma$, and $\widetilde{V}_\gamma$ are the
coefficients defined above for Lemma~\ref{app:adaptive-power-freezing}. They
multiply the constant term, the average squared gradient norm $\overline g$,
and the average squared drift norm $\overline a$, respectively.
Since $\overline a\leq2\overline e+2\overline g$,
\begin{equation}
 \frac{D_{\mathrm{disc}}}{T}
 \leq \widetilde{A}_\gamma(h)
 +\{\widetilde{U}_\gamma(h)+2\widetilde{V}_\gamma(h)\}\overline g
 +2\widetilde{V}_\gamma(h)\overline e.
 \label{app:eq:combined-freezing-reduced}
\end{equation}

To bound $\overline g$, compare $\nabla V(X_t^{\mathrm{fr}})$ with the
current gradient $\nabla V(X_t)$.
\[
 \overline g\leq\frac{2D_{\mathrm{disc}}}{T}
 +\frac2T\int_0^T\E\|\nabla V(X_t)\|^2\,\mathrm{d}t.
\]
The gradient moment bound in Lemma~\ref{app:adaptive-fisher-gradient} uses
\[
 C_\gamma
 :=4dL_0+(2-\gamma)\gamma^{\gamma/(2-\gamma)}
                  (2dL_1)^{2/(2-\gamma)}.
\]
That lemma and
~(\ref{app:eq:combined-freezing-reduced}) therefore yield
\begin{align}
 &[1-2\{\widetilde{U}_\gamma(h)+2\widetilde{V}_\gamma(h)\}]\overline g \leq4\overline{\mathrm{FI}}_T+2C_\gamma
       +2\widetilde{A}_\gamma(h)+4\widetilde{V}_\gamma(h)\overline e.
 \label{app:eq:combined-gradient-preclosure}
\end{align}
Next apply the KL-dissipation inequality in
Lemma~\ref{app:adaptive-kl} and use
\begin{align*}
 \|\nabla V(X_t)+F_t^{\mathrm{fr}}\|^2\leq2\|\nabla V(X_t)-\nabla V(X_t^{\mathrm{fr}})\|^2+2\|\nabla V(X_t^{\mathrm{fr}})+F_t^{\mathrm{fr}}\|^2.
\end{align*}
Together with~(\ref{app:eq:combined-freezing-reduced}), this gives
\begin{align}
 \frac34\overline{\mathrm{FI}}_T\leq\frac{K_0}{T}+2\widetilde{A}_\gamma(h)+2\{\widetilde{U}_\gamma(h)+2\widetilde{V}_\gamma(h)\}\overline g+\{4\widetilde{V}_\gamma(h)+2\}\overline e.
 \label{app:eq:combined-kl-before-closure}
\end{align}
Suppose for now that
$\widetilde{U}_\gamma(h)+2\widetilde{V}_\gamma(h)\leq1/64$.
The theorem's step-size choice will imply this sufficient condition below.
Substitute
~(\ref{app:eq:combined-gradient-preclosure}) into
~(\ref{app:eq:combined-kl-before-closure}).  For clarity, put
$c_h:=\widetilde{U}_\gamma(h)+2\widetilde{V}_\gamma(h)$ in this calculation.
Collecting the terms involving $\overline{\mathrm{FI}}_T$ on the left gives
the coefficient
$3/4-8c_h/(1-2c_h)\geq1/2$.  Also
$\widetilde{V}_\gamma(h)\leq c_h/2$ and $1-2c_h\geq31/32$.
Recall the discretization coefficient
\[
 \mathcal D_\gamma(h)
 :=\widetilde{A}_\gamma(h)
   +\{\widetilde{U}_\gamma(h)+2\widetilde{V}_\gamma(h)\}(1+C_\gamma).
\]
Bounding the remaining coefficients gives
\begin{align}
 \overline{\mathrm{FI}}_T
 &\leq\frac{2K_0}{T}
   +5\widetilde{A}_\gamma(h)
   +9\{\widetilde{U}_\gamma(h)+2\widetilde{V}_\gamma(h)\}C_\gamma
   +5\overline e
 \notag\\
 &\leq\frac{2K_0}{T}+9\mathcal D_\gamma(h)+5\overline e.
 \label{app:eq:combined-fi-closure}
\end{align}
The last inequality uses this definition of $\mathcal D_\gamma$.
Recall the quantity defined in Theorem~\ref{thm:full-combined},
\[
 M_T:=1+C_\gamma+\frac{K_0}{T}.
\]
Substituting
~(\ref{app:eq:combined-fi-closure}) back into
~(\ref{app:eq:combined-gradient-preclosure}) also gives the following bounds
on the average squared gradient and drift norms,
\begin{equation}
 \overline g\leq9M_T+40\mathcal D_\gamma(h)+21\overline e,
 \qquad
 \overline a\leq18M_T+80\mathcal D_\gamma(h)+44\overline e.
 \label{app:eq:combined-energy-closure}
\end{equation}
The numerical constants in these displays are universal upper bounds and they
hide no dependence on the problem parameters.

\subsubsection{Bounds on squared distance and local noise}

The average squared local noise scale is
\[
 \overline s
 :=\frac1T\E\int_0^Ts_p(X_t^{\mathrm{fr}})^2\,\mathrm{d}t.
\]
To bound it, define the term independent of $\overline e$ in the squared
distance bound~(\ref{app:eq:combined-frozen-spatial}),
\begin{equation*}
 R_{T,h}:=6M_{2,0}+6dT+2dh+(3T^2+2h^2/3)\{18M_T+80\mathcal D_\gamma(h)\}.
\end{equation*}
For the theorem's accuracy schedule, $T\geq8$, $h\leq1$, and
$\mathcal D_\gamma(h)\leq\varepsilon/64\leq1/64$. Since $M_T\geq1$,
\[
 18M_T+80\mathcal D_\gamma(h)\leq20M_T,
 \qquad
 R_{T,h}\leq S_T:=6M_{2,0}+8dT+80T^2M_T.
\]
Indeed, $h\leq T$ gives $6M_{2,0}+6dT+2dh\leq6M_{2,0}+8dT$ and
bounds the remaining term by $(11/3)T^2(20M_T)\leq80T^2M_T$.
Thus the theorem's noise scale and step-size coefficient are
\[
 \bar s_T^{2}:=[B S_T^{p/2}+G]^{2/p},
 \qquad \kappa:=(1+\bar s_T^{2})^{-1}.
\]
For $B>0$, $M_{2,0}$ satisfies
$\E\|X_0-x_{\mathrm{ref}}\|^2\leq M_{2,0}$. For $B=0$, the convention is
$M_{2,0}=0$, and $\bar s_T^{\,2}=G^{2/p}$ does not depend on $S_T$.
The local noise scale is then constant, so
$\overline s=G^{2/p}=\bar s_T^{\,2}$ without a bound on the expected squared
distance from $x_{\mathrm{ref}}$.
We now treat $B>0$.  The path identity and Cauchy--Schwarz imply
\begin{align}
 X_t-x_{\mathrm{ref}}
 &=X_0-x_{\mathrm{ref}}+\int_0^t F_u^{\mathrm{fr}}\,\mathrm{d}u+\sqrt2W_t,
 \notag\\
 \frac1T\int_0^T\E\|X_t-x_{\mathrm{ref}}\|^2\,\mathrm{d}t
 &\leq3M_{2,0}+3dT+\frac32T^2\overline a.
 \label{app:eq:combined-current-spatial}
\end{align}
On an occupied interval, conditional Gaussian calculation gives
\begin{equation*}
 \int_0^{\ell_k}\E\!\left[
 \|X_{\tau_k+s}-X_{\tau_k}\|^2
 \mid X_{\tau_k},F_k,\tau_k\right]\mathrm{d}s=\frac{\ell_k^3}{3}\|F_k\|^2+d\ell_k^2
 \leq\ell_k\left\{\frac{h^2}{3}\|F_k\|^2+dh\right\}.
\end{equation*}
Combining this calculation with
~(\ref{app:eq:combined-current-spatial}) and
~(\ref{app:eq:combined-energy-closure}) yields
\begin{align}
 \frac1T\E\int_0^T\|X_t^{\mathrm{fr}}-x_{\mathrm{ref}}\|^2\,\mathrm{d}t
 \leq R_{T,h}
 +44\left(3T^2+\frac{2h^2}{3}\right)\overline e.
 \label{app:eq:combined-frozen-spatial}
\end{align}

Define $\phi(u):=(Bu^{p/2}+G)^{2/p}$ for $u\geq0$, so that
$\phi(\|x-x_{\mathrm{ref}}\|^2)=s_p(x)^2$.
Lemma~\ref{lem:robust-noise-concavity} shows that $\phi$ is nonnegative,
nondecreasing, and concave. Apply Jensen's inequality to $\phi$ under the
probability measure $T^{-1}\mathrm{d}t\,\mathrm{d}\mathbb P$.
Moreover, these properties imply
$\phi(u+v)\leq(1+v/u)\phi(u)$ for $u>0$. Since
$S_T\geq R_{T,h}\geq18(3T^2+2h^2/3)$,
~(\ref{app:eq:combined-frozen-spatial}) gives
\begin{equation}
 \overline s\leq\bar s_T^{\,2}(1+3\overline e).
 \label{app:eq:combined-noise-closure}
\end{equation}
For $B=0$,~(\ref{app:eq:combined-noise-closure}) holds directly from the
constant scale identity above.

\subsubsection{From per-update estimation error to error over elapsed time}

Let $N_{T,m}$ be the number of RA-LMC intervals among the first $m$ whose
starting times are strictly below $T$.  The indicator that iteration $k$ is
occupied is measurable before its fresh oracle batch is drawn.  Although
$\ell_k$ depends on that batch through the estimated gradient, the
predictable noise term in the adaptive denominator and
~(\ref{app:eq:combined-event-mse}) imply
\begin{align*}
 \E\!\left[
 \ell_k\|F_k+\nabla V(X_{\tau_k})\|^2
 \,\middle|\,\mathcal F_k\right]&\leq
 \frac{h}{1+\kappa s_p(X_{\tau_k})^2}
 \E\!\left[
 \|F_k+\nabla V(X_{\tau_k})\|^2
 \,\middle|\,\mathcal F_k\right]
\\
 &\leq h\rho_b
 \frac{s_p(X_{\tau_k})^2}{1+\kappa s_p(X_{\tau_k})^2}\\
 &\leq\frac{h\rho_b}{\kappa}.
\end{align*}
Consequently,
\begin{equation}
 T\overline e\leq\frac{h\rho_b}{\kappa}\E N_{T,m}.
 \label{app:eq:combined-error-count}
\end{equation}
Every complete interval satisfies
\[
 H_k\{1+\omega\|F_k\|^\gamma
        +\kappa s_p(X_{\tau_k})^2\}=h,
\]
and at most the last occupied interval is truncated at $T$.  It follows
pathwise that
\begin{align*}
 N_{T,m}
 \leq1+\frac1h\int_0^{T\wedge\tau_m}
 \{1+\omega\|F_t^{\mathrm{fr}}\|^\gamma
       +\kappa s_p(X_t^{\mathrm{fr}})^2\}\,\mathrm{d}t.
\end{align*}
Enlarge the integral to $[0,T]$.  Jensen's inequality and
~(\ref{app:eq:combined-energy-closure}) give
\begin{align*}
 \overline a^{\gamma/2}
 &\leq[18M_T+80\mathcal D_\gamma(h)]^{\gamma/2}
       +44^{\gamma/2}\overline e^{\gamma/2}\\
 &\leq[18M_T+80\mathcal D_\gamma(h)]^{\gamma/2}
       +7(1+\overline e).
\end{align*}
The resulting count bound uses the deterministic coefficients
\begin{align*}
 J_{T,h}
 &:=1+\omega\{[18M_T+80\mathcal D_\gamma(h)]^{\gamma/2}+7\}
       +\kappa\bar s_T^{2},\\
 \chi_{T,h}&:=7\omega+3\kappa\bar s_T^{2}.
\end{align*}
Combining the preceding bound with~(\ref{app:eq:combined-noise-closure}) proves
\begin{equation}
 \E N_{T,m}
 \leq1+\frac{T}{h}\{J_{T,h}+\chi_{T,h}\overline e\}.
 \label{app:eq:combined-count-feedback}
\end{equation}
Substituting~(\ref{app:eq:combined-count-feedback}) into
~(\ref{app:eq:combined-error-count}) yields the scalar feedback inequality
\begin{equation*}
 \overline e\leq\rho_b\kappa^{-1}
 \{h/T+J_{T,h}+\chi_{T,h}\overline e\}.
\end{equation*}
To state the bound on $\overline e$, define
\[
 E_{T,h}^{\mathrm{env}}:=\kappa^{-1}(h/T+J_{T,h}).
\]
If $2\rho_b\kappa^{-1}\chi_{T,h}\leq1$, the final term can be absorbed.
We verify this sufficient restriction for the theorem's batch size below.
It gives
\begin{equation}
 \overline e\leq2\rho_bE_{T,h}^{\mathrm{env}}.
 \label{app:eq:combined-error-envelope}
\end{equation}
The same condition gives
$\chi_{T,h}\overline e\leq h/T+J_{T,h}$.  Inserting this sharper consequence
into~(\ref{app:eq:combined-count-feedback}) gives
\begin{equation}
 \E N_{T,m}\leq2+\frac{2TJ_{T,h}}h.
 \label{app:eq:combined-capped-count}
\end{equation}
Under these sufficient restrictions, equations~(\ref{app:eq:combined-fi-closure}),
~(\ref{app:eq:combined-error-envelope}), and
~(\ref{app:eq:combined-capped-count}) prove the finite horizon bounds
uniformly in the truncation level $m$.

\subsubsection{Removing the iteration truncation and the bounded support restriction}

For each integer $m\geq1$, consider the process constructed above that follows
RA-LMC for $m$ iterations and then continues with the exact Langevin diffusion
if $\tau_m<T$. Keep the initial law and algorithm parameters fixed, and
construct these processes using the same initial state, Brownian motion,
and sequence of oracle samples.
Every finite sequence of updates is well defined because each robust estimate is finite
and each step size is positive.  On $\{\tau_m\leq T\}$, all first $m$
iterations start strictly before $T$.  Markov's inequality and
~(\ref{app:eq:combined-capped-count}) therefore give
\begin{equation*}
 \mathbb{P}(\tau_m\leq T)
 \leq\frac{\E N_{T,m}}{m}\longrightarrow0.
\end{equation*}
Because $\tau_m$ is increasing, the iteration times cannot accumulate before
$T$. The truncated and original paths agree on $[0,T]$ whenever $\tau_m>T$,
so their path laws converge in total variation.  The nonnegative estimation
error accumulated over the first $m$ updates increases to the full error,
and $N_{T,m}\uparrow N_T$.  Monotone convergence therefore passes the
error and count bounds to the original process. Lower semicontinuity of
relative FI at each time and Fatou's lemma pass the FI bound as well.
For the original process, let $k(t):=\max\{k:\tau_k\leq t\}$ for
$0\leq t<T$ and define its duration-weighted estimation error by
\[
 \mathcal E_T:=\frac1T\E\int_0^T
 \|F_{k(t)}+\nabla V(X_{\tau_{k(t)}})\|^2\,\mathrm{d}t.
\]
Then $\overline e=\mathcal E_T$ in the limiting bounds.
Convexity of relative Fisher information yields
\[
 \FI(\bar\mu_T\|\pi)
 \leq\frac1T\int_0^T\FI(\mu_t\|\pi)\,\mathrm{d}t.
\]

It remains to remove the temporary bounded-support condition on $\mu_0$.
Let $\mu_0^{(R_0)}$ be $\mu_0$ conditioned on
$\{\|X_0-x_{\mathrm{ref}}\|\leq R_0\}$.  Then
$\mu_0^{(R_0)}\to\mu_0$ in total variation and
\begin{equation}
 \KL(\mu_0^{(R_0)}\|\pi)\longrightarrow K_0.
 \label{app:eq:combined-initial-cutoff}
\end{equation}
For $B>0$, conditioning on this ball also gives
$\E_{\mu_0^{(R_0)}}\|X_0-x_{\mathrm{ref}}\|^2\leq M_{2,0}$.
For $B=0$, no bound on the expected squared distance from
$x_{\mathrm{ref}}$ that is uniform in $R_0$ is needed. The noise scale is
constant, so the squared distance bound is not used.
Keep $T,h,b,\kappa,M_T,S_T$ and all proof coefficients computed from the
original declared parameters fixed.  For all sufficiently large $R_0$,
~(\ref{app:eq:combined-initial-cutoff}) implies
\[
 \KL(\mu_0^{(R_0)}\|\pi)/T\leq K_0/T+1.
\]
To see why this suffices without enlarging the schedules, retain the
actual initial KL in the derivation of the bounds
~(\ref{app:eq:combined-energy-closure}) on $\overline g$ and $\overline a$.
Writing
$K_{0,R_0}:=\KL(\mu_0^{(R_0)}\|\pi)$ gives
\begin{align*}
 \overline g
 &\leq9(C_\gamma+K_{0,R_0}/T)
       +40\mathcal D_\gamma(h)+21\overline e,\\
 \overline a
 &\leq18(C_\gamma+K_{0,R_0}/T)
       +80\mathcal D_\gamma(h)+44\overline e.
\end{align*}
The added one in $M_T=1+C_\gamma+K_0/T$ absorbs the cutoff's
KL difference. Thus the original $R_{T,h}\leq S_T$ and the same noise,
step-size, and count bounds apply to every sufficiently large cutoff,
without recomputing any algorithm parameter. The finite-update laws depend
on the initial law
through a probability kernel, so total-variation convergence of the initial
laws passes to every finite sequence of updates.  In particular, the uniform
bound on $\mathbb P(\tau_m\leq T)$ passes to the original initialization and
rules out accumulation there.  The error integrals and update counts are
nonnegative path functionals under the same algorithm.  Their conditional
expectations converge to the original expectations by monotone convergence
on the increasing initial balls, followed by division by the ball
probability.  Thus the estimation error itself converges, which allows the
FI inequality to retain the actual $\mathcal E_T$ on its right-hand side.
Lower semicontinuity of FI and
$\KL(\mu_0^{(R_0)}\|\pi)\to K_0$ now prove the
finite horizon statements for $\mu_0$.  Finally, every occupied stochastic
interval uses
exactly $4b$ individual oracle evaluations, so
$Q_T=4bN_T$ pathwise and $\E Q_T=4b\E N_T$.

\subsubsection{Accuracy schedule and general complexity}

We now verify the parameter choices in the theorem. For $0<\varepsilon\leq1$,
the prescribed time horizon and base step size are
\begin{equation*}
 T:=\frac{8(K_0+1)}{\varepsilon} \quad \text{and} \quad
 h:=\max\{u\in[0,1]:
              \mathcal D_\gamma(u)\leq\varepsilon/64\}.
\end{equation*}
With $M_T$ and $\kappa=(1+\bar s_T^2)^{-1}$ as above, the batch size is
\[
 b:=\left\lceil\left[
 \frac{260C_p\kappa^{-1}(1+\omega M_T^{\gamma/2})}{\varepsilon}
 \right]^{p/[2(p-1)]}\right\rceil.
\]
By continuity and
monotonicity of the coefficient functions, the maximal $h$ is
positive.  Since
\[
 \mathcal D_\gamma(h)
 \geq\widetilde{U}_\gamma(h)+2\widetilde{V}_\gamma(h),
\]
the choice $\mathcal D_\gamma(h)\leq\varepsilon/64$ with
$0<\varepsilon\leq1$ enforces the discretization stability condition.
Moreover, $18M_T+80\mathcal D_\gamma(h)\leq20M_T$, $M_T\geq1$, and
$\kappa\bar s_T^2\leq1$ imply
\begin{align*}
 J_{T,h}
 &\leq2+(\sqrt{20}+7)\omega M_T^{\gamma/2}
 \leq12(1+\omega M_T^{\gamma/2}),\\
 \chi_{T,h}
 &\leq3+7\omega\leq7(1+\omega M_T^{\gamma/2}).
\end{align*}
Since $h/T\leq1/8$, it follows that
\[
 E_{T,h}^{\mathrm{env}}
 \leq13\kappa^{-1}(1+\omega M_T^{\gamma/2}).
\]
The specified batch size gives
\[
 \rho_b\leq
 \frac{\varepsilon}{260\kappa^{-1}(1+\omega M_T^{\gamma/2})}.
\]
Consequently,
\begin{equation*}
 2\rho_b\kappa^{-1}\chi_{T,h}\leq\frac{14\varepsilon}{260}<1,
 \qquad
 2\rho_bE_{T,h}^{\mathrm{env}}
       \leq\frac{\varepsilon}{10}.
\end{equation*}
All sufficient restrictions used above therefore hold, and
$\mathcal E_T\leq\varepsilon/10$. The FI bound becomes
\begin{align*}
 \FI(\bar\mu_T\|\pi)
 &\leq\overline{\mathrm{FI}}_T
 \leq\frac{2K_0}{T}+9\mathcal D_\gamma(h)+\frac{\varepsilon}{2}
 \\
 &\leq\frac{\varepsilon}{4}
      +\frac{9\varepsilon}{64}
      +\frac{\varepsilon}{2}
 =\frac{57\varepsilon}{64}<\varepsilon.
\end{align*}
Likewise, the count bound gives
\[
 \E N_T\leq2+\frac{24T}{h}(1+\omega M_T^{\gamma/2}),
 \qquad \E Q_T=4b\E N_T.
\]
This proves the theorem.

For the complexity corollary, we retain parameter-sensitive estimates.
Define the coefficient of the term linear in $h$ by
\[
 D_\gamma^{(1)}:=
 \begin{cases}
  dL_0^2,&L_1=0,\\
  \begin{aligned}
   3d\bigl[L_0^2(1+2^{\gamma^2/(1-\gamma)})^2+3^{2\gamma}2^{2\gamma^2/(1-\gamma)}L_1^2(2+C_\gamma)\bigr],
  \end{aligned}
       &L_1>0,\ 0<\gamma<1,\\
  4(d+1)\{L_0^2+L_1^2(1+C_1)\},
       &L_1>0,\ \gamma=1.
 \end{cases}
\]
The explicit coefficient formulas imply
\begin{equation*}
 \lim_{u\downarrow0}\frac{\mathcal D_\gamma(u)}u
 =D_\gamma^{(1)}.
\end{equation*}
The limit is positive whenever a normalizable target satisfying the stated
assumptions exists.  Thus, for fixed problem parameters,
$h^{-1}=\mathcal O(D_\gamma^{(1)}/\varepsilon)$.
The hidden constant in this step-size estimate and the iteration estimate
below depends only on $\gamma$. Constants in the batch-size and oracle
estimates may additionally depend on $p$. The small accuracy thresholds may
depend on the other fixed problem parameters.
To bound $J_{T,h}$ and $S_T$, define
\begin{align*}
 J_\gamma^*&:=2+\omega\{[22(1+C_\gamma)]^{\gamma/2}+7\},\\
 \mathsf R_*^2&:=6M_{2,0}+64(K_0+1)d
       +5760(K_0+1)^2(1+C_\gamma).
\end{align*}
The schedule gives
\begin{align*}
 M_T\leq\frac98(1+C_\gamma),\quad 18M_T+80\mathcal D_\gamma(h)\leq22(1+C_\gamma),\quad S_T\leq\mathsf R_*^2/\varepsilon^2 \quad,J_{T,h}\leq J_\gamma^*.
\end{align*}
Consequently, we have
\begin{equation*}
 1+\omega M_T^{\gamma/2}\leq J_\gamma^*, \quad \text{and}\quad
 \bar s_T^{\,2}\leq[B(\mathsf R_*/\varepsilon)^p+G]^{2/p}.
\end{equation*}
Substitution into the prescribed batch size therefore gives
\begin{align*}
 b
 =\mathcal O\!\left(
 (J_\gamma^*)^{p/[2(p-1)]}
 \varepsilon^{-p/[2(p-1)]}
 [1+B(\mathsf R_*/\varepsilon)^p+G]^{1/(p-1)}
 \right).
\end{align*}
Multiplying by the expected iteration bound proves
\begin{align}
 \E N_T
 &=\mathcal O\!\left(
 (K_0+1)D_\gamma^{(1)}J_\gamma^*\varepsilon^{-2}
 \right),
 \label{app:eq:combined-general-iterations}\\
 \E Q_T
 &=\mathcal O\!\left(
 \frac{(K_0+1)D_\gamma^{(1)}
       (J_\gamma^*)^{1+p/[2(p-1)]}}
      {\varepsilon^{2+p/[2(p-1)]}}
 [1+B(\mathsf R_*/\varepsilon)^p+G]^{1/(p-1)}
 \right).
 \label{app:eq:combined-general-oracles}
\end{align}
These estimates will give the stated complexity specializations.

\subsection{Proof of Corollary~\ref{cor:combined-special-cases}}

We derive the stated complexity specializations. When
$B>0$ is fixed, the final factor in
~(\ref{app:eq:combined-general-oracles}) is, for sufficiently small
$\varepsilon$,
\[
 \mathcal O\!\left(
 B^{1/(p-1)}\mathsf R_*^{p/(p-1)}
 \varepsilon^{-p/(p-1)}\right).
\]
Therefore
\begin{align}
 \E Q_T
 =\mathcal O\!\left(
 \frac{B^{1/(p-1)}(K_0+1)D_\gamma^{(1)}
       (J_\gamma^*)^{1+p/[2(p-1)]}
       \mathsf R_*^{p/(p-1)}}
      {\varepsilon^{2+3p/[2(p-1)]}}
 \right).
 \label{app:eq:combined-polynomial-oracles}
\end{align}
If $B=0$, instead,
\begin{align}
 \E Q_T
 =\mathcal O\!\left(
 \frac{(1+G)^{1/(p-1)}(K_0+1)D_\gamma^{(1)}
       (J_\gamma^*)^{1+p/[2(p-1)]}}
      {\varepsilon^{2+p/[2(p-1)]}}
 \right).
 \label{app:eq:combined-uniform-oracles}
\end{align}
In particular, for $p=2$ and fixed remaining parameters, these bounds give
$\E Q_T=\mathcal O(\varepsilon^{-5})$ under BG-0 noise ($B>0$) and
$\E Q_T=\mathcal O(\varepsilon^{-3})$ under uniformly bounded variance
($B=0$).

We next make the dimension and smoothness dependence explicit.  Let
$\Lambda=\max\{1,L_0,L_1\}$.  For fixed $0<\gamma<1$, the definitions of
the generalized-smoothness coefficients imply the following bounds, with
hidden constants depending only on $\gamma$:
\begin{align}
 1+C_\gamma
 &=\mathcal O((d\Lambda)^{2/(2-\gamma)}),
 \notag\\
 D_\gamma^{(1)}
 &=\mathcal O(d\Lambda^2(1+C_\gamma)),
 \notag\\
 J_\gamma^*
 &=\mathcal O(
 d^{\gamma/(2-\gamma)}\Lambda^{2/(2-\gamma)}),
 \label{app:eq:combined-geometry-orders}\\
 \mathsf R_*
 &=\mathcal O\!\left(
 \sqrt{M_{2,0}}+\sqrt{(K_0+1)d}
 +(K_0+1)(d\Lambda)^{1/(2-\gamma)}
 \right).
 \notag
\end{align}
Under the convention $K_0=\mathcal O(d)$ stated in
Section~\ref{sec:preliminaries}, substituting
~(\ref{app:eq:combined-geometry-orders}) into
~(\ref{app:eq:combined-general-iterations}) gives
\begin{equation*}
 \E N_T
 =\mathcal O\!\left(
 \frac{\Lambda^{(8-2\gamma)/(2-\gamma)}
       d^{(6-\gamma)/(2-\gamma)}}{\varepsilon^2}
 \right).
\end{equation*}

For the compact $B>0$ oracle rate, we can sharpen the radius estimate in the
small-$\varepsilon$ regime.  For each fixed finite $M_{2,0}$,
$M_T\to1+C_\gamma$, so
\begin{equation}
 \varepsilon^2S_T
 \longrightarrow5120(K_0+1)^2(1+C_\gamma).
 \label{app:eq:combined-radius-order}
\end{equation}
In particular, for all sufficiently small $\varepsilon$,
\[
 \sqrt{S_T}
 \leq\frac{\widetilde R_*}{\varepsilon},\qquad
 \widetilde R_*:=80(K_0+1)\sqrt{1+C_\gamma}
 =\mathcal O\!\left(d(d\Lambda)^{1/(2-\gamma)}\right).
\]
Repeating the noise-envelope estimate with $\widetilde R_*$ in place of
$\mathsf R_*$ and using~(\ref{app:eq:combined-geometry-orders}) gives
\begin{align*}
 \E Q_T=\mathcal O\!\bigg(\frac{
 B^{1/(p-1)}
 \Lambda^{\frac{(8-2\gamma)(p-1)+2p}
                  {(p-1)(2-\gamma)}}
 d^{\frac{(3p-2)(6-\gamma)}{2(p-1)(2-\gamma)}}}{
 \varepsilon^{2+3p/[2(p-1)]}}
 \bigg).
\end{align*}
This is the compact $B>0$ rate in
Corollary~\ref{cor:combined-special-cases}. Its small-accuracy threshold may
depend on $M_{2,0}$, $B$, $G$, and the other fixed problem parameters.
Thus it does not replace the explicit initialization dependence in the
finite-accuracy schedule.  In particular, finite $K_0$ alone does not imply
the finite initial second moment used when $B>0$.
For $B=0$, the corresponding substitution in
~(\ref{app:eq:combined-uniform-oracles}) gives
\begin{align*}
 \E Q_T=\mathcal O\!\bigg(\frac{
 (1+G)^{1/(p-1)}
 \Lambda^{\frac{(8-2\gamma)(p-1)+p}
                  {(p-1)(2-\gamma)}}
 d^{1+\frac{8(p-1)+\gamma p}
               {2(p-1)(2-\gamma)}}}{
 \varepsilon^{2+p/[2(p-1)]}}
 \bigg).
\end{align*}
With the constant in $K_0=\mathcal O(d)$ fixed, the hidden constants in
these compact oracle formulas depend only on $p$ and $\gamma$. They are
pointwise in $\gamma$ and are not uniform as $\gamma\uparrow1$.
At $\gamma=1$, direct use of
the corresponding coefficients gives the same displayed powers evaluated at
$\gamma=1$.

When $L_1=0$, $\omega=0$,
$D_\gamma^{(1)}=dL_0^2$, and $J_\gamma^*=2$.  Thus
\begin{equation*}
 \E N_T
 =\mathcal O\!\left(
 \frac{(K_0+1)dL_0^2}{\varepsilon^2}\right)
 =\mathcal O\!\left(
 \frac{d^2L_0^2}{\varepsilon^2}\right)
\end{equation*}
under $K_0=\mathcal O(d)$.  The first equality concerns the noise-aware
RA-LMC rule and its iteration count can remain random. This proves
Corollary~\ref{cor:combined-special-cases}.

\subsection{Verification of the algorithmic reductions}

When $L_1=0$, $\omega=0$. If one additionally sets $\kappa=0$, the
adaptive duration in~(\ref{eq:robust-lmc-adaptive-iterates}) becomes the
constant $h$, and the algorithm is the fixed-step robust method of
Theorem~\ref{thm:full-robust}. This changes the positive value of $\kappa$
prescribed by Theorem~\ref{thm:full-combined}, so the sharper fixed-step
bounds follow from Theorem~\ref{thm:full-robust} and
Corollary~\ref{cor:robust-special-cases}, rather than by substituting into
the combined theorem's accuracy schedule.

If $B=G=0$, the conditional $p$th moment of every oracle error is zero.
Hence every oracle call made by the algorithm equals $\nabla V$ almost
surely. The robust-estimation stage is then redundant. Using one exact
gradient per iteration and the schedule in Theorem~\ref{thm:full-adaptive}
gives the adaptive exact-gradient method and its sharper bounds. When
$L_1=0$ as well, Corollary~\ref{cor:adaptive-special-cases} gives the
classical globally smooth complexity
$\mathcal O(L_0^2d^2/\varepsilon^2)$.

\clearpage
\section{Additional Experimental Details}
\label{app:experiments}

\subsection{Synthetic Experiments}
\label{app:synthetic-setting}

\subsubsection{Experimental Setting}
The potentials used in the synthetic experiments and their geometry is provided in Table~\ref{tab:synthetic-potentials}. All three corresponding target distributions are non-log-concave. 
\begin{table}[htbp]
  \centering
  \begin{tabular}{lll}
    \hline
    Setting & Potential $V(x)$ & Geometry \\
    \hline
    State-dependent heavy tails
      & $\norm{x}^2/2-\log\cosh(2x_1)$ & Globally $3$-smooth \\
    Generalized smoothness
      & $(\norm{x}^2-1)^2/4$ & $\gamma=2/3$ \\
    Combined setting
      & $\norm{x}^3/3-\norm{x}^2/2$ & $\gamma=1/2$ \\
    \hline
  \end{tabular}
  \caption{Target potentials used in the experiments of Fig.~\ref{fig:synthetic-fi} and the supplementary figures. We note that the two radial potentials are not globally smooth.}
  \label{tab:synthetic-potentials}
\end{table}
For the experiments with state-dependent heavy-tailed noise oracle, we define the oracle as follows
\begin{equation*}
  \gG(x, \xi)=\nabla V(x)+s_p(x)\xi,\qquad
  s_p(x)=\bigl(8+8\norm{x}^{p}\bigr)^{1/p},\qquad
  p=1.2,\quad \nu=1.4.
\end{equation*}
Here $\xi$ is a normalized spherical Student random vector. Specifically,
if $Z\sim\mathcal N(0,I_2)$ and $W\sim\chi^2_\nu$ are independent, then
\begin{equation*}
  \xi=\frac{Z}{\sqrt{W/\nu}\,m_{p,\nu}^{1/p}},\qquad
  m_{p,\nu}=\nu^{p/2}
    \frac{\Gamma(1+p/2)\Gamma((\nu-p)/2)}{\Gamma(\nu/2)},
\end{equation*}where $\Gamma(x)= \int_0^\infty t^{x-1}e^{-1}dt$ is gamma function defined for $x>0$ and $\chi^2_\nu$ is chi-squared distribution with $\nu$ degrees of freedom. Consequently, $\E\xi=0$, $\E\norm{\xi}^p=1$, and
$\E\norm{Y(x)-\nabla V(x)}^p=8+8\norm{x}^p$.
Since $1<p<\nu<2$, the oracle is unbiased with a finite $p$-th moment and
infinite variance, and satisfies the BG-$p$ condition with $B=G=8$ and $x_{\mathrm{ref}}=0$.

LMC uses one exact-gradient evaluation per update. Both SG-LMC and R-LMC use
$16$ independent oracle queries per update. SG-LMC averages all $16$ queries.
R-LMC averages $b=4$ robust replicates: each replicate uses the geometric
median of three queries as its center and clips the residual of a fourth
query at radius $s_p(x)4^{1/p}$. Thus the raw and robust stochastic methods
have the same per-update oracle budget, although the adaptive methods may
require different numbers of updates to reach a given physical time. We use a
fixed time horizon $T$ for the primary comparison since it matches our
theoretical $\mathrm{FI}$ guarantees for the time-averaged law $\Bar{\mu}_T$.
We also report time-averaged FI after a fixed number $N$ of updates as a
complementary comparison at equal iteration budgets.

For Figure~\ref{fig:synthetic-fi}, we use
$h=0.02$ in panels (a) and (b), and $h=0.16$
in panel (c). Here $h$ is the step size for
fixed-step methods and the base step-size
parameter for adaptive methods. We run the experiments in this section on CPU. 

An update with Langevin force $F_k$ is
$X_{k+1}=X_k+H_kF_k+\sqrt{2H_k}\,Z_k$.
The state-dependent-heavy-tail setting compares LMC, SG-LMC, and R-LMC, all
with fixed step-size $H_k=h$. The generalized-smoothness setting compares fixed-step
LMC with A-LMC, using the adaptive step-size
\begin{equation*}
  H_k=\frac{h}{1+\norm{\nabla V(X_k)}^{2/3}}.
\end{equation*}
We note that this is random variable as it depends on $X_k$. The combined setting compares all six methods. A-LMC uses
$H_k=h/(1+\norm{\nabla V(X_k)}^{1/2})$, whereas A-SG-LMC and RA-LMC use
\begin{equation*}
  H_k=\frac{h}{1+\norm{F_k}^{1/2}+\kappa s_p(X_k)^2},\qquad
  \kappa=\frac{1}{1+8^{2/p}}=\frac{1}{33},
\end{equation*}which follows~(\ref{eq:robust-lmc-adaptive-iterates}) for $p=1.2$. A-SG-LMC uses the negative of the $16$-query minibatch average as $F_k$, whereas RA-LMC uses the negative of the robust estimator with batch size $b=4$.

\begin{figure}[htbp]
  \centering
  \includegraphics[width=\textwidth]{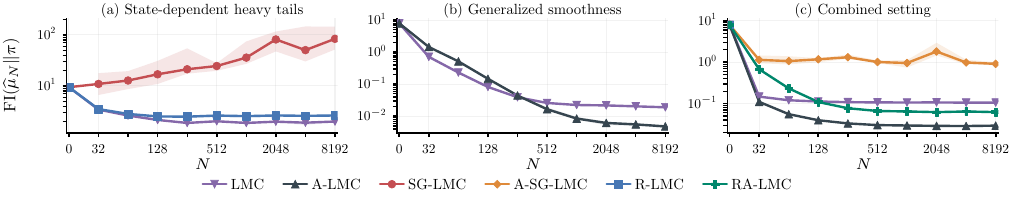}
  \caption{Fisher information of the time-averaged laws $\Bar{\mu}_{\tau_N}$ after \(N\) updates for the three settings and step sizes of Figure~\ref{fig:synthetic-fi}. For each trajectory, we sample a time uniformly from \([0,\tau_N]\), where \(\tau_N\) is the time reached after its \(N\)th update and may vary across trajectories for adaptive methods. }
  \label{fig:synthetic-occupation-iterations}
\end{figure}

\subsubsection{Calculation of Fisher information}
\label{app:synthetic-fi}

At a deterministic time $T$, the time-averaged law $\Bar{\mu}_T$ is the law of interpolation at an independently sampled uniform time in $[0,T]$, and $\mu_T$ is the terminal law at $T$. In our theoretical results, we prove nonasmyptotic convergence guarantees to achieve $\mathrm{FI}(\Bar{\mu}_T\|\pi)\leq \varepsilon$. Therefore our primary results show comparison of $\mathrm{FI}$ convergence of different LMC methods with respect to $\Bar{\mu}_T$. To estimate $\Bar{\mu}_T$, for each 8192 trajectory generated by an LMC method after time $T$, we randomly sample $t$ uniformly from $[0,T]$, and find the interval such that $t\in[t_k, t_{k+1})$ for $k\in\{0,\ldots,N\}$. Then we use the interpolation argument to obtain
\begin{equation*}
    X_t = X_{t_k} + uF_k + \sqrt{2}(W_{t_k + u} - W_{t_k}).
\end{equation*}Conditional on $X_k$ and $u$, $X_{t_k+u} \sim \gN(X_k + uF_k, 2uI_2)$. Then each $X_t$ sampled from a trajectory provides a Gaussian component with a known mean and covariance. Using this fact, the density of $\Bar{\mu}_T$ is approximated by the following density
\begin{equation}\label{eq:estimated-fi-density}
    p(x) = \frac{1}{8192}\sum_{i=1}^{8192}\gN(x; m_i, 2u_iI_2), \quad m_i = X_{k_i} + u_iF_{k_i},
\end{equation}where $k_i$ denotes the iteration index corresponding to the interval $t_i\in[t_{k_i}, t_{k_i+1})$ and $t_i$ is a sample uniformly drawn from $[0,T]$ for $i$-th trajectory. To ensure numerical stability for the narrow finite-mixture components, we smooth both the simulated law and the target law with $\gN(0,0.08I_2)$ and compute the Fisher information at each iteration or time $T$ according to~(\ref{eq:estimated-fi-density}) and the target density. For each LMC method, we apply this procedure at times $T\in\{0.25,0.5,1,2,4,8,16,32\}$. For plots against iteration count, we use $N\in\{32,64,128,256,512,1024,2048,4096,8192\}$. The point at $T=0$ shows the initial FI with common Gaussian initialization. 


\subsubsection{Additional Results}
\label{app:synthetic-additional-results}

Each curve shows the median FI across ten seeds, with the shaded band indicating the interquartile range. Some curves end early because at least one seed does not yield a valid FI estimate at later checkpoints due to numerical instability, a simulation limit, or a failed FI calculation. Figure~\ref{fig:synthetic-fi} compares methods at the same physical time \(T\), whereas Figure~\ref{fig:synthetic-occupation-iterations} compares them after the same number \(N\) of updates with $h=0.02,0.02,0.16$. At fixed \(N\), adaptive methods may reach different physical times, but SG-LMC and R-LMC have equal oracle-query budgets because they use the same number of queries per update. In panel (a), R-LMC closely follows exact-gradient LMC, while SG-LMC has substantially higher FI at larger \(N\). Panel (b) considers exact-gradient sampling from a target whose curvature can grow without bound; over the observed horizon, A-LMC reaches lower FI than fixed-step LMC. In panel (c), SG-LMC and R-LMC diverge due to numerical instability, while A-SG-LMC converges to a suboptimal FI value. RA-LMC, by contrast, converges to a lower FI than LMC and follows A-LMC, illustrating the benefit of combining robust estimation with adaptive step sizes under a fixed iteration budget.

To complement Figures~\ref{fig:synthetic-fi} and~\ref{fig:synthetic-occupation-iterations}, we repeat the three experiments with larger step-size parameters $h=0.04$, $0.04$, and $0.32$ in panels (a)--(c), respectively, while keeping the targets, oracles, methods, seeds, and FI calculations unchanged. Although A-LMC, A-SG-LMC, and RA-LMC use adaptive steps, their step sizes $H_k$ scale with $h$ at a given state, so increasing $h$ increases their steps as well. Figures~\ref{fig:synthetic-larger-steps-time} and~\ref{fig:synthetic-larger-steps-iterations} report the FI of the resulting time-averaged laws relative to the target against time $T$ and iteration count $N$, respectively.

\begin{figure}[htbp]
  \centering
  \includegraphics[width=\textwidth]{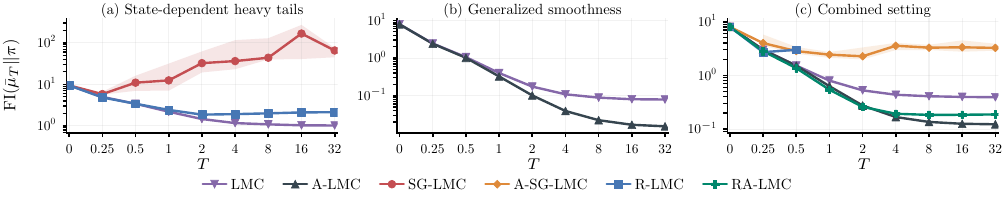}
  \caption{Fisher information of the time-averaged laws $\Bar{\mu}_T$ across the three settings of Figure~\ref{fig:synthetic-fi}, using larger step sizes $h=0.04$, $0.04$, and $0.32$ in panels (a)--(c), respectively. For each trajectory, a time is sampled uniformly from $[0,T]$ to estimate the time-averaged law.}
  \label{fig:synthetic-larger-steps-time}
\end{figure}

\begin{figure}[htbp]
  \centering
  \includegraphics[width=\textwidth]{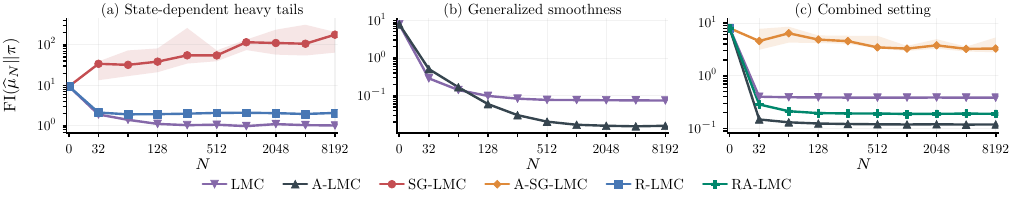}
  \caption{Fisher information of the time-averaged laws $\Bar{\mu}_{\tau_N}$ after $N$ updates for the same settings and step sizes as Figure~\ref{fig:synthetic-larger-steps-time}. For each trajectory, a time is sampled uniformly from $[0,\tau_N]$, where $\tau_N$ is the time reached after its $N$th update and may vary across trajectories for adaptive methods.}
  \label{fig:synthetic-larger-steps-iterations}
\end{figure}

Figures~\ref{fig:synthetic-larger-steps-time} and~\ref{fig:synthetic-larger-steps-iterations} shows the same trends in Figure~\ref{fig:synthetic-fi} and Figure~\ref{fig:synthetic-occupation-iterations} at larger step sizes. In the heavy-tailed setting, SG-LMC’s estimated FI rises sharply, while R-LMC remains much closer to exact-gradient LMC. Under generalized smoothness, A-LMC achieves lower FI than fixed-step LMC at later iterations. In the combined setting, RA-LMC remains close to A-LMC and achieves lower FI than fixed-step LMC and A-SG-LMC, whereas the fixed-step stochastic LMC methods become numerically unstable, leaving their curves absent or truncated. The fixed-\(N\) results complement the fixed-\(T\) comparison by showing that these qualitative advantages persist at matched iteration budgets.

\subsection{Bayesian Fine-Tuning Experiments}
\label{app:assumption-verification}

\subsubsection{Experimental Setting}
\label{app:oracle-trajectory}

This subsection uses two separate fine-tuning protocols. To study the
BG-$p$ oracle, we generate trajectories using minibatch-mean gradients
and AdamW, and make all histogram and moment measurements along these
trajectories. For the baseline comparison, empirical RA-LMC retains
AdamW and the Gaussian perturbation but replaces the minibatch mean with
our geometric-median-centered, residual-clipped estimate. Thus
(\ref{app:eq:oracle-adamw-update}) defines the trajectories used for the
oracle evidence, whereas (\ref{app:eq:empirical-ra-update}) defines the
RA-LMC runs reported in Table~\ref{tab:obqa-slake-sensitivity}.

\paragraph{Shared model and optimizer settings.}
We fine-tune Qwen3-4B on OpenBookQA and Qwen3-VL-4B-Instruct on the
SLAKE yes/no subset, following the model and dataset settings in Bayesian
Adaptation Gym~\citep{samplawski2026bayesian}. The training splits contain
4,957 and 1,681 examples, respectively. We freeze each backbone and train
rank-8 LoRA factors on \texttt{q\_proj}, \texttt{v\_proj}, and
\texttt{lm\_head}, using LoRA scaling 16 and zero dropout. The vector
$w$ concatenates all trainable LoRA entries, and all distances and
gradient norms are unnormalized Euclidean norms over this vector.

Both protocols use 5,000 AdamW updates with moment coefficients
$(0.9,0.999)$, numerical stabilizer $10^{-6}$, and zero weight decay.
The scalar learning rate $\eta_k$ warms up over 300 updates to
$10^{-4}$ and then decreases linearly. We use AdamW-based updates rather
than a fixed step size because generalized smoothness permits the local
geometry and gradient scale to vary along a fine-tuning trajectory.
AdamW provides a practical adaptation mechanism without requiring the
smoothness and local noise-scale quantities in our theoretical
step-size rule. We run the fine-tuning and evaluation experiments
on single NVIDIA H100 GPU 40GB per run. 

\paragraph{Trajectories used for oracle evidence.}
To study the ordinary stochastic-gradient noise, these runs use minibatch-mean gradients rather than our
robust estimator.

At update $k$, the dataset is shuffled by epoch and the loss is the
average negative log-likelihood over the current minibatch. With
$b=4$, the update is
\begin{equation}
  w_{k+1}
  =\operatorname{AdamW}\!\left(
      w_k,\frac{1}{b}\sum_{j=1}^{b}
      \gG(w_k,\xi_{k,j});\eta_k
    \right)
    +\gamma\sqrt{2\eta_k}\,Z_k,
  \qquad Z_k\sim\mathcal N(0,I).
  \label{app:eq:oracle-adamw-update}
\end{equation}
Here $\xi_{k,j}$ indexes the $j$th example in the minibatch and
$\gG(w_k,\xi_{k,j})$ is its stochastic gradient. The noise discount factor
is selected as $\gamma=10^{-5}$ and is selected according to a grid search $\gamma\in\{10^{-6}, 10^{-5}, 10^{-4}, 10^{-3}\}$ over the validation sets, similar to the approach in~\citet{kim2023bayestune}. The isotropic Gaussian increment is added to the FP32 LoRA
parameters after the AdamW update. Figure~\ref{fig:llm-noise-histograms} uses the OpenBookQA fine-tuning trajectory,
while Figure~\ref{fig:llm-noise-first-moment} uses both fine-tuning trajectories.

\paragraph{RA-LMC runs used for the baseline comparison.}
For the empirical RA-LMC results in
Table~\ref{tab:obqa-slake-sensitivity}, each update draws 4 distinct
training examples and evaluates their gradients at the same parameter
vector. Let
\begin{equation}
  \begin{aligned}
    g_{k,j}&:=\gG(w_k,\xi_{k,j}),\qquad j=1,\ldots,4,\\
    C_k&:=\mathrm{GeoMed}(g_{k,1},g_{k,2},g_{k,3}),\\
    \lambda_k&:=c_{\mathrm{clip}}s_k,\\
    \widehat g_k
      &:=C_k+\mathrm{clip}_{\lambda_k}(g_{k,4}-C_k),
  \end{aligned}
  \label{app:eq:empirical-robust-estimator}
\end{equation}
where $\mathrm{clip}_{\lambda}(v)
=v\min\{1,\lambda/\|v\|\}$ for $v\neq0$ and
$\mathrm{clip}_{\lambda}(0)=0$. Thus $c_{\mathrm{clip}}$ is the tuned
multiplier, while $\lambda_k$ is the resulting clipping threshold at
update $k$. The geometric median is computed over the
entire LoRA-gradient vectors, rather than coordinatewise. Because the
center uses exactly three gradients, we do not solve the geometric-median
problem iteratively. Although the gradients are high-dimensional, any three gradient vectors lie in a common two-dimensional plane or on a line. Their geometric median can therefore be computed as the Fermat–Torricelli point of the triangle they form~\citep{mordukhovich2019fermat}. Let $a_j$ denote the
side length opposite the angle $\theta_j$ at $g_{k,j}$. If some
$\theta_j\geq 2\pi/3$, then $g_{k,j}$ is the geometric median. Otherwise, we
use the explicit barycentric construction
\[
  \omega_j\ \propto\ \frac{a_j}{\sin(\theta_j+\pi/3)},
  \qquad
  C_k=\sum_{j=1}^{3}\omega_j g_{k,j},
  \qquad
  \sum_{j=1}^{3}\omega_j=1.
\]
Computing $C_k$ therefore requires only three pairwise norms and one weighted
average. Its cost is $\mathcal O(d)$ for $d$ trainable LoRA parameters, with
no numerical optimization loop or stopping tolerance. The implementation
also handles repeated and collinear gradients explicitly.

The theoretical local $p$-moment noise scale depends on the unknown full
gradient and gradient-error distribution, so evaluating it at every update is prohibitively expensive. We therefore use $s_k$ as an online
empirical dispersion proxy. We initialize $s_0^2$ with the Bessel-corrected
dispersion of 128 distinct training-example gradients, providing a stable
initial clipping radius. After forming
$\bar g_k=\frac14\sum_{j=1}^4g_{k,j}$, we update the scale by
\begin{equation}
  s_{k+1}^2
  =\max\!\left\{10^{-20},
    0.95s_k^2+\frac{0.05}{3}
    \sum_{j=1}^{4}\|g_{k,j}-\bar g_k\|^2\right\}.
  \label{app:eq:empirical-noise-scale}
\end{equation}
This exponential moving average tracks changes in local gradient dispersion
while reusing the four gradients already required by the estimator, so it
introduces no additional oracle complexity. The current robust estimate in
(\ref{app:eq:empirical-robust-estimator}) uses the lagged value $s_k$, before
the scale update, which prevents a current outlier from increasing its own
clipping threshold. The tuned multiplier $c_{\mathrm{clip}}$ accounts for the
difference between this empirical proxy and the theoretical noise scale.
Empirical RA-LMC then applies
\begin{equation}
  w_{k+1}
  =\operatorname{AdamW}(w_k,\widehat g_k;\eta_k)
    +\gamma_{\mathrm{RA}}\sqrt{2\eta_k}\,Z_k,
  \qquad Z_k\sim\mathcal N(0,I).
  \label{app:eq:empirical-ra-update}
\end{equation}
The Gaussian increment is again added after AdamW and does not enter its
running moments. OpenBookQA uses
$(c_{\mathrm{clip}},\gamma_{\mathrm{RA}})
=(1.191,3\times10^{-6})$, while SLAKE uses
$(5.554,3\times10^{-4})$, which were selected by hyperparameter tuning on the validation sets using grid search. Since RA-LMC uses $4b$ stochastic gradients per update, we use $b=1$ for a fair comparison with other baselines. For each reported random seed, ten independent
runs of (\ref{app:eq:empirical-ra-update}) produce ten terminal LoRA
adapters whose class probabilities are averaged at test time. We follow a similar procedure for each baseline.

We also evaluated otherwise identical runs with a zero-mean Gaussian
prior on the trainable LoRA parameters but observed no meaningful
difference in performance, so the reported experiments omit an explicit
prior.

\paragraph{Gradient-error measurements.}
The remainder of this subsection considers only the experiments related to BG-$p$ oracle with trajectories
from (\ref{app:eq:oracle-adamw-update}). The heavy-tail and
state-dependence experiments both use the norm of the difference between a minibatch-mean gradient and the gradient calculated over the full training set at a fixed LoRA parameter vector. The heavy-tail experiment studies its distribution across minibatches at selected checkpoints. The state-dependence experiment instead studies how its conditional first moment changes along the trajectory. We first describe this common error measurement and then describe two minibatch-sampling procedures.

At each checkpoint, we hold $w_k$ fixed and place the model in evaluation
mode. Writing $\ell_r(w)$ for the classification loss of training example
$r$, we compute the gradient over the entire training set and the $i$th
minibatch-mean gradient as
\begin{equation}
  \begin{aligned}
    \nabla V(w_k)
      &=\frac{1}{n}\sum_{r=1}^{n}\nabla\ell_r(w_k),\\
    \widehat{\nabla V}_{i,k}
      &:=\frac{1}{b}\sum_{j=1}^{b}
          \gG(w_k,\xi_{k,i,j})
       =\frac{1}{b}\sum_{j=1}^{b}
          \nabla\ell_{\xi_{k,i,j}}(w_k).
  \end{aligned}
  \label{app:eq:oracle-reference}
\end{equation}
Here $\xi_{k,i,j}$ indexes the $j$th example in the $i$th measurement
minibatch at checkpoint $k$, and $b=4$. In both
(\ref{app:eq:oracle-adamw-update}) and
(\ref{app:eq:oracle-reference}), $\gG(w,\xi)$ denotes one stochastic
gradient, as in the preliminaries, while the minibatch estimate is
written explicitly as an average. Training also includes the final singleton.
The reference uses every training example, evaluated in
chunks of four. We weight each chunk's mean gradient by its number of
examples, include the final singleton, accumulate in FP64, and divide by
$n$. Thus the reference is computed from the full empirical objective,
up to numerical precision. For each measurement minibatch $i$ at checkpoint $k$, we record
\begin{equation}
  R_{i,k}=\norm{\widehat{\nabla V}_{i,k}-\nabla V(w_k)}_2.
  \label{app:eq:llm-noise-norm}
\end{equation}
Both gradients in~(\ref{app:eq:llm-noise-norm}) are measured before any
optimizer transformation or robustification. We add no Gaussian noise
to these gradients; the Gaussian increments affect only the trajectory
at which they are evaluated. No validation or test examples enter the
oracle measurements.
The histogram and moment analyses below use this same error norm,
but draw their measurement minibatches in different ways.

\subsubsection{Gradient-Error Histograms}
\label{app:oracle-tail-diagnostic}

We examine checkpoints at updates $0$, $1250$, $2500$, $3750$, and $5000$.
At each checkpoint, we partition a deterministic permutation of the
training examples into disjoint complete four-example minibatches.
OpenBookQA provides 1,239 complete minibatches from 4,957 training
examples, and SLAKE provides 420 from 1,681 examples.

To assess whether the observed gradient error histograms resemble a
theoretical heavy tailed model with an infinite second moment, at each
checkpoint we fit the following positive $\beta$-stable reference model to the
squared error norms and compare the empirical norm histograms with synthetic
norm histograms generated from the fit:
\begin{equation}
  R^2=a+cS_\beta,\qquad
  \E\bigl[e^{-tS_\beta}\bigr]=e^{-t^\beta},
  \qquad 0<\beta<1,\quad t\geq0,
  \label{app:eq:llm-positive-stable-surrogate}
\end{equation}
where $S_\beta$ is a standard positive $\beta$-stable random variable,
equivalently a one-sided L\'evy-stable random variable under the normalization in
(\ref{app:eq:llm-positive-stable-surrogate}). Since
$\E[S_\beta]=\infty$ for $0<\beta<1$, the reference model satisfies
$\E[R^2]=\infty$. The parameters $a$ and $c$ are a nonnegative location and
a positive scale, respectively. The fit minimizes the normalized 80-bin conditional multinomial
negative log-likelihood plus ten times the Bernoulli KL divergence
between the observed and fitted probabilities of the tallest observed
bin. Thus the objective gives additional weight to matching the
histogram peak. We use L-BFGS-B with multiple initializations, constrain
$\beta\in[0.2,0.98]$ and $c\in[10^{-12},10^5]$, and bound $a$ above by
$0.999$ times the smallest retained squared norm.

The synthetic panel draws $R=\sqrt{a+cS_\beta}$ from the fitted model
and retains draws in the same interval until their number equals the
retained empirical count. Both histograms use the same 80 equal-width
bins and divide bin counts by the retained count times bin width.
Although $R^2=a+cS_\beta$ follows a shifted and scaled positive $\beta$-stable
distribution, the plotted synthetic quantity is
$R=\sqrt{a+cS_\beta}$. Thus, the synthetic error norms are generated from a
fitted positive $\beta$-stable reference model for the squared error norms but are
not themselves stable random variables. For Figure~\ref{fig:llm-noise-histograms}, the fitted parameters are $\beta=0.2$, $a=0.0729963$, and $c=1.3290073$ with $\beta$ at its lower search bound.

\subsubsection{State Dependence of Gradient-Error Moments}
\label{app:oracle-first-moment}

For Figure~\ref{fig:llm-noise-first-moment}, we save 4,000 approximately
equally spaced checkpoints across the 5,000 update trajectory and compute
their distances $\norm{w_k-w_{5000}}_2$ from the final LoRA iterate, which is denoted as $\omega_{5000}$. We
place 50 equally spaced target distances between zero and the largest
observed distance, then select 50 distinct saved checkpoints by minimizing
the sum of absolute differences between their distances and these
targets. This one-to-one assignment gives approximately uniform
coverage in distance. Figure~\ref{fig:llm-noise-first-moment} includes the final iterate for both datasets. The selected states farthest from the final iterate correspond to updates 1 and 3 for OpenBookQA and SLAKE, respectively.

At each selected checkpoint, we approximate the exact gradient by computing the gradients across the entire training set and compute 256 independent minibatch gradients. Each minibatch consists of four examples sampled uniformly from the dataset. We partition the resulting error norms
in~(\ref{app:eq:llm-noise-norm}) into eight disjoint groups of 32 and plot
\begin{equation}
  \left(\norm{w_k-w_{5000}}_2,\ \widehat m_{1,k,j}\right),
  \qquad
  \widehat m_{1,k,j}
  =\frac{1}{32}\sum_{i=32(j-1)+1}^{32j} R_{i,k},
  \quad j=1,\ldots,8.
  \label{app:eq:llm-first-moment-estimate}
\end{equation}
This process was repeated for 50 checkpoints. The eight estimates
at a given horizontal coordinate quantify variation across independently
sampled minibatch groups. Each vertical coordinate averages
error norms, rather than taking the norm of an averaged error vector. The results in Figure~\ref{fig:llm-noise-first-moment} shows these estimates as scatter plots for OpenBookQA and SLAKE datasets. We report the first moment because it is well defined under every BG-$p$ model with $p>1$ and is less sensitive to extreme gradient errors than higher empirical moments. This provides a stable and interpretable view of how the typical gradient-error magnitude changes along the finite-tuning trajectory. The resulting plots in Figure~\ref{fig:llm-noise-first-moment} provide evidence of state dependence of the oracle noise.

\subsubsection{Additional Results}
\label{app:oracle-additional-results}

\paragraph{Histograms across checkpoints.}
Figures~\ref{fig:obqa-batch4-noise-five-stages} and
\ref{fig:slake-batch4-noise-five-stages} compare empirical and synthetic
histograms at five checkpoints during Qwen3-4B fine tuning on OpenBookQA
and Qwen3-VL-4B-Instruct fine tuning on SLAKE, respectively. We use the
procedure described in Subsection~\ref{app:oracle-tail-diagnostic}. In
both settings, the empirical error norm distributions closely resemble
the synthetic distributions generated from this fitted reference model.
This agreement becomes more pronounced at and
after step 1,250. These results provide further empirical support for
the observation in Figure~\ref{fig:llm-noise-histograms} that stochastic
gradient errors exhibit heavy tailed behavior.

\begin{figure}[htbp]
  \centering
  \includegraphics[width=\textwidth]{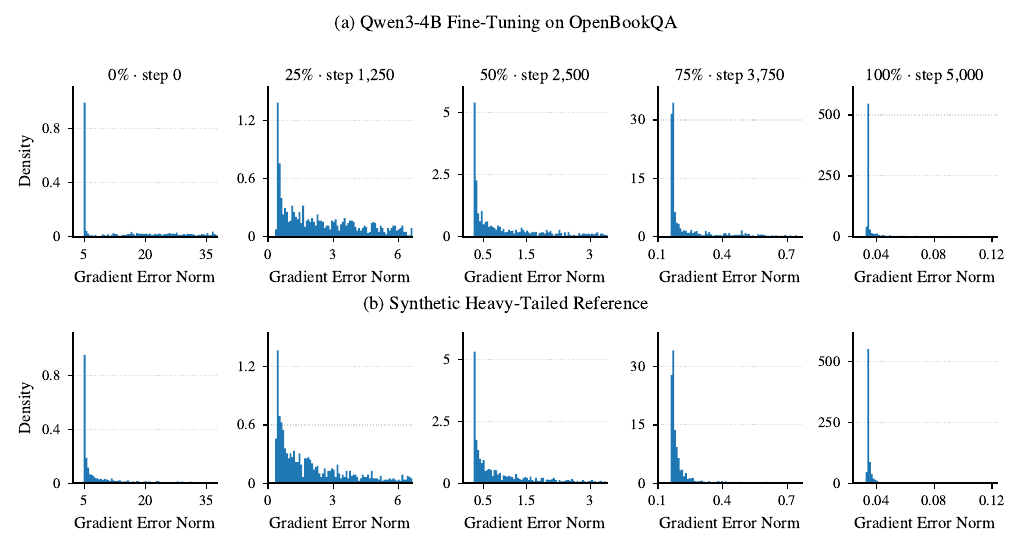}
  \caption{Minibatch gradient-error norm histograms across Qwen3-4B LoRA
    fine-tuning on OpenBookQA. Columns show steps $0$, $1250$, $2500$,
    $3750$, and $5000$. The top row contains empirical norms and the
    bottom row contains samples generated from the fitted positive $\beta$-stable
    reference model in~(\ref{app:eq:llm-positive-stable-surrogate}).
    Both rows use the same 80 bins and axes, and fitted $\beta$ values are
    $(0.200,0.200,0.200,0.607,0.837)$.}
  \label{fig:obqa-batch4-noise-five-stages}
\end{figure}

\begin{figure}[htbp]
  \centering
  \includegraphics[width=\textwidth]{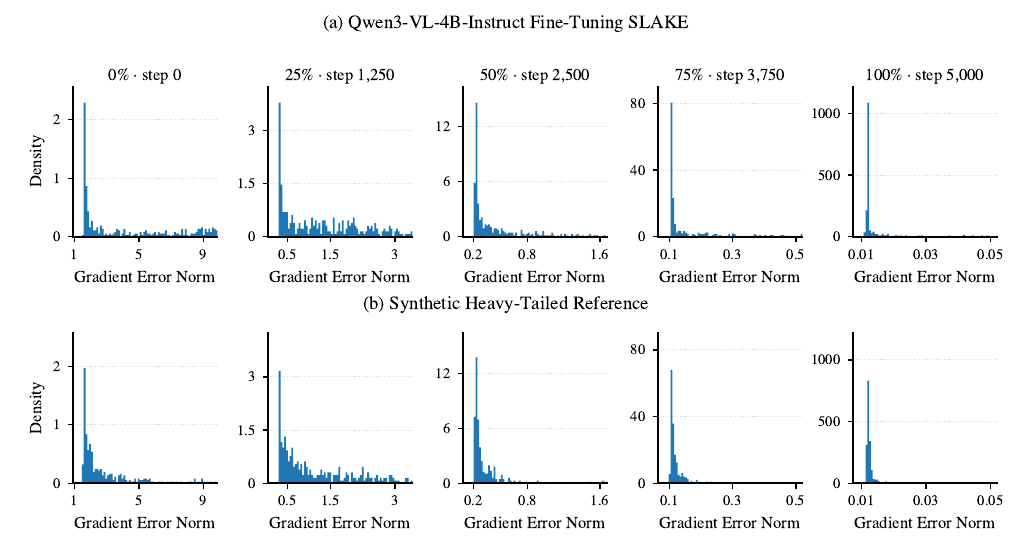}
  \caption{Minibatch gradient-error norm histograms across
    Qwen3-VL-4B-Instruct LoRA fine-tuning on SLAKE. The rows, columns,
    bins, and density normalization follow Figure~\ref{fig:obqa-batch4-noise-five-stages}.
    Fitted $\beta$ values are $(0.287,0.200,0.578,0.730,0.972)$.}
  \label{fig:slake-batch4-noise-five-stages}
\end{figure}

\paragraph{Gradient-error moments of different orders.}
We additionally repeat the state-dependence calculation for
$p\in\{1,1.25,1.5,1.75,2\}$. Using the same checkpoints and measurement
minibatches as in~(\ref{app:eq:llm-first-moment-estimate}), we compute
\begin{equation}
  \widehat m_{p,k,j}
  =\frac{1}{32}\sum_{i=32(j-1)+1}^{32j}R_{i,k}^{p},
  \qquad j=1,\ldots,8.
  \label{app:eq:llm-p-moment-estimate}
\end{equation}
Figures~\ref{fig:obqa-gradient-error-p-moments} and
\ref{fig:slake-gradient-error-p-moments} plot these estimates against the
distance from the final LoRA iterate. Across the displayed moment orders,
the empirical moments generally increase with distance, supporting a
state-dependent noise scale beyond the first-moment results in
Figure~\ref{fig:llm-noise-first-moment}.

\begin{figure}[htbp]
  \centering
  \includegraphics[width=\textwidth]{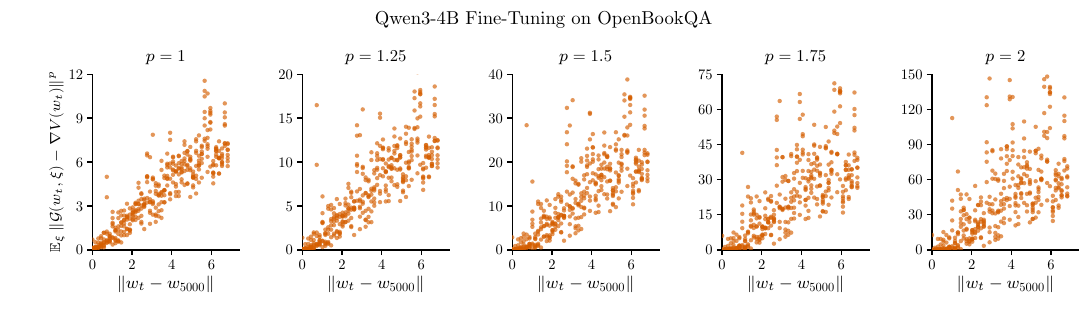}
  \caption{Empirical $p$-th moments of minibatch gradient-error norms versus
    distance from the final LoRA iterate during Qwen3-4B fine tuning on
    OpenBookQA. Panels report $p\in\{1,1.25,1.5,1.75,2\}$. Each panel contains
    eight estimates at each of 50 checkpoints, and every estimate averages
    $R_{i,k}^{p}$ over 32 independently sampled minibatch estimates with batch size of 4 as in~(\ref{app:eq:llm-p-moment-estimate}).}
  \label{fig:obqa-gradient-error-p-moments}
\end{figure}

\begin{figure}[htbp]
  \centering
  \includegraphics[width=\textwidth]{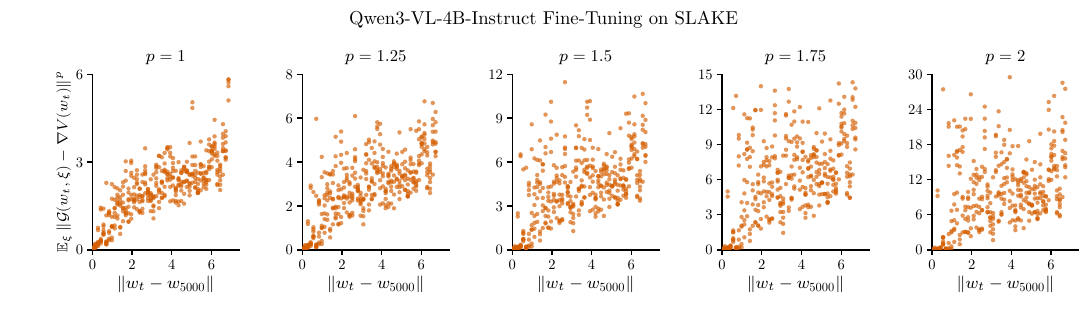}
  \caption{Empirical $p$-th moments of minibatch gradient-error norms versus
    distance from the final LoRA iterate during Qwen3-VL-4B-Instruct fine
    tuning on SLAKE. The moment orders, checkpoints, and grouped estimates
    follow Figure~\ref{fig:obqa-gradient-error-p-moments}.}
  \label{fig:slake-gradient-error-p-moments}
\end{figure}



\FloatBarrier

\clearpage
\section{Related Work}
\label{app:related-work}

\subsection{Non-Log-Concave Langevin Sampling}
\label{app:related-sampling}

Earlier non-log-concave sampling results often seek convergence in a
distributional metric such as KL divergence, total variation, or a
Wasserstein distance. Such guarantees describe global mixing, which
relative Fisher information (FI) alone does not ensure: a law can have
small FI while assigning incorrect mass to well-separated modes.
The stronger conclusion generally comes with assumptions on the target.
For example, analyses based on a log-Sobolev or Poincar\'e inequality
use that inequality to turn local dissipation into global convergence
\citep{vempala2019rapid,chewi2022analysis}. Other finite-time analyses
assume dissipativity or convexity outside a compact set
\citep{durmus2017nonasymptotic,cheng2018sharp,majka2020nonasymptotic},
or exploit an explicit mixture structure~\citep{lee2018beyond}.
These distributional guarantees are not weaker versions of FI
guarantees; they address a different, more global question under
additional structure.

Global gradient Lipschitzness is not universal in this literature.
For example, \citet{erdogdu2021convergence} derive finite-time KL bounds
with H\"older gradients, together with prescribed tail growth.
Tamed Langevin methods accommodate faster-growing drifts:
\citet{brosse2019tamed} impose polynomial local regularity and an
outward-pointing gradient at infinity, obtaining finite-time weighted
total-variation bounds; their Wasserstein result additionally assumes
global strong convexity. \citet{lytras2024tamed} obtain finite-time
distributional bounds under dissipativity and a Poincar\'e or
log-Sobolev inequality. For a neural-network objective,
\citet{lovas2023taming} use polynomial regularity and a
superquadratic regularizer to enforce dissipativity, together with
high-order data and initialization moments. Their guarantees concern
the Gibbs law of the regularized objective. Thus, relaxing global
smoothness has generally required other control of the tails or
the target geometry.

Uniformly bounded stochastic-gradient variance is not assumed by every
non-convex sampling analysis either. \citet{raginsky2017nonconvex}
allow the conditional variance to grow quadratically with the distance
from the origin and prove non-asymptotic Wasserstein and excess-risk
bounds for SGLD. Their component losses nonetheless have globally
Lipschitz gradients, the potential is dissipative, and the oracle has
a finite second moment at every state. Their result therefore covers
state dependence of a BG-0 type, but not the possibly infinite
variance permitted by BG-$p$ when $p<2$. In a different robustness
model, \citet{bhatia2019bayesian} use robust gradients to handle
adversarially contaminated data under global gradient Lipschitzness
and strong convexity. These are finite-time results, so the distinction
from our setting is in the assumptions and convergence criterion,
not simply asymptotic versus non-asymptotic analysis.

The FI framework of \citet{balasubramanian22a} avoids a functional
inequality and dissipativity by bounding stationarity of a
time-averaged law. Its direct LMC analysis assumes a globally
Lipschitz gradient, while its standard SG-LMC theorem uses a
state-uniform bound on squared gradient error. The same work gives
two important qualifications: Gaussian smoothing handles globally
H\"older gradients, and a finite-sum PAGE construction obtains an
FI guarantee without a state-uniform variance assumption when
component gradients are globally Lipschitz. Subsequent work studies
the meaning and complexity of FI guarantees, as well as other
algorithms in this framework
\citep{chewi2023fisher,cheng2023fast,chewi2026complexity,
sahin2026variance,sahin2026zerothorder}. These results establish
non-asymptotic progress without requiring full mixing, but the
aforementioned extensions do not provide a direct SG-LMC FI bound
under both gradient-dependent curvature and a state-dependent
finite-$p$-moment oracle. Our theorems address that setting while
retaining the more limited FI conclusion.

\subsection{Generalized Smoothness and Heavy-Tailed Noise in Optimization}
\label{app:related-optimization}

Generalized smoothness lets the local change in a gradient depend on
the gradient magnitude rather than a single global Lipschitz
constant. Under a symmetric generalized-smoothness condition,
\citet{chen2023generalized} derive descent bounds and use normalized
gradient descent for non-convex objectives; they analyze a
variance-reduced stochastic method as well. \citet{li2023generalized}
instead control gradients along the optimization trajectory, allowing
standard gradient methods to recover familiar rates under their
generalized-smoothness conditions. Their stochastic results permit
heavy-tailed distributions with finite variance, which is distinct
from assuming only a finite $p$-th moment for $p<2$.

For noise with a finite $p$-th moment and potentially infinite
variance, optimization methods often limit the influence of extreme
gradients. \citet{cutkosky2021high} use clipping, momentum, and
normalization to obtain high-probability stationarity guarantees.
Related clipped-gradient bounds are given by
\citet{nguyen2023improved}, while \citet{hubler2025gradient}
analyze normalized SGD under a finite central $p$-th moment.
State-dependent second-moment assumptions have motivated a different
set of techniques. \citet{alacaoglu2025towards} study algorithms
under quadratically growing variance for convex problems, and
\citet{fazla2026lower} use anchoring and regularization for
non-convex problems under BG-0 noise. Work addressing more than one
relaxation includes clipped SGD for convex $(L_0,L_1)$-smooth
objectives with heavy-tailed noise~\citep{chezhegov2025convergence}.
These papers target objective suboptimality or small gradient norm,
not the distribution of Langevin iterates.

This change of objective matters when importing an optimization
update into a sampler. The overdamped Langevin diffusion has drift
$-\nabla V$ and stationary density $\pi\propto e^{-V}$. If one
replaces that drift by a clipped or normalized vector field $b$ but
leaves the Brownian term unchanged, the adjoint of the new generator
acts on the target density as
\begin{equation}
\label{eq:related-clipped-drift}
  \mathcal L_b^*\pi
  =-\nabla\!\cdot\!\bigl(\pi(b+\nabla V)\bigr),
\end{equation}
which generally does not vanish. A descent argument for an
optimizer therefore does not show that its modified drift samples
from $\pi$. Tamed and adaptive sampling methods account for the
target law explicitly~\citep{brosse2019tamed,leroy2024adaptive}.

There are two further obstacles for our FI objective. First, the
KL-dissipation calculation involves a time integral of the
\emph{squared} error in the Langevin drift. An unbiased oracle with
only a finite $p$-th moment may have infinite squared error. Clipping
the entire stochastic gradient would control its norm but would
distort the target drift, whereas centering and clipping its residual
allow us to control estimation bias and squared error separately.
Second, a step chosen from a noisy gradient is correlated with its
estimation error. FI bounds then require control of the error
\emph{weighted by the duration of that step}; an ordinary
optimization descent bound does not provide this control. Our
adaptive algorithms use physical elapsed time, with the drift and
Brownian increment scaled by the same step duration. This is also why
an average over physical time must be distinguished from an average
over iteration indices.

\subsection{Bayesian Fine-Tuning and Experimental Baselines}
\label{app:baseline-methods}

Low-rank adaptation (LoRA) fine-tunes a model through trainable
low-rank factors while freezing the backbone~\citep{hu2022lora}.
Bayesian LoRA methods aim to represent uncertainty in these
additional parameters without maintaining a posterior over all
backbone weights. We use the reference implementations of nine
methods from Bayesian Adaptation Gym (BAG)
\citep{samplawski2026bayesian} in Table~\ref{tab:obqa-slake-sensitivity}.
For methods that produce multiple predictions, we average their
class probabilities before evaluating predictive metrics.

The \emph{MLE} baseline minimizes classification negative
log-likelihood and returns one LoRA adapter~\citep{hu2022lora}.
The BAG \emph{MAP} baseline likewise returns one adapter but trains
it with AdamW weight decay of $0.01$
\citep{loshchilov2019decoupled,samplawski2026bayesian}.
Neither procedure represents a posterior over adapter parameters.
\emph{Temperature scaling} changes the logits of an MLE adapter by
a fitted scalar without retraining the adapter
\citep{guo2017calibration}. \emph{Monte Carlo dropout} instead
keeps LoRA dropout active at prediction time and averages ten
stochastic forward passes; the BAG implementation uses a dropout
probability of $0.1$~\citep{gal2016dropout}.

\emph{Deep Ensemble} trains ten LoRA adapters independently with
the MLE objective and averages their predictions
\citep{lakshminarayanan2017simple}. \emph{Laplace LoRA} starts from
a fine-tuned adapter, fits a local Gaussian approximation using a
Kronecker-factored curvature estimate, and samples approximate
logits for prediction~\citep{yang2024bayesianlora}. These methods
obtain predictive diversity by independent optimization or a local
posterior approximation rather than by following a Langevin chain.

Other baselines learn or construct distributions over low-rank
updates. \emph{BLoB} jointly learns a variational mean and
covariance during fine-tuning, using a diagonal Gaussian
approximation for a LoRA factor~\citep{wang2024blob}.
\emph{ScalaBL} performs variational inference in a latent space
whose dimension is the LoRA rank and maps latent samples through
the LoRA factors~\citep{samplawski2025scalable}.
\emph{Training-Free Bayesianization} introduces Gaussian
uncertainty around a trained adapter and searches for an acceptable
noise level without further adapter training
\citep{shi2025trainingfree}. The BAG implementations average ten
predictive samples for each of these three methods.

Our additional \emph{SG-LMC} comparator uses minibatch-averaged
gradients, a fixed step size, and discounted Gaussian perturbations,
following the stochastic Langevin principle~\citep{welling2011sgld}.
It differs from RA-LMC in its gradient estimator and step-size
choice. Related fine-tuning work also uses Langevin sampling for a
different purpose: \emph{BayesTune} samples sparsity scale variables
under a hierarchical prior and uses them to select weights for
subsequent sparse fine-tuning~\citep{kim2023bayestune}.
Its Langevin iterates are not posterior samples of the final
fine-tuned LoRA adapter, so BayesTune addresses parameter selection
rather than the predictive-sampling comparison studied here.

\end{document}